\documentclass[american,letterpaper]{article}

\usepackage[utf8]{inputenc}
\usepackage[top=3cm,bottom=3cm,left=3cm,right=3cm,marginparwidth=1.75cm]{geometry}
\usepackage{graphicx}
\usepackage{amstext}
\usepackage{amsfonts,amsthm}
\usepackage{hhline}
\usepackage{wrapfig}
\usepackage{amssymb}
\usepackage{xcolor}

\usepackage{fullpage}

\usepackage[affil-it]{authblk}

\usepackage{ulem}
\usepackage{footmisc}
\usepackage{mathtools}
\usepackage{bm}
\usepackage{algorithm}
\usepackage{algpseudocode}
\RequirePackage[colorlinks,citecolor={blue!60!black},urlcolor={blue!70!black},linkcolor={red!60!black},breaklinks,hypertexnames=false]{hyperref}
\usepackage{mathrsfs}
\usepackage{amsmath}
\usepackage{thmtools,thm-restate}
\usepackage{float}
\usepackage{verbatim}
\usepackage{cleveref}
\usepackage{booktabs}
\usepackage{multirow}
\usepackage{enumitem}
\usepackage{setspace}
\newtheorem{theorem}{Theorem}
\newtheorem{proposition}{Proposition}
\newtheorem{lemma}{Lemma}
\newtheorem{corollary}{Corollary}

\newtheorem{assumption}{Assumption}
\newtheorem{remark}{Remark}

\newcommand{\customparagraph}[1]{\noindent\textbf{#1:}}


\newcommand{\norm}[1]{\lVert#1\rVert}

\newcommand{\E}{{\rm I}\kern-0.18em{\rm E}}
\newcommand{\p}{{\rm I}\kern-0.18em{\rm P}}
\newcommand{\R}{\mathbb{R}}
\newcommand{\B}{\boldsymbol}
\newcommand{\CS}{S}
\newcommand{\CN}{\mathcal{N}}

\newcommand{\CE}{\mathcal{E}}
\newcommand{\CR}{\mathcal{L}}
\newcommand{\1}{{\rm 1}\kern-0.24em{\rm I}}
\newcommand{\btX}{\tilde{\bm X}}
\newcommand{\btx}{\tilde{\bm x}}
\newcommand{\BS}{\mathbb{S}}

\newcommand{\ourmethod}{\texttt{GraphL0BnB}}
\newcommand{\ourmethodnobnb}{\texttt{GraphL0}}

\crefalias{prop}{proposition}
\newcommand{\prox}[1]{\mathrm{prox}_{#1}}

\newcommand{\schur}[3]{{#1}/ [{#2},{#3}]}
\DeclareMathOperator*{\sign}{sign}

\usepackage{natbib}
 \bibpunct[, ]{(}{)}{,}{a}{}{,}
 
 \def\bibsep{\smallskipamount}

\setlength{\bibsep}{0pt plus 0.3ex}

\title{Sparse Gaussian Graphical Models with Discrete Optimization: Computational and Statistical Perspectives}

\author[1]{Kayhan Behdin\thanks{behdink@mit.edu}}
\author[1]{Wenyu Chen\thanks{wenyu.main@gmail.com}}
\author[1,2]{Rahul Mazumder\thanks{rahulmaz@mit.edu}}

\affil[1]{Operations Research Center, Massachusetts Institute of Technology}
\affil[2]{Sloan School of Management, Massachusetts Institute of Technology}
\date{}

\begin{document}

\maketitle

\begin{abstract}
We consider the problem of learning a sparse graph underlying an undirected Gaussian graphical model, a key problem in statistical machine learning. Given $n$ samples from a multivariate Gaussian distribution with $p$ variables, the goal is to estimate the $p \times p$ inverse covariance matrix (aka precision matrix), assuming it is sparse (i.e., has a few nonzero entries). We propose GraphL0BnB, a new estimator based on an $\ell_0$-penalized version of the pseudo-likelihood function, while most earlier approaches are based on the $\ell_1$-relaxation.  Our estimator can be formulated as a convex mixed integer program (MIP) which can be difficult to compute beyond $p\approx 100$ using off-the-shelf commercial solvers. To solve the MIP, we propose a custom nonlinear branch-and-bound (BnB) framework that solves node relaxations with tailored first-order methods. As a key component of our BnB framework, we propose large-scale solvers for obtaining good primal solutions that are of independent interest. We derive novel statistical guarantees (estimation and variable selection) for our estimator and discuss how our approach improves upon existing estimators. Our numerical experiments on real and synthetic datasets suggest that our BnB framework offers significant advantages over off-the-shelf commercial solvers, and our approach has favorable performance (both in terms of runtime and statistical performance) compared to the state-of-the-art approaches for learning sparse graphical models.
\end{abstract}
\section{Introduction}
Gaussian Graphical Models (GGM), due to~\citet{dempster}, are amongst the most widely used tools in multivariate statistics and machine learning (\citet[Chapter 17]{friedman2017elements} and~\citet[Chapter 11]{wainwright2019high}). 
Formally, in a GGM, we are given $n$ samples $\B{x}^{(1)},\cdots,\B{x}^{(n)}\in\R^p$ from a multivariate normal distribution ${\mathcal N}(\B{0},\B{\Sigma}^*)$
where $\B{\Sigma}^*$ is an unknown $p \times p$ positive definite matrix.
Our goal is to estimate the inverse of the covariance matrix $\B{\Sigma}^*$, known as the precision matrix and denoted as $\B{\Theta}^*$. 
Obtaining a sparse estimate of $\B{\Theta}^*$ (i.e., one with only a few nonzero coordinates) is an important methodological problem 
with an array of applications~\citep{friedman2017elements,wainwright2019high}, and has
garnered significant attention in statistical learning.
Particularly, a zero entry in $\B\Theta^*$ indicates conditional independence: For a pair $(i,j)$, having $\theta^*_{ij} = 0$ implies features $i, j$ are independent when conditioning on the other variables.
Our goal is to estimate a sparse precision matrix $\hat{\B{\Theta}}$ (say), such that it is close to the true precision matrix $\B\Theta^*$ in a suitable metric, as discussed below.

The topic of sparse GGMs is quite vast---we first present an overview of some well-known algorithms and then
summarize our key contributions in this paper.

\subsection{Background and Literature Review}
Numerous algorithms have been proposed for sparse GGMs. Generally, these methods aim to minimize a regularized loss function, where the regularization term encourages sparsity in the precision matrix estimate. A popular approach involves the minimization of an $\ell_1$-regularized negative log-likelihood function, known as Graphical Lasso~\citep{friedman2008sparse}. Graphical Lasso given by a convex semidefinite program enjoys good statistical and computational properties~\citep{ravikumar2008model,mazumder2012graphical}. CLIME~\citep{cai2011constrained} is another approach with strong theoretical underpinnings: it is based on constrained $\ell_1$-norm minimization and is given by a linear program.
Another well-known approach is the node-wise $\ell_1$-regularized regression framework of~\citet{meinshausenbuhlmann}, which involves solving $p$ separate Lasso regression problems. 

The current paper focuses on a pseudo-likelihood based approach for sparse GGMs. 
The pseudo-likelihood approach with origins in spatial analysis~\citep{besag1975statistical} 
approximates the Gaussian likelihood by the  product of conditional likelihood functions of each variable, given the rest. In an early work on sparse GGMs~\citep{peng2009partial}, the authors explored a pseudo-likelihood framework using $\ell_1$ regularization (to promote sparsity), but their estimator involves solving a non-convex optimization problem. They show their method performs well numerically and present an asymptotic analysis of their algorithm when $n,p\to\infty$. 
Symmetric Lasso~\citep{friedman2010applications} and CONCORD~\citep{khare2015convex} are other algorithms based on pseudo-likelihood with strong empirical performance.

A fairly recent and promising, though less explored approach to GGMs is based on discrete optimization. Since the work of~\citet{bertsimas2016best} on subset selection in linear regression, there has been considerable interest in exploring statistical problems with a combinatorial structure using tools from Mixed Integer Programming (MIP)~\citep{nemhauser} and relatives. 
Specialized algorithms have been recently explored to address MIP-based statistical problems in
sparse linear regression~\citep{vanparys,hazimeh2020sparse,hazimeh2020fast,mazumder2017subset}, sparse principal component analysis~(\citet{dey2022using,behdin2021integer}; see also references therein), among others. In contrast, the literature on using MIP approaches for sparse GGMs, remains relatively less explored. \citet{bertsimas2020certifiably} consider a MIP approach for $\ell_0$-constrained maximum likelihood GGM estimation---their specialized algorithm can address problems with $p\approx 100$.  Another approach is the node-wise procedure of~\citet{misra2020information}, which requires solving $p$-many $\ell_0$ regularized linear regression problems. 
Recently, \citet{fattahigomez} explore $\ell_0$ regularization for sparse GGMs in the context of time-series problems.

We mention some existing results on statistical properties of sparse precision matrix estimation. 
Let $k$ denote an upper bound on the number of nonzero coordinates in each row/column of $\B\Theta^*$. To have a consistent estimate of $\B\Theta^*$ (in terms of the Frobenius norm of the estimation error)
we need\footnote{We use the notation $\lesssim,\gtrsim$ to show an inequality holds up to a universal constant that does not depend upon problem data.} $n\gtrsim kp\log p$ samples~\citep{rothman2008sparse}. 
In the high-dimensional setting where $p\gg n$, one is often interested in estimating the true support of $\B\Theta^*$ with high probability. Under certain non-degeneracy conditions, $n\gtrsim k\log p$ samples are required for a consistent estimation of the support of $\B{\Theta}^*$~\citep{wang2010information}.

\subsection{Outline of our Approach and Contributions} 
We propose a new $\ell_0$-regularized pseudo-likelihood-based estimator, \ourmethod, with good statistical guarantees and computational performance.   
Our estimator is based on a MIP: it can be written as minimizing a convex objective function over a mixed integer second-order cone.  As a result, commercial solvers such as Mosek can be used to solve the problem for small-scale instances $p\leq 100$. We propose specialized exact (and approximate) algorithms for improved computational scalability for our estimator. In addition to computation, we study the statistical properties of the estimator as outlined below.

\noindent \textbf{Optimization Algorithms:} We propose and implement (i) approximate methods, to obtain high-quality feasible solutions quickly (ii) globally optimal methods based on a specialized nonlinear Branch-and-Bound (BnB) solver. Our standalone BnB solver does not rely on commercial MIP solvers. Our BnB framework provides valid lower bounds and upper bounds for the optimal solution to our MIP. Even if we are to terminate the BnB process early with a compute budget, we are still able to obtain feasible solutions with suitable optimality certificates.

We note that the objective function that we are dealing with involves logarithmic and quadratic-over-linear terms (see Section~\ref{sec:proposedest}). This requires proposing new algorithms for solving node relaxation and obtaining incumbents.\footnote{An incumbent here refers to the best integral solution found so far during the BnB procedure.} We also establish novel convergence guarantees for our algorithms 
that extend existing results. Our node relaxation solver uses cyclic coordinate descent~\citep{tseng2001convergence} along with active set updates for computational efficiency. We discuss methods to efficiently generate dual bounds, which are important for our BnB method. 
Our approximate algorithms for primal solutions to the MIP extend the work of~\citet{hazimeh2020fast} by making use of a coordinate descent procedure with local search on the mixed integer program. Such solvers are of independent interest, and additionally, they can play an important role in our BnB framework (e.g., by providing good feasible solutions). Our BnB framework is inspired by the work of~\citet{hazimeh2020sparse}
both proposed for sparse linear regression. 
We note that the specific structure of our objective function and the problem scale present technical difficulties, making our GGM approach different from earlier work.

\noindent \textbf{Statistical Properties:} We study both the estimation and variable selection properties of our proposed estimator. We show that our estimator has an estimation error (Frobenius norm) bound scaling as $\sqrt{kp\log p/n}$, where $k$ is an upper bound on the total number of nonzero coordinates in each row/column of $\B\Theta^*$. In terms of variable selection, we show that under certain regularity conditions, if $n\gtrsim k\log p$, our estimator is able to recover the support of $\B{\Theta}^*$ correctly with high probability. The non-degeneracy condition needed for consistent variable selection for our method is milder than the earlier ones. This is due to certain symmetry structures we enforce on the precision matrix as a part of our estimation criterion. Our non-asymptotic estimation error bounds and support recovery guarantees are a novel contribution in the context of pseudo-likelihood-based sparse GGMs. Moreover, due to the specific structures of our problem, most earlier proof techniques developed do not apply to our estimator directly, and we develop new techniques for our analysis.

\noindent \textbf{Numerical Results: } We compare our approach with other existing methods in terms of both statistical performance and computational efficiency on both synthetic and real datasets. 
For some problem instances with $p=10,000$, our approximate algorithms can compute solutions in approx. 2-3 minutes, while earlier pseudo-likelihood-based estimators appear to be limited to instances with $p \leq 3000$ or so. Our BnB solver can 
solve, with optimality certificates, problem instances with $p\approx 5000$, $n\approx 1000$ (with around $p^2/2\approx 12\times 10^6$ binary variables due to symmetry) when the optimal solution is sufficiently sparse in less than an hour. On the other hand, leading off-the-shelf solvers such as Mosek are limited to $p\approx 100$. We see in our numerical experiments, our BnB framework with early termination often improves
the initial incumbent and delivers solutions with  better statistical performance---this shows the promise of using our BnB solver (with early termination) to obtain high-quality primal solutions.
Moreover, we observe that \ourmethod~enjoys better statistical performance (estimation and variable selection) on synthetic and real datasets compared to popularly used $\ell_1$-based methods such as CLIME and Graphical Lasso.

Our contributions in this paper can be summarized as follows:

\begin{enumerate}
	\item We propose an $\ell_0$-regularized pseudo-likelihood estimator \ourmethod~for sparse GGMs. Our MIP-based estimator can be formulated as minimizing a convex objective with mixed integer second order conic constraints.
	\item We propose a custom branch-and-bound (BnB) method for the MIP.
	Our open-source BnB solver can solve (with optimality certificates) certain problem-instances with $p\approx 5,000$ and  $n\approx 1,000$ (involving $p \times p$ precision matrices) in less than an hour. As a by-product of our framework, we also propose new approximate algorithms that can be much faster than the optimal methods, scaling to $p\approx 10,000$ in a few mins.   
	\item We derive novel statistical (both estimation and variable selection) guarantees for our estimator  and discuss how they can improve upon existing estimators.
	\item Numerical experiments on real and synthetic datasets show the promise of \ourmethod~over popular alternatives for sparse GGMs in terms of both runtime and statistical performance. 
\end{enumerate}

\paragraph{Organization of paper} In Section~\ref{sec:proposedest}, we introduce \ourmethod. In Section~\ref{sec:comp}, we provide a computational framework for our proposed estimator. In Section~\ref{sec:stat}, we analyze the statistical properties of our proposed estimator. Section~\ref{sec:expts} presents numerical experiments on both synthetic and real datasets. The derivations and proofs in the computational and statistical parts are deferred to Appendices~\ref{app:comp-proofs} and~\ref{app:stat-proofs}.
\paragraph{Notations.}
For the data matrix $\B{X}\in\R^{n\times p}$, we let $\B{x}_j\in\R^n$ denote the $j$-th column of $\B{X}$ for $j\in[p]$. For $\B A\in\R^{p_1\times p_2}$ and $S_1\subseteq[p_1],S_2\subseteq[p_2]$, denote by $\B{A}_{S_1,S_2}$ the submatrix of $\B{A}$ with rows in $S_1$ and columns in $S_2$.
$\mathcal{B}(p)$ denotes the unit Euclidean ball of dimension $p$.
Let $\BS^p,\mathbb{S}_+^p$ denote the set of symmetric and positive definite matrices in $\R^{p\times p}$, respectively. We let $\chi\{a\in A\}$ denote the characteristic function, i.e. $\chi\{a\in A\}=0$ if $a\in A$; otherwise, $\chi\{a\in A\}=\infty$. We let $\bm1\{a\in A\}$ denote the indicator function:
equals $1$ if $a\in A$; and $0$ otherwise. We let $\B{I}_n\in\R^{n\times n}$ denote the identity matrix of size $n$. We use the notation $\lesssim,\gtrsim$ to show an inequality holds up to a universal constant that does not depend upon problem data. We note that these notations do not represent asymptotic relationships.

\section{Proposed Estimator}\label{sec:proposedest}
Let $\B{X}\in\R^{n\times p}$ be the data matrix where every row is an independent draw from   $\mathcal{N}(\B{0},(\B{\Theta}^*)^{-1})$ for some $\B{\Theta}^*\in\mathbb{S}_+^p$. For every $j\in [p]$, the conditional distribution of the $j$-th variable, given the rest, follows the normal distribution:
\begin{equation}\label{knormaldists}
	\B{x}_j|\left\{\B{x}_i\right\}_{i\neq j}\sim\mathcal{N}\left(\sum_{i\neq j}\beta^*_{ij}\B{x}_i,(\sigma_j^*)^2 \B{I}_n\right)
\end{equation}
where
\begin{equation}\label{lem_reg}
	\beta^*_{ij}=-\frac{{\theta}^*_{ji}}{{\theta}^*_{jj}}~~i\neq j\in[p],~~({\sigma_j^*})^2=\frac{1}{{\theta}^*_{jj}}~~j\in[p].
\end{equation}
Let $P(\B{\mu},\B{\Sigma};\B{x})$ denote the probability density of a multivariate normal distribution with mean $\B{\mu}\in\R^n$ and covariance $\B{\Sigma}\in\R^{n\times n}$. The pseudo-(log)-likelihood function~\citep{besag1975statistical} is given by the sum over $j \in [p]$ of negative log-likelihoods of the conditional distributions in~\eqref{knormaldists}:
\begin{equation}\label{eqn:pseudo-likhd1}
	-\sum_{j=1}^p \log P\left(\sum_{i\neq j}\beta^*_{ij}\B{x}_i,(\sigma_j^*)^2 \B{I}_n;\B{x}_j\right)=\sum_{j=1}^p \left[\log(\sigma_j^*)+\frac{1}{n}\frac{1}{2(\sigma_j^*)^2}\left\Vert \B{x}_j -\sum_{i:i\neq j}\beta^*_{ij}\B{x}_i \right\Vert_2^2\right].
\end{equation}
The pseudo-likelihood can be considered an approximation to the likelihood function, where the distributions given in~\eqref{knormaldists} are assumed to be independent across $j$.
Additionally, from~\eqref{lem_reg}, $\beta^*_{ij}\neq 0$ if and only if $\theta^*_{ij}\neq0$ and as $\B{\Theta}^*$ is sparse, several values of $\{\beta^*_{ij}\}$ are zero. We consider an $\ell_0$-penalized version of the pseudo-likelihood~\eqref{eqn:pseudo-likhd1}:
\begin{subequations}\label{main-pseudo-1}
	\begin{align}
		\min_{\{\beta_{ij}\},\{\sigma_j\}}  & \sum_{j=1}^p \left[\log(\sigma_j)+\frac{1}{n}\frac{1}{2\sigma^2_j}\left\Vert \B{x}_j -\sum_{i:i\neq j}{\beta}_{ij}\B{x}_i \right\Vert_2^2\right]+\lambda_0 \sum_{i,j:i\neq j}\bm1\{\beta_{ij}\neq 0\}  \\
		\text{s.t.} \quad &
		\beta_{ij}\sigma_i^2 = \beta_{ji}\sigma_j^2,~~\beta_{ii}=0,~~i\neq j\label{symconst}
	\end{align}
\end{subequations}
where  $\lambda_0>0$ is the regularization parameter.  
Constraint~\eqref{symconst} enforces a symmetric structure on the matrix $\{\theta_{ij}\}$ based on the fact $\beta_{ij}^*(\sigma_i^*)^2=\beta_{ji}^*(\sigma_j^*)^2$ from~\eqref{lem_reg}. 
The tuning parameter $\lambda_0$ controls the number of nonzero entries in $\{ \beta_{ij}\}$ (equivalently, the number of nonzeros in the precision matrix $\B\Theta$).
We investigate the statistical properties of this estimator in Section~\ref{sec:stat}. In what follows, we present a convex mixed integer formulation of Problem~\eqref{main-pseudo-1}.

\subsection{A convex mixed integer optimization problem}\label{subsec:mio}
Problem~\eqref{main-pseudo-1} in its current form has a non-convex objective function and involves nonlinear constraints. We consider a reformulation
using the variables: $\theta_{jj}=1/\sigma_j^2$ and $\beta_{ij}=-
\theta_{ji}/\theta_{jj}$ --- with this reformulation, the symmetry constraint~\eqref{symconst} simplifies to the matrix $\B{\Theta}$ being symmetric. 
For our optimization formulation, we consider a minor modification of Problem~\eqref{main-pseudo-1} by including an additional squared $\ell_2$ (ridge) regularization term on the off-diagonals of $\B\Theta$. 
This leads to our reformulation of Problem~\eqref{main-pseudo-1} given as:
\begin{equation}\label{eqn:L0L2}
	\min_{\bm\Theta\in\BS^p}~F_{0}(\bm\Theta)=\sum_{i=1}^p\bigl(-\log(\theta_{ii})+\frac1{\theta_{ii}}\norm{\btX\bm\theta_{i}}^2\bigr)+\sum_{i<j}\bigl(\lambda_0\bm1\{\theta_{ij}\neq 0\}+\lambda_2\theta_{ij}^2\bigr)
\end{equation}
where $\btX =\frac{1}{\sqrt{n}}\B{X}$ and $\lambda_0,\lambda_2\geq 0$ are regularization parameters that are specified a priori. The addition of the ridge penalty helps both in terms of optimization and statistical properties, and is inspired by its usage in earlier work in sparse linear models~\citep{mazumder2017subset,hazimeh2020sparse}. In our numerical experiments, we observe that a nonzero value of $\lambda_2$ can be helpful in terms of statistical performance (see Appendix~\ref{app:l2reg} for a numerical demonstration). Moreover, taking $\lambda_2>0$ enables us to use a perspective formulation. Perspective formulations~\citep{frangioni2006perspective,akturk2009strong,gunluk2010perspective} are favorable from a computational perspective as they result in tighter MIP relaxations, and have been used recently in sparse linear regression~(\citet{hazimeh2020sparse}; see also references therein).
We present a mixed integer formulation of Problem~\eqref{eqn:L0L2}.
To this end, we introduce auxiliary binary variables $\{z_{ij}\}$ that encode sparsity in $\{\theta_{ij}\}$; and consider the following perspective reformulation of Problem~\eqref{eqn:L0L2}:
\begin{align}\label{eqn:mio}
	\min_{\bm\Theta,\bm z,\bm s}&~F_{\mathsf{mio}}(\bm\Theta,\bm z,\bm s)=\sum_{i=1}^p\bigl(-\log(\theta_{ii})+\frac1{\theta_{ii}}\norm{\btX\bm\theta_{i}}^2\bigr)+\sum_{i<j}\bigl(\lambda_0z_{ij}+\lambda_2 s_{ij}\bigr),\\
	\text{s.t.}&~~\theta_{ij}^2 \leq s_{ij}z_{ij},~ |\theta_{ij}|\leq Mz_{ij},~\theta_{ij} = \theta_{ji},~~z_{ij}\in\{0,1\},~s_{ij}\geq 0,\quad \forall j\neq i.\nonumber
\end{align}
Here, we assume that there is a pre-specified positive scalar $M$ (the Big-M parameter), such that there exists an optimal solution $\hat{\B\Theta}$ to \eqref{eqn:L0L2} that satisfies all of its off-diagonal entries have absolute values no than $M$, i.e. for any $i<j\in[p]$, $|\hat\theta_{ij}|\leq M$. 
We note that as long as $\lambda_2>0$, the BnB framework we discuss below can also be applied with $M=\infty$. Additionally, as long as the Big-M value $M$ is finite, our BnB algorithms can be applied for any value of $\lambda_2 \geq 0$.
Our approximate solvers on the other hand, apply directly to formulation~\eqref{eqn:L0L2} for $\lambda_0, \lambda_2 \geq 0$ (in particular, we can have $\lambda_2=0$ and/or $M=\infty$ in Problem~\eqref{eqn:mio}). A practical way to choose a value of $M$ is by using the solution from our approximate solver.
	We discuss the choice of the Big-M parameter $M$ for Problem~\eqref{eqn:mio} in our numerical experiments in Appendix~\ref{app:num-details}. Additionally, in the Appendix~\ref{app:bigm} we perform ablation studies with different choices of $M$.
We also refer to \cite{bertsimas2016best,xie2020scalable,hazimeh2020sparse} for additional discussions on how to estimate $M$ in practice, in the context of sparse regression. 
In Section~\ref{sec:comp}, we discuss our custom algorithms (both approximate and exact) for solving Problem~\eqref{eqn:mio}.

\section{Computational Framework}\label{sec:comp}
We present \ourmethod, a custom branch-and-bound (BnB) framework for Problem~\eqref{eqn:mio}. In Section~\ref{subsec:overview-BnB}, we discuss related work on nonlinear BnB and provide an overview of our BnB framework. In Section~\ref{subsec:formulations}, we study the formulations of node relaxations of Problem~\eqref{eqn:mio} in the BnB. We 
present algorithms for the node relaxations and primal heuristics in Sections~\ref{subsec:ASCD},~\ref{subsec:relaxation-solving} and~\ref{subsec:incumbents-solving}. In Section~\ref{subsec:dual-bounds}, we show how to obtain dual bounds for the node relaxations.

\subsection{Related work and overview of BnB framework}\label{subsec:overview-BnB}
At a high level, \ourmethod~extends the BnB framework for $\ell_0$-penalized least squares regression~\citep{hazimeh2020sparse} to the
pseudo-likelihood problem~\eqref{eqn:mio}.
There are important differences in these problems that pose challenges for Problem~\eqref{eqn:mio}: First,  
Problem~\eqref{eqn:mio} involves a $p \times p$ matrix $\B\Theta$ involving $\mathcal{O}(p^2)$ variables---in sparse regression, in contrast, we have $\mathcal{O}(p)$-many regression coefficients. The objective in problem~\eqref{eqn:mio} involves additional non-linearities (due to the extra logarithm term, quadratic-over-linear structure), and symmetry constraints arising from the pseudo-likelihood function---these require modifications to our 
algorithm, including obtaining dual bounds and establishing computational guarantees for our method.

\smallskip 

\customparagraph{Our strategies} In \ourmethod, we use the following algorithm choices:
\begin{itemize}
	\item \textbf{Node relaxations:} 
	We consider and solve node relaxation reformulations of Problem~\eqref{eqn:mio} in the original $\B\Theta$-space instead of the extended $(\bm\Theta,\bm z,\bm s)$-space. These formulations are 
	studied in Section~\ref{subsec:formulations}. 
	\item \textbf{Convex relaxation solver:} To solve the node relaxations, we develop a scalable coordinate descent (CD) algorithm with active set updates. The algorithm exploits and shares warm starts and active set information across the BnB tree to further improve computational efficiency. Our algorithm is described in Section~\ref{subsec:ASCD}; additional computational details and convergence guarantees are presented in Section~\ref{subsec:relaxation-solving}. 
	\item \textbf{Dual bounds:} Dual bounds of the node relaxation problem are useful for search space pruning. We develop a novel method to compute dual bounds from the primal solutions (cf Section~\ref{subsec:dual-bounds}).
	\item \textbf{Approximate solver and primal solutions:} Good primal solutions can lead to aggressive pruning in the search tree
	and can reduce the overall runtime for BnB. At each node of the BnB tree, we attempt to improve the upper bound based on a solution $\hat{\bm\Theta}$ (say) 
	from the current node's relaxation problem. Specifically, let $\mathcal{S}$ 
	denote the support (i.e., nonzero indices) of the current solution $\hat{\bm\Theta}$. 
	Using the framework discussed in Section~\ref{subsec:ASCD},
	we obtain good solutions (primal solutions) for the following problem: 
	\begin{align}
		\min_{\bm\Theta\in\mathbb{S}^p}&~ \sum_{i=1}^p\bigl(-\log\theta_{ii}+\frac{1}{\theta_{ii}}\norm{\btX\bm\theta_i}^2\bigr)+\sum_{(i,j)\in\mathcal{S}}\bigl(\lambda_0\bm1\{\theta_{ij}\neq 0\}+\lambda_2\theta_{ij}^2\bigr)\label{eqn:node-incumbent}\\
		\text{s.t.~}&~|\theta_{ij}|\leq M,~\forall (i,j)\in\mathcal{S};~~ \theta_{ij}=0,~\forall (i,j)\in\mathcal{S}^c, \nonumber 
	\end{align}
	where the constraint $\bm\Theta\in\mathbb{S}^p$ enforces $\bm\Theta$ to be symmetric. 
	Section~\ref{subsec:incumbents-solving} presents algorithms to compute good solutions to Problem~\eqref{eqn:node-incumbent}.
\end{itemize}

\subsection{Optimization Problems at every node of the BnB tree}\label{subsec:formulations}
We study the node relaxations of Problem~\eqref{eqn:mio} as they arise in a typical node of \ourmethod's BnB tree. We start with the root relaxation, where we relax each binary variable $z_i$ to the interval $[0,1]$. While the root relaxation involves the extended variables $(\bm\Theta, \bm z,\bm s)$, we present a reformulation in the original 
$\bm\Theta$-space, as this allows our algorithms to operate on a significantly smaller space. This reformulation is given as
\begin{equation}\label{eqn:root-relaxation}
	\min_{\bm\Theta\in\BS^p}~F_{\mathsf{root}}(\bm\Theta)=\sum_{i=1}^p\bigl(-\log(\theta_{ii})+\frac1{\theta_{ii}}\norm{\btX\bm\theta_{i}}^2\bigr)+\sum_{i<j}\psi(\theta_{ij};\lambda_0,\lambda_2,M), 
\end{equation}
where, as shown by~\citet{hazimeh2020sparse}, the penalty function (aka regularizer) $\psi$ is:
\begin{align}
	\psi(\theta;\lambda_0,\lambda_2,M)&=\min_{z,s}~~\lambda_0z+\lambda_2s~~\text{s.t.}~~sz\geq \theta^2, |\theta|\leq Mz, z\in[0,1]\nonumber\\
	&=\left\{\begin{array}{ll}
		2\sqrt{\lambda_0\lambda_2}|\theta| &~\text{if}~ |\theta|\leq \sqrt{\lambda_0/\lambda_2}\leq M  \\
		\lambda_0+\lambda_2\theta^2 &~\text{if}~ \sqrt{\lambda_0/\lambda_2}\leq |\theta|\leq M\\
		(\lambda_0/M+\lambda_2M)|\theta|&~\text{if}~|\theta|\leq M\leq \sqrt{\lambda_0/\lambda_2}\\
		\infty&~\text{if}~|\theta|>M.
	\end{array}\right.\label{eqn:psi}
\end{align}

	\begin{figure}[ht]
	\centering
	\rotatebox{90}{~~~~~~~~~~~~~~~~~~$\psi(\theta,\lambda_0,1-\lambda_0,1)$}\includegraphics[width=0.5\linewidth,trim =.8cm 0.2cm 0.8cm 0cm, clip = true]{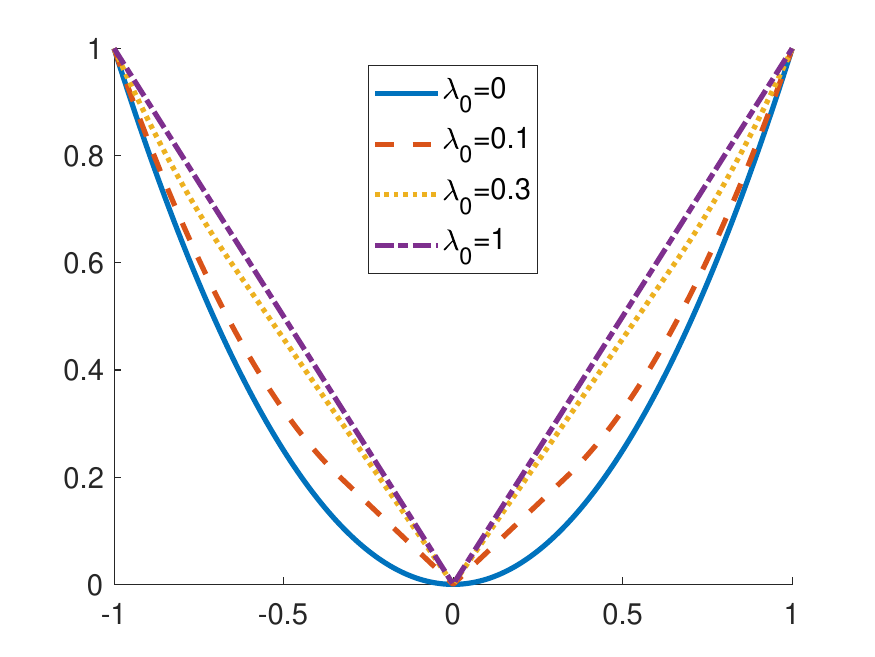}\\  
	{~~~$\theta$}
		\caption{Illustration of $\psi(\theta,\lambda_0,1-\lambda_0,1)$} 
	\label{fig:psiillust}
\end{figure}

In a special case where $M\to\infty,\lambda_2\to 0$, we have that $\psi(\theta;\lambda_0,\lambda_2,M)\propto |\theta|$, while under $M\to\infty,\lambda_0= 0$, we have that $\psi(\theta;\lambda_0,\lambda_2,M)\propto \theta^2$. For other values of $\lambda_0,\lambda_2$, the function $\psi$ is a hybrid of $\ell_1$ and $\ell_2$ penalties---see Figure~\ref{fig:psiillust} for an illustration.
The function $\psi$ is closely related to the reverse Huber penalty~\citep{owen2007robust,dong2015regularization}.

\customparagraph{Node relaxation problem}
For each node within the BnB tree, the node relaxation is similar to the root relaxation, except that some of $z_{ij}$'s are fixed to $0$ and some to $1$. We let $[\underline z_{ij}, \bar z_{ij}]$ denote the range of $z_{ij}$ at each node relaxation.\footnote{For example, if $z_{ij}$ is relaxed to $[0,1]$, then $\underline z_{ij}=0$ and $\bar z_{ij}=1$; if $z_{ij}$ is fixed to 0 (or 1), then $\underline z_{ij}=\bar z_{ij}=0$ (or $\underline z_{ij}=\bar z_{ij}=1$).} 
Using this notation, similar to derivation of~\eqref{eqn:root-relaxation}, we write the corresponding node relaxation problem as:
\begin{equation}\label{eqn:relaxation}
	\min_{\bm\Theta\in\BS^p}~F_{\mathsf{node}}(\bm\Theta)=\sum_{i=1}^p\bigl(-\log(\theta_{ii})+\frac1{\theta_{ii}}\norm{\btX\bm\theta_{i}}^2\bigr)+\sum_{i<j}g(\theta_{ij};\lambda_0,\lambda_2,M, \underline{z}_{ij},\bar{z}_{ij}),
\end{equation}
where 
\begin{equation}\label{eqn:relax-reg}
	g(\theta; \lambda_0,\lambda_2,M, \underline{z},\bar{z})=\left\{\begin{array}{ll}
		\psi(\theta;\lambda_0,\lambda_2,M) & \text{if}~ \underline{z}=0,\bar{z}=1 \\
		\varphi(\theta; z,\lambda_0,\lambda_2,M) &\text{if}~\underline{z}=\bar{z}=z, 
	\end{array}\right.
\end{equation}
and
\begin{equation}
	\varphi(\theta;z,\lambda_0,\lambda_2,M)=\left\{\begin{array}{ll}
		\chi\{\theta=0\} &~\text{if}~ z = 0 \\
		\chi\{|\theta|\leq M\}+\lambda_0+\lambda_2\theta^2& ~\text{if}~ z=1.
	\end{array}\right.\label{eqn:phi}
\end{equation}
Intuitively, if $\underline{z}_{ij}=0,\bar{z}_{ij}=1$, we have not branched on $z_{ij}$ yet and it is relaxed to be in $[0,1]$, hence, $g(\theta_{ij},\cdot,\cdot,\cdot,\underline{z}_{ij},\bar{z}_{ij})=\psi(\theta_{ij},\cdot,\cdot,\cdot)$. If $\underline{z}_{ij}=\bar{z}_{ij}=0$, then the penalty $g(\theta_{ij},\cdot,\cdot,\cdot,\underline{z}_{ij},\bar{z}_{ij})$ enforces $\theta_{ij}=0$. Otherwise, if $\underline{z}_{ij}=\bar{z}_{ij}=1$, 
$g(\theta_{ij},\cdot,\cdot,\underline{z}_{ij},\bar{z}_{ij})$ adds the penalty $\lambda_0+\lambda_2\theta_{ij}^2$ on $\theta_{ij}$ and enforces the constraint $|\theta_{ij}|\leq M$.
For notational convenience, we present below a unified formulation that encompasses the original pseudo-likelihood formulation, its restriction at every node, and the node relaxations:
\begin{equation}\label{eqn:unified}
	\min_{\bm\Theta\in\mathbb{S}^p} ~F(\bm\Theta):=\sum_{i=1}^p\left(-\log(\theta_{ii})+\frac{1}{\theta_{ii}}\norm{\btX\bm\theta_i}^2\right)+\sum_{i<j}h_{ij}(\theta_{ij}),
\end{equation}
where $h_{ij}$ is a penalty function (aka regularizer). In particular, depending upon the choice of $h_{ij}$, Problem~\eqref{eqn:unified} specializes to the original problem~\eqref{eqn:L0L2}, the root relaxation problem \eqref{eqn:root-relaxation}, the node relaxation problem \eqref{eqn:relaxation} and the problem for incumbent solving \eqref{eqn:node-incumbent}.

In what follows, we present a scalable active-set coordinate descent algorithm for solving (or approximately solving) Problem~\eqref{eqn:unified},
when $F$ is convex (or, non-convex).

\subsection{Active-set Coordinate Descent}\label{subsec:ASCD}
Due to the separability of the (nonsmooth) regularizers $h_{ij}$ for $i=1,\ldots,p, j>i$, Problem~\eqref{eqn:unified} is amenable to cyclic CD~\citep{tseng2001convergence} where we perform full minimization for every coordinate in the lower triangular part of $\bm\Theta$. CD-type methods are commonly used for solving large-scale structured optimization problems in statistical learning due in part to their inexpensive iteration updates and capability of exploiting problem structure. 
They have been used 
with success in various settings~(\citet{mazumder2012graphical,friedman2010regularization,hazimeh2020fast}; see also references therein). 

As presented in Algorithm~\ref{algo:CD}, at each step, cyclic CD optimizes the objective with respect to one coordinate (with other variables remaining fixed). 
We cycle through all coordinates according to a fixed ordering of the indices. In Algorithm~\ref{algo:CD}, $\bm E_{ij}\in\R^{p\times p}$ denotes a standard basis matrix where the $(i,j)$-th entry is one, and others are zero. 
\begin{algorithm}[H]
	\begin{algorithmic}[1]
		\Require An initialization $\hat{\bm\Theta}$
		\While {not converged}
		\For {each pair $i<j$}
		\State $\hat\theta_{ij}=\hat\theta_{ji}\gets \arg\min_{\theta_{ij}} F(\hat{\bm\Theta}-\hat \theta_{ij}\bm E_{ij}-\hat\theta_{ij}\bm E_{ji}+\theta_{ij}\bm E_{ij}+\theta_{ij}\bm E_{ji})$
		\EndFor
		\For {$i=1,2,\ldots p$}
		\State $\hat\theta_{ii}\gets \arg\min_{\theta_{ij}} F(\hat{\bm\Theta}-\hat \theta_{ii}\bm E_{ii}+\theta_{ii}\bm E_{ii})$
		\EndFor
		\EndWhile
	\end{algorithmic}
	\caption{Cyclic CD for Problem~\eqref{eqn:unified}}
	\label{algo:CD}
\end{algorithm}

Algorithm~\ref{algo:CD} differs from earlier work on CD for $\ell_0$-penalized regression problems~\citep{hazimeh2020fast,hazimeh2020sparse}: 
(a)~In problem~\eqref{eqn:unified}, the optimization variable is a sparse symmetric matrix. The CD algorithm needs to handle the diagonal and (symmetric) off-diagonal entries differently, as shown in lines 3 and 6 in Algorithm~\ref{algo:CD}; (b) Even when $h_{ij}$'s are convex, the convergence guarantee of Algorithm~\ref{algo:CD} is unknown to our knowledge. The convex node relaxation problem considered in \citet{hazimeh2020sparse} is a special case of~\eqref{eqn:unified}---
the problem in~\eqref{eqn:unified} has additional logarithmic and quadratic-over-linear terms.
Later in Section~\ref{subsec:relaxation-solving}, we will provide a convergence guarantee for Algorithm~\ref{algo:CD} for the root/node relaxation subproblems~\eqref{eqn:relaxation}. 

\customparagraph{Coordinate updates} The coordinate updates in lines 3 and 6 of Algorithm~\ref{algo:CD} can be performed analytically. For any $i<j$ and $h_{ij}$, the update in line 3 of Algorithm~\ref{algo:CD} is given by\footnote{In both updates \eqref{eqn:offdiag-update} and \eqref{eqn:diag-update}, we use the superscript `$+$' to distinguish the entries before and after the coordinate update. To be more specific, the symbols $\hat\theta_{ij}$ and $\hat\theta_{ii}$ in $a_{ij},b_{ij},\bm r_i$ and $\bm e_i$ are the values before the update, while $\hat\theta_{ij}^+$ and $\hat\theta_{ii}^+$ are the ones after the update.}
\begin{equation}\label{eqn:offdiag-update}
	\hat\theta_{ij}^+=\arg\min_\theta~a_{ij}\theta^2+b_{ij}\theta+h_{ij}(\theta)
\end{equation}
where 
$$a_{ij}=\frac{v_j}{\hat\theta_{ii}}+\frac{v_i}{\hat\theta_{jj}},\quad\quad b_{ij}=\frac{2\btx_j^\top(\bm r_i-\hat\theta_{ij}\btx_j)}{\hat\theta_{ii}}+\frac{2\btx_i^\top(\bm r_j-\hat\theta_{ij}\btx_i)}{\hat\theta_{jj}}$$
with $v_i=\btx_i^\top\btx_i$ and ${\bm{r}}_i=\btX\hat{\bm\theta}_i$. 
The solution to~\eqref{eqn:offdiag-update} can be computed in closed form---see Appendix~\ref{subsec:oracles-reg} for details.

For the diagonal entries $\theta_{ii}$, the update in line 6 of Algorithm~\ref{algo:CD} is given by
\begin{equation}
	\hat\theta_{ii}^+=\arg\min_{\theta}-\log\theta+v_i\theta+\frac{\norm{\bm e_i}^2}{\theta}=\frac{1+\sqrt{1+4v_i\norm{\bm e_i}^2}}{2v_i},\label{eqn:diag-update}
\end{equation}
where $\bm e_i=\bm r_i-\hat\theta_{ii}\btx_i$.
In the implementation of CD, instead of computing $\bm r_i$'s from scratch for every update, to improve efficiency we keep track of these values and update them after each coordinate update. This is often referred to as the residual update in sparse regression~\citep{friedman2010regularization,hazimeh2020fast}.

\customparagraph{Active sets}
The cost of computing $a_{ij},b_{ij}$ and $\bm e_i$ in the updates \eqref{eqn:offdiag-update} and \eqref{eqn:diag-update} are $\mathcal{O}(n)$, and each full pass across all coordinates 
involves updating $\mathcal{O}(p^2)$ variables. 
Hence, each iteration of Algorithm~\ref{algo:CD} costs $\mathcal{O}(np^2)$, which is quite expensive when $n$ or $p$ becomes large. To reduce the computation cost, we propose an active-set method:  we run Algorithm~\ref{algo:CD} restricted to the diagonal variables $\mathcal{D}$ and a small subset of the off-diagonal variables denoted by 
$\mathcal{A}$ that is, 
$\mathcal{A}\subseteq \{(i,j):i<j, i,j\in[p]\}$ and  $\bm\Theta|_{\mathcal{A}^c\backslash \mathcal{D}}=0$. 
After solving the restricted problem, we augment the active set with the off-diagonal variables $(i,j)\in \mathcal{A}^c$ that violate the coordinate-wise optimality conditions, and re-solve the problem on the new active set. We repeat this process and terminate the algorithm until there are no more violations. Similar active-set updates have been used earlier in other problems~\citep{hazimeh2020fast,hazimeh2020sparse,chen2020multivariate}. Our proposed active-set method is summarized in Algorithm~\ref{algo:ASCD}.

\begin{algorithm}[H]
	\begin{algorithmic}[1]
		\Require An initial active set $\mathcal{A}$ and initial solution $\hat{\bm\Theta}$
		\While {not converged}
		\State Get a solution for $\min_{\bm\Theta\in\mathbb{S}^p}~F(\bm\Theta)~~\text{s.t.}~~ \bm\Theta|_{\mathcal{A}^c\cap \mathcal{D}^c}=0$ using Algorithm~\ref{algo:CD}
		\State Find indices that violate coordinate-wise optimality conditions:
		$\mathcal{V}\gets\{(i,j):i<j, i,j\in[p], \hat\theta_{ij}=0, 0\notin\arg\min_{\theta_{ij}} F(\hat{\bm\Theta}-\hat \theta_{ij}\bm E_{ij}-\hat\theta_{ij}\bm E_{ji}+\theta_{ij}\bm E_{ij}+\theta_{ij}\bm E_{ji})\}$
		\State If $\mathcal{V}$ is empty then \textbf{Terminate}; otherwise, $\mathcal{A}\gets\mathcal{A}\cup\mathcal{V}$
		\EndWhile
	\end{algorithmic}
	\caption{Active set method for Problem~\eqref{eqn:unified}}
	\label{algo:ASCD}
\end{algorithm}

In what follows, we discuss some details of Algorithm~\ref{algo:ASCD} when we use it to solve the root or node relaxation~\eqref{eqn:relaxation} which are convex problems (cf Section~\ref{subsec:relaxation-solving}).
Section~\ref{subsec:incumbents-solving} discusses how to compute good solutions for the non-convex problem~\eqref{eqn:L0L2} and its restricted version problem~\eqref{eqn:node-incumbent} --- these help us obtain incumbents for the BnB procedure.

\subsection{Solving the node relaxations}\label{subsec:relaxation-solving}
We discuss computational details and convergence guarantees of Algorithms~\ref{algo:CD} and~\ref{algo:ASCD} for the convex Problem~\eqref{eqn:relaxation}.

\customparagraph{Coordinate updates} Recall that as a special case of the general formulation~\eqref{eqn:unified}, the node relaxation subproblem~\eqref{eqn:relaxation} has regularizers $h_{ij}(\theta_{ij})=g(\theta_{ij};\lambda_0,\lambda_2,M,\underline{z}_{ij},\bar{z}_{ij})$ where $g(\theta;\lambda_0,\lambda_2,M,\underline{z},\bar{z})$ is defined in~\eqref{eqn:relax-reg}. Its corresponding off-diagonal updates~\eqref{eqn:offdiag-update} in line~3 of Algorithm~\ref{algo:CD} has a closed-form solution --- see Appendix~\ref{subsubsec:g-oracles}.

\customparagraph{Computational guarantee} As we mentioned earlier, due to the presence of the logarithmic terms and quadratic-over-linear structure of the pseudo-likelihood, there is no known convergence guarantee for the CD algorithm (See also~\citet{khare2015convex}). The following theorem provides such a convergence guarantee and presents the sublinear rate of convergence for Algorithm~\ref{algo:CD} applied to the relaxation subproblem~\eqref{eqn:relaxation}. 
\begin{theorem}\label{thm:convergence-relaxation}
	Given any initialization $\bm\Theta^{(0)}$, let $\bm\Theta^{(t)}$ be the $t$-th iterate generated by Algorithm~\ref{algo:CD} (end of $t$-th iteration of while-loop) for the convex Problem~\eqref{eqn:relaxation}. Then there is a constant $C$ that depends on $\bm\Theta^{(0)}$, for any $t\geq 1$,
	\begin{equation}
		F_{\mathsf{node}}(\bm\Theta^{(t)})-F_{\mathsf{node}}^*\leq \frac{C}{t},
	\end{equation}
	where $F_{\mathsf{node}}^*=\min_{\bm\Theta\in\mathbb{S}^p}F_{\mathsf{node}}(\bm\Theta)$.
\end{theorem}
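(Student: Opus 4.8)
The plan is to exploit the fact that, although $F_{\mathsf{node}}$ is globally neither Lipschitz-smooth nor coordinate-wise strongly convex (because of the $-\log\theta_{ii}$ singularity and the quadratic-over-linear coupling $\norm{\btX\bm\theta_i}^2/\theta_{ii}$), the iterates of Algorithm~\ref{algo:CD} stay inside a compact convex subset of the open domain $\{\bm\Theta\in\mathbb{S}^p:\theta_{ii}>0\ \forall i\}$ on which both properties \emph{do} hold. Once this confinement is in place, Problem~\eqref{eqn:relaxation} falls into the scope of standard sublinear-rate analyses of cyclic coordinate descent with exact block minimization on composite convex objectives, and the $O(1/t)$ bound follows.

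\noindent\textbf{Step 1: coercivity and confinement of the iterates.} First I would observe that the effective domain of $F_{\mathsf{node}}$ already forces $|\theta_{ij}|\leq M$ for every $i<j$: in \eqref{eqn:relax-reg} the regularizer $g$ is either $\psi$ from \eqref{eqn:psi}, whose finite branches all require $|\theta|\leq M$, or $\varphi$ from \eqref{eqn:phi}, which carries $\chi\{|\theta|\leq M\}$ (or $\chi\{\theta=0\}$). Hence the off-diagonals are automatically bounded, and (assuming, without loss of generality, that $\btX$ has no zero column, so $v_i=\norm{\btx_i}^2>0$) the reverse triangle inequality gives $\norm{\btX\bm\theta_i}\geq \theta_{ii}\norm{\btx_i}-\big\|\sum_{j\neq i}\theta_{ij}\btx_j\big\|$, from which each diagonal term $-\log\theta_{ii}+\tfrac1{\theta_{ii}}\norm{\btX\bm\theta_i}^2$ is bounded below by a data-dependent constant and tends to $+\infty$ as $\theta_{ii}\downarrow 0$ and as $\theta_{ii}\uparrow\infty$. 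Since the remaining terms $g(\theta_{ij};\cdots)\geq 0$, $F_{\mathsf{node}}$ is coercive on its domain; being also convex and lower semicontinuous, its sublevel set $\mathcal{L}_0:=\{\bm\Theta:F_{\mathsf{node}}(\bm\Theta)\leq F_{\mathsf{node}}(\bm\Theta^{(0)})\}$ is compact, convex and contained in $\{\theta_{ii}>0\ \forall i\}$, so there are constants $0<\underline\tau\leq\bar\tau<\infty$ with $\theta_{ii}\in[\underline\tau,\bar\tau]$ on $\mathcal{L}_0$, and $F_{\mathsf{node}}^*$ is attained at some $\bm\Theta^\star$. Finally, because each update in lines 3 and 6 of Algorithm~\ref{algo:CD} is an exact minimization over one (symmetric) coordinate, $F_{\mathsf{node}}$ is monotonically non-increasing, so all iterates and all intermediate points within a cyclic sweep lie in $\mathcal{L}_0$.

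\noindent\textbf{Step 2: restricted smoothness, coordinate strong convexity, and a cyclic-CD rate.} On the compact convex set $\mathcal{L}_0$ the smooth part $f(\bm\Theta)=\sum_i(-\log\theta_{ii}+\tfrac1{\theta_{ii}}\norm{\btX\bm\theta_i}^2)$ is $C^2$ with Hessian bounded in norm, so $\nabla f$ is $L$-Lipschitz there and every scalar block has a finite coordinate-wise Lipschitz constant (all determined by $\underline\tau,\bar\tau$ and the problem data, hence by $\bm\Theta^{(0)}$). Moreover the restriction of $f$ to each block is strongly convex on $\mathcal{L}_0$: for a diagonal block the second derivative is $\theta_{ii}^{-2}+2\theta_{ii}^{-3}\norm{\btX\bm\theta_i}^2\geq\bar\tau^{-2}$, and for an off-diagonal pair $\{i,j\}$ it equals $2a_{ij}=2(v_j/\theta_{ii}+v_i/\theta_{jj})\geq 2(v_i+v_j)/\bar\tau>0$ (notation as in \eqref{eqn:offdiag-update}), so the coordinate minimizers \eqref{eqn:offdiag-update} and \eqref{eqn:diag-update} are well defined and unique. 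Treating the $\binom{p}{2}$ symmetric pairs and the $p$ diagonal entries as the scalar blocks, Problem~\eqref{eqn:relaxation} is a convex composite problem with a block-separable lower-semicontinuous convex regularizer satisfying all the hypotheses of the classical $O(1/t)$ analysis of cyclic (block) coordinate descent with exact block minimization (as in the work of Beck and Tetruashvili and its refinements, and as carried out for $\ell_0$-penalized least squares by \citet{hazimeh2020sparse}): one lower-bounds the decrease of each exact coordinate minimization by that of a single coordinate proximal-gradient step of stepsize $1/L$, then telescopes using convexity and $\norm{\bm\Theta^{(t)}-\bm\Theta^\star}\leq\mathrm{diam}(\mathcal{L}_0)<\infty$. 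This yields $F_{\mathsf{node}}(\bm\Theta^{(t)})-F_{\mathsf{node}}^*\leq C/t$ with $C=C(\bm\Theta^{(0)})$ depending on $L$, $p$ and $\mathrm{dist}(\bm\Theta^{(0)},\arg\min F_{\mathsf{node}})$, as claimed.

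\noindent\textbf{Main obstacle.} The crux is Step 1: the textbook cyclic-CD rates require a finite (coordinate-wise) Lipschitz constant, which $F_{\mathsf{node}}$ lacks globally. The resolution rests on two points — that the perspective/indicator structure of the node regularizer $g$ already caps every off-diagonal at $M$, so only the $p$ diagonal blocks need a coercivity estimate; and that exact coordinate minimization keeps the iterates in the compact convex sublevel set $\mathcal{L}_0\subseteq\{\theta_{ii}>0\ \forall i\}$, which simultaneously restores finite (block-wise) Lipschitz constants and coordinate strong convexity. A secondary technical point is to check that the paired update of $\theta_{ij}$ and $\theta_{ji}$ in line 3 of Algorithm~\ref{algo:CD} is genuinely an exact minimization over a single scalar block (the lower-triangular entry, with objective $a_{ij}\theta^2+b_{ij}\theta+h_{ij}(\theta)$), so that the block-coordinate-descent framework applies verbatim, the diagonal blocks being handled identically via \eqref{eqn:diag-update}.
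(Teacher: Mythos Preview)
Your proposal is correct and follows the same high-level architecture as the paper: show that the cyclic CD iterates are trapped in a compact sublevel set on which the smooth part has a Lipschitz gradient and coordinate-wise strong convexity, then invoke an existing $O(1/t)$ rate for cyclic coordinate descent on composite convex problems.

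The one genuine difference is in the coercivity step. You obtain compactness of the sublevel set by observing that the node regularizer $g$ already enforces $|\theta_{ij}|\leq M$ on every off-diagonal (via the indicator structure in $\psi$ and $\varphi$), so each term $-\log\theta_{ii}+\theta_{ii}^{-1}\|\btX\bm\theta_i\|^2$ can be written as $-\log\theta_{ii}+\theta_{ii}v_i+\|\bm e_i\|^2/\theta_{ii}+\text{bounded}$ with $\|\bm e_i\|$ bounded, and is therefore uniformly bounded below and coercive in $\theta_{ii}$. The paper instead proves a more general coercivity lemma (Lemma~\ref{lem:coercive}) that does \emph{not} use the box constraint at all: it assumes only that each regularizer satisfies $h_{ij}(\theta)\geq\min\{c_1|\theta|,c_2\theta^2\}$ for some $c_1+c_2>0$ (Assumption~\ref{assum:reg-assumption}, verified for $g$ with $c_1=2\sqrt{\lambda_0\lambda_2}$), and argues via the substitution $\beta_{ki}=-\theta_{ik}/\theta_{kk}$ by first bounding the largest diagonal. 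This buys them a convergence statement (Theorem~\ref{thm:convergence-general}) for the whole unified family~\eqref{eqn:unified}, of which Theorem~\ref{thm:convergence-relaxation} is the special case; your argument is more direct but tailored to $F_{\mathsf{node}}$. For the rate itself the paper cites \citet{hong2017iteration} rather than Beck--Tetruashvili, which is the cleaner reference since the objective is composite; your proximal-gradient-minorization sketch is the right idea, but the precise result you need is the nonsmooth one.
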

Proof of Theorem~\ref{thm:convergence-relaxation} can be found in Appendix~\ref{subsec:convergence-guarantee-proof} where we also derive convergence guarantees for the unified formulation~\eqref{eqn:unified} encompassing a larger family of regularizers.

\customparagraph{Initializations} The number of iterations taken by Algorithm~\ref{algo:ASCD} depends upon the initial active set $\mathcal{A}$. Due to the similarity between the parent node and its two child nodes, we take the initial active set to be the same as the support of the relaxation solution at the parent node. For the root relaxation problem, we initialize the active set to the support of the warm start obtained by the approximate solver, which is discussed in Section~\ref{subsec:incumbents-solving}.

\customparagraph{Approximate solution} For practical purposes, we usually solve the restricted problem in line 2 of Algorithm~\ref{algo:ASCD} up to some numerical tolerance: we terminate Algorithm~\ref{algo:CD} when the relative change in the objectives is small.\footnote{ We refer to Appendix~\ref{app:num-details} for more details on the value of tolerance we use in practice.} Such approximate solutions would still result in a convergent BnB procedure as long as we have dual bounds for the convex problem (for search-space pruning) which is discussed next.

\subsection{Dual bounds}\label{subsec:dual-bounds}
We discuss how to compute dual bounds for Problem~\eqref{eqn:relaxation} based on an approximate primal solution to the node relaxation~\eqref{eqn:relaxation} as obtained from Algorithm~\ref{algo:ASCD}.
First, we present the Lagrangian dual of the convex program \eqref{eqn:relaxation}:
\begin{theorem}\label{thm:dual-bound}
	The dual of Problem~\eqref{eqn:relaxation} is given by
	\begin{equation}\label{eqn:dual-relaxation}
		\max_{\bm \nu}~D(\bm\nu):=p+\sum_{i=1}^p\log(-\norm{\bm\nu_i}^2/4-\btx_i^\top\bm \nu_i)-\sum_{i<j}g^*(\btx_j^\top\bm \nu_i+\btx_i^\top\bm\nu_j;\lambda_0,\lambda_2,M,\underline{z}_{ij},\bar{z}_{ij}),
	\end{equation}
	where $g^*(\cdot; \lambda_0,\lambda_2,M,\underline{z},\bar{z})$ is the convex conjugate of $g(\cdot; \lambda_0,\lambda_2,M,\underline{z},\bar{z})$. Strong duality holds,  $$\min_{\bm\Theta\in\mathbb{S}^p}F_{\mathsf{node}}(\bm\Theta)=\max_{\bm\nu}D(\bm\nu).$$ Furthermore, if $\bm\Theta$ is an optimal primal solution to \eqref{eqn:relaxation}, let $\bm r_i^*=\btX\bm\theta_i$ for all $i\in[p]$, then $\bm\nu_i^*=-2\bm r_i^*/\theta_{ii}$ is an optimal dual solution to \eqref{eqn:dual-relaxation}.
\end{theorem}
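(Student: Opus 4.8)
The plan is to derive the dual by introducing auxiliary variables that decouple the logarithmic/quadratic-over-linear coupling and then dualizing. First I would rewrite Problem~\eqref{eqn:relaxation} by introducing, for each $i\in[p]$, a new variable $\bm r_i\in\R^n$ with the constraint $\bm r_i = \btX\bm\theta_i$. With this substitution, $F_{\mathsf{node}}$ becomes $\sum_i(-\log\theta_{ii} + \norm{\bm r_i}^2/\theta_{ii}) + \sum_{i<j}g(\theta_{ij};\cdots)$ subject to the linear constraints $\bm r_i=\btX\bm\theta_i$. Attaching a multiplier $\bm\nu_i\in\R^n$ (up to a convenient scaling, say a factor that produces the $-2\bm r_i^*/\theta_{ii}$ identification at the end) to each equality constraint, the Lagrangian separates into: (i) a sum over $i$ of minimizations over $(\theta_{ii},\bm r_i)$ of $-\log\theta_{ii} + \norm{\bm r_i}^2/\theta_{ii} + \bm\nu_i^\top\bm r_i$; and (ii) a sum over $i<j$ coupling the $\bm\nu$'s to $\theta_{ij}$ through the term $-\sum_i\bm\nu_i^\top\btX\bm\theta_i = -\sum_{i<j}(\btx_j^\top\bm\nu_i + \btx_i^\top\bm\nu_j)\theta_{ij} + (\text{diagonal terms})$, minus $g(\theta_{ij};\cdots)$ (here I would need to be careful to fold the $i=j$ part of $\btX\bm\theta_i$ into the diagonal block so that the diagonal minimization picks up exactly the right linear term in $\theta_{ii}$).

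The second step is to evaluate the two families of inner minimizations in closed form. For the off-diagonal blocks, minimizing $-g(\theta_{ij};\cdots) - (\btx_j^\top\bm\nu_i+\btx_i^\top\bm\nu_j)\theta_{ij}$ over $\theta_{ij}$ gives exactly $-g^*(\btx_j^\top\bm\nu_i + \btx_i^\top\bm\nu_j;\cdots)$ by definition of the convex conjugate, which is where the $g^*$ terms in~\eqref{eqn:dual-relaxation} come from. For each diagonal block, I would compute $\min_{\theta_{ii}>0,\bm r_i}\{-\log\theta_{ii} + \norm{\bm r_i}^2/\theta_{ii} + \bm\nu_i^\top\bm r_i + (\text{linear-in-}\theta_{ii}\text{ term from }\btx_i\text{-column})\}$: first minimize over $\bm r_i$ (a quadratic, giving $\bm r_i = -\tfrac12\theta_{ii}\bm\nu_i$ and value $-\tfrac14\theta_{ii}\norm{\bm\nu_i}^2$), then minimize the resulting expression $-\log\theta_{ii} + \theta_{ii}(-\norm{\bm\nu_i}^2/4 - \btx_i^\top\bm\nu_i)$ over $\theta_{ii}>0$. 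Setting the derivative to zero gives $\theta_{ii} = 1/(-\norm{\bm\nu_i}^2/4 - \btx_i^\top\bm\nu_i)$, valid precisely when the quantity in the denominator is positive, and the optimal value is $1 + \log(-\norm{\bm\nu_i}^2/4 - \btx_i^\top\bm\nu_i)$. Summing over $i$ produces the $p + \sum_i\log(\cdots)$ part of $D(\bm\nu)$.

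The third step is to justify strong duality and read off the optimal dual solution. For strong duality I would invoke a Slater-type / Fenchel duality argument: the objective of~\eqref{eqn:relaxation} is a sum of a closed proper convex function of $\btX\bm\theta_i$ (after the $\bm r_i$ substitution the coupling is linear) plus closed proper convex regularizers $g$, and the relative interior of the effective domain is nonempty (e.g. take $\bm\Theta$ a small multiple of the identity, which keeps $\theta_{ii}>0$ and $\theta_{ij}=0$ in the domain of every $g$), so Fenchel–Rockafellar duality applies with zero gap and dual attainment. Finally, from the stationarity conditions derived above, an optimal primal $\bm\Theta$ with $\bm r_i^* = \btX\bm\theta_i$ forces (via the $\bm r_i$-minimization identity $\bm r_i = -\tfrac12\theta_{ii}\bm\nu_i$) the choice $\bm\nu_i^* = -2\bm r_i^*/\theta_{ii}$, which I would verify is dual-feasible (the denominator $-\norm{\bm\nu_i^*}^2/4 - \btx_i^\top\bm\nu_i^* = \theta_{ii}$ after simplification, hence positive) and attains $D(\bm\nu^*) = F_{\mathsf{node}}^*$.

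The main obstacle I anticipate is bookkeeping the diagonal terms correctly: the term $\norm{\btX\bm\theta_i}^2 = \norm{\sum_k \theta_{ik}\btx_k}^2$ mixes the diagonal entry $\theta_{ii}$ with the off-diagonals, so after introducing $\bm r_i = \btX\bm\theta_i$ one must make sure the linear multiplier $\bm\nu_i$ hits $\theta_{ii}$ through $\btx_i^\top\bm\nu_i$ exactly once and with the right sign, and that the $-\tfrac14\theta_{ii}\norm{\bm\nu_i}^2$ contribution from the $\bm r_i$-minimization combines cleanly with it; getting the scaling of the multiplier right is what makes the clean formula $\bm\nu_i^* = -2\bm r_i^*/\theta_{ii}$ appear. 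A secondary technical point is checking that the domain of $D$ (where all $p$ logarithm arguments are positive) is exactly where the inner diagonal minimizations are finite, so that no spurious constraints are lost when passing to the unconstrained-looking $\max_{\bm\nu}$ form.
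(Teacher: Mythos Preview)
Your proposal is correct and follows essentially the same route as the paper: introduce $\bm r_i=\btX\bm\theta_i$, dualize these linear constraints with multipliers $\bm\nu_i$, minimize the Lagrangian first in $\bm r_i$ (giving $\bm r_i=-\tfrac12\theta_{ii}\bm\nu_i$), then in $\theta_{ii}$ and in each $\theta_{ij}$ (yielding the $\log$ term and the $g^*$ terms respectively), and invoke Slater's condition for strong duality. One small slip: the quantity $-\norm{\bm\nu_i^*}^2/4-\btx_i^\top\bm\nu_i^*$ equals $1/\theta_{ii}$ at optimality, not $\theta_{ii}$, but this does not affect the positivity check or the rest of your argument.
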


Given any $\alpha$, the convex conjugate $g^*(\alpha;\lambda_0,\lambda_2,M,\underline{z},\bar{z})$ can be computed explicitly (see Appendix~\ref{subsec:dual-bound-appendix} for details).

\customparagraph{Dual bounds} Let $\hat{\bm\Theta}$ be an approximate primal solution to the node relaxation~\eqref{eqn:relaxation}, as available 
from Algorithm~\ref{algo:CD} or \ref{algo:ASCD}. We can construct a dual solution based on $\hat{\bm\Theta}$ as follows
\begin{equation}\label{eqn:dual-solution}
	\hat{\bm\nu}_i=-2\btX\hat{\bm\theta}_i/\hat\theta_{ii} ~~ \forall i \in [p].
\end{equation}
Notice that when $-\norm{\hat{\bm\nu}_i}^2/4-\btx_i^\top\bm\nu_i\leq 0$, 
the dual solution is infeasible, and thus $D(\hat{\bm\nu})=-\infty$. This indicates the optimization error of the current inexact solution $\hat{\bm\Theta}$ is still not small enough. In such a case, we run a few more iterations of Algorithm~\ref{algo:CD} (or Algorithm~\ref{algo:ASCD}) to improve the solution accuracy. 

\customparagraph{Efficient computation of the dual bounds} A direct computation of the dual bound $D(\hat{\bm\nu})$ costs $\mathcal{O}(np^2)$. This can be reduced to $\mathcal{O}(nk)$ if $k$ denotes the number of nonzero off-diagonal entries in the precision matrix estimate 
obtained from Algorithm~\ref{algo:ASCD}. 
As shown by~\citet{hazimeh2020sparse} in the sparse regression setting, if $\hat\theta_{ij}=0$, then $$\psi^*(\btx_j^\top\hat{\bm\nu}_j+\btx_i^\top\hat{\bm\nu}_i;\lambda_0,\lambda_2,M)=0.$$
This means we only need to compute $\psi^*$ (as a special case of $g^*$ in root relaxation) for any $(i,j)$ such that $i\neq j$ and $\hat{\theta}_{ij}\neq0$, which reduces the computation cost to $\mathcal{O}(nk)$.

We can also consider the node relaxation setting --- the only difference is that if $\hat\theta_{ij}=0$ and $\underline{z}_{ij}=\bar{z}_{ij}=1$, then $g^*(\btx_j^\top\hat{\bm\nu}_j+\btx_i^\top\hat{\bm\nu}_i;\lambda_0,\lambda_2,M,\underline{z}_{ij},\bar{z}_{ij})=-\lambda_0$. Since for every node we can easily store the number of $z_{ij}$'s that are fixed to 1 with cost $\mathcal{O}(1)$, the cost of computing dual bounds remains $\mathcal{O}(nk)$. The formal statement is presented in Proposition~\ref{prop:eff-comp-dual-bounds}.

\begin{proposition}\label{prop:eff-comp-dual-bounds}
	Let $\hat{\bm\Theta}$ be a solution obtained by Algorithm~\ref{algo:ASCD} applied to the node relaxation~\eqref{eqn:relaxation}, and $\hat{\bm\nu}$ is a dual feasible solution obtained by \eqref{eqn:dual-solution}. Denote by
	$$\hat{\mathcal{S}}=\{(i,j):i<j, \hat{\theta}_{ij}\neq 0\}, \quad\text{and}\quad \mathcal{F}_1= \{(i,j):i<j, \underline z_{ij}=\bar z_{ij}=1\}.$$If for any $i\in[p]$, we have $-\norm{\hat{\bm\nu}_i}^2/4-\btx_i^\top\hat{\bm\nu}_i>0$, then
	$$D(\hat{\bm\nu})=p+\sum_{i=1}^p\log(-\norm{\hat{\bm\nu}_i}^2/4-\btx_i^\top\hat{\bm\nu}_i)-\sum_{(i,j)\in\hat{\mathcal{S}}}g^*(\btx_j^\top\hat{\bm\nu}_i+\btx_i^\top\hat{\bm\nu}_j;\lambda_0,\lambda_2,M,\underline{z}_{ij},\bar{z}_{ij})+\lambda_0|\mathcal{F}_1\backslash\hat{\mathcal{S}}|.$$Otherwise, $D(\hat{\bm\nu})=-\infty$.
\end{proposition}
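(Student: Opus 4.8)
The plan is to start from the exact dual objective $D(\hat{\bm\nu})$ given in Theorem~\ref{thm:dual-bound} and simplify the term $\sum_{i<j}g^*(\btx_j^\top\hat{\bm\nu}_i+\btx_i^\top\hat{\bm\nu}_j;\lambda_0,\lambda_2,M,\underline{z}_{ij},\bar{z}_{ij})$ by splitting the index set $\{(i,j):i<j\}$ according to (a) whether $\hat\theta_{ij}\neq 0$ and (b) whether $z_{ij}$ has been fixed to $1$. The feasibility case analysis ($-\norm{\hat{\bm\nu}_i}^2/4-\btx_i^\top\hat{\bm\nu}_i>0$ versus not) is immediate from the definition of $D$ (the log term is $-\infty$ otherwise), so the substance is entirely in the feasible branch.

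First I would recall the closed form of $g^*$ from Appendix~\ref{subsec:dual-bound-appendix}, and in particular its value when its argument $\alpha:=\btx_j^\top\hat{\bm\nu}_i+\btx_i^\top\hat{\bm\nu}_j$ is small. The two facts I need are: (i) if $(i,j)\notin\mathcal{F}_1$ (so $g=\psi$, the node being unfixed or fixed to $0$ — note when $\underline z_{ij}=\bar z_{ij}=0$, $g=\varphi(\cdot;0,\ldots)=\chi\{\theta=0\}$ whose conjugate is identically $0$, and when unfixed $g=\psi$), and moreover $\hat\theta_{ij}=0$, then $g^*(\alpha;\ldots)=0$; this is the observation of~\citet{hazimeh2020sparse} quoted just before the proposition, namely that $\psi^*$ vanishes on the subdifferential $\partial\psi(0)$, together with the fact that the dual solution~\eqref{eqn:dual-solution} is built precisely from a coordinate-wise-optimal primal $\hat{\bm\Theta}$ so that $\alpha$ lies in $\partial\psi(0)$ (this is exactly the coordinate-wise stationarity at $\hat\theta_{ij}=0$); and (ii) if $(i,j)\in\mathcal{F}_1$ and $\hat\theta_{ij}=0$, then $g=\varphi(\cdot;1,\ldots)=\chi\{|\theta|\le M\}+\lambda_0+\lambda_2\theta^2$, whose conjugate at a small argument $\alpha$ equals $-\lambda_0$ (the constant shift $\lambda_0$ becomes $-\lambda_0$ in the conjugate, and for $|\alpha|$ small the quadratic-plus-box conjugate contributes $0$). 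Substituting these into the sum: indices in $\hat{\mathcal{S}}$ are kept as written; indices in $\mathcal{F}_1\setminus\hat{\mathcal{S}}$ each contribute $g^*=-\lambda_0$, hence $-\sum$ of these contributes $+\lambda_0|\mathcal{F}_1\setminus\hat{\mathcal{S}}|$; and indices that are neither in $\hat{\mathcal{S}}$ nor in $\mathcal{F}_1$ contribute $0$. Collecting gives exactly the claimed expression.

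The main obstacle I anticipate is justifying fact (i)/(ii) rigorously — specifically, that for a zero entry $\hat\theta_{ij}=0$ produced by Algorithm~\ref{algo:ASCD} (which is run only to a numerical tolerance), the argument $\alpha=\btx_j^\top\hat{\bm\nu}_i+\btx_i^\top\hat{\bm\nu}_j$ really lies in the region where $g^*$ takes its constant value. One has to check that $\alpha$ equals, up to the $a_{ij},b_{ij}$ bookkeeping in~\eqref{eqn:offdiag-update}, the linear-term coefficient $-b_{ij}$ of the coordinate subproblem at $(i,j)$ — since $\hat{\bm\nu}_i=-2\bm r_i/\hat\theta_{ii}$ and $\hat{\bm\nu}_j=-2\bm r_j/\hat\theta_{jj}$ with $\bm r_i=\btX\hat{\bm\theta}_i$, a direct computation shows $\btx_j^\top\hat{\bm\nu}_i+\btx_i^\top\hat{\bm\nu}_j = -b_{ij}$ when $\hat\theta_{ij}=0$ — and that $\hat\theta_{ij}=0$ being optimal for~\eqref{eqn:offdiag-update} forces $-b_{ij}\in\partial h_{ij}(0)$, which is precisely the set on which $h_{ij}^*$ attains its value at $0$ (namely $0$ for $\psi$, $-\lambda_0$ for $\varphi(\cdot;1,\ldots)$). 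Here one should either assume $\hat{\bm\Theta}$ satisfies the coordinate-wise optimality conditions exactly on the zero set (which Algorithm~\ref{algo:ASCD} enforces via the violation set $\mathcal{V}$ in line~3) or, more carefully, note that it does so up to the tolerance and that the proposition is stated for the output of Algorithm~\ref{algo:ASCD}. The remaining steps — plugging the conjugate values back in and counting — are routine arithmetic once this alignment is in place.
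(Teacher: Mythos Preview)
Your proposal is correct and follows essentially the same route as the paper. The paper also decomposes $\sum_{i<j}g^*_{ij}$ over the four cells $\hat{\mathcal S}$, $\hat{\mathcal S}^c\cap\mathcal F_1$, $\hat{\mathcal S}^c\cap\mathcal F_0$, $\hat{\mathcal S}^c\cap\mathcal R$, then uses two helper propositions to show that (i) $b_{ij}=-(\btx_j^\top\hat{\bm\nu}_i+\btx_i^\top\hat{\bm\nu}_j)$ whenever $\hat\theta_{ij}=0$, and (ii) the emptiness of the violation set $\mathcal V$ at termination of Algorithm~\ref{algo:ASCD} forces the coordinate-wise optimality $\mathcal Q_g(a_{ij},b_{ij};\ldots)=0$, from which $g^*_{ij}=0$ (cases $\mathcal F_0$ and $\mathcal R$) or $g^*_{ij}=-\lambda_0$ (case $\mathcal F_1$) is read off from the explicit conjugate tables. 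One small imprecision to tighten in your fact~(ii): $\varphi^*(\alpha;1,\lambda_0,\lambda_2,M)=\alpha^2/(4\lambda_2)-\lambda_0$ for $|\alpha|\le 2\lambda_2 M$, which is not $-\lambda_0$ for merely ``small'' $\alpha$ but only at $\alpha=0$; your subsequent $\partial h_{ij}(0)$ argument does give exactly $\alpha=0$ here (since $\varphi(\cdot;1,\ldots)$ is smooth at $0$ with $\partial\varphi(0;1,\ldots)=\{0\}$), so the conclusion is unaffected, but the earlier heuristic sentence should be adjusted.
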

As shown in Proposition~\ref{prop:eff-comp-dual-bounds}, calculating the dual bound requires calculating the values of $g^*(\btx_j^\top\hat{\bm\nu}_j+\btx_i^\top\hat{\bm\nu}_i;\lambda_0,\lambda_2,M,\underline{z}_{ij},\bar{z}_{ij})$ for $(i,j)\in\hat{\mathcal{S}}$.
In practice, we always make sure $\mathcal{F}_1$ is a subset of the active set $\mathcal{A}$, and thus both $\hat{\mathcal{S}}$ and $\mathcal{F}_1$ are subsets of $\mathcal{A}$. We can compute the convex conjugate terms restricted to $\mathcal{A}$, and the corresponding computational cost is $\mathcal{O}(n|\mathcal{A}|)$. Since we anticipate a sparse solution, we expect $|\mathcal{A}|$ to be small, leading to efficient calculations of the dual bound.

\subsection{Approximate solver and incumbents}
\label{subsec:incumbents-solving}

We discuss how to obtain approximate solutions 
to~\eqref{eqn:node-incumbent} using Algorithms~\ref{algo:CD} and \ref{algo:ASCD} (cf Section~\ref{subsec:ASCD}). These algorithms are used at every node of our BnB search framework to obtain new incumbents. As the CD algorithms can get stuck in a local solution when applied to the non-convex problem~\eqref{eqn:node-incumbent}, we also discuss a local search method that can improve solution quality by applying them on top of Algorithms~\ref{algo:CD} and \ref{algo:ASCD}. We note that the combination of Algorithms~\ref{algo:CD},~\ref{algo:ASCD} and local search to get good solutions (without optimality certificates) is of independent interest.

\customparagraph{Coordinate updates} The objective in \eqref{eqn:node-incumbent} is a special case of the unified formulation~\eqref{eqn:unified} with $$h_{ij}(\theta_{ij})=\left\{\begin{array}{ll}
	\lambda_0\bm1\{\theta_{ij}\neq 0\}+\lambda_2\theta_{ij}^2+\chi\{|\theta_{ij}|\leq M\},&  \text{if}~(i,j)\in\mathcal{S} \\
	\chi\{\theta_{ij}=0\}, &\text{if}~(i,j)\in\mathcal{S}^c. 
\end{array}\right.$$
The corresponding off-diagonal update in \eqref{eqn:offdiag-update} in 
line~3 of Algorithm~\ref{algo:CD} has a closed-form expression---see Appendix~\ref{subsubsec:L0-oracles} for details.

\customparagraph{Choosing $\mathcal{S}$} For computing the initial incumbent in the BnB tree, or when using our algorithm as a standalone approximate solver, we take $\mathcal{S}$ to be the set of all upper triangular indices: $\mathcal{S}=\{(i,j):1\leq i<j\leq p\}$, and use Algorithm~\ref{algo:ASCD} to obtain a good solution to the problem.
At every node of the BnB tree we attempt to obtain a better feasible solution (resulting in an improved incumbent) based on the solution $\hat{\bm\Theta}$ available from the current node's relaxation. 
In this case, we set $\mathcal{S}$ based on 
the sparsity pattern of $\hat{\bm\Theta}$. We propose two options: (i) directly taking the support of $\bm z$, i.e. $\mathcal{S}=\{(i,j):i<j,~\text{and}~z_{ij}>0\}$; (ii) taking the support of rounded $\bm z$, i.e. $\mathcal{S}=\{(i,j):i<j,~\text{and}~z_{ij}\geq 0.5\}$. In this case, due to the sparsity of $\hat{\bm\Theta}$, we expect $\mathcal{S}$ to be small 
and CD can efficiently solve the reduced problem.

\customparagraph{Initializations} Note that~\eqref{eqn:node-incumbent} is a discrete optimization problem, 
the number of iterations in Algorithm~\ref{algo:CD} or \ref{algo:ASCD} and the quality of the approximate solution given by the algorithms are affected by the quality of the initial solution $\hat{\bm\Theta}$ and/or the quality of the initial active set $\mathcal{A}$. 

For obtaining the initial incumbent solution in the BnB tree, or when using our algorithm as a standalone approximate solver, as we do not have any prior knowledge, we initialize Algorithm~\ref{algo:ASCD} with a diagonal matrix: $\hat{\bm\Theta}^{(0)}=\mathrm{diag}(v_1^{-1},\dots,v_p^{-1}),$
which is optimal when all the off-diagonal entries are forced to be 0. We obtain the initial active set $\mathcal{A}$ by correlation screening~\citep{hazimeh2020fast} --- computing the correlation matrix of $\bm X$ and taking a small portion of coordinates $(i,j)$ that have the highest correlations in each row. 

In order to update the incumbent at every node of the BnB tree, we initialize Algorithm~\ref{algo:CD} with the current relaxation solution $\hat{\bm\Theta}$ restricted on $\mathcal{S}$.

\customparagraph{Local Search} The CD-based solver for Problem~\eqref{eqn:node-incumbent} can get stuck in a suboptimal solution due to the non-convexity of the objective. 
To improve the quality of the solution, we use a local search procedure where roughly speaking we change the support of the solution followed by running CD on the new support. 
The details of the local search are discussed in Appendix~\ref{subsec:localsearch}.

\subsection{Additional BnB Details}\label{sec:bnbdetails}
Finally, we discuss some details from our BnB implementation below.

\customparagraph{Branching Strategy} We use a maximum fractional branching strategy~\citep{belotti2013mixed}, where we branch on the variable $z_{ij}$ that is the furthest from integrality. This choice is mostly due to computational efficiency as there are $\mathcal{O}(p^2)$-many binary variables in our problem and hence, a more sophisticated branching can be computationally expensive.

\customparagraph{Search Strategy} We use a Breadth-First Search (BFS) for the BnB procedure. When we do early termination of the BnB tree, the BFS strategy offers us the flexibility to exit the BnB tree after processing nodes up to a certain depth.

\section{Statistical Properties}\label{sec:stat}
We investigate the statistical properties of the estimator~\eqref{main-pseudo-1}. We consider two different metrics to quantify the quality of our estimator. First, we present estimation error bounds of the form $\|\B{\Theta}^*-\hat{\B{\Theta}}\|_F$ where $\B{\Theta}^*$ is the underlying precision matrix and $\hat{\B{\Theta}}$ is its estimate. Next, we consider the variable selection properties of our estimator. Throughout this section, to simplify our proofs, we study a slightly modified version of Problem~\eqref{main-pseudo-1}. In particular, instead of considering the matrix $\B\Theta$ to be symmetric, we consider symmetric support, that is $\beta_{ij}\neq 0\Leftrightarrow \beta_{ji}\neq 0$. 
For technical reasons, we include a bound constraint on $\{\sigma_j\}$; and consider
\begin{align}\label{main-sup}
	\min_{\{{\beta_{ij}},\sigma_j,z_{ij}\}} \quad  & \sum_{j=1}^p\left[\log(\sigma_j)+ \frac{1}{2n\sigma_j^2}\left\Vert \B{x}_j -\sum_{i:i\neq j}{\beta}_{ij}\B{x}_i \right\Vert_2^2\right] +\lambda \sum_{i\neq j}z_{ij} \\
	\text{s.t.} \quad & z_{ij}\in\{0,1\},~~z_{ij}=z_{ji},~~(1-z_{ij})\beta_{ij}=0,~~\beta_{ii}=0~~ i\neq j\in[p]\nonumber\\
	& \sqrt{\ell}\leq\sigma_j\leq \sqrt{L},~~j\in[p] \nonumber
\end{align}
for some $0\leq\ell\leq L$. In practice, we observe the variances obtained from the optimization of~\eqref{eqn:L0L2} are bounded, so imposing a boundedness constraint is not restrictive. We note that the binary variables $z_{ij}$ here encode sparsity, similar to Problem~\eqref{eqn:mio}. Similar to Section~\ref{subsec:mio}, by taking  ${\theta}_{jj}={1}/{{\sigma}_j^2}$ and ${\theta}_{ji} = -{{\beta}_{ij}}/{{\sigma}_j^2}$, Problem~\eqref{main-sup} is equivalent to the convex mixed integer problem
\begin{align}\label{main-sup-convex}
	\min_{\{\B{\Theta},z_{ij}\}} \quad  &\sum_{i=1}^p\left[-\log(\theta_{ii})+\frac1{\theta_{ii}}\norm{\btX\bm\theta_{i}}^2\right]+\lambda\sum_{i\neq j}z_{ij}, \\
	\text{s.t.} \quad & z_{ij}\in\{0,1\},~~z_{ij}=z_{ji},~~(1-z_{ij})\theta_{ij}=0,~~ i\neq j\in[p]\nonumber\\
	& \frac{1}{L}\leq\theta_{jj}\leq \frac{1}{\ell},~~j\in[p]. \nonumber
\end{align}

\subsection{Estimation Error Bound}\label{est-error-stat} Before proceeding with our results in this section, we state our assumptions on the model.

\begin{assumption}
	We have $n$ independent samples $\B{x}^{(1)},\ldots,\B{x}^{(n)}\in\R^p$ from ${\mathcal N}(\B{0},\B{\Sigma}^*)$
	with $\B\Theta^* = (\B\Sigma^*)^{-1}$. Let $\{\beta^*_{ij}\},\{\sigma_j^*\}$ be as defined in~\eqref{lem_reg}. We assume:
	\begin{enumerate}[leftmargin=*,label=\textbf{(A\arabic*)}]
		\item There exist $l_{\sigma},u_{\sigma}\geq 0$ such that for any $j\in[p]$, $l_{\sigma}\leq \sigma_j^*\leq u_{\sigma}$ with $u_{\sigma}\geq 1$.\label{assum1-1}
		\item For any $i\neq j$, the values $|\beta^*_{ij}|$ are uniformly bounded by a universal constant, $|\beta^*_{ij}|\lesssim 1$.\label{assum1-2}
		\item We have $l_{\sigma}^2\geq \frac{6}{25}u_{\sigma}^4+\frac{2}{5}u_{\sigma}^2.$\label{assum1-3}
		\item For $j\in [p]$, $\left\vert\left\{i\in [p]: i\neq j, \beta^*_{ij}\neq 0\right\}\right\vert\leq k$. \label{assum1-4}
		\item For the matrix $\B{\Theta}^*$, we assume 
		$\min_{\substack{S\subseteq[p]}}\lambda_{\min}(\B{\Sigma}^*_{S,S})\geq \kappa^2 \gtrsim 1$
		where $\kappa$ is a universal constant.\label{assum1-5}
	\end{enumerate}
\end{assumption}
Assumptions~\ref{assum1-1} to~\ref{assum1-3} ensure that the entries of the matrix $\B{\Theta}^*$ are not too large or small---such assumptions are common in the literature~\citep{cai2011constrained,ravikumar2008model}. Assumption~\ref{assum1-4} states that each column of $\B{\Theta}^*$ is sparse and off-diagonals of each column have at most $k$ nonzeros---a standard assumption in the GGM literature \cite[Chapter 11]{wainwright2019high}. Note that our estimator does not assume that $k$ is known, and our results are adaptive to the sparsity level $k$. Assumption~\ref{assum1-5} states that the sub-matrices of $\B{\Sigma}^*$ are not badly conditioned---required in our analysis to derive estimation error bounds. Our analysis considers $\kappa$ to be a fixed universal constant while other parameters can vary. Additional discussions on our assumptions are presented in Appendix~\ref{sec:theory-discussion}.

\begin{theorem}\label{lowdimthm}
	Let $\{\hat{\beta}_{ij}\},\{\hat{\sigma}_j\}$ be an optimal solution to Problem~\eqref{main-sup} with $\lambda$ taken as
	\begin{equation}\label{thm1-lambda}
		\lambda=c_{\lambda}\frac{u_{\sigma}^2\log (2p/k)}{l_{\sigma}^2n} 
	\end{equation}
	for some sufficiently large universal constant $c_{\lambda}>0$, and $\ell = l_{\sigma}^2,L= u_{\sigma}^2$. Suppose, Assumptions~\ref{assum1-1} to~\ref{assum1-5} hold true with $p/k>5$, and $n\gtrsim kp\log p$. Then, with high probability\footnote{An explicit expression for probability can be found in~\eqref{thm1prob}\label{probfootnote}}, we have: 
	\begin{equation}
		\sum_{j\in[p]}(\hat{\sigma}_j-\sigma^*_j)^2 + \frac{1}{u_{\sigma}^2}\sum_{j\in[p]} \sum_{i:i\neq j} (\beta_{ij}^*-\hat{\beta}_{ij})^2\lesssim \frac{u_{\sigma}^2kp\log(2p/k)}{l_{\sigma}^2n}.
	\end{equation}
	
\end{theorem}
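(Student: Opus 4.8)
We follow the classical basic-inequality-plus-restricted-strong-convexity recipe for $\ell_0$-penalized $M$-estimation, working in the convex reparametrization. The plan is first to pass from~\eqref{main-sup} to the equivalent convex program~\eqref{main-sup-convex} via $\theta_{jj}=1/\sigma_j^2$, $\theta_{ij}=-\beta_{ij}/\sigma_j^2$, and to work with $\hat{\B\Theta}$ and $\B\Theta^\ast=(\B\Sigma^\ast)^{-1}$; once an error bound in the $\B\Theta$-variables is established, translating it to the $(\B\beta,\B\sigma)$-variables is routine because, by~\ref{assum1-1}, the maps $\theta_{jj}\mapsto\theta_{jj}^{-1/2}$ and $(\theta_{ij},\theta_{jj})\mapsto-\theta_{ij}/\theta_{jj}$ are smooth with bounded derivatives on the box $1/u_\sigma^2\le\theta_{jj}\le1/l_\sigma^2$ (here~\ref{assum1-3}, which forces $l_\sigma^2\gtrsim u_\sigma^2$, keeps all the $u_\sigma/l_\sigma$ ratios $O(1)$ so the $u_\sigma^2$ prefactors on the two blocks of the target match). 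I first check that $\B\Theta^\ast$, together with $z^\ast_{ij}=\bm1\{\theta^\ast_{ij}\ne0\}$, is feasible for~\eqref{main-sup-convex}: the box constraint holds by~\ref{assum1-1} with $\ell=l_\sigma^2$, $L=u_\sigma^2$; $z^\ast$ has symmetric support since $\B\Theta^\ast$ is symmetric; and, translating~\ref{assum1-4} through~\eqref{lem_reg}, the number of nonzero off-diagonal entries of $\B\Theta^\ast$ is at most $2kp$. Since $(1-\hat z_{ij})\hat\theta_{ij}=0$ forces the $\hat z$-support to contain the off-diagonal support of $\hat{\B\Theta}$, optimality gives the basic inequality $F_{\mathrm{loss}}(\hat{\B\Theta})-F_{\mathrm{loss}}(\B\Theta^\ast)+\lambda\,\#\{\hat\theta_{ij}\ne0,\ i\ne j\}\le 2\lambda kp$, where $F_{\mathrm{loss}}(\B\Theta)=\sum_i\bigl(-\log\theta_{ii}+\tfrac1{\theta_{ii}}\|\btX\B\theta_i\|^2\bigr)$.

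The heart of the proof is a restricted-strong-convexity lower bound for $F_{\mathrm{loss}}(\hat{\B\Theta})-F_{\mathrm{loss}}(\B\Theta^\ast)$. Since $F_{\mathrm{loss}}$ is convex (each column term is $-\log\theta_{jj}$ plus the perspective of $\|\cdot\|^2$), I write $F_{\mathrm{loss}}(\hat{\B\Theta})-F_{\mathrm{loss}}(\B\Theta^\ast)=\langle\nabla F_{\mathrm{loss}}(\B\Theta^\ast),\hat{\B\Delta}\rangle+B$ with $\hat{\B\Delta}=\hat{\B\Theta}-\B\Theta^\ast$ and Bregman remainder $B\ge0$. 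A Schur-complement computation of the column-wise Hessian shows that on the box $\theta_{jj}\in[1/u_\sigma^2,1/l_\sigma^2]$ the curvature of the $j$-th column term, measured in the coordinates $(\theta_{jj},\btX\B\theta_j)$, is bounded below by a positive constant $c(l_\sigma)$; integrating along the segment from $\B\Theta^\ast$ to $\hat{\B\Theta}$ (which stays in the box) gives $B\gtrsim c(l_\sigma)\bigl(\sum_j(\hat\theta_{jj}-\theta^\ast_{jj})^2+\sum_j\|\btX(\hat{\B\theta}_j-\B\theta^\ast_j)\|^2\bigr)$. For the gradient term I compute $\nabla F_{\mathrm{loss}}(\B\Theta^\ast)$ explicitly: its $(i,j)$ off-diagonal entry is $\tfrac2n\B x_i^\top\B\epsilon_j$ and its $j$-th diagonal entry is $(\tfrac1n\|\B\epsilon_j\|^2-(\sigma^\ast_j)^2)+\tfrac2n(\sum_{i\ne j}\beta^\ast_{ij}\B x_i)^\top\B\epsilon_j$, where $\B\epsilon_j=\B x_j-\sum_{i\ne j}\beta^\ast_{ij}\B x_i$; both have mean zero. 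Conditioning on $\B X_{-j}$ (the data with column $j$ removed), under which $\B\epsilon_j\sim\mathcal N(\B0,(\sigma^\ast_j)^2\B I_n)$ is independent of $\B X_{-j}$, the vector $\tfrac1n\B X_S^\top\B\epsilon_j$ is Gaussian with covariance $\tfrac{(\sigma^\ast_j)^2}{n}\hat{\B\Sigma}_{S,S}$, and a covering/union bound over supports yields $|\langle\nabla F_{\mathrm{loss}}(\B\Theta^\ast),\hat{\B\Delta}\rangle|\lesssim u_\sigma\sqrt{(\sum_j s_j\log(ep/s_j)+\log p)/n}\,\|\hat{\B\Delta}\|_F$, where $s_j$ is the sparsity of the $j$-th column of $\hat{\B\Delta}$; a $\chi^2$ tail bound for the $\|\B\epsilon_j\|^2$ terms and concentration of $\mathrm{diag}(\hat{\B\Sigma})$ complete the estimate, with the necessary moment control coming from~\ref{assum1-1}--\ref{assum1-3} and~\ref{assum1-5}.

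Next I turn the design quadratic form into $\|\hat{\B\Delta}\|_F^2$. A standard bootstrap using the basic inequality, $B\ge0$, and the entropy estimate above first yields $\#\{\hat\theta_{ij}\ne0,\ i\ne j\}\lesssim kp$, so every column of $\hat{\B\Delta}$ is $O(kp)$-sparse; then a restricted-eigenvalue argument—covering plus a Bernstein bound over all supports of size $\lesssim kp$, which needs $n\gtrsim kp\log p$, together with $\lambda_{\min}(\B\Sigma^\ast_{S,S})\ge\kappa^2$ from~\ref{assum1-5}—gives $\sum_j\|\btX(\hat{\B\theta}_j-\B\theta^\ast_j)\|^2\ge\tfrac{\kappa^2}2\|\hat{\B\Delta}\|_F^2$ on the same high-probability event. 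Plugging the curvature and gradient bounds into the basic inequality, splitting the gradient term by Young's inequality, and using $\sum_j s_j\log(ep/s_j)\lesssim kp\log(2p/k)$ (since $p/k>5$ makes $\log(2p/k)\asymp\log(ep/s_j)$ up to constants and $\sum_j s_j\lesssim kp$), I obtain a scalar inequality $a\,\|\hat{\B\Delta}\|_F^2\le 2\lambda kp+b$ with $a\asymp c(l_\sigma)\kappa^2$ and $b\asymp u_\sigma^2 kp\log(2p/k)/n$; substituting $\lambda$ from~\eqref{thm1-lambda}, dividing, and translating back to $(\hat{\B\beta},\hat{\B\sigma})$ via the bounded-derivative maps above gives the stated bound, with failure probability the union bound over the $O(p)$ Gaussian/$\chi^2$ events and the single restricted-eigenvalue event.

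I expect the main obstacle to be the restricted-strong-convexity step. Unlike the smooth, component-wise strongly convex least-squares loss of sparse regression, the pseudo-likelihood loss is non-convex in $(\B\beta,\B\sigma)$, and even its $\B\Theta$-convex form has a Hessian whose positivity degrades badly when $\sigma_j$ is small or when $l_\sigma$ and $u_\sigma$ are far apart; isolating the clean constant $c(l_\sigma)>0$—this is exactly where~\ref{assum1-3} ($l_\sigma^2\ge\tfrac6{25}u_\sigma^4+\tfrac25u_\sigma^2$) enters—while keeping the $u_\sigma,l_\sigma$-dependence tight enough to reach the target rate, is the delicate part. A secondary point is that the residuals $\B\epsilon_j$ are only conditionally Gaussian given $\B X_{-j}$, so every tail and cross-term estimate must be carried out conditionally and then combined with concentration of the random design; and, as usual for $\ell_0$ penalties, the a-priori support bound $\#\{\hat\theta_{ij}\ne0\}\lesssim kp$ requires a small bootstrapping argument rather than following at once from optimality.
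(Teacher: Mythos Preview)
Your high-level strategy is plausible, but the restricted-strong-convexity step contains a concrete error. The Hessian of $g(t,r)=-\log t+\|r\|^2/t$ in the coordinates $(t,r)=(\theta_{jj},\btX\bm\theta_j)$ is \emph{not} bounded below by a positive constant on the box: a direct computation gives $\lambda_{\min}\bigl(\nabla^2 g(t,r)\bigr)=1/(t^2+\|r\|^2)$, which tends to $0$ as $\|r\|\to\infty$, and you have no a~priori control on $\|\btX\hat{\bm\theta}_j\|$ along the segment. What the Bregman remainder of the perspective part actually equals is
\[
B_{\mathrm{persp},j}\;=\;\frac{1}{\hat\theta_{jj}}\Bigl\|\hat r_j-\tfrac{\hat\theta_{jj}}{\theta_{jj}^\ast}r_j^\ast\Bigr\|^2
\;=\;\frac{1}{n\hat\sigma_j^2}\bigl\|\B X_{-j}(\bm\beta_j^\ast-\hat{\bm\beta}_j)\bigr\|^2,
\]
i.e.\ the $\beta$-prediction error, not $\|\btX(\hat{\bm\theta}_j-\bm\theta_j^\ast)\|^2$. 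So your curvature bound as stated is false, and with it the plan of first proving a $\B\Theta$-error and then reading off $(\B\beta,\B\sigma)$ breaks down; in fact, translating from $\|\hat{\B\Theta}-\B\Theta^\ast\|_F$ to $\sum_{i\ne j}(\hat\beta_{ij}-\beta_{ij}^\ast)^2$ picks up an extra factor of $k$ through $\sum_i(\theta_{ij}^\ast)^2$.

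The paper avoids all of this by working directly in $(\B\beta,\B\sigma)$ and never passing through a Hessian of the full loss. Its decomposition is different from your gradient/Bregman split: writing $f_j(x)=\log x+\|\B y_j^\ast\|_2^2/(2nx^2)$ (with $\hat\beta$ still inside $\hat{\B y}_j$, not linearized away), optimality gives
\[
\sum_j\bigl[f_j(\hat\sigma_j)-f_j(\sigma_j^\ast)\bigr]\;+\;\sum_j\frac{\|\hat{\B y}_j\|^2-\|\B y_j^\ast\|^2}{2n\hat\sigma_j^2}\;\le\;\lambda\sum_j\bigl(|S_j^\ast|-|\hat S_j|\bigr).
\]
The $\sigma$-block is handled by a second-order Taylor expansion of $f_j$ \emph{in the scalar $\sigma$ only}: this is precisely where~\ref{assum1-3} enters, and not as a mere ``$l_\sigma\asymp u_\sigma$'' condition---the exact constants $l_\sigma^2\ge\tfrac{6}{25}u_\sigma^4+\tfrac{2}{5}u_\sigma^2$ are chosen so that $f_j''(x)-\tfrac12>\tfrac{1}{10}$ on $[l_\sigma,u_\sigma]$ once $\|\B y_j^\ast\|^2/n$ has concentrated. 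The $\beta$-block is the least-squares residual difference, whose cross term $\B\varepsilon_j^\top\B X(\bm\beta^\ast-\hat{\bm\beta})$ is controlled uniformly over supports with the $\ell_0$ penalty absorbing the entropy (so no separate ``bootstrap $\#\{\hat\theta_{ij}\ne0\}\lesssim kp$'' step is needed). The restricted-eigenvalue step then uses $n\gtrsim p$ and a union over \emph{all} $S\subseteq[p]$, not just sparse ones, which is why~\ref{assum1-5} is stated for every $S$.

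In short: the route you propose can be made to work, but only after replacing the false Hessian lower bound by the exact perspective-Bregman identity above, at which point you have essentially recovered the paper's $(\B\beta,\B\sigma)$ decomposition through the back door. The paper's argument is more direct for this theorem, and its role for~\ref{assum1-3} is sharper than you describe.
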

Theorem~\ref{lowdimthm} establishes an $\ell_2$ error bound in estimating the coefficients $\{\beta^*_{ij}\}$ and variances $\{(\sigma^*_j)^2\}$. 
Theorem~\ref{lowdimthm2}  presents an estimation error bound for the precision matrix $\B{\Theta}$ 
using the equivalence of Problems~\eqref{main-sup} and~\eqref{main-sup-convex}.

\begin{theorem}\label{lowdimthm2}
	Let $\hat{\B\Theta}$ be an optimal solution to Problem~\eqref{main-sup-convex} with $\lambda,\ell,L$ as defined in Theorem~\ref{lowdimthm}. Then, under the assumptions of Theorem \ref{lowdimthm}, with high probability\footref{probfootnote}, we have:
	\begin{equation}
		\left\Vert\hat{\B{\Theta}}-\B{\Theta}^*\right\Vert_F^2 \lesssim  \frac{(u_{\sigma}^6+u_{\sigma}^8)kp\log(2p/k)}{l_{\sigma}^{10}n}.
	\end{equation}
\end{theorem}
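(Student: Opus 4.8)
The plan is to transfer the $\ell_2$ error bound of Theorem~\ref{lowdimthm} on the regression coefficients and variances to a Frobenius-norm bound on the precision matrix via the change of variables $\theta_{jj}=1/\sigma_j^2$, $\theta_{ji}=-\beta_{ij}/\sigma_j^2$. By the equivalence of Problems~\eqref{main-sup} and~\eqref{main-sup-convex}, an optimal $\hat{\B\Theta}$ of~\eqref{main-sup-convex} corresponds to an optimal $(\{\hat\beta_{ij}\},\{\hat\sigma_j\})$ of~\eqref{main-sup}, with $\hat\sigma_j=\hat\theta_{jj}^{-1/2}\in[l_\sigma,u_\sigma]$ and $\hat\beta_{ij}=-\hat\theta_{ji}/\hat\theta_{jj}$, while $\theta^*_{jj}=1/(\sigma^*_j)^2$ and $\theta^*_{ji}=-\beta^*_{ij}/(\sigma^*_j)^2$ from~\eqref{lem_reg}. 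Working on the high-probability event of Theorem~\ref{lowdimthm}, I split
\[
\bignorm{\hat{\B\Theta}-\B\Theta^*}_F^2=\sum_{j\in[p]}(\hat\theta_{jj}-\theta^*_{jj})^2+\sum_{j\in[p]}\sum_{i\neq j}(\hat\theta_{ji}-\theta^*_{ji})^2,
\]
keeping both ordered off-diagonal pairs, since~\eqref{main-sup-convex} enforces only symmetric support on $\hat{\B\Theta}$, not symmetry.

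For the diagonal part, $|\hat\theta_{jj}-\theta^*_{jj}|=\frac{|\hat\sigma_j+\sigma^*_j|}{\hat\sigma_j^2(\sigma^*_j)^2}|\hat\sigma_j-\sigma^*_j|\le\frac{2u_\sigma}{l_\sigma^4}|\hat\sigma_j-\sigma^*_j|$ by Assumption~\ref{assum1-1} and the box constraints, so $\sum_j(\hat\theta_{jj}-\theta^*_{jj})^2\le\frac{4u_\sigma^2}{l_\sigma^8}\sum_j(\hat\sigma_j-\sigma^*_j)^2$. For the off-diagonal part, write $\hat\theta_{ji}-\theta^*_{ji}=\frac{\beta^*_{ij}-\hat\beta_{ij}}{\hat\sigma_j^2}+\beta^*_{ij}\bigl(\frac{1}{(\sigma^*_j)^2}-\frac{1}{\hat\sigma_j^2}\bigr)$ and apply $(a+b)^2\le 2a^2+2b^2$ with the same elementary bounds to get $(\hat\theta_{ji}-\theta^*_{ji})^2\le\frac{2}{l_\sigma^4}(\beta^*_{ij}-\hat\beta_{ij})^2+\frac{8u_\sigma^2}{l_\sigma^8}(\beta^*_{ij})^2(\hat\sigma_j-\sigma^*_j)^2$.

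Summing the second term over $i\neq j$ requires a bound on $\sum_{i\neq j}(\beta^*_{ij})^2$. I would \emph{not} use Assumptions~\ref{assum1-2} and~\ref{assum1-4} directly (that yields only $\lesssim k$, inflating the rate by a factor $k$); instead, since $\beta^*_{ij}=-\theta^*_{ji}(\sigma^*_j)^2$,
\[
\sum_{i\neq j}(\beta^*_{ij})^2=(\sigma^*_j)^4\bigl(\bignorm{\B\Theta^*_{j,\cdot}}_2^2-(\theta^*_{jj})^2\bigr)\le(\sigma^*_j)^4\bigl((\B\Theta^*)^2\bigr)_{jj}\le u_\sigma^4\,\lambda_{\max}(\B\Theta^*)^2\le u_\sigma^4/\kappa^4\lesssim u_\sigma^4,
\]
using $\bignorm{\B\Theta^*_{j,\cdot}}_2^2=\bigl((\B\Theta^*)^2\bigr)_{jj}\le\lambda_{\max}(\B\Theta^*)^2$, Assumption~\ref{assum1-1}, and $\lambda_{\max}(\B\Theta^*)=1/\lambda_{\min}(\B\Sigma^*)\le 1/\kappa^2$ from Assumption~\ref{assum1-5} with $S=[p]$. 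Collecting the three contributions gives $\bignorm{\hat{\B\Theta}-\B\Theta^*}_F^2\lesssim\frac{u_\sigma^6}{l_\sigma^8}\sum_j(\hat\sigma_j-\sigma^*_j)^2+\frac{1}{l_\sigma^4}\sum_j\sum_{i\neq j}(\beta^*_{ij}-\hat\beta_{ij})^2$. Finally I substitute the two bounds that follow from Theorem~\ref{lowdimthm} (both left-hand terms there being nonnegative), namely $\sum_j(\hat\sigma_j-\sigma^*_j)^2\lesssim\frac{u_\sigma^2kp\log(2p/k)}{l_\sigma^2n}$ and, from the $u_\sigma^{-2}$-weighted term, $\sum_j\sum_{i\neq j}(\beta^*_{ij}-\hat\beta_{ij})^2\lesssim\frac{u_\sigma^4kp\log(2p/k)}{l_\sigma^2n}$, and collect powers of $u_\sigma,l_\sigma$ (using $l_\sigma\le u_\sigma$) to reach $\bignorm{\hat{\B\Theta}-\B\Theta^*}_F^2\lesssim\frac{(u_\sigma^6+u_\sigma^8)kp\log(2p/k)}{l_\sigma^{10}n}$.

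The main obstacle is the off-diagonal block: the entrywise error $\hat\theta_{ji}-\theta^*_{ji}$ mixes the estimation errors in $\beta$ and $\sigma$ multiplicatively, and the obvious way of controlling $\sum_{i\neq j}(\beta^*_{ij})^2$ through boundedness and row-sparsity of $\B\Theta^*$ would cost an extra factor $k$; routing that quantity through the spectral Assumption~\ref{assum1-5} instead is what preserves the $kp\log(2p/k)/n$ rate. The rest is bookkeeping of the $u_\sigma,l_\sigma$ powers through the nonlinear reparametrization.
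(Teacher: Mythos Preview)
Your proposal is correct and follows the same overall architecture as the paper's proof: pass to $(\beta,\sigma)$-coordinates, split the Frobenius error into diagonal and off-diagonal contributions, bound each entrywise, and plug in Theorem~\ref{lowdimthm}. The one genuine difference is how you handle the cross-term $\sum_{i\neq j}(\beta^*_{ij})^2(\hat\sigma_j-\sigma^*_j)^2$. The paper, in~\eqref{thm2ineq1}, absorbs $|\beta^*_{ij}|\lesssim 1$ via Assumption~\ref{assum1-2} and then in step~(a) writes the off-diagonal $\sigma$-contribution as $\sum_{j}|\hat\sigma_j-\sigma^*_j|^2$; read literally, summing the entrywise bound over $i\neq j$ would pick up a factor $p-1$ (or a factor $k$ if one first reinstates the $|\beta^*_{ij}|$ and uses~\ref{assum1-4}). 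You sidestep this by keeping $(\beta^*_{ij})^2$ in the sum and controlling $\sum_{i\neq j}(\beta^*_{ij})^2\le (\sigma^*_j)^4\lambda_{\max}(\B\Theta^*)^2\le u_\sigma^4/\kappa^4\lesssim u_\sigma^4$ through the spectral Assumption~\ref{assum1-5}, which is a dimension-free constant and therefore preserves the $kp\log(2p/k)/n$ rate cleanly. This is a nice observation; the price is a couple of extra powers of $u_\sigma$ in the intermediate constants, but after relaxing $l_\sigma\le u_\sigma$ your bound still fits (and in fact is dominated by) the stated $(u_\sigma^6+u_\sigma^8)/l_\sigma^{10}$ form.
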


\customparagraph{Comparison with previous work}
Theorem~\ref{lowdimthm2} shows that our proposed estimator \ourmethod~achieves an estimation error rate (using Frobenius norm) of $\sqrt{kp\log p/n}$. This rate typically matches the estimation rate of current methods for sparse GGMs and is known to be minimax optimal up to logarithmic factors. See \citet[for example]{rothman2008sparse} for discussion on estimation error rate for GGMs in the context of a regularized maximum likelihood estimator---we are not aware of similar non-asymptotic error bounds for a pseudo-likelihood-based estimator which is what we focus on here.

To our knowledge, Theorem~\ref{lowdimthm2} is a novel result showing 
a non-asymptotic estimation guarantee for pseudo-likelihood-based GGM that results in a symmetric support. GGM estimators based on pseudo-likelihood have been considered in earlier work~\citep{peng2009partial,khare2015convex,friedman2010applications}---however, as far as we can tell, non-asymptotic analysis similar to the one in Theorem~\ref{lowdimthm2} has not appeared in earlier work. (The analysis of~\citet{peng2009partial} is asymptotic as $n,p\to\infty$). 

We note that Problem~\eqref{main-sup} has a non-convex and non-quadratic objective, hence existing proof techniques for sparse linear regression do not directly apply to our case. Therefore, we develop new tools for our proof. At a high level, although the objective of~\eqref{main-sup} is non-convex, we show that the pseudo-likelihood function can be locally lower bounded with a quadratic function, resulting in a bound of the form 
\begin{multline*}
	\sum_{j=1}^p\left[\log(\hat{\sigma}_j)+ \frac{1}{2n\hat{\sigma}_j^2}\left\Vert \B{x}_j -\sum_{i:i\neq j}\hat{\beta}_{ij}\B{x}_i \right\Vert_2^2\right]-\sum_{j=1}^p\left[\log(\sigma_j^*)+ \frac{1}{2n(\sigma_j^*)^2}\left\Vert \B{x}_j -\sum_{i:i\neq j}{\beta}^*_{ij}\B{x}_i \right\Vert_2^2\right]\\ \gtrsim \sum_{i\neq j}(\hat{\beta}_{ij}-\beta_{ij}^*)^2 + \sum_{j\in[p]}(\hat{\sigma}_j-\sigma_j^*)^2~~~~~~~~~~~~~~~~~
\end{multline*}
which we use to derive estimation error bounds.

\subsection{Support Recovery Guarantees}
We now study the variable selection properties 
of our estimator. To this end, we present a new set of assumptions that allow us to derive support recovery guarantees\footnote{Note that here we do not consider the assumptions stated in Section~\ref{est-error-stat}.}. 

\begin{assumption}
	We have $n$ independent samples $\B{x}^{(1)},\ldots,\B{x}^{(n)}\in\R^p$ from ${\mathcal N}(\B{0},\B{\Sigma}^*)$
	with $\B\Theta^* = (\B\Sigma^*)^{-1}$.
	Let $\{\beta^*_{ij}\}$ be as defined in~\eqref{lem_reg}. We assume:
	\begin{enumerate}[leftmargin=*,label=\textbf{(B\arabic*)}]
		\item \label{sigmaboundedassum} There exist $u_{\sigma}\geq l_{\sigma}> 0$ such that for any $j\in[p]$, $l_{\sigma}\leq\sigma_j^*\leq u_{\sigma}$ and $u_{\sigma}\leq 5l_{\sigma}$.
		\item\label{beta-bound} For $i,j\in[p]$, $i\neq j$, we have $|\beta^*_{ij}|\leq 1/\sqrt{k}$.
		\item \label{assum2-max} For $i,j\in[p]$, we have 
		$$(\B{\Sigma}^*)_{ij}\lesssim 1,~~~~\max_{j\in[p]} \frac{(\B{\Sigma}^*)_{jj}}{(\sigma_j^*)^2}\leq \frac{400}{7}.$$
		\item \label{betaminassum} There is a value $\beta_{\min}$ such that $\beta_{\min}\geq \sqrt{(\eta\log p)/{n}}$ for some 
		numerical constant $\eta\gtrsim u_{\sigma}^2$; and
		every nonzero $\beta^*_{ij}$ satisfies $|\beta^*_{ij}| \geq \beta_{\min}$ for all $i>j$.
		\item \label{assum2degree} For $j\in [p]$, $\left\vert\left\{i\in [p]: i\neq j, \beta^*_{ij}\neq 0\right\}\right\vert \leq k$ for some $k>0$. 
		\item \label{kappaassum} For the matrix $\B{\Theta}^*$, we assume
		$$3\geq \max_{\substack{S\subseteq[p] \\ |S|\leq 2k}}\lambda_{\max}(\B{\Sigma}^*_{S,S})\geq\min_{\substack{S\subseteq[p] \\ |S|\leq 2k}}\lambda_{\min}(\B{\Sigma}^*_{S,S})\geq \kappa^2 > 0.3$$
		where $\kappa$ is a universal constant.
	\end{enumerate}
\end{assumption}
In Assumptions~\ref{sigmaboundedassum} to~\ref{assum2-max}, we generally assume that $\B{\Theta}^*,\B{\Sigma}^*$ are bounded. Assumption~\ref{betaminassum} is a non-degeneracy condition generally needed to achieve support recovery. Such assumptions are common in the literature~\citep{wang2010information}. Assumption~\ref{assum2degree} is the sparsity assumption on the underlying model. Note that the value of $k$ does not appear in Problem~\eqref{main-sup}. Finally, Assumption~\ref{kappaassum} is a condition number assumption that also appears in earlier work. Additional discussions on our assumptions are presented in Appendix~\ref{sec:theory-discussion}.
Theorem~\ref{supthm} (see Appendix~\ref{app:proof-supthm} for proof) presents support recovery guarantees. 
\begin{theorem}\label{supthm}
	Suppose Assumptions~\ref{sigmaboundedassum} to~\ref{kappaassum} hold. Let $\{\hat{z}_{ij}\}$ be the optimal support for Problem~\eqref{main-sup} (or equivalently~\eqref{main-sup-convex}) with $\ell=l_{\sigma}^2/{3},L=\infty$. Moreover, let $\{z^*_{ij}\}$ be the binary matrix corresponding to the correct support, such that $z^*_{ij}=1\Leftrightarrow \theta^*_{ij}\neq 0$ for $i\neq j$. Then, $\hat{z}_{ij}=z^*_{ij}$ for $i\neq j\in[p]$  with high probability\footnote{An explicit expression for the probability can be found in~\eqref{supthmprob}} if
	$n= c_n k\log p$ and $\lambda=c_{\lambda} {\log p}/{n}$ for some sufficiently large universal constants $c_n, c_{\lambda}>0$.
\end{theorem}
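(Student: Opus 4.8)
The plan is to establish support recovery via a primal-dual witness (PDW) type construction, tailored to the pseudo-likelihood objective and the symmetric-support constraint. First I would fix a column $j\in[p]$ and observe that, after the reparametrization $\theta_{jj}=1/\sigma_j^2$, $\theta_{ji}=-\beta_{ij}/\sigma_j^2$, Problem~\eqref{main-sup} nearly decouples across $j$ into $p$ separate $\ell_0$-penalized Gaussian regressions of $\B x_j$ on $\{\B x_i\}_{i\neq j}$, coupled only through the constraint $z_{ij}=z_{ji}$. Because the coupling is only through the binary support pattern (not through the continuous variables), I would argue that if, for \emph{every} $j$, the unconstrained-per-column $\ell_0$ regression recovers the true neighborhood $N(j)=\{i:\beta^*_{ij}\neq0\}$, then the symmetric-support solution also recovers it; conversely the symmetric constraint can only help (it rules out asymmetric error patterns). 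So the crux reduces to a single best-subset-selection regression recovery statement, which I would then union-bound over $j$ (and over pairs $(i,j)$), absorbing the resulting $\log p$ into the sample-size requirement $n\gtrsim k\log p$.

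For the single-column recovery, the key steps are: (i) a \emph{no-false-exclusion} bound — show that with $\lambda=c_\lambda\log p/n$ and $\beta_{\min}\gtrsim\sqrt{u_\sigma^2\log p/n}$, the true support $N(j)$ is ``cheap enough'' that omitting any true variable costs more in squared-error decrease than $\lambda$ saves; this uses Assumption~\ref{betaminassum} together with a lower bound on the restricted eigenvalue from Assumption~\ref{kappaassum} and a concentration bound $\|\widehat{\B\Sigma}_{S,S}-\B\Sigma^*_{S,S}\|_{\mathrm{op}}\lesssim\sqrt{k/n}$ uniformly over $|S|\le 2k$ (a standard sub-Gaussian/sub-exponential covariance concentration argument, e.g. via an $\varepsilon$-net over $\binom{p}{\le 2k}$ supports, giving the $k\log p$ rate). (ii) a \emph{no-false-inclusion} bound — show that adding any variable $i\notin N(j)$ to the model reduces the residual sum of squares by less than $\lambda$ with high probability; this is where one controls $\max_{i\notin N(j)}|\btx_i^\top(\B x_j-\widehat{\B x}_j)|$, i.e. the correlation of spurious predictors with the noise, using sub-Gaussianity and the bound $(\B\Sigma^*)_{ij}\lesssim 1$, $(\B\Sigma^*)_{jj}/(\sigma_j^*)^2\lesssim 1$ from Assumption~\ref{assum2-max}, again union-bounded over $i$. (iii) handle the variance variable: the $\sigma_j$ are bounded in $[\sqrt{\ell},\infty)$ with $\ell=l_\sigma^2/3$; I would show the optimal $\widehat\sigma_j$ stays in a controlled range (close to $\sigma_j^*$ by the estimation-error machinery, or at least bounded away from $0$), so the $\log\sigma_j$ term and the $1/\sigma_j^2$ weight do not distort the support comparison — essentially the objective restricted to a fixed support is minimized at $\widehat\sigma_j^2\approx\mathrm{RSS}/n$, and one compares these across candidate supports.

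The technical engine throughout is a comparison-of-objectives argument rather than a smooth KKT/subgradient argument, because the $\ell_0$ penalty is discontinuous: for the claimed optimal $\{\hat z_{ij}\}$ and any competitor support, I would write the difference of objective values and show it has the correct sign with high probability, exploiting that both the true and any near-true support are sparse so all the relevant empirical covariance submatrices are well-conditioned. I expect the main obstacle to be \textbf{decoupling the symmetric-support constraint cleanly} — making rigorous the claim that ``per-column recovery for all $j$'' implies ``joint symmetric-support recovery'' — since the symmetric constraint means the per-column problems are not literally independent, and one must be careful that an error in column $j$ cannot be ``hidden'' by a compensating choice in column $i$. A clean way around this is to note that the objective~\eqref{main-sup} with the constraint $z_{ij}=z_{ji}$ is lower-bounded by the sum over $j$ of the \emph{per-column minima without the symmetry constraint}, and upper-bounded (at the candidate $\B z=\B z^*$) by that same sum when the minimizers happen to be symmetric — so if each per-column minimizer uniquely has support $N(j)$ (which the no-false-inclusion/exclusion bounds give, together with strict inequalities from $\beta_{\min}$ and the strict-correlation bound), the unique joint optimum must be $\B z^*$. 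A secondary obstacle is getting the concentration uniform over all $\binom{p}{\le 2k}$ support sets with only $n\gtrsim k\log p$ samples; this is handled by the standard bound $\log\binom{p}{2k}\lesssim k\log p$ combined with sub-exponential tail bounds for quadratic forms of Gaussians, so it is routine but must be stated carefully to track the constants feeding into $c_n,c_\lambda,\eta$.
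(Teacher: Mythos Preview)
Your sandwich/per-column reduction has a genuine gap. It requires that for every $j$ the \emph{unconstrained} per-column $\ell_0$ regression recovers $N(j)$, but Assumption~\ref{betaminassum} only guarantees $|\beta^*_{ij}|\ge\beta_{\min}$ for $i>j$. For a fixed column $j$, the coefficients $\beta^*_{ij}$ with $i<j$ carry no direct $\beta_{\min}$ guarantee (you could extract a bound within a factor $l_\sigma^2/u_\sigma^2$ via~\ref{sigmaboundedassum}, but you never propose this, and it would change the constants and obscure the theorem's stated advantage). So the unconstrained column-$j$ best-subset problem need not have $S_j^*$ as its minimizer under the stated assumptions, and the inequality ``joint constrained minimum $\ge$ sum of per-column unconstrained minima $=$ value at $\B z^*$'' does not close.

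The paper's proof uses symmetry in the opposite direction to the one you sketch. Instead of discarding the constraint $z_{ij}=z_{ji}$ and hoping each column recovers independently, it compares the \emph{joint} objective at $\hat{\B z}$ and $\B z^*$ and then invokes the identity $\sum_j t_j = 2\sum_j \tilde t_j$, where $t_j=|S_j^*\setminus\hat S_j|$ and $\tilde t_j = |(S_j^*\setminus\hat S_j)\cap\{j{+}1,\dots,p\}|$; this holds precisely because a support error at $(i,j)$ forces one at $(j,i)$. The ``signal'' term (a Schur-complement lower bound on the RSS increase from missed variables) is then cashed in only at the upper-triangular member of each error pair, where $\beta_{\min}$ is available, yielding a bound $\gtrsim \eta\tilde t_j(\log p)/n$; the noise and penalty terms scale with $t_j+\bar t_j$, and the factor $2$ is absorbed into $\eta$. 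This is exactly the mechanism behind the paper's claim that only half the entries need be non-degenerate, and your reduction loses it. Two smaller points: your union over $\binom{p}{\le 2k}$ supports is insufficient since $\hat S_j$ is not a priori sparse --- the paper's events $\mathcal E_3,\mathcal E_4$ are uniform over \emph{all} $S\subseteq[p]\setminus\{j\}$ with tails scaling in $|S\setminus S_j^*|$; and you do not handle the regime where the lower bound $\hat\sigma_j^2\ge\ell$ binds, which the paper treats separately (the set $\mathcal J^c$) because there the profiled objective is no longer $\tfrac12\log\mathcal L_{\hat S_j}+\tfrac12$.
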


\customparagraph{Comparison with prior work}
We note that the number of samples $n\gtrsim k\log p$ required in Theorem~\ref{supthm} for correct support recovery is minimax optimal up to logarithmic factors. To see this, note that the second term in Theorem~1 of~\citet{wang2010information} can be lower bounded as $n\gtrsim k\log(p/k)/\log k$, showing our bound is tight up to logarithmic factors. Our results also match or improve upon support recovery results of current popular methods such as Graphical Lasso or CLIME. Particularly, Theorem~1 of~\citet{ravikumar2008model} shows Graphical Lasso requires $n\gtrsim k^2\log p$ samples for correct support recovery. Theorem~7 of~\citet{cai2011constrained} shows that under the assumption $\sum_{i\in[p]}|\theta_{ij}^*|\leq \sqrt{k}$ for $j\in [p]$ (which is similar to Assumption~\ref{beta-bound} in our case), $n\gtrsim k\log p$ samples would be required for correct support recovery by CLIME. However, the correct support is recovered after post-processing the CLIME solution by a thresholding operator, while our estimator does not need such a step. This shows a useful benefit of our $\ell_0$ regularized estimator over $\ell_1$-based CLIME.

Next, we discuss how imposing symmetry constraints on the solution improves the statistical properties of our estimator. A popular approach for the sparse GGM problem is the node-wise sparse linear regression approach~\citep{meinshausenbuhlmann,misra2020information}. Here, 
one solves $p$-many sparse linear regression problems, where in the $j$-th problem, one performs a sparse least squares regression of $\B{x}_j$ on the features $\{\B{x}_i\}_{i\neq j}$. Here, the variances $\sigma_j^2$ are taken to be equal.
However, as these $p$ problems are solved independently, imposing symmetry during optimization can be tricky. Comparing our results to those of the node-wise methods allows us to quantify the benefits of using a pseudo-likelihood objective with symmetric support.
As seen in Theorem~\ref{supthm}, to achieve perfect support recovery, we require a non-degeneracy condition as given by the $\beta_{\min}$ assumption~\ref{betaminassum}. However, we note that as Problem~\eqref{main-sup} results in a symmetric support, we need non-degeneracy conditions only on half of the values $\{\beta^*_{ij}\}$ as stated in Assumption~\ref{betaminassum}. Intuitively, an error in estimating the support propagates to at least one other location (due to the symmetric support)---this means that only half of the 
$\beta^*_{ij}$ coefficients need to be non-degenerate.
On the other hand, node-wise methods~\citep{meinshausenbuhlmann,misra2020information} require non-degeneracy on all entries of $\B{\Theta}^*$, showing that our assumptions are milder. 

Finally, we note that Problem~\eqref{main-sup} involves logarithmic and quadratic over linear terms. Hence, standard techniques used to analyze sparse linear models do not apply here, requiring us to develop new proof techniques.

\section{Numerical Experiments}\label{sec:expts}
We present various numerical experiments to compare our proposed method against state-of-the-art methods in terms of computational efficiency, statistical performance, and a downstream task of portfolio optimization. In this section, we use \ourmethod~to denote our BnB solver. We  use \ourmethodnobnb~to denote our approximate solver (CD and local search, without any BnB search).
An implementation of \ourmethod~can be found at \href{https://github.com/mazumder-lab/GraphL0Learn}{https://github.com/mazumder-lab/GraphL0Learn}.

\noindent \textbf{Competing Methods:} We compare our method to the following popular and state-of-the-art algorithms for sparse graphical models: \texttt{GLASSO}~\citep{friedman2008sparse} that studies an $\ell_1$-penalized version of the maximum log-likelihood estimator via convex optimization; \texttt{CONCORD}~\citep{khare2015convex} that considers a convex approximation to pseudo-likelihood with $\ell_1$ penalization; and \texttt{CLIME}~\citep{cai2011constrained} that considers the problem of minimizing the $\ell_1$ norm of the precision matrix under $\ell_{\infty}$ data fidelity constraints.

\subsection{Synthetic Data}
We first investigate the computational and statistical performance of our proposed estimator on synthetic datasets. The data points $\B{x}^{(i)}$ for $i=1,\ldots,n$ are drawn independently from the normal distribution $\CN(\B{0},(\B{\Theta}^*)^{-1})$. Our validation set (used for tuning parameter selection) also contains 
$n$ samples from the same distribution. We consider different models for the true precision matrix $\B{\Theta}^*\in\R^{p\times p}$, as follows:
\begin{enumerate}
	\item\label{uinfsc} \textbf{Uniform Sparsity:} We let $\B{\Theta}=\B{B}+\delta \B{I}_p$ where, $\B{I}_p$ is the identity matrix 
	and $\B{B}$ is a symmetric matrix. The 
	entries of $\B{B}$ are independently set to 0.5 with probability $p_0$ and zero with probability $1-p_0$. We then make $\B{B}$ symmetric: $(\B{B}+\B{B}^{\top})/2$. We adjust the value of $\delta$ to control the condition number of $\B{\Theta}$. Finally, $\B{\Theta}^{-1}$ is normalized so that each variable has a unit variance. We set $p_0=k/(2p)$. Note that $\B{\Theta}$ has approximately $kp$ nonzero entries.
	\item \textbf{Banded Precision:} We let $\B{\Theta}=\B{B}+\delta \B{I}_p$ where the $(i,j)$-th entry of $\B{B}$ is $b_{ij}= 0.5^{|i-j|}  \bm 1(|i-j|\leq k/2)$
		where, $\bm{1}(\cdot)$ is the indicator function, and
		$k$ is the bandwidth. We set $\delta$ to control the condition number of $\B{\Theta}$.  Finally, $\B{\Theta}^{-1}$ is normalized, so each variable has a unit variance. Note that $\B{\Theta}$ has $k+1$ nonzeros per column.
	\end{enumerate}
	The results reported here are the averages of 10 independent runs. More details on our experimental setup (including the choice of $M$ in~\eqref{eqn:mio}) can be found in Appendix~\ref{app:num-details}.
	\subsubsection{Timing benchmarks}\label{num-mosek}
	We compare the runtime of our method to other estimators and show the scalability of our framework. We study the uniform sparsity precision matrix case, we set $k=10$ and the condition number to $p/40$. Our experiments are performed on a machine equipped with Intel Xeon 8260 CPU and 32GB of RAM.
	
	\noindent \textbf{Approximate Solvers:} We first study the performance of our approximate algorithm \ourmethodnobnb---we compare this runtime to that of $\ell_1$ penalized estimators \texttt{CONCORD} and \texttt{GLASSO}. We report the overall runtime for computing a path of 16 values of the tuning parameters (in our case, this is a $4\times 4$ grid for $\lambda_0,\lambda_2$) and limit the runtime of all methods to an hour.
	We set the convergence tolerance of  \texttt{CONCORD} and \texttt{GLASSO} to $1\%$. We use the default tolerance of $0.01\%$ for \ourmethodnobnb~as discussed in Appendix~\ref{app:num-details}. The results for this case are shown in Table~\ref{table:times-approx}. We observe that our approximate algorithm is generally faster than \texttt{CONCORD}, and faster than \texttt{GLASSO} for $p\geq 2500$. In fact, our approximate framework can obtain high-quality solutions for problems with $p=10,000$ in less than an hour, while both \texttt{GLASSO} and \texttt{CONCORD} fail to do so. This suggests that good feasible solutions can be found quickly using our approximate framework (see Section~\ref{mediumscale} for statistical performance comparisons).

	\begin{table}[t!]
		\begin{minipage}{0.5\textwidth}
			\footnotesize
			\centering\footnotesize
			\caption{Runtime comparison of our approximate solver \ourmethodnobnb~with \texttt{GLASSO} and \texttt{CONCORD} from Section~\ref{num-mosek}. A dash means the method did not converge in an hour. $\pm$ denotes the standard error. }
			\label{table:times-approx}
			\begin{tabular}{ cc|ccc }
				$p$ & $n$ & \ourmethodnobnb &\texttt{GLASSO}& \texttt{CONCORD} \\
				\hline\hline
				\multirow{2}{*}{$100$} & 500  & $<1$    & $<1$ & $<1$  \\
				& 1000 &$<1$ & $<1$ & $<1$ \\
				\hline
				\multirow{2}{*}{$250$} & 500  & $1.2\pm0.1$  &  $<1$& $1.2\pm0.2$ \\
				& 1000 & $1.1\pm 0.1$  &  $<1$ & $1.2\pm 0.0$ \\
				\hline
				\multirow{2}{*}{$500$} & 500  & $4.0\pm 0.7$  &  $2.1\pm 0.3$& $16.3\pm0.9$  \\
				& 1000 & $4.2\pm0.5$   &  $2.0\pm 0.2$  & $7.2\pm 0.1$ \\
				\hline
				\multirow{2}{*}{$1000$} & 500  & $21.6\pm2.4$   &  $18.4\pm0.7$ & $92.1\pm3.5$  \\
				& 1000 & $20.3\pm1.9$  &  $18.8\pm0.2$ & $151\pm15$\\
				\hline
				\multirow{2}{*}{$2500$} & 500  & $145\pm12$  &  $391\pm58$ &  $1177\pm356$ \\
				& 1000 & $178\pm8$  &  $402\pm39$ &  $2469\pm329$ \\
				\hline
				\multirow{2}{*}{$5000$} & 500  & $319\pm81$  & $1489\pm100$ & - \\
				& 1000 & $452\pm 142$  &  $1763\pm231$& -\\
				\hline
				\multirow{2}{*}{$10000$} & 500  & $1731\pm197$   &- & - \\
				& 1000 & $1948\pm261$   & - & -\\
				\hline

			\end{tabular}
		\end{minipage}
		\begin{minipage}{0.45\textwidth}
			\footnotesize
			\centering\footnotesize
			\caption{Running time benchmark of our exact solver and Mosek to $1\%$ MIP gap. A dash means Mosek did not return any lower bounds in an hour. If $1\%$ gap is not achieved in 1 hour, we report the final MIP gap within parenthesis. $\pm$ denotes the standard error.}
			\label{table:times-exact}
			\begin{tabular}{ cc|cc }
				$p$ & $n$  & \ourmethod & Mosek\\
				\hline\hline
				\multirow{2}{*}{$100$} & 500  & $18.6\pm1.5$ & $207\pm 75$\\
				& 1000  & $19.3\pm0.8$ & $245\pm69$  \\
				\hline
				\multirow{2}{*}{$250$} & 500   &   $89.4\pm6.2$  & - \\
				& 1000 &   $93.2\pm7.6$ & - \\
				\hline
				\multirow{2}{*}{$500$} & 500    & $212\pm14$  & -  \\
				& 1000 &   $203\pm 8$  & -\\
				\hline
				\multirow{2}{*}{$1000$} & 500   & $(1.6\%)$  & -\\
				& 1000 &   $(1.1\%)$ & - \\
				\hline
				\multirow{2}{*}{$2500$} & 500   & $(3.1\%)$& -  \\
				& 1000 &  $(2.4\%)$& - \\
				\hline
				\multirow{2}{*}{$5000$} & 500   & $(4.7\%)$& -  \\
				& 1000 &  $(4.3\%)$& - \\
				\hline
				
			\end{tabular}
		\end{minipage}
	\end{table}

	\noindent \textbf{Exact Solvers:} Next, we study the performance of our exact BnB framework. To this end, we find the best hyper-parameter (across 16 tuning parameters on a validation set) by using our approximate solver. On this hyper-parameter, we run (i) our BnB procedure, and (ii) Mosek. Both these global solvers are run to $1\%$ MIP gap for problem~\eqref{eqn:mio} and we report the runtimes for these two methods (we include the hyper-parameter tuning runtime as well).
	If $1\%$ MIP gap is not achieved after 1 hour, we report the final MIP gap.  
	The results for this case are shown in Table~\ref{table:times-exact}. We observe that Mosek fails to return meaningful results when $p>100$, and for $p=100$ is an order of magnitude slower than \ourmethod. On the other hand, \ourmethod~can certify $1\%$ optimality gap for problems with $p=500$. An optimality gap of less than $3\%$ or so is also achievable for $p=2500$, if we stop the BnB as the time limit is reached.  We recall that our algorithm optimizes over (approximately) $p^2/2$ binary variables encoding the sparsity pattern of the precision matrix: when $p=2500$, we deal with approximately $p^2/2\approx 3\times 10^6$ binary variables. This suggests that \ourmethod~is quite promising
	in terms of speed and efficiency for considerably large problem instances. Furthermore, we observe that the objective values returned by our approximate algorithm~\ourmethodnobnb~are within $1\%$ of the final incumbent from BnB. This shows that our approximate algorithm~\ourmethodnobnb~can serve as an independent method (without BnB search) to obtain high-quality solutions quickly, often faster than \texttt{CONCORD} and \texttt{GLASSO} as we discussed above. The solution from the approximate method can be improved along with optimality certificates using BnB tree search (see below).
	
	Finally, we present additional numerical experiments, including a deeper study of the BnB performance over time in Appendix~\ref{app:numerical}.

	\subsubsection{Statistical benchmarks }\label{mediumscale}
	We use synthetic datasets to compare the statistical performance of our estimator to other algorithms. Since we are considering several competing methods each with varying runtimes, we take a moderate value of $p=200$. In terms of performance metrics, we report the normalized estimation error  $\|\hat{\B{\Theta}}-\B{\Theta}^*\|_F/\|\B{\Theta}^*\|_F$ where $\B{\Theta}^*$ is the true precision matrix and $\hat{\B{\Theta}}$ is the estimated one. Next, we report Matthews Correlation Coefficient (MCC) which is defined as
	\begin{equation*}
		\begin{aligned}
			\text{MCC} = \frac{\text{TP}\times\text{TN} - \text{FP}\times \text{FN}}{\sqrt{(\text{TP}+\text{FP})(\text{TP}+\text{FN})(\text{TN}+\text{FP})(\text{TN}+\text{FN})}}
		\end{aligned}
	\end{equation*}
	where 
	\begin{equation*}
		\begin{aligned}
			\text{TP}=|\{(i,j): \theta^*_{ij},\hat{\theta}_{ij}\neq 0\}|&,\text{FP}=|\{(i,j): \theta^*_{ij}=0,\hat{\theta}_{ij}\neq 0\}|\\
			\text{TN}=|\{(i,j): \theta^*_{ij},\hat{\theta}_{ij}=0 \}|&,\text{FN}=|\{(i,j): \theta^*_{ij}\neq0,\hat{\theta}_{ij}= 0\}|.
		\end{aligned}
	\end{equation*}
	Note that a higher value of MCC implies a better support recovery performance. Finally, we report the support size of each estimator as $\text{NNZ}=|\{(i,j): \hat{\theta}_{ij}\neq 0\}|$. We set the MIP gap stopping criterion for \ourmethod~to $5\%$ and the runtime limit to 2 minutes. We use the same tuning parameter selection procedure discussed in Appendix~\ref{app:num-details}. For competing methods, we use the default convergence tolerance to obtain their best performance.\\
	\customparagraph{Scenario 1, Banded Precision} In this setup, consider different values of $n \in \{50,\ldots,300\}$ and set the condition number to 100, and the row/column-wise sparsity to $k=6$ (i.e, $\| \B{\Theta}^*\|_0 \approx kp$). We compare the outcomes of different methods---the results are shown in Figure~\ref{fig:synt2}. We observe that \ourmethod~provides the smallest estimation error, and the highest MCC (which implies the best support recovery), while resulting in a sparse solution. Although \texttt{CONCORD} provides good support recovery, it suffers in terms of estimation performance. \texttt{GLASSO} provides good estimation performance, however, similar to \texttt{CLIME}, leads to many false positives and larger support sizes, underperforming in support recovery performance. We also see that in these experiments, \ourmethodnobnb~generally performs well, better that the competing methods, but worse than \ourmethod. This suggests that: Our approximate solver \ourmethodnobnb~can return high-quality solutions, and our BnB search procedure can further improve the quality of the solution from \ourmethodnobnb.
	
	\begin{figure}[ht]
	\centering
	\begin{tabular}{ccc}
		\small Estimation Error & \small MCC & \small NNZ\\
		\includegraphics[width=0.28\linewidth,trim =.8cm 0cm .8cm 0cm, clip = true]{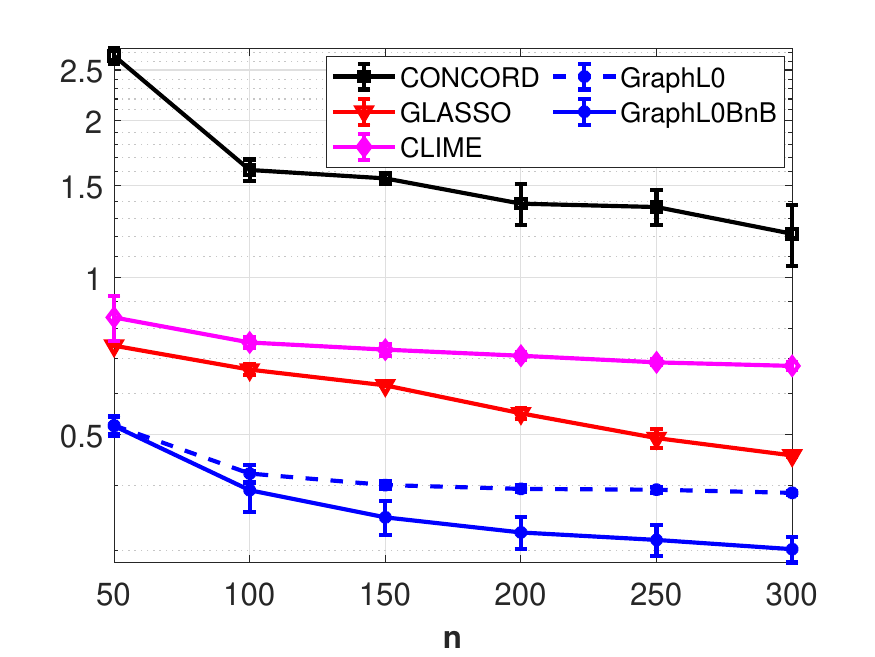}& 
		\includegraphics[width=0.28\linewidth,trim =.8cm 0cm .8cm 0cm, clip = true]{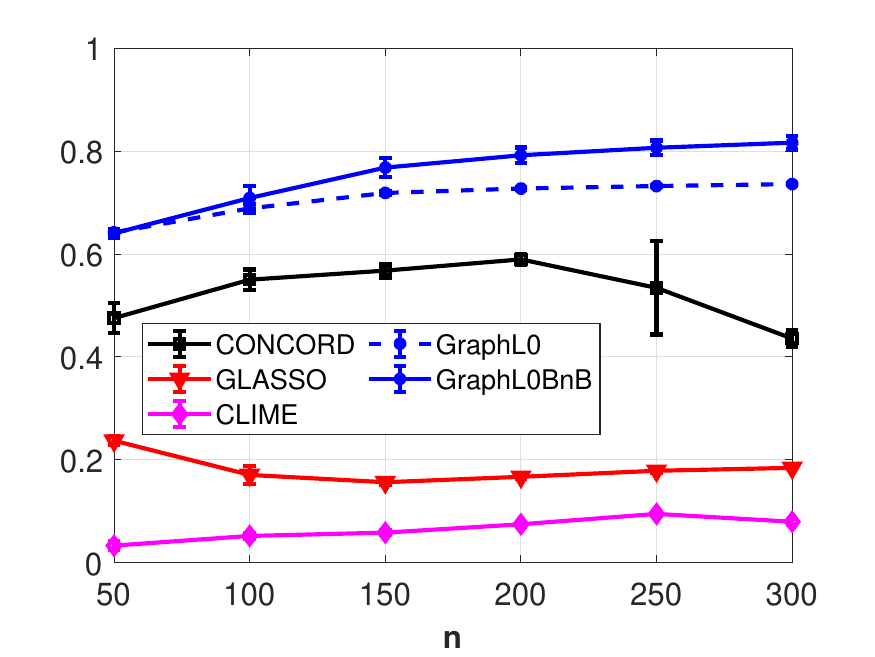}&   
		\includegraphics[width=0.28\linewidth,trim =.8cm 0cm .8cm 0cm, clip = true]{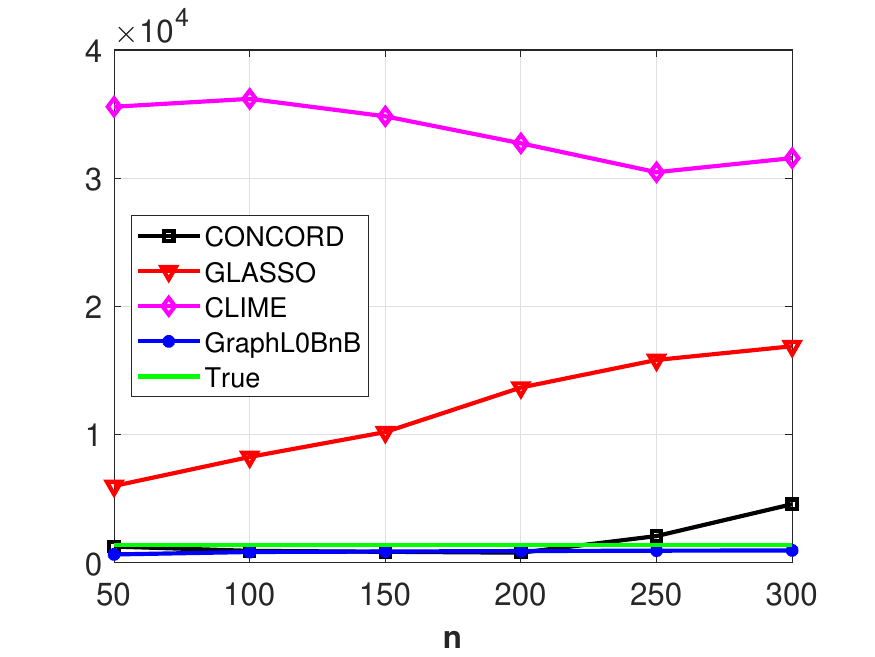} \\
	\end{tabular}
		\caption{Comparison for the banded precision model in Section~\ref{mediumscale} with $k=6$.}
	\label{fig:synt2}
\end{figure}

	\customparagraph{Scenario 2, Uniform Sparsity} Here we choose $n \in \{50,\ldots,300\}$ and set the condition number to 200, and the row/column sparsity to $k \in \{5,10\}$ (i.e., $\| \B{\Theta}^*\|_0 \approx kp$). The results for $k=5$ are shown in Figure~\ref{fig:synt3} and the results for $k=10$ can be found in Figure~\ref{fig:synt4}. Overall, it can be seen that our proposed estimator provides good estimation and support recovery performance. Moreover, our estimator is sparse, specially compared to \texttt{CLIME} and \texttt{GLASSO}. Another observation is that increasing $k$ leads to worse statistical performance, which is expected. We also see that similar to the previous setting, \ourmethodnobnb~performs quite well though \ourmethod~can offer further improvements.
	
	\begin{figure}[ht]
	\centering
	\begin{tabular}{ccc}
		\small Estimation Error & \small MCC & \small NNZ\\
		\includegraphics[width=0.28\linewidth,trim =.8cm 0cm .8cm 0cm, clip = true]{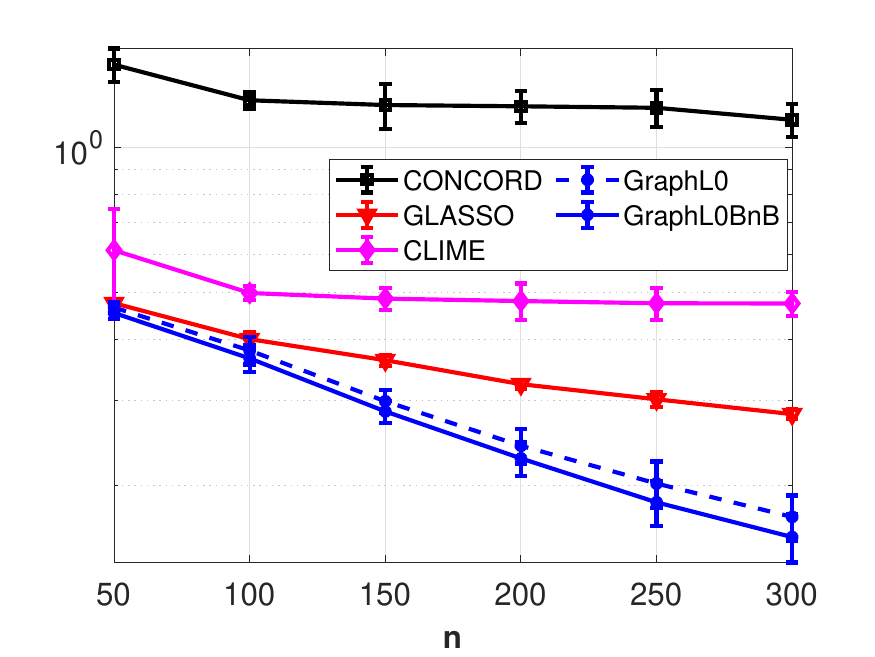}& 
		\includegraphics[width=0.28\linewidth,trim =.8cm 0cm .8cm 0cm, clip = true]{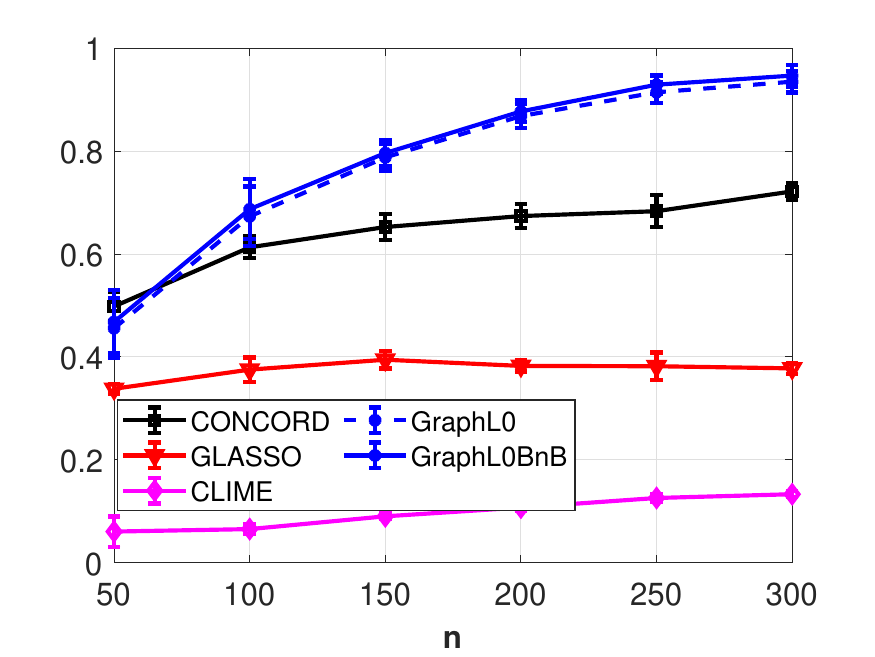}&   
		\includegraphics[width=0.28\linewidth,trim =.8cm 0cm .8cm 0cm, clip = true]{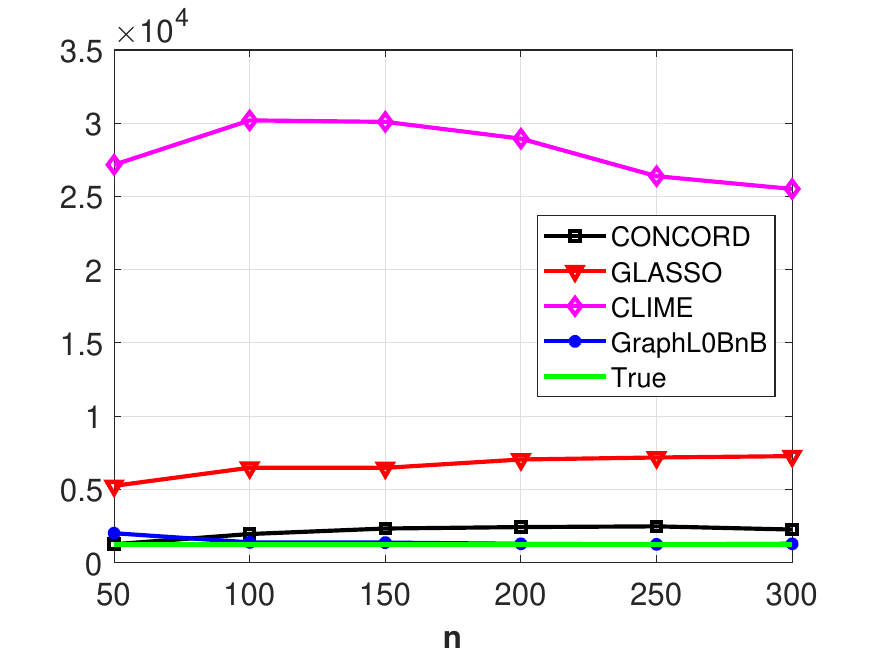} \\
	\end{tabular}
		\caption{Comparison for the uniform sparsity model in Section~\ref{mediumscale} with $k=5$ and $p=200$. }
	\label{fig:synt3}
\end{figure}

	\begin{figure}[ht]
	\centering
	\begin{tabular}{ccc}
		\small Estimation Error & \small MCC & \small NNZ\\
		\includegraphics[width=0.28\linewidth,trim =.8cm 0cm .8cm 0cm, clip = true]{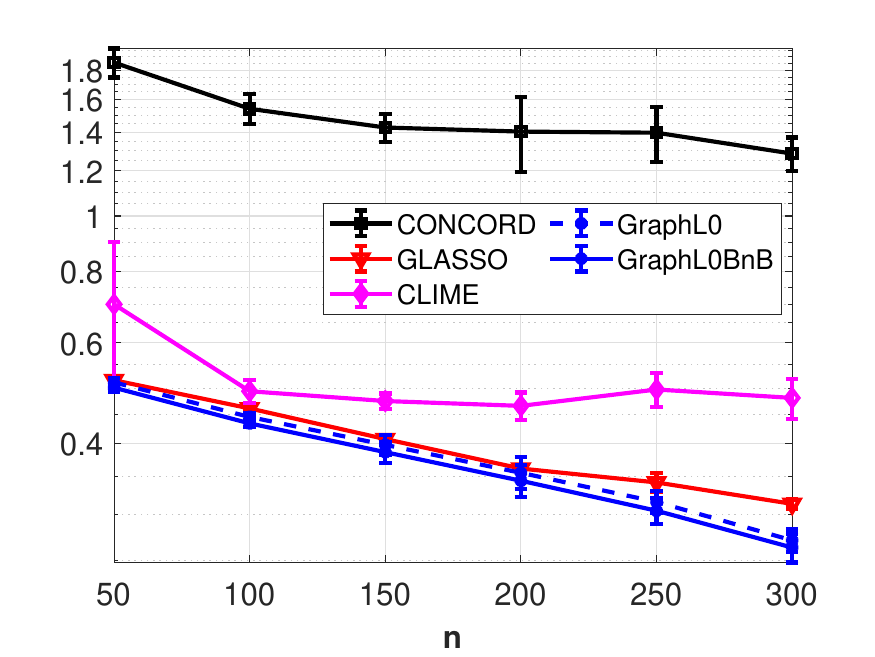}& 
		\includegraphics[width=0.28\linewidth,trim =.8cm 0cm .8cm 0cm, clip = true]{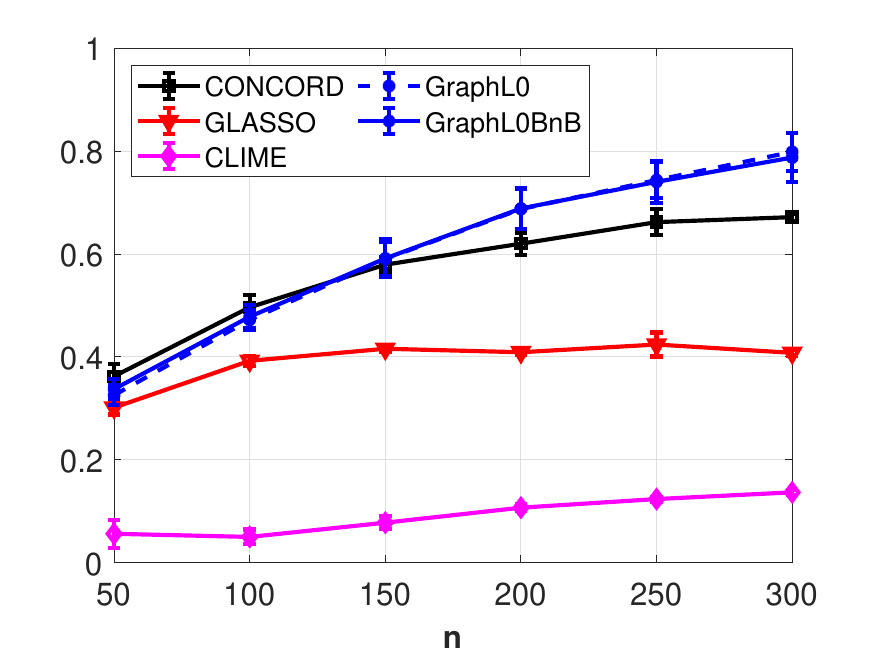}&   
		\includegraphics[width=0.28\linewidth,trim =.8cm 0cm .8cm 0cm, clip = true]{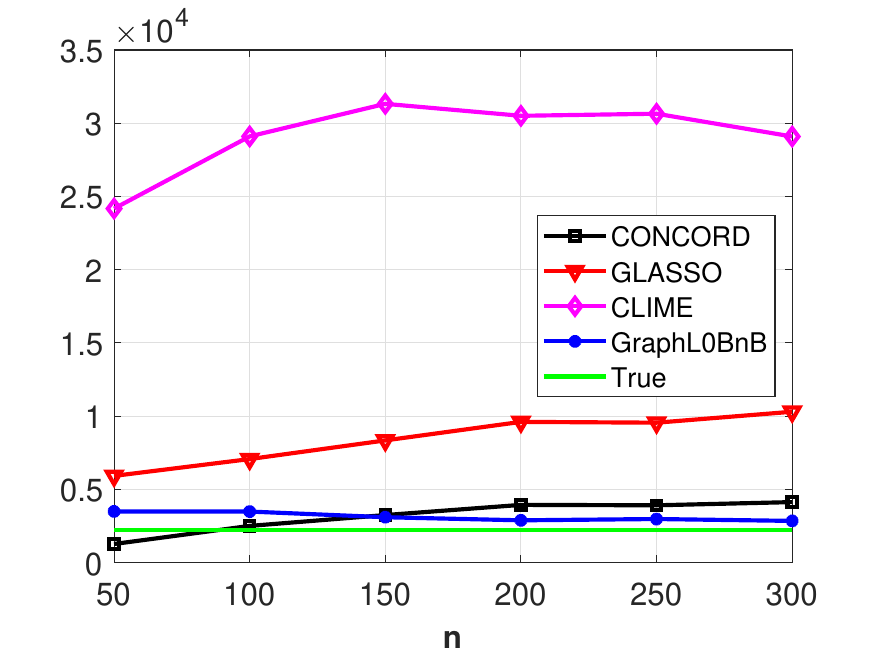} \\
	\end{tabular}
		\caption{Comparisons for the uniform sparsity model in Section~\ref{mediumscale} with $k=10$ and $p=200$. }
			\label{fig:synt4}
\end{figure}
	
	Finally, we consider 
	some high-dimensional settings with larger values of $p$. We set $p=3000$, $k=10$ and let the condition number be 150. Only \ourmethod~and \texttt{GLASSO} seem to scale to these instances. The results for this case are shown in Figure~\ref{fig:synt5}. We see that \ourmethod~leads to almost-perfect support recovery for $n\approx 1000$ while providing better estimation performance compared to \texttt{GLASSO}. Moreover, \texttt{GLASSO} incurs a fairly large number of false positives and has a dense support, as observed before.  
	
	\begin{figure}[ht]
	\centering
	\begin{tabular}{ccc}
		\small Estimation Error & \small MCC & \small NNZ\\
		\includegraphics[width=0.28\linewidth,trim =.8cm 0cm .8cm 0cm, clip = true]{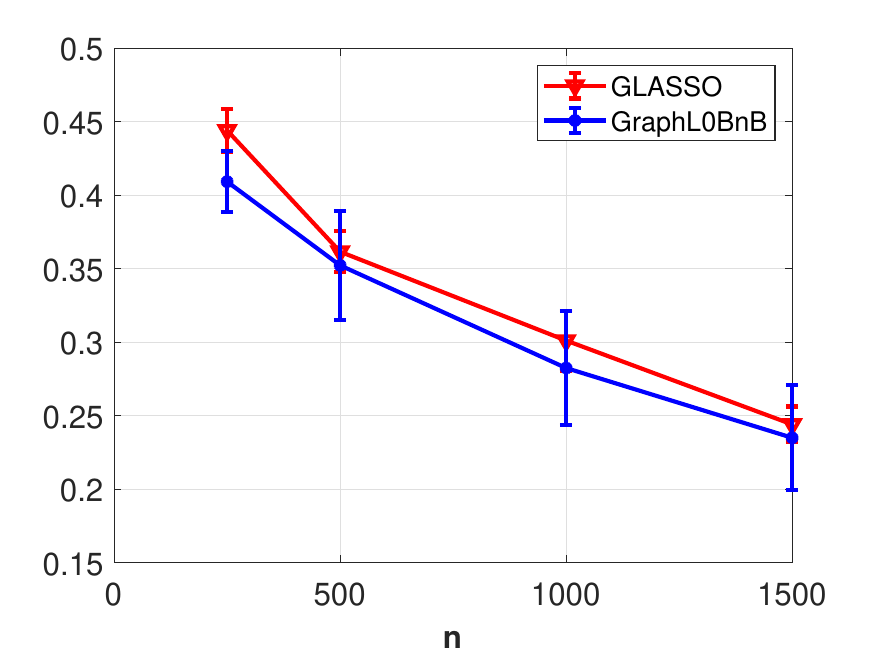}& 
		\includegraphics[width=0.28\linewidth,trim =.8cm 0cm .8cm 0cm, clip = true]{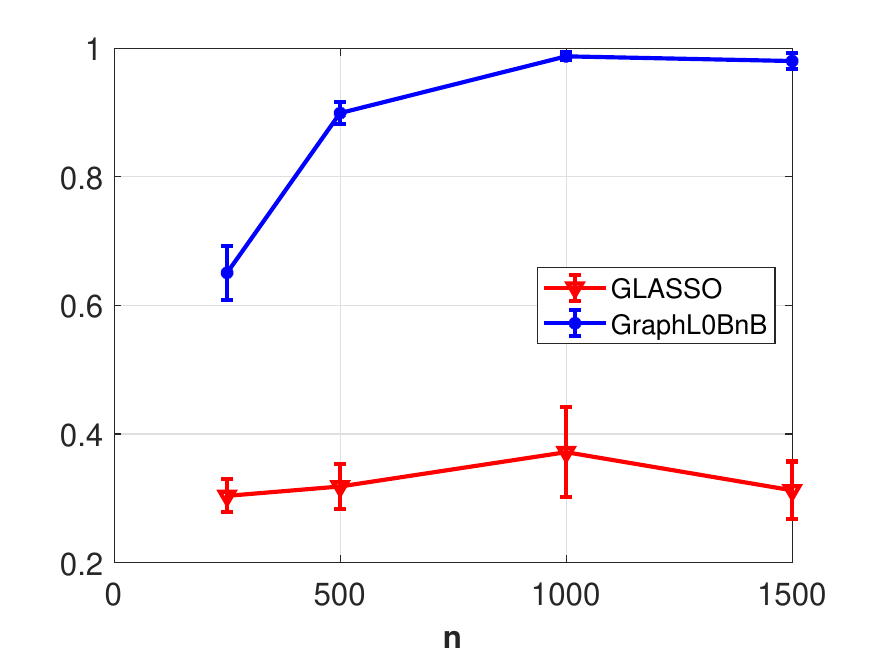}&   
		\includegraphics[width=0.28\linewidth,trim =.8cm 0cm .8cm 0cm, clip = true]{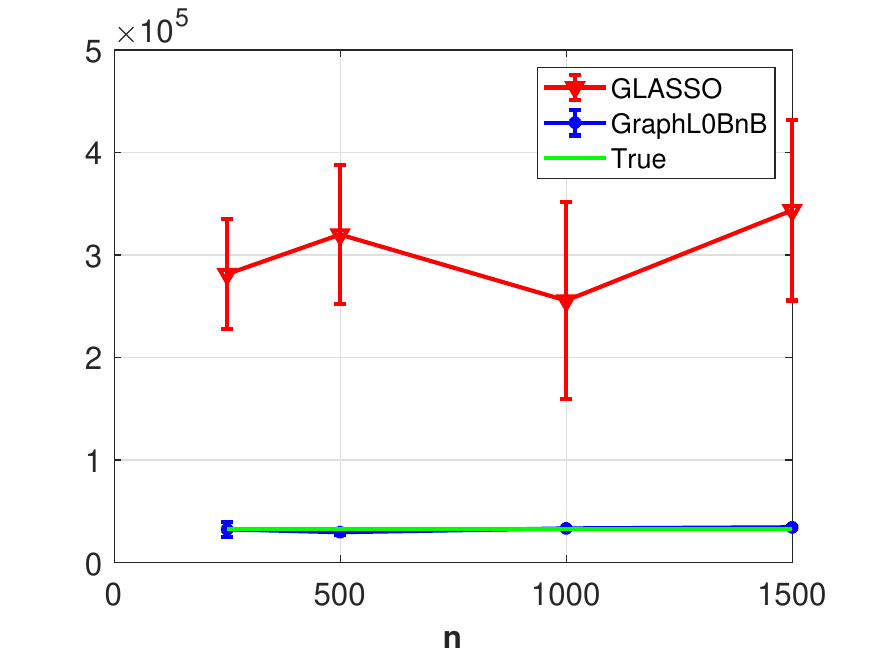} \\
	\end{tabular}
		\caption{ Comparison for the uniform sparsity model in Section~\ref{mediumscale} with $k=10$ and $p=3000$. }
	\label{fig:synt5}
\end{figure}

	\subsection{A downstream application in portfolio optimization}\label{financesec}
	We consider an application of sparse GGM in finance in the context of 
	portfolio optimization. We use data on stock returns extracted from Yahoo! Finance from 2005 to 2019 for 1452 companies. Given the data, we consider the well-known problem in portfolio optimization: we select a portfolio that leads to  maximum returns and minimum risk over the portfolio~\citep{markowitz1952portfolio}. Given the returns data matrix $\B{X}\in\R^{n\times p}$ and portfolio weights $\B{w}\in\R^p_{\geq 0}$ with $\sum_{i=1}^p w_i=1$, the values of returns and risk are defined as 
	\begin{equation}
		\begin{aligned}
			r = \sum_{i=1}^n (\B{Xw})_i,\quad
			\sigma =\sqrt{\text{VAR}(\B{Xw})},
		\end{aligned}
	\end{equation}
	respectively, where $\text{VAR}$ denotes the variance of the vector. To select the optimal portfolio, we solve the quadratic portfolio selection problem:
	\begin{align}
		\min_{\B{w}}  ~ \B{w}^{\top}\B{\Sigma}_X\B{w}~~~\text{s.t.} ~~~\B{w}\in\R_{\geq 0}^p,~~\sum_{i=1}^p w_i=1,~~\sum_{i=1}^n (\B{Xw})_i\geq \bar{r}~~ \label{portfolio}
	\end{align}
	where $\B{\Sigma}_X$ is an estimate of the covariance matrix and $\bar{r}$ is a pre-determined minimum return. 
	We explore different methods to estimate $\hat{\B{\Theta}}$ and use $\hat{\B{\Theta}}^{-1}$ as an estimate of matrix $\B{\Sigma}_X$. In this section, we let $\bar{r}=-\infty$ (there is no minimum return constraint), and additional results with different values of $\bar{r}$ can be found in Appendix~\ref{appendix:portfolio}. Under this setup, we split the data into three randomly selected subsets. We use the training split to calculate a path of solutions (over different hyper-parameters) for each sparse GGM method studied. For each method, we select $\hat{\B\Theta}$ to be the solution from the path with the smallest validation loss calculated on the validation split. We refer to Appendix~\ref{app:num-details} for more details on the validation loss and hyper-parameters. We then let $\B{\Sigma}_{X}=\hat{\B\Theta}^{-1}$ and solve~\eqref{portfolio} to obtain the optimal portfolio weights (note that when $\bar{r}=-\infty$, \eqref{portfolio} does not directly depend on $\B{X}$ so no additional data is required here). Finally, we use the test split to calculate the risk and return of the portfolio. We use 1000 training samples, 500 validation samples and 1000 test samples.

	To be able to experiment with different methods with varying runtimes, we consider two cases. In the first case, we select the top 100 stocks with highest variance over time. 
	As a baseline, we also consider estimating $\B\Sigma_X$ with the sample covariance of $\B{X}$ without using any sparse GGM method. The average results for 20 selections of train/validation/test data are reported in Table~\ref{table:real1} (the runtime for \ourmethod~is to MIP gap of $5\%$). Overall, we see that our method provides the highest return. In terms of risk, \texttt{GLASSO} has a lower risk compared to our method, and our method leads to lower risk compared to other methods. We note that 
	\ourmethod~is more sparse than \texttt{GLASSO}. Compared to \texttt{CONCORD}, our method results in a sparser precision matrix, has higher returns and lower risk. Overall, our method is performing well both statistically and computationally.

	\begin{table}[t!]
		\footnotesize
		\centering\footnotesize
		\caption{Simulation results for the real dataset with top-100 and full stocks in Section~\ref{financesec}. The baseline is using the sample covariance of the data, without using any sparse GGM method.} For more details on the setup, see~\citet{khare2015convex}.  
		\label{table:real1}
		\begin{tabular}{ cccccc|ccc}
			& \multicolumn{5}{c}{top-100} & \multicolumn{3}{c}{full data ($p=1452$)}\\
			&\ourmethod & \texttt{GLASSO}& \texttt{CONCORD}&\texttt{CLIME} & Baseline & \ourmethod &\texttt{GLASSO} & Baseline\\
			\hline\hline 
			Returns &  $25.02 $
			& $24.98$ & $24.87$ & $24.50$ & 24.02 & 8.96 &  2.50 & 1.34 \\
			Risk &  $0.38 $
			& $0.34$ & $0.41$ & $0.47$ & 0.35 &  0.36 & 0.20 & 0.68\\
			$\|\hat{\B{\Theta}}\|_0$ &  $2398 $
			& $3060$ & $107$ & $3371$ & - & 27055& 114450 & -  \\
			Runtime &  19.02
			& 0.42 &12.63 & 44.11 & - &  459 & 1470 & -  \\
			
		\end{tabular}
	\end{table}
	
	Next, we use all stocks in the dataset ($p=1452$) and repeat the same experiment. In this case, we show results from only our method and \texttt{GLASSO} reported in Table~\ref{table:real1}. (Other methods faced numerical issues). Overall, our method provides a considerably higher value of return, while providing better returns to risk ratio. This is while our solution is more sparse and our algorithm is faster, showing that our estimator works well in terms of statistical and computational performance.

\section{Conclusion}
We propose a new estimator for the sparse GGM problem based on an $\ell_0$-regularized version of the pseudo-likelihood function. Our estimator is given by the solution of a mixed integer convex program.  
We propose a global optimization framework to obtain optimal solutions to the MIP using a custom branch-and-bound solver where we use specialized first-order methods to solve the convex problems at the node relaxations. We also present fast approximate solutions for the MIP which is of independent interest. 
We demonstrate that our proposed BnB framework can deliver near-optimal solutions with dual bounds for sparse instances with $n \approx 500$ samples and $p\approx 5,000$ features (i.e., around $12\times 10^6$ binary variables) in less than an hour. We also discuss the statistical properties of our estimator and derive estimation error and variable selection guarantees that generally match or improve upon existing theoretical guarantees for other sparse GGM methods. Our numerical experiments on synthetic and financial data show promising statistical and computational performance of our proposal.

Finally, although we focused on Gaussian graphical models, there are applications where the joint distribution underlying the data is not Gaussian, e.g., when the data is discrete. In such cases the pseudo-likelihood framework can still be applied, even if the data is not Gaussian--see for example~\citep[Chapter 9]{hastie2015statistical} and~\cite{hofling2009estimation}---one has to use a loss function appropriate for the data type. We leave further exploration of our methodology for 
non-Gaussian datasets for future work.

\section*{Acknowledgement}
Wenyu Chen and Kayhan Behdin contributed to the work when they were PhD students at MIT Operations Research Center. The authors would like to thank MIT SuperCloud~\citep{reuther2018interactive} for partially providing the computational resources for this work.  This research was supported by grants from the Office of Naval Research (ONR-N000142212665).

\bibliographystyle{informs2014}
\bibliography{references}

\appendix
    \numberwithin{equation}{section}
\numberwithin{lemma}{section}
\numberwithin{proposition}{section}
\numberwithin{definition}{section}
\numberwithin{theorem}{section}
\numberwithin{figure}{section}
\numberwithin{table}{section}

\newpage
\section*{Appendices}
\section{Computation: Additional Technical Details}\label{app:comp-proofs}

\subsection{Overview of Nonlinear BnB} For completeness, we provide a brief overview of nonlinear BnB. Nonlinear BnB is a general framework for solving mixed integer
nonlinear programs~\citep{belotti2013mixed}. The algorithm starts by solving the root relaxation \eqref{eqn:root-relaxation} of Problem~\eqref{eqn:mio}. Then, the algorithm chooses a branching variable, say $z_{k\ell}$ and creates two new nodes (optimization subproblems): one with $z_{k\ell}=0$ and the other with $z_{k\ell}=1$, where all the other binary $z_{ij}$'s are relaxed to the interval $[0,1]$. The algorithm then proceeds recursively: for every unexplored node, it solves the corresponding
optimization problem and then branches on a new fractional variable (if any) to create new nodes. This leads to a search tree with nodes corresponding to
optimization subproblems and edges representing branching decisions.

While growing the search tree, BnB prunes a node when (a)~solving the relaxation at the current node results in an integral $\bm z$ or (b)~the objective of the current relaxation exceeds the best available upper bound
on \eqref{eqn:mio}.

\subsection{Properties and optimization oracles related to regularizers}\label{subsec:oracles-reg}
We discuss some technical details related to our coordinate descent algorithm.

In Sections~\ref{subsubsec:offdiag-update-derive} and \ref{subsubsec:diagonal-update}, we first present the derivations related to updates \eqref{eqn:offdiag-update} and \eqref{eqn:diag-update}, and we reduce the off-diagonal update \eqref{eqn:diag-update} to a proximal operator computation problem. In Sections~\ref{subsubsec:g-oracles} and \ref{subsubsec:L0-oracles}, we derive closed-form expressions of the proximal operators for the node/root relaxation subproblems~\eqref{eqn:relaxation} and the incumbent solving problem~\eqref{eqn:node-incumbent}.

\subsubsection{Off-diagonal update}\label{subsubsec:offdiag-update-derive}
We show that the update of $\hat\theta_{ij}$ in line 3 of Algorithm~\ref{algo:CD} is given by \eqref{eqn:offdiag-update}. For any $i<j$ and $h_{ij}$, we have
\begin{align*}
	&~F(\hat{\bm\Theta}-\hat \theta_{ij}\bm E_{ij}-\hat\theta_{ij}\bm E_{ji}+\theta_{ij}\bm E_{ij}+\theta_{ij}\bm E_{ji})\\
	=&~\text{Const}+\frac{1}{\hat\theta_{ii}}\norm{\btX \hat{\bm\theta}_i-\hat\theta_{ij}\btx_j+\theta_{ij}\btx_j}^2+\frac{1}{\hat\theta_{jj}}\norm{\btX\hat{\bm\theta}_j-\hat\theta_{ij}\btx_i+\theta_{ij}\btx_i}^2+h_{ij}(\theta_{ij})\\
	\overset{(a)}{=}&~\text{Const}+\frac{1}{\hat\theta_{ii}}\norm{\bm r_i-\hat\theta_{ij}\btx_j+\theta_{ij}\btx_j}^2+\frac{1}{\hat\theta_{jj}}\norm{\bm r_j-\hat\theta_{ij}\btx_i+\theta_{ij}\btx_i}^2+h_{ij}(\theta_{ij})\\
	\overset{(b)}{=}&~\text{Const}+\frac{\norm{\btx_j}^2}{\hat\theta_{ii}}\theta_{ij}^2+\frac{2\btx_j^\top(\bm r_i-\hat\theta_{ij}\btx_j)}{\hat\theta_{ii}}\theta_{ij}+\frac{\norm{\btx_i}^2}{\hat\theta_{jj}}\theta_{ij}^2+\frac{2\btx_i^\top(\bm r_j-\hat\theta_{ij}\btx_i)}{\hat\theta_{jj}}\theta_{ij}+h_{ij}(\theta_{ij})\\
	\overset{(c)}{=}&~\text{Const}+a_{ij}\theta_{ij}^2+b_{ij}\theta_{ij}+h_{ij}(\theta_{ij})
\end{align*}
where ``\text{Const}" denotes the constant terms (i.e., not depending on the optimization variable $\theta_{ij}$) and may vary from one line to another. Above,  $(a)$ uses the definition of $\bm r_i=\btX\hat{\bm\theta}_i$, $(b)$ expands the squared norm and moves the constant terms into \text{Const}, and $(c)$ is due to $v_i=\norm{\btx_i}^2$ and the definitions of $a_{ij}$ and $b_{ij}$.

In fact, the above update can be expressed using the proximal 
operator~\citep{beck2017first} for the regularizer $h_{ij}$ under some scaling as we discuss below. For a lower-semicontinuous function $h$, we let
\begin{equation}
	\mathcal{Q}_h(a,b) = \arg\min_{\theta}~a\theta^2+b\theta+h(\theta),
\end{equation}
and denote the proximal operator
\begin{equation}
	\prox{h}(\tilde\theta) =
	\arg\min_{\theta}~\frac12(\theta-\tilde\theta)^2+h(\theta).
\end{equation}
One can verify that
\begin{equation}\label{eqn:Q-prox}
	\mathcal{Q}_h(a,b)=\prox{\frac1{2a}h}\left(-\frac{b}{2a}\right).
\end{equation}

According to \eqref{eqn:Q-prox}, it suffices to investigate how to compute the proximal operators for the regularizers. Below we present closed-form expressions for the proximal operators arising from the different regularizers considered in Section~\ref{subsec:ASCD}.

\subsubsection{Regularizers for convex relaxations (node and root subproblems)}\label{subsubsec:g-oracles}
The results here extend those discussed in~\citet{hazimeh2020sparse}.

\noindent\textbf{Interval relaxation:} Recall that when $\underline{z}=0,\bar{z}=1$, the regularizer $g$ becomes
$$\psi(\theta;\lambda_0,\lambda,M)=\min_{z,s}~\lambda_0z+\lambda_2s,~~\text{s.t.}~~sz\geq \theta^2,~|\theta|\leq Mz,~z\in[0,1].$$
We summarize different cases of $\psi$ in Table~\ref{table:summary-psi} (see also~\eqref{eqn:psi} in main paper).

\begin{table}[H]
	\centering
	\caption{Summary of different regimes and cases of $\psi$}
	\label{table:summary-psi}
	\begin{tabular}{cccccc}
		\toprule
		Regime& Range of $|\theta|$&$\psi(\theta;\lambda_0,\lambda_2,M)$&$z^*$&$s^*$  \\
		\midrule
		\multirow{3}{*}{$\sqrt{\lambda_0/\lambda_2}\leq M$}& $[0,\sqrt{\lambda_0/\lambda_2})$& $2\sqrt{\lambda_0\lambda_2}|\theta|$&$\sqrt{\lambda_2/\lambda_0}|\theta|$&$\sqrt{\lambda_0/\lambda_2}|\theta|$\\
		&$(\sqrt{\lambda_0/\lambda_2},M]$&$\lambda_0+\lambda_2\theta^2$&$1$&$\theta^2$\\
		&$(M,\infty)$&$\infty$&$\varnothing$&$\varnothing$\\\midrule
		\multirow{2}{*}{$\sqrt{\lambda_0/\lambda_2}>M$}&$[0,M]$&$(\lambda_0/M+\lambda_2M)|\theta|$&$|\theta|/M$&$|\theta|M$\\
		&$(M,\infty)$&$\infty$&$\varnothing$&$\varnothing$\\
		\bottomrule
	\end{tabular}
\end{table}

Given non-negative parameters $\lambda$ and $M$, we define the 
{\it{boxed soft-thresholding operator}} $\mathcal{T}:\R\to\R$ as follows:
\begin{equation}
	\mathcal{T}(x;\lambda,M)=\left\{
	\begin{array}{ll}
		0& ~\text{if}~ |x|\leq \lambda  \\
		(|x|-\lambda)\sign(x)& ~\text{if}~\lambda\leq |x|\leq \lambda +M\\
		M\sign(x)&~\text{otherwise.}
	\end{array}
	\right.
\end{equation}
Note that $\mathcal{T}(x;\lambda,M)$ is the proximal operator for the boxed $\ell_1$ regularizer $$h(x)=\lambda|x|+\chi\{|x|\leq M\}.$$

Then, according to \citet{hazimeh2020sparse}, the proximal operator of $\psi$ is given by 
\begin{align}
	\prox{\psi}(\tilde\theta;\lambda_0,\lambda_2,M)&=\arg\min_\theta~\frac12(\theta-\tilde\theta)^2+\psi(\theta;\lambda_0,\lambda_2,M)\label{eqn:prox-psi}\\
	&=\left\{
	\begin{array}{ll}
		\mathcal{T}(\tilde\theta;2\sqrt{\lambda_0\lambda_2},M)&  ~\text{if}~|\tilde\theta|\leq 2\sqrt{\lambda_0\lambda_2}+\sqrt{\lambda_0/\lambda_2}~\text{and}~\sqrt{\lambda_0/\lambda_2}\leq M\\
		\mathcal{T}(\tilde\theta/(1+2\lambda_2);0,M)&~\text{if}~|\tilde\theta|> 2\sqrt{\lambda_0\lambda_2}+\sqrt{\lambda_0/\lambda_2}~\text{and}~\sqrt{\lambda_0/\lambda_2}\leq M\\
		\mathcal{T}(\tilde\theta;\lambda_0/M+\lambda_2M,M)&~\text{if}~\sqrt{\lambda_0/\lambda_2}>M.
	\end{array}
	\right. \nonumber
\end{align}

Based on this, we define the following quadratic minimization oracle
\begin{equation}\label{eqn:Q-psi}
	\mathcal{Q}_\psi(a,b;\lambda_0,\lambda_2,M):=\arg\min_x~ax^2+bx+\psi(x;\lambda_0,\lambda_2,M) = \prox{\psi}\left(-\frac{b}{2a};\frac{\lambda_0}{2a},\frac{\lambda_2}{2a},M\right).
\end{equation}

\noindent\textbf{Fixed $z$:} Recall that when $\underline{z}=\bar{z}=z\in\{0,1\}$, the regularizer $g$ becomes $\varphi$ in \eqref{eqn:phi}, i.e.
\begin{align}
	\varphi(\theta;z,\lambda_0,\lambda_2,M)&:=\min_{s}~~\lambda_0z+\lambda_2s\nonumber\\
	&~~~~~~\text{s.t.}~~sz\geq \theta^2, |\theta|\leq Mz, z\in[0,1]\nonumber\\
	&=\left\{\begin{array}{ll}
		0 &~\text{if}~ z=0~\text{and}~|\theta|=0  \\
		\infty & ~\text{if}~ z=0~\text{and}~|\theta|>0\\
		\lambda_0+\lambda_2\theta^2 &~\text{if}~ z=1~\text{and}~|\theta|\leq M\\
		\infty&~\text{if}~ z=1~\text{and}~|\theta|>M,
	\end{array}\right.
\end{align}
and its corresponding proximal operator is
\begin{align}
	\prox{\varphi}(\tilde\theta;z,\lambda_0,\lambda_2,M)&=\arg\min_\theta~\frac12(\theta-\tilde\theta)^2+\varphi(\theta;z,\lambda_0,\lambda_2,M)\nonumber\\
	&=\left\{
	\begin{array}{ll}
		0&  ~\text{if}~z=0\\
		\mathcal{T}(\tilde\theta/(1+2\lambda_2);0,M)&~\text{if}~z=1.
	\end{array}
	\right.\label{eqn:prox-phi}
\end{align}

Based on this, we define the following regularized quadratic optimization problem:
\begin{equation}\label{eqn:Q-phi}
	\begin{aligned}
		\mathcal{Q}_\varphi(a,b;z,\lambda_0,\lambda_2,M):=&\arg\min_x~ax^2+bx+\varphi(x;z,\lambda_0,\lambda_2,M)\\
		=&\prox{\varphi}\left(-\frac{b}{2a};z,\frac{\lambda_0}{2a},\frac{\lambda_2}{2a}, M\right).
	\end{aligned}
\end{equation}

\subsubsection{$\ell_0\ell_2$ regularizers}\label{subsubsec:L0-oracles}
We derive the closed-form expression for the proximal operator corresponding to the $\ell_0\ell_2$ regularizer (i.e., a weighted sum of the $\ell_0$ and squared $\ell_2$ penalty):
$$h(\theta)=\lambda_0\bm1\{\theta\neq 0\}+\lambda_2\theta^2+\chi\{|\theta|\leq M\},$$
where $M \in (0, \infty]$, and $M$ could take the value $\infty$.

The proximal operator of $h$ is 
$$\prox{h}(\tilde\theta;\lambda_0,\lambda_2,M)=\arg\min_{|\theta|\leq M} q (\theta):=\frac12(\theta-\tilde\theta)^2+\lambda_0\bm1\{\theta\neq 0\}+\lambda_2\theta^2.$$

When $\theta=0$, we have $q(0)=(1/2)\tilde\theta^2$; when $\theta\neq 0$, we have $q(\theta)= \lambda_0+\lambda_2\theta^2+\frac12(\theta-\tilde\theta)^2$, which is minimized at $\theta'=\min\{|\tilde\theta|/(1+2\lambda_2),M\}\sign(\tilde\theta)$.

Without loss of generality, we assume $\tilde\theta>0$. If $\frac{\tilde\theta}{1+2\lambda_2}>M$, then $\theta'=M$, and
$$q(\theta')=\lambda_0+\lambda_2M^2+\frac{1}{2}(M-\tilde\theta)^2.$$
The root of $q(\theta')=q(0)$ is $\tilde\theta=(\frac12+\lambda_2)M+\frac{\lambda_0}M$.

If, on the other hand, $\frac{\tilde\theta}{1+2\lambda_2}\leq M$, then $\theta'=\frac{\tilde\theta}{1+2\lambda_2}$, and
$$q(\theta')= \lambda_0+\frac{\lambda_2\tilde\theta^2}{1+2\lambda_2}.$$
The root of $q(\theta')=q(0)$ is $\tilde\theta=\sqrt{2\lambda_0(1+2\lambda_2)}$.

Putting together the pieces, we obtain the following closed-form expression for the proximal operator corresponding to the $\ell_0\ell_2$ regularizer 
\begin{equation}\label{eqn:prox-L0L2}
	\begin{aligned}
		\prox{h}(\tilde\theta;\lambda_0,\lambda_2,M)=&\arg\min_{|\theta|\leq M} q (\theta)\\
		=&\left\{\begin{array}{ll}
			\{M\sign(\tilde\theta)\}, &\text{if}~|\tilde \theta|>\max\left\{(\frac12+\lambda_2)M+\frac{\lambda_0}{M},(1+2\lambda_2)M\right\}\\
			\{0,M\sign(\tilde\theta)\}, &\text{if}~ |\tilde\theta|=(\frac12+\lambda_2)M+\frac{\lambda_0}{M}>(1+2\lambda_2)M\\
			\{0\},&\text{if}~|\tilde\theta|\in \left((1+\lambda_2)M,(\frac12+\lambda_2)M+\frac{\lambda_0}{M}\right)\\
			\{\frac{\tilde\theta}{1+2\lambda_2}\},&\text{if}~|\tilde\theta|\in \left(\sqrt{2\lambda_0(1+2\lambda_2)},(1+2\lambda_2)M\right]\\
			\{0,\frac{\tilde\theta}{1+2\lambda_2}\},&\text{if}~|\tilde\theta|= \sqrt{2\lambda_0(1+2\lambda_2)}\leq (1+2\lambda_2)M\\
			\{0\},&\text{if}~|\tilde\theta|< \min\{\sqrt{2\lambda_0(1+2\lambda_2)}, (1+2\lambda_2)M\}.
		\end{array}\right.
	\end{aligned}
\end{equation}
Note that in the special case when $M=\infty$, \eqref{eqn:prox-L0L2} (i.e., the last three conditions) recovers the closed-form expression for $\ell_0\ell_2$ regularizer provided in \citet{hazimeh2020fast}.

\subsubsection{Diagonal update}\label{subsubsec:diagonal-update}

We show that the update of $\hat\theta_{ii}$ in line 6 of Algorithm~\ref{algo:CD} is given by \eqref{eqn:diag-update}. For any $i$,  we have
\begin{align*}
	F(\hat{\bm\Theta}-\hat\theta_{ii}\bm{E}_{ii}+\theta_{ii}\bm{E}_{ii})&=\text{Const}-\log\theta_{ii}+\frac{1}{\theta_{ii}}\norm{\btX\hat{\bm\theta}_i-\hat{\theta}_{ii}\btx_i+\theta_{ii}\btx_i}^2\\
	&\overset{(a)}{=}\text{Const}-\log\theta_{ii}+\frac{1}{\theta_{ii}}\norm{\bm r_i-\hat{\theta}_{ii}\btx_i+\theta_{ii}\btx_i}^2\\
	&\overset{(b)}{=}\text{Const}-\log\theta_{ii}+\frac{1}{\theta_{ii}}(\norm{\bm e_i}^2+2\theta_{ii}\bm e_i^\top\btx_i+\theta_{ii}^2\norm{\btx_i}^2)\\
	&\overset{(c)}{=}\text{Const}-\log\theta_{ii}+\frac{\norm{\bm e_i}^2}{\theta_{ii}}+\theta_{ii}v_i,
\end{align*}
where ``\text{Const}" denotes the constant terms (similar notation as earlier); $(a)$ and $(b)$ uses the  definitions of $\bm r_i=\btX\hat{\bm\theta}_i$ and $\bm e_i=\bm r_i-\hat\theta_{ii}\btx_i$, and $(c)$ is due to $v_i=\norm{\btx_i}^2$ and $2\bm e_i^\top\btx_i$ absorbed into \text{Const}.

Since the function 
$$\theta_{ii} \mapsto F(\hat{\bm\Theta}-\hat\theta_{ii}\bm{E}_{ii}+\theta_{ii}\bm{E}_{ii}) $$
is convex in $\theta_{ii}$, by considering the first-order optimality condition and taking the positive root, we get
$$\arg\min_{\theta_{ii}}F(\hat{\bm\Theta}-\hat\theta_{ii}\bm{E}_{ii}+\theta_{ii}\bm{E}_{ii})=\arg\min_\theta -\log\theta+\theta v_i+\frac{\norm{\bm e_i}^2}{\theta}=\frac{1+\sqrt{1+4v_i\norm{\bm e_i}^2}}{2v_i}.$$

\subsection{Convergence guarantee of Algorithm~\ref{algo:CD}}\label{subsec:convergence-guarantee-proof}
In this section, we present a general convergence statement for Algorithm~\ref{algo:CD} that applies to the unified formulation \eqref{eqn:unified}, with the following assumption on $h_{ij}$:
\begin{assumption}\label{assum:reg-assumption}
	Assume that for each $1\leq i<j\leq p$, $h_{ij}(\theta)$ is convex in $\theta$. In addition, there exist two constants $c_1,c_2\geq0$ with $c_1+c_2>0$, such that for any $1\leq i<j\leq p$, we have
	$$h_{ij}(\theta)\geq \min\{c_1|\theta|,c_2\theta^2\}.$$
\end{assumption}
It is easy to see that the usual $\ell_1$, $\ell_2$ (squared) penalties and their nonegative weighted combinations satisfy Assumption~\ref{assum:reg-assumption}. The following proposition states that the relaxation regularizer $g$ (see definition~\eqref{eqn:relaxation}) also satisfies this assumption.
\begin{proposition}\label{prop:g-assumption}
	For any $\underline{z}_{ij}\leq \bar{z}_{ij}\in\{0,1\}$, $g(\theta;\lambda_0,\lambda_2,M,\underline{z}_{ij},\bar z_{ij})$ satisfies Assumption~\ref{assum:reg-assumption} with $c_1=2\sqrt{\lambda_0\lambda_2}, c_2=0$.
\end{proposition}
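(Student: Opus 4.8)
The plan is to verify directly that the function $g(\theta;\lambda_0,\lambda_2,M,\underline z,\bar z)$ from \eqref{eqn:relax-reg} dominates $\min\{2\sqrt{\lambda_0\lambda_2}\,|\theta|,\,0\cdot\theta^2\} = 2\sqrt{\lambda_0\lambda_2}\,|\theta|$ for all $\theta$, and that it is convex. Since $\min\{c_1|\theta|,c_2\theta^2\}$ with $c_2=0$ is just $c_1|\theta|$ when $c_1|\theta|\le 0$... actually $\min\{c_1|\theta|, 0\}=0$ for all $\theta$ since $c_1|\theta|\ge 0$. Wait — this makes the bound trivial: $h_{ij}(\theta)\ge 0$ suffices. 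So the only real content is (i) $g\ge 0$ everywhere (clear, since $g$ is built from $\lambda_0 z+\lambda_2 s$ with $z,s\ge 0$, or is $+\infty$), and (ii) $g$ is convex. Hence I would state at the outset that with $c_2=0$ the lower bound $\min\{c_1|\theta|,c_2\theta^2\}\equiv 0$, so it suffices to check $g\ge 0$ and convexity.

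First I would dispatch the interval-relaxation case $\underline z=0,\bar z=1$, where $g=\psi$. Nonnegativity of $\psi$ is immediate from its variational definition $\psi(\theta)=\min\{\lambda_0 z+\lambda_2 s : sz\ge\theta^2,|\theta|\le Mz, z\in[0,1]\}$ with $\lambda_0,\lambda_2\ge 0$ and $z,s\ge 0$ (and $\psi=+\infty$ when $|\theta|>M$). For convexity I would use the closed form \eqref{eqn:psi}: on $|\theta|\le M$, $\psi$ is either $2\sqrt{\lambda_0\lambda_2}|\theta|$ (piecewise linear, convex), or a combination of $2\sqrt{\lambda_0\lambda_2}|\theta|$ on $[0,\sqrt{\lambda_0/\lambda_2}]$ glued to $\lambda_0+\lambda_2\theta^2$ on $[\sqrt{\lambda_0/\lambda_2},M]$, or $(\lambda_0/M+\lambda_2 M)|\theta|$ — in each regime an even function, convex on $[0,M]$, with matching values and a nondecreasing one-sided derivative at the gluing point $\sqrt{\lambda_0/\lambda_2}$ (the left derivative is $2\sqrt{\lambda_0\lambda_2}$, the right derivative is $2\lambda_2\sqrt{\lambda_0/\lambda_2}=2\sqrt{\lambda_0\lambda_2}$, so it is actually $C^1$ there), and the $+\infty$ extension outside $[-M,M]$ preserves convexity. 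Alternatively, convexity of $\psi$ follows abstractly: $\psi$ is the partial minimization of the jointly convex function $\lambda_0 z+\lambda_2 s+\chi\{sz\ge\theta^2\}+\chi\{|\theta|\le Mz\}+\chi\{z\in[0,1]\}$ over $(z,s)$ — here $\{(θ,z,s): sz\ge\theta^2, z,s\ge0\}$ is a rotated second-order cone, hence convex — so $\psi$ is convex as the infimal projection of a convex function.

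Next the fixed-$z$ case $\underline z=\bar z=z\in\{0,1\}$, where $g=\varphi$ from \eqref{eqn:phi}. If $z=0$, $\varphi(\theta)=\chi\{\theta=0\}$, which is $0$ at $\theta=0$ and $+\infty$ otherwise — nonnegative and convex. If $z=1$, $\varphi(\theta)=\chi\{|\theta|\le M\}+\lambda_0+\lambda_2\theta^2$, the sum of the convex indicator of $[-M,M]$ and the convex nonnegative function $\lambda_0+\lambda_2\theta^2$ — again nonnegative and convex. This exhausts all admissible pairs $(\underline z,\bar z)$ with $\underline z\le\bar z$ in $\{0,1\}$.

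There is essentially no main obstacle here — the statement is a routine consequence of the explicit formulas \eqref{eqn:psi}, \eqref{eqn:relax-reg}, \eqref{eqn:phi} already in the paper. The only point requiring a word of care is the convexity of $\psi$ at the breakpoint $\theta=\pm\sqrt{\lambda_0/\lambda_2}$ in the regime $\sqrt{\lambda_0/\lambda_2}\le M$: I would check that the two pieces agree in value ($2\sqrt{\lambda_0\lambda_2}\cdot\sqrt{\lambda_0/\lambda_2}=2\lambda_0=\lambda_0+\lambda_2\cdot(\lambda_0/\lambda_2)$) and in slope, so no convexity is lost across the join; everything else is verbatim from the definitions. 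I would close by noting the choice $(c_1,c_2)=(2\sqrt{\lambda_0\lambda_2},0)$ is the natural one matching the leading linear behavior of $\psi$ near the origin, even though any $c_2$ would do once $c_2$ is allowed to be $0$ — but to be safe I would present it with the claimed constants and simply observe $\min\{2\sqrt{\lambda_0\lambda_2}|\theta|,0\}=0\le g(\theta)$.
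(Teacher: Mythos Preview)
Your argument is correct: with $c_2=0$ the required lower bound reduces to $\min\{c_1|\theta|,0\}=0$, so nonnegativity and convexity of $g$ suffice, and you verify both carefully (the $C^1$ join at $|\theta|=\sqrt{\lambda_0/\lambda_2}$ and the infimal-projection-over-a-rotated-cone argument are both valid). The paper instead proves the stronger pointwise inequality $g(\theta)\ge 2\sqrt{\lambda_0\lambda_2}\,|\theta|$ directly, via AM--GM applied to $\lambda_0+\lambda_2\theta^2\ge 2\sqrt{\lambda_0\lambda_2}\,|\theta|$, and leaves convexity implicit. Your route is more economical for the proposition in isolation and treats convexity more explicitly; the paper's route explains why the particular constant $c_1=2\sqrt{\lambda_0\lambda_2}$ is singled out and supplies the linear growth that the downstream coercivity proof (Lemma~\ref{lem:coercive}) actually relies on---there the penalty term must grow with $\theta_{kk}$ to bound the diagonal entries, which the trivial bound $g\ge 0$ alone would not deliver when $c_2=0$.
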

\begin{proof}[\textbf{Proof of Proposition~\ref{prop:g-assumption}}.]
Based on the definition of $\varphi$ and $\psi$ in different cases, using the inequality $a^2+b^2\geq 2ab$, one can verify that $\psi(\theta;\lambda_0,\lambda_2,M)\geq 2\sqrt{\lambda_0\lambda_2}|\theta|$ and $\varphi(\theta;z,\lambda_0,\lambda,M)\geq 2\sqrt{\lambda_0\lambda_2}|\theta|$.
\end{proof}

\begin{lemma}\label{lem:coercive}
	Under Assumption~\ref{assum:reg-assumption}, given any $U\geq F^*=\min_{\bm\Theta\in\mathbb{S}^p} F(\bm\Theta)$, there exist constants $u_\theta\geq l_\theta>0$ and $u_r>0$, such that for any $\bm\Theta$ with  $F(\bm\Theta)\leq U$, we have
	$$l_\theta\leq \theta_{ii}\leq u_\theta\quad\text{and}\quad \norm{\btX\bm\theta_i}\leq u_r$$
	for all $i\in[p]$.
\end{lemma}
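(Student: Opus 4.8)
The plan is to show that the level set $\{\bm\Theta : F(\bm\Theta) \le U\}$ forces each diagonal entry $\theta_{ii}$ to stay in a compact sub-interval of $(0,\infty)$ and forces each residual norm $\norm{\btX\bm\theta_i}$ to stay bounded. The key observation is that $F$ decomposes as $\sum_{i=1}^p f_i(\bm\Theta) + \sum_{i<j} h_{ij}(\theta_{ij})$ where $f_i(\bm\Theta) = -\log(\theta_{ii}) + \frac{1}{\theta_{ii}}\norm{\btX\bm\theta_i}^2$, and by Assumption~\ref{assum:reg-assumption} every $h_{ij}$ is bounded below (by $0$, since $\min\{c_1|\theta|,c_2\theta^2\}\ge 0$). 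Hence a lower bound on $\sum_i f_i(\bm\Theta)$ is available for free, and moreover each individual $f_i$ is itself bounded below: minimizing $t \mapsto -\log t + c/t$ over $t>0$ gives a finite minimum for every $c\ge 0$ (the minimizer is $t=c$ if $c>0$, giving value $1+\log(1/c)$… more precisely one checks the minimum of $-\log t + c/t$ is $1 - \log c$ when $c>0$ and is $+\infty$-approached-but... actually for $c=0$ it is $-\log t$, unbounded below as $t\to\infty$). So I must be a little careful: $f_i$ alone is only bounded below once we also control $\norm{\btX\bm\theta_i}$ away from $0$, or we invoke the full-data structure. Let me restructure.

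First I would record the scalar fact: for $c \ge 0$, $\inf_{t>0}\,(-\log t + c t) = 1 + \log c$ if... no — the diagonal update~\eqref{eqn:diag-update} is the relevant one: $\inf_{t>0}(-\log t + v_i t + w/t)$ is finite and attained, for any $w\ge 0$ and $v_i>0$ (here $v_i = \norm{\btx_i}^2$, which we may assume positive, else column $i$ is degenerate). But in $F$ the diagonal term is $-\log\theta_{ii} + \frac1{\theta_{ii}}\norm{\btX\bm\theta_i}^2$, with no linear-in-$\theta_{ii}$ term visible — the dependence on $\theta_{ii}$ inside $\norm{\btX\bm\theta_i}^2 = \norm{\theta_{ii}\btx_i + \btX\bm\theta_{i,-i}}^2$ is quadratic, so expanding gives $\theta_{ii} v_i + $ (cross term) $+ \frac{1}{\theta_{ii}}\norm{\btX\bm\theta_{i,-i}}^2$ plus the cross term $2\btx_i^\top\btX\bm\theta_{i,-i}$ which is constant in $\theta_{ii}$. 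So $f_i(\bm\Theta) = -\log\theta_{ii} + v_i\theta_{ii} + \frac{1}{\theta_{ii}}\norm{\bm e_i}^2 + 2\bm e_i^\top\btx_i$ where $\bm e_i = \btX\bm\theta_{i,-i}$; the term $2\bm e_i^\top\btx_i$ can be negative. Using $2\bm e_i^\top \btx_i \ge -\epsilon\norm{\bm e_i}^2 - \frac{1}{\epsilon}v_i$ for any $\epsilon>0$, and choosing $\epsilon$ small, $f_i$ is bounded below by $-\log\theta_{ii} + v_i\theta_{ii} + (1-\epsilon)\frac{\norm{\bm e_i}^2}{\theta_{ii}} - \frac{v_i}{\epsilon}$, which is coercive in $(\theta_{ii},\norm{\bm e_i})$ jointly (goes to $+\infty$ as $\theta_{ii}\to 0^+$, as $\theta_{ii}\to\infty$, and as $\norm{\bm e_i}\to\infty$ with $\theta_{ii}$ bounded; and $\norm{\btX\bm\theta_i}\le\theta_{ii}\norm{\btx_i}+\norm{\bm e_i}$ is then controlled).

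Then the argument proceeds: since $F(\bm\Theta)\le U$ and $h_{ij}\ge 0$, we get $\sum_i f_i(\bm\Theta) \le U$; since each $f_i$ is bounded below by a constant $m_i$ (depending only on $v_i,\epsilon$), each individual $f_i(\bm\Theta) \le U - \sum_{i'\ne i} m_{i'} =: U_i$. Applying the joint coercivity of $f_i$ in $(\theta_{ii}, \norm{\bm e_i})$ to the sublevel set $\{f_i \le U_i\}$ yields $l_\theta \le \theta_{ii}\le u_\theta$ and $\norm{\bm e_i}\le C$, hence $\norm{\btX\bm\theta_i}\le u_\theta\sqrt{v_i} + C =: u_r$, uniformly over $i$ (taking the worst constants over $i\in[p]$). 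I would also note that $v_i = \norm{\btx_i}^2 > 0$ can be assumed WLOG (a zero column contributes nothing and can be dropped), which is needed for $v_i\theta_{ii}\to\infty$ to kick in.

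The main obstacle is the cross term $2\bm e_i^\top\btx_i$, which is the only place the argument is not completely immediate: one must absorb it into the positive quadratic/reciprocal terms via Young's inequality with a carefully chosen split, making sure the surviving coefficients $(1-\epsilon)$ and $v_i$ stay strictly positive so that genuine coercivity (not just boundedness below) survives. Everything else is elementary one-dimensional calculus on $-\log t + at + b/t$ and bookkeeping of constants across the finitely many indices $i\in[p]$. I would present the scalar coercivity lemma first, then the absorption step, then the summation/bookkeeping, and finally extract $l_\theta, u_\theta, u_r$.
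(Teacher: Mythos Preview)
Your approach has a genuine gap. The central claim---that each $f_i(\bm\Theta) = -\log\theta_{ii} + \tfrac{1}{\theta_{ii}}\norm{\btX\bm\theta_i}^2$ is bounded below by some finite constant $m_i$ independent of $\bm\Theta$---is false in general. If the columns of $\btX$ are linearly dependent (automatic in the high-dimensional regime $p>n$ the paper targets), one can choose the off-diagonal entries of column $i$ so that $\btX\bm\theta_i = 0$: if $\btx_i = \sum_{j\neq i} a_j \btx_j$, set $\theta_{ji} = -\theta_{ii} a_j$; then $f_i = -\log\theta_{ii} \to -\infty$ as $\theta_{ii}\to\infty$. Your Young-inequality absorption does not rescue this: after $2\bm e_i^\top\btx_i \ge -\epsilon\norm{\bm e_i}^2 - v_i/\epsilon$ you obtain
\[
f_i \;\ge\; -\log\theta_{ii} + v_i\theta_{ii} + \norm{\bm e_i}^2\Bigl(\tfrac{1}{\theta_{ii}} - \epsilon\Bigr) - \tfrac{v_i}{\epsilon},
\]
not the claimed $(1-\epsilon)\norm{\bm e_i}^2/\theta_{ii}$; once $\theta_{ii} > 1/\epsilon$ the coefficient of $\norm{\bm e_i}^2$ turns negative and the bound is vacuous. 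No split of the cross term can repair this, because without the regularizer the sublevel sets of $\sum_i f_i$ genuinely are unbounded in the $\theta_{ii}$-direction.

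The missing ingredient is precisely the strict positivity $c_1+c_2>0$ in Assumption~\ref{assum:reg-assumption}: you use only $h_{ij}\ge 0$, but the paper's proof uses $h_{ij}(\theta)\ge\min\{c_1|\theta|,\, c_2\theta^2\}$ in an essential way to get the upper bound on the diagonals. Writing $\theta_{ik} = -\theta_{kk}\beta_{ki}$, the regularizer on column $k$ contributes at least $\min\{c_1\theta_{kk}|\beta_{ki}|,\, c_2\theta_{kk}^2\beta_{ki}^2\}$, which grows with $\theta_{kk}$ whenever some $|\beta_{ki}|$ stays bounded away from zero. The paper therefore splits on whether $\norm{\bm\beta_k}$ is small (then $\norm{\btx_k-\btX\bm\beta_k}^2$ is bounded below by a positive multiple of $\norm{\btx_k}^2$, supplying a linear-in-$\theta_{kk}$ term) or large (then the regularizer supplies it). Your argument discards the regularizer after the first line and so cannot close the upper bound on $\theta_{ii}$; the lower bound on $\theta_{ii}$ and the bound on $\norm{\btX\bm\theta_i}$ then also fail, since both are derived downstream of that upper bound.
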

\begin{proof}[\textbf{Proof of Lemma~\ref{lem:coercive}}.]
For any $\bm\Theta$ such that $F(\bm\Theta)\leq U$, let $k=\arg\max_{i\in[p]}\theta_{ii}$. Then, we have
\begin{equation}\label{eqn:theta-bounds-ineq1}
	\begin{aligned}
		U&\geq F(\bm\Theta)\\
		&=\sum_{i=1}^p\left(-\log\theta_{ii}+\frac{1}{\theta_{ii}}\norm{\btX\bm\theta_i}^2\right)+\sum_{i<j}h_{ij}(\theta_{ij})\\
		&\geq -p\log\theta_{kk}+\frac{1}{\theta_{kk}}\norm{\btX\bm\theta_k}^2+\sum_{i\neq k}\min\{c_1|\theta_{ik}|,c_2\theta_{ik}^2\},
	\end{aligned}
\end{equation}
where the last line is because (i) $-\log\theta_{ii}\geq -\log\theta_{kk}$ by definition of $k$; (ii) by Assumption~\ref{assum:reg-assumption}, $h_{ij}(\theta_{ij})\geq \min\{c_1|\theta_{ij}|,c_2\theta_{ij}^2\}\geq 0$; (iii) $\frac{1}{\theta_{ii}}\norm{\btX\bm\theta_i}^2$ is nonnegative for any $i\neq k$.

Now define $\bm\beta_k\in\R^p$, with $\beta_{kk}=0$ and $\beta_{ki}=-\theta_{ik}/\theta_{kk}$, then we can rewrite \eqref{eqn:theta-bounds-ineq1} as 
\begin{align}
	U&\geq -p\log\theta_{kk}+\frac{1}{\theta_{kk}}\norm{\theta_{kk}\btx_k-\btX\theta_{kk}\bm\beta_k}^2+\sum_{i\neq k}\min\{c_1|\theta_{ik}|,c_2\theta_{ik}^2\}\nonumber\\
	&= -p\log\theta_{kk}+\theta_{kk}\norm{\btx_k-\btX\bm\beta_k}^2+\sum_{i\neq k}\min\{c_1\theta_{kk}|\beta_{ki}|,c_2\theta_{kk}^2\beta_{ki}^2\}\nonumber\\
	&\overset{(A)}{\geq} -p\log\theta_{kk}+\theta_{kk}\max\left\{\frac12\norm{\btx_k}^2-\norm{\btX\bm\beta_k}^2,0\right\}+\sum_{i\neq k}\min\{c_1\theta_{kk}|\beta_{ki}|,c_2\theta_{kk}^2\beta_{ki}^2\}\nonumber\\
	&\overset{(B)}{\geq} -p\log\theta_{kk}+\theta_{kk}\max\left\{\frac12s_{\min}-L_{\btX}^2\norm{\bm\beta_k}^2,0\right\}+\sum_{i\neq k}\min\{c_1\theta_{kk}|\beta_{ki}|,c_2\theta_{kk}^2\beta_{ki}^2\},\label{eqn:theta-bounds-ineq2}
\end{align}
where inequality $(A)$ uses the fact that $\norm{\bm a+\bm b}^2\leq 2(\norm{\bm a}^2+\norm{\bm b}^2)$ with $\bm a = \btx_k-\btX\bm\beta_k$ and $\bm b=\btX\bm\beta_k$; in inequality $(B)$, we define $s_{\min}=\min_iv_i=\min_i\norm{\btx_i}^2>0$ and $L_{\btX}=\norm{\btX}$.

Now if $\norm{\bm\beta_k}\leq \epsilon:=L_{\btX}\sqrt{s_{\min}}/2$, then it follows from \eqref{eqn:theta-bounds-ineq2} that 
$$-p\log\theta_{kk}+\frac14s_{\min}\theta_{kk}\leq U,$$ from which we can deduce that there exists $u_1>0$ such that $\theta_{kk}\leq u_1$. On the other hand, if $\norm{\bm\beta_k}>\epsilon$, then there exists a $j$ such that $|\beta_{kj}|\geq \epsilon/\sqrt{p}$. Again by \eqref{eqn:theta-bounds-ineq2}, we have 
$$-p\log\theta_{kk}+\min\{c_1\theta_{kk}\epsilon/\sqrt{p}, c_2\theta_{kk}\epsilon^2/p\}\leq U,$$ and thus there exists $u_2>0$, such that $\theta_{kk}\leq u_2$. Therefore, $\theta_{kk}=\max_i\theta_{ii}\leq \max\{u_1,u_2\}$, and by taking $u_\theta=\max\{u_1,u_2\}$, we get the upper bound on $\theta_{ii}$'s.

As for the lower bound on $\theta_{ii}$, let $\ell=\arg\min_i\theta_{ii}$. By nonnegativity of $\frac{1}{\theta_{ii}}\norm{\btX\bm\theta_i}^2$ and $h_{ij}$'s, we have 
$$U\geq -\sum_{i=1}^p\log\theta_{ii}=-\log\theta_{\ell\ell} -\sum_{i\neq \ell}\log \theta_{ii}\geq -\log\theta_{\ell\ell}-(p-1)\log u_\theta.$$
Therefore, $\theta_{\ell\ell}=\min_i\theta_{ii}\geq \exp(-M-(p-1)\log u_\theta)$, and we obtain the lower bound $l_\theta=\exp(-M-(p-1)\log u_\theta)$.

To obtain the upper bound on $\norm{\btX\bm\theta_i}$, again by nonnegativity of $\frac{1}{\theta_{ii}}\norm{\btX\bm\theta_i}^2$ and $h_{ij}$'s, we have for any $j$:
$$U\geq -\sum_{i=1}^p\log\theta_{ii}+\frac{1}{\theta_{jj}}\norm{\btX\bm\theta_j}^2=-p\log u_\theta+\frac{1}{u_\theta}\norm{\btX\bm\theta_j}^2.$$
Therefore, $\norm{\btX\bm\theta_j}^2\leq u_\theta(U+p\log u_\theta)$, and we obtain the upper bound as stated in the lemma with $u_r=\sqrt{u_\theta(U+p\log u_\theta)}$.
\end{proof}

\begin{corollary}\label{cor:smooth-stronglyconvex}
	Let us denote:
	$$f(\bm\Theta)=\sum_{i=1}^p\left(-\log\theta_{ii}+\frac{1}{\theta_{ii}}\norm{\btX\bm\theta_i}^2\right).$$ 
	Under Assumption~\ref{assum:reg-assumption}, given any $F^{(0)}\geq F^*=\min_{\bm\Theta\in\mathbb{S}^p}F(\bm\Theta)$, let us define $\Xi=\{\bm\Theta\in\mathbb{S}^p:F(\bm\Theta)\leq F^{(0)}\}\subseteq\mathbb{S}^p$. Then, over $\Xi$,
	\begin{enumerate}[label=(\alph*)]
		\item There exists a scalar $L>0$ such that $\nabla f$ is $L$-Lipschitz.
		\item  There exist scalars $L_{ij}>0$ such that $\nabla f$ is $\{L_{ij}\}$-coordinatewise Lipschitz. That is, for $\B\Theta\in\Xi$, the function
		$$x\mapsto \nabla_{\theta_{ij}} f(\B\Theta+ x(\B{E}_{ij}+\B{E}_{ji})),~~~\text{with}~~~\B\Theta+ x(\B{E}_{ij}+\B{E}_{ji})\in\Xi $$
		is $L_{ij}$-Lipschitz where $\B{E}_{ij}$ denotes the matrix with coordinate $(i,j)$ equal to one and other coordinates set to zero.
		\item There exist constants $\mu_{ij}>0$ such that the objective function $F(\bm\Theta)$ in~\eqref{eqn:unified} is $\{\mu_{ij}\}$-coordinatewise strongly convex. That is, for $\B\Theta\in\Xi$, the function
		$$x\mapsto F(\B\Theta+ x(\B{E}_{ij}+\B{E}_{ji})),~~~\text{with}~~~\B\Theta+ x(\B{E}_{ij}+\B{E}_{ji})\in\Xi $$
		is $\mu_{ij}$ strongly convex.
	\end{enumerate}
\end{corollary}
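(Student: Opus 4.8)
The plan is to derive all three statements from Lemma~\ref{lem:coercive}, which does the real work. First I would apply it with $U=F^{(0)}$ to obtain constants $l_\theta\le u_\theta$ and $u_r$ so that every $\bm\Theta\in\Xi$ satisfies $l_\theta\le\theta_{ii}\le u_\theta$ and $\norm{\btX\bm\theta_i}\le u_r$ for all $i\in[p]$; I would also record, as in that lemma's proof, that $v_i=\norm{\btx_i}^2>0$ for each $i$. The key structural observation is that $f$ factors as $f=\tilde f\circ A$, where $A:\mathbb{S}^p\to\prod_{i=1}^p(\R\times\R^n)$ is the linear map $(A\bm\Theta)_i=(\theta_{ii},\btX\bm\theta_i)$ and $\tilde f$ is the separable sum $\sum_i\phi(t_i,\bm v_i)$ with $\phi(t,\bm v)=-\log t+\norm{\bm v}^2/t$, a function that is $C^\infty$ on $(0,\infty)\times\R^n$. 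On the compact convex box $K=\prod_{i=1}^p\bigl([l_\theta,u_\theta]\times\{\bm v:\norm{\bm v}\le u_r\}\bigr)$ the second derivatives $\partial_{tt}\phi,\partial_{t\bm v}\phi,\partial_{\bm v\bm v}\phi$ are all uniformly bounded --- this is precisely where both $t\ge l_\theta$ and $\norm{\bm v}\le u_r$ are needed to tame the quadratic-over-linear term --- so $\nabla\tilde f$ is $L_\phi$-Lipschitz on $K$ for some finite $L_\phi$.

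For part (a): given $\bm\Theta_1,\bm\Theta_2\in\Xi$, Lemma~\ref{lem:coercive} places $A\bm\Theta_1,A\bm\Theta_2$ in the convex set $K$, and since $\nabla f=A^*(\nabla\tilde f\circ A)$ I would bound $\norm{\nabla f(\bm\Theta_1)-\nabla f(\bm\Theta_2)}\le\norm{A}\,\norm{\nabla\tilde f(A\bm\Theta_1)-\nabla\tilde f(A\bm\Theta_2)}\le\norm{A}^2L_\phi\norm{\bm\Theta_1-\bm\Theta_2}$, using $\norm{A}^2\le 1+\norm{\btX}^2<\infty$; one then takes $L=\norm{A}^2L_\phi$. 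I want to stress that this route never asks a line segment to remain inside the (possibly non-convex) sublevel set $\Xi$; it only uses that the images $A\bm\Theta_1,A\bm\Theta_2$ lie in the convex set $K$ on which $\nabla\tilde f$ is Lipschitz.

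For parts (b) and (c): I would invoke the exact one-dimensional reduction already carried out in Appendix~\ref{subsubsec:offdiag-update-derive} (the computation leading to \eqref{eqn:offdiag-update}), namely that for fixed $i<j$ and $\bm\Theta\in\Xi$ the map $x\mapsto F(\bm\Theta+x(\B{E}_{ij}+\B{E}_{ji}))$ equals $\text{const}+a_{ij}x^2+b_{ij}x+h_{ij}(x)$ with $a_{ij}=v_j/\theta_{ii}+v_i/\theta_{jj}$. Lemma~\ref{lem:coercive} gives $(v_i+v_j)/u_\theta\le a_{ij}\le(v_i+v_j)/l_\theta$; combined with convexity of $h_{ij}$ from Assumption~\ref{assum:reg-assumption}, this makes the map $\mu_{ij}$-strongly convex with $\mu_{ij}=2(v_i+v_j)/u_\theta>0$ (part (c)), and since the smooth part's gradient along the line, $2a_{ij}x+b_{ij}$, is affine with slope $2a_{ij}$, the coordinatewise gradient $x\mapsto\nabla_{\theta_{ij}}f(\bm\Theta+x(\B{E}_{ij}+\B{E}_{ji}))$ is $L_{ij}$-Lipschitz with $L_{ij}=2(v_i+v_j)/l_\theta$ (part (b)). The diagonal coordinates, if needed, admit an analogous exact reduction $x\mapsto-\log x+v_ix+\norm{\bm e_i}^2/x$, which is strongly convex and has Lipschitz gradient once $x$ is restricted to a set bounded away from $0$, again courtesy of Lemma~\ref{lem:coercive}.

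The main obstacle is essentially already cleared by Lemma~\ref{lem:coercive}; what remains delicate is part (a), where the nonconvexity of $\Xi$ blocks the naive ``integrate the Hessian along a segment'' argument, and the factorization $f=\tilde f\circ A$ with $\nabla\tilde f$ Lipschitz on the convex superset $K\supseteq A(\Xi)$ is the device that circumvents it. The reason $\theta_{ii}\ge l_\theta$ together with $\norm{\btX\bm\theta_i}\le u_r$ --- rather than a bound on $\norm{\bm\theta_i}$ itself --- are the correct quantities to control is exactly the quadratic-over-linear structure of $\phi$: the Hessian of $\phi$ blows up only as $t\to 0$ or $\norm{\bm v}\to\infty$, both of which Lemma~\ref{lem:coercive} rules out on $\Xi$.
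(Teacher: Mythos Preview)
Your proof is correct and follows essentially the same route as the paper: invoke Lemma~\ref{lem:coercive} to pin down the diagonal entries and the residuals, then bound the one-variable second derivatives explicitly using the coordinate-update formulas from Appendix~\ref{subsubsec:offdiag-update-derive} and~\ref{subsubsec:diagonal-update}. For parts (b) and (c) your computation matches the paper's (up to an inessential factor of $2$ coming from the convention of differentiating along $\B E_{ij}+\B E_{ji}$); the paper also treats the diagonal coordinates explicitly, whereas you sketch them---do make sure to record that the diagonal second derivative involves $\norm{\bm e_i}^2$, which is controlled via the bound $\norm{\btX\bm\theta_i}\le u_r$.

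For part (a) you are actually more careful than the paper, which simply asserts that ``(a) follows from (b)''. Coordinatewise Lipschitz continuity of the gradient only controls the diagonal of the Hessian, so strictly speaking an additional argument is needed, and your factorization $f=\tilde f\circ A$ with $\nabla\tilde f$ Lipschitz on the convex box $K$ supplies one cleanly. One small remark: under Assumption~\ref{assum:reg-assumption} the regularizers $h_{ij}$ are convex, and $f$ itself is convex (each summand is $-\log t$ plus a quadratic-over-linear term, composed with a linear map), so $F$ is convex and the sublevel set $\Xi$ is in fact convex. Your concern about non-convexity of $\Xi$ is therefore unnecessary in this setting, though your device of passing to the convex image $K\supseteq A(\Xi)$ is the right move in general and costs nothing here.
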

\begin{proof}[\textbf{Proof of Corollary~\ref{cor:smooth-stronglyconvex}}.]
We will show (b) and (c), and (a) follows from (b).

From the derivation of the off-diagonal update in Section~\ref{subsubsec:offdiag-update-derive}, we can easily see that the second derivative of $f$ with respect to the off-diagonal entry $\theta_{ij}$ (for any $i<j$) is given by
\begin{equation}\label{eqn:second-derivative-off-diag}
	\nabla_{\theta_{ij}}^2 f(\B\Theta) = \frac{v_i}{\theta_{jj}}+\frac{v_j}{\theta_{ii}}
\end{equation}
with $v_i=\btx_i^\top\btx_i$.

From the derivation of the diagonal update in Section~\ref{subsubsec:diagonal-update}, we see that the second derivative of $f$ with respect to a diagonal entry $\theta_{ii}$ (for any $i$) is given by
\begin{equation}\label{eqn:second-derivative-diagonal}
	\nabla^2_{\theta_{ii}}f(\B\Theta)=\frac{1}{\theta_{ii}^2}+\frac{\norm{\btX_{-i}\bm\theta_{i,-i}}^2}{2\theta_{ii}^2}
\end{equation}
where $\btX_{-i}\in\R^{n\times(p-1)}$ is the data matrix $\btX$ without the $i$-th column, and $\bm\theta_{i,-i}\in\R^{p-1}$ is the vector $\bm\theta_i$ without its $i$-th component.

\noindent {\it Proof of Part \textbf{(b)}} By Lemma~\ref{lem:coercive}, we have $\theta_{ii}\geq l_\theta$, and it follows from \eqref{eqn:second-derivative-off-diag} that $\nabla_{\theta_{ij}}^2f(\B\Theta)\leq (v_i+v_j)/l_\theta$. Therefore, we have $\nabla f$ is $L_{ij}$-Lipschitz with respect to $\theta_{ij}$, where $L_{ij}=(v_i+v_j)/l_\theta$.

Again by Lemma~\ref{lem:coercive}, we have $\theta_{ii}\geq l_\theta$ and $\norm{\btX\bm\theta_i}^2\leq u_r^2$, and thus
$$\begin{aligned}
	\nabla_{\theta_{ii}}^2f(\B\Theta)=&\frac1{\theta_{ii}^2}+\frac{\norm{\btX\bm\theta_i-\theta_{ii}\btx_i}^2}{2\theta_{ii}^2}\\
	\leq& \frac{1+\norm{\btX\bm\theta_i}^2+\theta_{ii}^2\norm{\btx_i}^2}{\theta_{ii}^2} \\
	\leq& \frac{1+u_r^2}{l_\theta^2}+v_i.
\end{aligned}
$$
Therefore, we have $\nabla f$ is $L_{ii}$-Lipschitz with respect to $\theta_{ii}$, where $L_{ii}=(1+u_r^2)/l_\theta^2+v_i$.

\noindent {\it Proof of Part \textbf{(c)}} By Lemma~\ref{lem:coercive}, we have $\theta_{ii}\leq u_\theta$, so $\nabla_{\theta_{ij}}^2f(\B\Theta)\geq (v_i+v_j)/u_\theta$. Therefore, we have $F$ is $\mu_{ij}$-strongly convex with respect to every coordinate $\theta_{ij}$, where $\mu_{ij}=(v_i+v_j)/u_\theta$.

Similarly, we have $F$ is $\mu_{ii}$-strongly convex with respect to $\theta_{ii}$ with $\mu_{ii}=1/u_\theta^2$.
\end{proof}

\begin{theorem}\label{thm:convergence-general}
	Under Assumption~\ref{assum:reg-assumption}, given any initialization $\bm\Theta^{(0)}$, let $\bm\Theta^{(t)}$ be the $t$-th iterate generated by Algorithm~\ref{algo:CD}. Then there is a constant $C$ that depends on $\bm\Theta^{(0)}$, such that for any $t\geq 1$,
	$$F(\bm\Theta^{(t)})-F^*\leq \frac{C}{t},$$
	where $F^*=\min_{\bm\Theta\in\mathbb{S}^p}F(\bm\Theta)$.
\end{theorem}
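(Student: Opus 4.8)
The plan is to read Algorithm~\ref{algo:CD} as exact cyclic block-coordinate descent on the composite convex objective $F=f+\sum_{i<j}h_{ij}$, where $f(\bm\Theta)=\sum_i\bigl(-\log\theta_{ii}+\tfrac1{\theta_{ii}}\norm{\btX\bm\theta_i}^2\bigr)$, and to run the standard sufficient-decrease/error-bound argument, but carried out over the sublevel set $\Xi=\{\bm\Theta\in\mathbb{S}^p:F(\bm\Theta)\le F(\bm\Theta^{(0)})\}$ supplied by Lemma~\ref{lem:coercive} and Corollary~\ref{cor:smooth-stronglyconvex}. First I would localize: since every coordinate update in lines 3 and 6 is an exact minimization, $F$ is non-increasing along the iterates and along the intermediate states of a sweep, so all of them lie in $\Xi$. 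On $\Xi$, Lemma~\ref{lem:coercive} makes $\Xi$ bounded (so $R:=\sup\{\norm{\bm\Theta-\bm\Theta^\star}_F:\bm\Theta\in\Xi,\ \bm\Theta^\star\in\arg\min F\}<\infty$ and $F^*$ is attained, with $\bm\Theta^\star\in\Xi$ since $F(\bm\Theta^\star)=F^*\le F(\bm\Theta^{(0)})$), while Corollary~\ref{cor:smooth-stronglyconvex} gives a Lipschitz constant $L$ for $\nabla f$ on $\Xi$ and coordinatewise strong-convexity moduli, so every coordinate of $F$ (diagonal ones included, with $\mu_{ii}=1/u_\theta^2$) is $\mu:=\min_{ij}\mu_{ij}>0$ strongly convex along segments that remain in $\Xi$.

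Next comes the core two-sided estimate on a single sweep. Enumerate the $m:=p+\binom p2$ coordinates, each touched exactly once, and let $\bm\Theta^{(t)}=\bm\Theta^{(t),0},\dots,\bm\Theta^{(t),m}=\bm\Theta^{(t+1)}$ be the intermediate states; since consecutive increments are supported on disjoint coordinates, $\norm{\bm\Theta^{(t+1)}-\bm\Theta^{(t)}}_F^2=\sum_c\norm{\bm\Theta^{(t),c}-\bm\Theta^{(t),c-1}}_F^2=:\Delta_t^2$. Exact minimality plus $\mu$-strong convexity of coordinate $c$ yields $F(\bm\Theta^{(t),c-1})-F(\bm\Theta^{(t),c})\ge\tfrac\mu2\norm{\bm\Theta^{(t),c}-\bm\Theta^{(t),c-1}}_F^2$, and summing gives the \emph{sufficient decrease} $F(\bm\Theta^{(t)})-F(\bm\Theta^{(t+1)})\ge\tfrac\mu2\Delta_t^2$. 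For the \emph{error bound}, note that the one-dimensional update \eqref{eqn:offdiag-update} of coordinate $(i,j)$ gives $-\nabla_{\theta_{ij}}f(\bm\Theta^{(t),c_{ij}})\in\partial h_{ij}(\theta_{ij}^{(t),c_{ij}})$ at the state $c_{ij}$ immediately after that coordinate is optimized, and since the coordinate is not revisited in the sweep, $\theta_{ij}^{(t),c_{ij}}=\theta_{ij}^{(t+1)}$ and the inclusion persists at $\bm\Theta^{(t+1)}$; similarly \eqref{eqn:diag-update} gives $\nabla_{\theta_{ii}}f(\bm\Theta^{(t),c_{ii}})=0$. Hence the matrix $G$ with $(i,j)$-entry $\nabla_{\theta_{ij}}f(\bm\Theta^{(t+1)})-\nabla_{\theta_{ij}}f(\bm\Theta^{(t),c_{ij}})$ (and analogously on the diagonal) lies in $\partial F(\bm\Theta^{(t+1)})$; using $\norm{\bm\Theta^{(t+1)}-\bm\Theta^{(t),c}}_F\le\Delta_t$ and the Lipschitzness of $\nabla f$ on $\Xi$, $\norm{G}_F\le C_1\Delta_t$ for a constant $C_1=C_1(L,m)$, so convexity gives $F(\bm\Theta^{(t+1)})-F^*\le\langle G,\bm\Theta^{(t+1)}-\bm\Theta^\star\rangle\le C_1R\,\Delta_t$.

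Finally, set $\delta_t:=F(\bm\Theta^{(t)})-F^*$ (assume $\delta_t>0$, else convergence is immediate). Chaining the two estimates, $\delta_t-\delta_{t+1}\ge\tfrac\mu2\Delta_t^2\ge\tfrac{\mu}{2C_1^2R^2}\,\delta_{t+1}^2=:c_0\delta_{t+1}^2$, with $\{\delta_t\}$ nonincreasing. Then $\delta_t\ge\delta_{t+1}(1+c_0\delta_{t+1})$ gives $\tfrac1{\delta_{t+1}}-\tfrac1{\delta_t}\ge\tfrac{c_0}{1+c_0\delta_{t+1}}\ge\tfrac{c_0}{1+c_0\delta_1}$, and telescoping yields $\delta_t\le C/t$ with $C$ depending only on $c_0$ and $\delta_1$, hence ultimately on $\bm\Theta^{(0)}$ (through $\Xi$, the constants $L,\mu,R$, and $\delta_1=F(\bm\Theta^{(1)})-F^*$). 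This proves Theorem~\ref{thm:convergence-general}; Theorem~\ref{thm:convergence-relaxation} follows as the special case $h_{ij}=g(\cdot;\lambda_0,\lambda_2,M,\underline z_{ij},\bar z_{ij})$, which satisfies Assumption~\ref{assum:reg-assumption} by Proposition~\ref{prop:g-assumption}.

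I expect the main obstacle to be the error-bound step (the second half of paragraph two): because the updates are cyclic, the coordinatewise optimality conditions never hold simultaneously, so one must manufacture a genuine subgradient of the \emph{full} $F$ at the end-of-sweep iterate $\bm\Theta^{(t+1)}$ and control its norm by the single-sweep displacement $\Delta_t$. This relies on two facts that have to be used carefully: each coordinate is minimized exactly once per sweep, so stale optimality inclusions transfer verbatim to $\bm\Theta^{(t+1)}$; and the smooth gradient's drift over the rest of the sweep is bounded by a \emph{global} Lipschitz constant, which is available only after restricting to the bounded sublevel set $\Xi$. Secondary bookkeeping --- the symmetric coupling of each off-diagonal coordinate to two matrix entries, and putting the penalty-free diagonal coordinates (with their logarithmic/reciprocal term) on the same footing as the penalized ones --- is routine and can be absorbed into constants.
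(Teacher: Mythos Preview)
Your argument is correct and follows the same high-level strategy as the paper: both proofs first observe that exact coordinate minimization makes $F$ non-increasing, so all iterates (and the intermediate states within a sweep) stay in the sublevel set $\Xi=\{\bm\Theta:F(\bm\Theta)\le F(\bm\Theta^{(0)})\}$; then both invoke Lemma~\ref{lem:coercive} and Corollary~\ref{cor:smooth-stronglyconvex} to obtain, on $\Xi$, a global Lipschitz constant for $\nabla f$ and coordinatewise strong convexity of $F$. The only difference is in the final step: the paper simply cites \citet{hong2017iteration} to conclude the $\mathcal{O}(1/t)$ rate from these two properties, whereas you unpack that argument explicitly via the sufficient-decrease inequality $F(\bm\Theta^{(t)})-F(\bm\Theta^{(t+1)})\ge\tfrac\mu2\Delta_t^2$ and the subgradient-norm error bound $F(\bm\Theta^{(t+1)})-F^*\le C_1R\,\Delta_t$, then telescope the resulting recursion $\delta_t-\delta_{t+1}\ge c_0\delta_{t+1}^2$. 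Your self-contained derivation is a faithful rendering of what the cited result provides, with the additional virtue of making transparent where each structural ingredient (convexity of $\Xi$, separability of the $h_{ij}$'s, the fact that each coordinate is touched exactly once per sweep so stale optimality inclusions persist at $\bm\Theta^{(t+1)}$) is actually used.
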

\begin{proof}[\textbf{Proof of Theorem~\ref{thm:convergence-general}}.]
Note that Algorithm~\ref{algo:CD} is a descent algorithm---the objective function decreases after each coordinate update. Therefore, we must have
$$F(\bm\Theta^{(t)})\leq F(\bm\Theta^{(0)}),$$
i.e. $\bm\Theta^{(t)}\in \{\bm\Theta:F(\bm\Theta)\leq F(\bm\Theta^{(0)})\}$.

Since Assumption~\ref{assum:reg-assumption} holds, invoking Corollary~\ref{cor:smooth-stronglyconvex} with $F^{(0)}=F(\bm\Theta^{(0)})$, we get $f$ is coordinatewise Lipschitz and $F$ is coordinatewise strongly convex with some parameters depending on $F^{(0)}$, and thus on $\bm\Theta^{(0)}$. According to \citet{hong2017iteration}, we get the sublinear rate of convergence of Algorithm~\ref{algo:CD}, i.e. 
$$F(\bm\Theta^{(t)})-F^*\leq \frac{C}{t},$$where the constant $C$ depends on $\bm\Theta^{(0)}$.
\end{proof}
\begin{remark}
	According to Proposition~\ref{prop:g-assumption}, the regularizers $g(\theta_{ij};\lambda_0,\lambda_2,M,\underline{z}_{ij},\bar{z}_{ij})$ in $F_{\mathsf{node}}$ satisfy Assumption~\ref{assum:reg-assumption}, and thus Theorem~\ref{thm:convergence-general} applies to $F_{\mathsf{node}}$, which is exactly Theorem~\ref{thm:convergence-relaxation} in the main paper. 
\end{remark}

\subsection{Dual bound}\label{subsec:dual-bound-appendix}
We first present a proof for Theorem~\ref{thm:dual-bound} in Section~\ref{subsubsec:proof-dual-bound}, deriving the Lagrangian dual of the node relaxation objective $F_{\mathsf{node}}$. We then provide a sketch of how to compute the convex conjugate of $\psi$ and $\varphi$ as two cases of $g$ in Section~\ref{subsubsec:convex-conjugate}. Finally, we provide the proof for Proposition~\ref{prop:eff-comp-dual-bounds} in Section~\ref{subsubsec:proof-eff-comp}.
\subsubsection{Proof of Theorem~\ref{thm:dual-bound}}\label{subsubsec:proof-dual-bound}
We introduce auxiliary primal variables $\bm r_i= \btX\bm\theta_i$ to rewrite  Problem~\eqref{eqn:relaxation} as follows:
\begin{equation}\label{eqn:equiv-primal}
	\begin{aligned} 
		\min_{\bm\Theta\in\mathbb{S}^p}~&~\sum_{i=1}^p( -\log\theta_{ii}+\frac1{\theta_{ii}}\norm{\bm r_i}^2)+\sum_{i<j}g(\theta_{ij};\lambda_0,\lambda_2,M,\underline{z}_{ij},\bar{z}_{ij})\\
		~~\text{s.t.}~&~ \bm r_i=\btX\bm\theta_i,~\forall i\in[p].
	\end{aligned}
\end{equation}
By dualizing the constraints in \eqref{eqn:equiv-primal}, we can write the Lagrangian as 
\begin{equation}\label{eqn:Lagrangian}
	\mathcal{L}(\bm\Theta,\bm r;\bm\nu) =\sum_{i=1}^p( -\log\theta_{ii}+\frac1{\theta_{ii}}\norm{\bm r_i}^2+\langle \bm\nu_i,\bm r_i-\btX\bm\theta_i\rangle)+\sum_{i<j}g(\theta_{ij};\lambda_0,\lambda_2,M,\underline{z}_{ij},\bar{z}_{ij}).
\end{equation}
The Lagrangian dual is given by $D(\bm\nu)=\min_{\bm\Theta\in\mathbb{S}^p}\mathcal{L}(\bm\Theta,\bm r;\bm\nu).$ Since Slater's condition holds \citep{bertsekas2016nonlinear}, we have by strong duality:
$$\min_{\bm\Theta\in\mathbb{S}^p}F_{\text{node}}(\bm\Theta)=\max_{\bm\nu}D(\bm\nu).$$

Minimizing \eqref{eqn:Lagrangian} with respect to $\bm r_i$, we get 
\begin{equation}\label{eqn:dual-r}
	\bm r_i=-\frac{\theta_{ii}}{2}\bm{\nu}_i.
\end{equation}
Plugging this back to the Lagrangian~\eqref{eqn:Lagrangian}, we get 
\begin{equation*}
	\theta_{ii}=\arg\min_\theta~-\log \theta+\theta(-\norm{\bm\nu_i}^2/4-\btx_i^\top\bm{\nu}_i),
\end{equation*}
which yields 
\begin{equation}\label{eqn:dual-diag}
	\theta_{ii}=\frac{1}{-\norm{\bm \nu_i}^2/4-\btx_i^\top\bm \nu_i} \quad\text{if}\quad -\norm{\bm\nu_i}^2/4-\btx_i^\top\bm\nu_i>0.
\end{equation}
If $-\norm{\bm\nu_i}^2/4-\btx_i^\top\bm\nu_i\leq 0$, then $\theta_{ii}\to \infty$, and the minimum value is $-\infty$, which cannot be achieved. 

As for $\theta_{ij}=\theta_{ji}$,
\begin{align}
	\theta_{ij}=\theta_{ji}&=\arg\min_\theta ~(-\btx_j^\top\bm{\nu}_i-\btx_i^\top\bm\nu_j)\theta+g(\theta;\lambda_0,\lambda_2,M,\underline{z}_{ij},\bar{z}_{ij})\nonumber\\
	&=\arg\max_\theta~(\btx_j
	^\top\bm\nu_i+\btx_i^\top\bm\nu_j)\theta -g(\theta;\lambda_0,\lambda_2,M,\underline{z}_{ij},\bar{z}_{ij})\nonumber\\
	&\in\partial g^*(\btx_j^\top\bm\nu_i+\btx_i^\top\bm\nu_j;\lambda_0,\lambda_2,M,\underline{z}_{ij},\bar{z}_{ij}).\label{eqn:dual-offdiag}
\end{align}

Therefore, plugging \eqref{eqn:dual-r}, \eqref{eqn:dual-diag} and \eqref{eqn:dual-offdiag} into the Lagrangian function~\eqref{eqn:Lagrangian}, we get the Lagrangian dual problem:
\begin{equation}
	\max_{\bm \nu}~D(\bm\nu)=p+\sum_{i=1}^p\log(-\norm{\bm\nu_i}^2/4-\btx_i^\top\bm \nu_i)-\sum_{i<j}g^*(\btx_j^\top\bm \nu_i+\btx_i^\top\bm\nu_j;\lambda_0,\lambda_2,M,\underline{z}_{ij},\bar{z}_{ij}).
\end{equation}
\endproof

\subsubsection{Computing the convex conjugates}\label{subsubsec:convex-conjugate}~\\

\customparagraph{Convex Conjugate of $\psi$} We consider the Fenchel conjugate $\psi^*$ of $\psi$:
\begin{align}
	\psi^*(\alpha; \lambda_0,\lambda_2,M)&:=\sup_\theta ~\alpha \theta-\psi(\theta;\lambda_0,\lambda_2,M).
\end{align}
According to \citet{hazimeh2020sparse}, when $\sqrt{\lambda_0/\lambda_2}\leq M$, 
\begin{equation}
	\psi^*(\alpha;\lambda_0,\lambda_2,M)=\min_\gamma \left[\frac{(\gamma -\alpha)^2}{4\lambda_2}-\lambda_0\right]_++M|\gamma|;
\end{equation}
and when $\sqrt{\lambda_0/\lambda_2}>M$, we have
\begin{equation}
	\psi^*(\alpha;\lambda_0,\lambda_2,M)=\min_\mu~M|\mu|\quad\text{s.t.}\quad |\alpha|-\mu\leq \lambda_0/M+\lambda_2M.
\end{equation}
Table~\ref{table:summary-psistar} below summarizes different expressions for $\psi^*$:
\begin{table}[H]
	\centering
	\caption{Summary of different regimes and cases of $\psi^*$}
	\label{table:summary-psistar}
	\begin{tabular}{cccccc}
		\toprule
		Regime& Range of $|\alpha|$&$\psi^*(\alpha;\lambda_0,\lambda_2,M)$&$\theta^*\in\partial \psi^*(\alpha)$&$\gamma^*/\mu^*$  \\
		\midrule
		\multirow{3}{*}{$\sqrt{\lambda_0/\lambda_2}\leq M$}& $[0,2\sqrt{\lambda_0\lambda_2})$& $0$&$0$&$0$\\
		&$(2\sqrt{\lambda_0\lambda_2},2\lambda_2 M]$&$\frac{\alpha^2}{4\lambda_2}-\lambda_0$&$\frac{\alpha}{2\lambda_2}$&$0$\\
		&$(2\lambda_2 M,\infty)$&$M|\alpha|-(\lambda_0+\lambda_2M^2)$&$M\sign(\alpha)$&$\alpha-2M\lambda_2\sign(\alpha)$\\\midrule
		\multirow{2}{*}{$\sqrt{\lambda_0/\lambda_2}>M$}&$[0,\lambda_0/M+\lambda_2M]$&$0$&$0$&$0$\\
		&$(\lambda_0/M+\lambda_2M,\infty)$&$M|\alpha|-(\lambda_0+\lambda_2M^2)$&$M\sign(\alpha)$&$|\alpha|-(\lambda_0/M+\lambda_2M) $\\
		\bottomrule
	\end{tabular}
\end{table}

\customparagraph{Convex conjugate of $\varphi$} We consider the Fenchel conjugate $\varphi^*$ of $\varphi$:
\begin{equation}
	\varphi^*(\alpha;z,\lambda_0,\lambda_2,M):=\sup_\theta~\alpha \theta-\varphi(\theta;z,\lambda_0,\lambda_2,M).
\end{equation}
In Table~\ref{table:summary-phistar}, we summarize the expressions for $\varphi^*$
\begin{table}[H]
	\centering
	\caption{Summary of different regimes and cases of $\varphi^*$}
	\label{table:summary-phistar}
	\begin{tabular}{ccccc}
		\toprule
		Regime& Range of $|\alpha|$&$\varphi^*(\alpha;z,\lambda_0,\lambda_2,M)$&$\theta^*\in\partial \varphi^*(\alpha;z)$ \\
		\midrule
		$z=0$&$[0,\infty)$&$0$&$0$\\\midrule
		\multirow{2}{*}{$z=1,\lambda_2>0$}&$[0,2\lambda_2M]$&$\frac{\alpha^2}{4\lambda_2}-\lambda_0$&$\frac{\alpha}{2\lambda_2}$\\
		&$(2\lambda_2M,\infty)$&$M|\alpha|-(\lambda_0+\lambda_2 M^2)$&$M\sign(\alpha)$\\\midrule
		$z=1,\lambda_2=0$& $[0,\infty)$&$M|\alpha|-\lambda_0$&$M\sign(\alpha)$\\
		\bottomrule
	\end{tabular}
\end{table}

\subsubsection{Proof of Proposition~\ref{prop:eff-comp-dual-bounds}}\label{subsubsec:proof-eff-comp}
To prove Proposition~\ref{prop:eff-comp-dual-bounds}, we will need a couple of helper propositions~\ref{prop:zero-thresh} and~\ref{prop:zero-thresh-following}
\begin{proposition}\label{prop:zero-thresh}
	Denote by
	\begin{equation}
		c(\lambda_0,\lambda_2,M)=
		\left\{\begin{array}{ll}
			2\sqrt{\lambda_0\lambda_2}& ~\text{if}~ \sqrt{\lambda_0/\lambda_2}\leq M  \\
			\lambda_0/M+\lambda_2M& ~\text{otherwise.}
		\end{array}\right.
	\end{equation}
	The following statements hold
	\begin{enumerate}[label=(\alph*)]
		\item $\prox{\psi}(\tilde\beta;\lambda_0,\lambda_2,M)=0~\iff~ |\tilde\beta|\leq c(\lambda_0,\lambda_2,M)$
		\item $\mathcal{Q}_\psi(a,b;\lambda_0,\lambda_2,M)=0~\iff~|b|\leq c(\lambda_0,\lambda_2,M)$
		\item $\psi^*(\alpha;\lambda_0,\lambda_2,M)=0~\iff~|\alpha|\leq c(\lambda_0,\lambda_2,M)$
	\end{enumerate}
\end{proposition}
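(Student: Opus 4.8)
The plan is to reduce all three equivalences to one fact: in both regimes $c=c(\lambda_0,\lambda_2,M)$ is exactly the right-hand derivative of $\psi(\cdot;\lambda_0,\lambda_2,M)$ at the origin, equivalently $\partial\psi(0)=[-c,c]$. To establish this I would read off from \eqref{eqn:psi} that $\psi(\theta)=2\sqrt{\lambda_0\lambda_2}\,|\theta|$ for all $|\theta|\le\sqrt{\lambda_0/\lambda_2}$ in the regime $\sqrt{\lambda_0/\lambda_2}\le M$, and $\psi(\theta)=(\lambda_0/M+\lambda_2 M)|\theta|$ for all $|\theta|\le M$ in the regime $M\le\sqrt{\lambda_0/\lambda_2}$; in either case $\psi$ agrees with $c|\theta|$ on a neighbourhood of $0$. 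Combined with the fact that $\psi$ is a proper, closed, convex, even function with $\psi(0)=0$ (being a partial minimization of the jointly convex perspective objective, cf.\ \citet{hazimeh2020sparse}), this yields $\partial\psi(0)=[-c,c]$.

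Given this, each part follows from first-order optimality. For (a): since $\theta\mapsto\frac12(\theta-\tilde\beta)^2+\psi(\theta)$ is strictly convex, $\prox{\psi}(\tilde\beta;\lambda_0,\lambda_2,M)=0$ iff $0\in-\tilde\beta+\partial\psi(0)=[-\tilde\beta-c,-\tilde\beta+c]$, i.e.\ iff $|\tilde\beta|\le c$. For (b): with $a>0$ (the relevant case for the coordinate updates), $x\mapsto ax^2+bx+\psi(x)$ is strictly convex, and $\mathcal{Q}_\psi(a,b;\lambda_0,\lambda_2,M)=0$ iff $0\in b+\partial\psi(0)$, i.e.\ iff $|b|\le c$; alternatively, (b) follows from (a) via \eqref{eqn:Q-psi} together with the homogeneity relation $c(\lambda_0/(2a),\lambda_2/(2a),M)=c(\lambda_0,\lambda_2,M)/(2a)$. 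For (c): since $\psi\ge 0$ and $\psi(0)=0$, we have $\psi^*(\alpha;\lambda_0,\lambda_2,M)=\sup_\theta(\alpha\theta-\psi(\theta))\ge \alpha\cdot 0-\psi(0)=0$ for every $\alpha$, with equality iff $0$ attains the supremum, i.e.\ iff $\alpha\in\partial\psi(0)$, i.e.\ iff $|\alpha|\le c$.

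I do not expect a genuine obstacle here; the only point requiring mild care is the identification $\partial\psi(0)=[-c,c]$, which hinges on the convexity of $\psi$ and its explicit piecewise form. A fully elementary alternative that avoids subdifferentials is to verify the three equivalences directly from the closed forms already recorded in the paper: (a) from the boxed-soft-thresholding expression \eqref{eqn:prox-psi}, (b) from \eqref{eqn:Q-psi} and (a), and (c) from Table~\ref{table:summary-psistar}, where in each regime $\psi^*$ vanishes precisely on $\{|\alpha|\le c\}$ --- for instance at $|\alpha|=2\sqrt{\lambda_0\lambda_2}$ one checks $\alpha^2/(4\lambda_2)-\lambda_0=0$, so the vanishing set is the closed interval.
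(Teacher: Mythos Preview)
Your proposal is correct. Your primary argument---identify $\partial\psi(0)=[-c,c]$ once and then derive all three equivalences from first-order optimality---is a clean, unified route that is slightly more conceptual than what the paper does. The paper's proof is simply a direct read-off from the explicit closed forms already recorded: (a) from the proximal expression \eqref{eqn:prox-psi}, (b) from the relation \eqref{eqn:Q-psi}, and (c) from Table~\ref{table:summary-psistar}. This is exactly the ``fully elementary alternative'' you describe in your final paragraph. Your main approach has the advantage of explaining \emph{why} the same threshold $c$ appears in all three statements (it is the slope of $\psi$ at the origin), whereas the paper's approach trades insight for brevity by leveraging computations done elsewhere. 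Either route is fine; the homogeneity observation $c(\lambda_0/(2a),\lambda_2/(2a),M)=c(\lambda_0,\lambda_2,M)/(2a)$ you use to reduce (b) to (a) is a nice touch and is not made explicit in the paper.
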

The proof of Proposition~\ref{prop:zero-thresh} follows by using \eqref{eqn:prox-psi}, \eqref{eqn:Q-psi} and Table~\ref{table:summary-psistar}.

Before stating the next proposition~\ref{prop:zero-thresh-following}, we
recall the definitions of $\hat{\mathcal{S}}$ and $\mathcal{F}_1$:
$$\hat{\mathcal{S}}=\{(i,j):i<j, \hat{\theta}_{ij}\neq 0\}, \quad\text{and}\quad \mathcal{F}_1= \{(i,j):\underline z_{ij}=\bar z_{ij}=1\}.$$
Additionally, we define
$$\mathcal{F}_0=\{(i,j):\underline z_{ij}=\bar z_{ij}=0\}\quad\text{and}\quad \mathcal{R}=\{(i,j):\underline z_{ij}=0,\bar z_{ij}=1\}.$$

We will denote by $g^*_{ij}:=g^*(\btx_i^\top\hat{\bm\nu}_j+\btx_j^\top\hat{\bm\nu}_i;\lambda_0,\lambda_2,M,\underline{z}_{ij},\bar z_{ij})$ where $g^*(\cdot; \lambda_0,\lambda_2,M,\underline{z},\bar{z})$ is the convex conjugate of $g(\cdot; \lambda_0,\lambda_2,M,\underline{z},\bar{z})$ defined in~\eqref{eqn:relax-reg}.

\begin{proposition}\label{prop:zero-thresh-following}
	For any $\hat\theta_{ij}=0$, i.e. $(i,j)\in\hat{\mathcal{S}}^c$, 
	\begin{enumerate}[label=(\alph*)]
		\item if $(i,j)\in\mathcal{R}\cup \mathcal{F}_0$, then $g^*_{ij}=0$.
		\item if $(i,j)\in\mathcal{F}_1$, then
		$g^*_{ij}=-\lambda_0.$
	\end{enumerate}
\end{proposition}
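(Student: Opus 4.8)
The plan is to deduce the value of $g^*_{ij}$ directly from the coordinate-wise optimality of $\hat{\bm\Theta}$ at the zero entry $(i,j)$, after identifying the argument of $g^*$ with the negated linear coefficient of the corresponding coordinate update, and then reading off the conjugate value from the explicit tables (Tables~\ref{table:summary-psistar} and~\ref{table:summary-phistar}).

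First I would establish the key identity. Fix $(i,j)$ with $i<j$ and $\hat\theta_{ij}=0$, set $\bm r_i=\btX\hat{\bm\theta}_i$, and recall $\hat{\bm\nu}_i=-2\btX\hat{\bm\theta}_i/\hat\theta_{ii}$ from~\eqref{eqn:dual-solution}. Substituting $\hat\theta_{ij}=0$ into the formula for $b_{ij}$ in~\eqref{eqn:offdiag-update} gives
\[
b_{ij}=\frac{2\btx_j^\top\bm r_i}{\hat\theta_{ii}}+\frac{2\btx_i^\top\bm r_j}{\hat\theta_{jj}}=-\bigl(\btx_j^\top\hat{\bm\nu}_i+\btx_i^\top\hat{\bm\nu}_j\bigr)=-\alpha_{ij},
\]
where $\alpha_{ij}:=\btx_i^\top\hat{\bm\nu}_j+\btx_j^\top\hat{\bm\nu}_i$ is precisely the point at which $g^*_{ij}$ is evaluated; also $a_{ij}=v_j/\hat\theta_{ii}+v_i/\hat\theta_{jj}>0$ since the diagonal entries are positive. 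Next, since $\hat{\bm\Theta}$ is returned by Algorithm~\ref{algo:ASCD}, its termination condition ($\mathcal{V}=\varnothing$) forces every zero off-diagonal entry to be coordinate-wise optimal, which by the derivation in Section~\ref{subsubsec:offdiag-update-derive} means $0\in\arg\min_\theta a_{ij}\theta^2+b_{ij}\theta+h_{ij}(\theta)$ for all $(i,j)\in\hat{\mathcal{S}}^c$, with $h_{ij}=g(\cdot;\lambda_0,\lambda_2,M,\underline z_{ij},\bar z_{ij})$.

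It then remains to split into the three possibilities for $(\underline z_{ij},\bar z_{ij})$. If $(i,j)\in\mathcal{F}_0$, then $h_{ij}=\varphi(\cdot;0,\ldots)=\chi\{\theta=0\}$ and Table~\ref{table:summary-phistar} gives $\varphi^*(\alpha;0,\ldots)=0$ for all $\alpha$, so $g^*_{ij}=0$ unconditionally. If $(i,j)\in\mathcal{R}$, then $h_{ij}=\psi$, so the optimality condition reads $\mathcal{Q}_\psi(a_{ij},b_{ij};\lambda_0,\lambda_2,M)=0$; Proposition~\ref{prop:zero-thresh}(b) gives $|\alpha_{ij}|=|b_{ij}|\leq c(\lambda_0,\lambda_2,M)$, and then Proposition~\ref{prop:zero-thresh}(c) yields $g^*_{ij}=\psi^*(\alpha_{ij};\lambda_0,\lambda_2,M)=0$; together these two subcases prove part (a). If $(i,j)\in\mathcal{F}_1$, then $h_{ij}=\varphi(\cdot;1,\lambda_0,\lambda_2,M)=\chi\{|\theta|\leq M\}+\lambda_0+\lambda_2\theta^2$, so (dropping the irrelevant additive constant $\lambda_0$) the optimality condition says $0$ minimizes the strictly convex quadratic $\theta\mapsto(a_{ij}+\lambda_2)\theta^2+b_{ij}\theta$ over $[-M,M]$; since $0$ is an interior point ($M>0$), its vertex must lie at $0$, i.e. $b_{ij}=0$ and hence $\alpha_{ij}=0$, and evaluating $\varphi^*$ at $0$ in Table~\ref{table:summary-phistar} (both the $\lambda_2>0$ and $\lambda_2=0$ rows give $-\lambda_0$) gives $g^*_{ij}=-\lambda_0$, proving part (b).

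The main obstacle I expect is the bookkeeping in the key identity — carefully unwinding $b_{ij}$, $\bm r_i$, and $\hat{\bm\nu}_i$ with the correct sign to land on $b_{ij}=-\alpha_{ij}$ — together with the observation in the $\mathcal{F}_1$ case that, unlike $\psi$ or the $\ell_0$ penalty, the fixed-$z=1$ penalty exerts no thresholding, so coordinate optimality at $0$ pins $\alpha_{ij}$ to exactly $0$; this exactness is what makes $g^*_{ij}$ equal precisely $-\lambda_0$ rather than merely lying in a range. The rest is a direct lookup in the conjugate tables.
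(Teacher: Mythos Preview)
Your proposal is correct and follows essentially the same approach as the paper's proof: both first identify $b_{ij}=-(\btx_j^\top\hat{\bm\nu}_i+\btx_i^\top\hat{\bm\nu}_j)$ from $\hat\theta_{ij}=0$ and the termination condition of Algorithm~\ref{algo:ASCD}, then split on the three $(\underline z_{ij},\bar z_{ij})$ cases and read off the conjugate values from the tables, invoking Proposition~\ref{prop:zero-thresh} for the $\mathcal{R}$ case. The only cosmetic difference is that in the $\mathcal{F}_1$ case you argue directly from strict convexity of $(a_{ij}+\lambda_2)\theta^2+b_{ij}\theta$ that $b_{ij}=0$, whereas the paper appeals to the explicit proximal formulas~\eqref{eqn:Q-phi} and~\eqref{eqn:prox-phi} to reach the same conclusion.
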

\begin{proof}[\textbf{Proof of Proposition~\ref{prop:zero-thresh-following}}.] We first note that when Algorithm~\ref{algo:ASCD} terminates, then $\mathcal{V}$ must be empty, i.e. for any $\hat\theta_{ij}=0$, we must have 
$$0\in\arg\min_{\theta_{ij}}F(\hat{\bm\Theta}-\hat\theta_{ij}\bm E_{ij}-\hat\theta_{ij}\bm E_{ji}+\theta_{ij}\bm E_{ij}+\theta_{ij}\bm E_{ji}),$$
or equivalently (according to Section~\ref{subsubsec:offdiag-update-derive}),
\begin{equation}\label{eqn:V-empty-Q-0}
	\mathcal{Q}_g\left(a_{ij},b_{ij};\lambda_0,\lambda_2,M,\underline z_{ij},\bar z_{ij}\right)=0,
\end{equation}
where 
$$a_{ij}=\frac{v_j}{\hat\theta_{ii}}+\frac{v_i}{\hat\theta_{jj}},$$
and
\begin{equation}\label{eqn:b-nu}
	b_{ij}=\frac{2\btx_j^\top(\hat{\bm r}_i-\hat\theta_{ij}\btx_j)}{\hat\theta_{ii}}+\frac{2\btx_i^\top(\hat{\bm r}_j-\hat\theta_{ij}\btx_i)}{\hat\theta_{jj}}=-(\btx_i^\top\hat{\bm\nu}_j+\btx_j^\top\hat{\bm\nu}_j).
\end{equation}
Here, \eqref{eqn:b-nu} follows from $\hat\theta_{ij}=0$ and using the dual solution definition \eqref{eqn:dual-solution}.

\noindent {\it Proof of Part (a)} If $(i,j)\in\mathcal{F}_0$, then reading Table~\ref{table:summary-phistar}, we get $g_{ij}^*=0$.

If $(i,j)\in\mathcal{R}$, we have $g=\psi$. According to Proposition~\ref{prop:zero-thresh}~(b), \eqref{eqn:V-empty-Q-0} with \eqref{eqn:b-nu} implies $|b_{ij}|=|\btx_i^\top\hat{\bm\nu}_j+\btx_j^\top\hat{\bm\nu}_j|\leq c(\lambda_0,\lambda_2,M)$, which, by Proposition~\ref{prop:zero-thresh}~(c), implies $g^*_{ij}=\psi^*(\btx_i^\top\hat{\bm\nu}_j+\btx_j^\top\hat{\bm\nu}_j)=0$.

\noindent {\it Proof of Part~(b)} If $(i,j)\in\mathcal{F}_1$, then $g(\theta_{ij})=\psi(\theta_{ij};z,\lambda_0,\lambda_2,M)$ with $z=1$. According to \eqref{eqn:Q-phi} and \eqref{eqn:prox-phi} in the case of $z=1$, we have \eqref{eqn:V-empty-Q-0} with \eqref{eqn:b-nu} implies $|b_{ij}|=|\btx_i^\top\hat{\bm\nu}_j+\btx_j^\top\hat{\bm\nu}_j|=0$, which implies $g^*_{ij}=\varphi^*(\btx_i^\top\hat{\bm\nu}_j+\btx_j^\top\hat{\bm\nu}_j;z=1)=-\lambda_0$, according to Table~\ref{table:summary-phistar}.
\end{proof}

\customparagraph{Proof of Proposition~\ref{prop:eff-comp-dual-bounds}}
We are now ready to complete the proof of
this proposition by making use of the last two propositions. 

Note that we can decompose the sum over all pairs of $(i,j)$ into four parts:
$$\sum_{i,j}g^*_{ij}=\sum_{(i,j)\in\hat{\mathcal{S}}}g^*_{ij}+\sum_{(i,j)\in\hat{\mathcal{S}}^c\cap \mathcal{F}_1}g^*_{ij}+\sum_{(i,j)\in\hat{\mathcal{S}}^c\cap \mathcal{F}_0}g^*_{ij}+\sum_{(i,j)\in\hat{\mathcal{S}}^c\cap \mathcal{R}}g^*_{ij}.$$
With Proposition~\ref{prop:zero-thresh-following}, we have the last two terms are $0$ and the second term becomes $-\lambda_0|\mathcal{F}_1\backslash \hat{\mathcal{S}}|$. Thus, we prove the desired result.

\subsection{Local Search Details} \label{subsec:localsearch}
After obtaining an approximate solution from the CD method in Section~\ref{subsec:incumbents-solving}, we 
attempt to improve the solution via local search
motivated by~\citet{hazimeh2020sparse,beck2013sparsity}. To this end, we consider setting a nonzero coordinate of the the current solution $\hat{\B\Theta}$ to zero, and allow a zero coordinate of $\hat{\B\Theta}$ to take nonzero values. Then, we check if optimizing over the new nonzero coordinate (while keeping other coordinates fixed) improves the objective. However, as $\hat{\B\Theta}$ is a symmetric matrix, we only consider row-wise swaps---we swap zero and nonzero coordinates in a row, and we cycle through the rows. We also update symmetric above and below diagonal coordinates simultaneously to preserve the symmetry.

Let us call the row $i\in[p]$ active if there exists $j_0\neq i$ such that $(i,j_0)\in\mathcal{S}$. We denote the set of all active rows by $\mathcal{S}_{act}$. We also define the objective with a single swap as
\begin{equation}
	F_0(\hat{\bm\Theta},i,j_1,j_2,\theta)=F(\hat{\bm\Theta}-\hat \theta_{ij_1}\bm E_{ij_1}-\hat\theta_{ij_1}\bm E_{j_1i}+\theta\bm E_{ij_2}+\theta\bm E_{j_2i})
\end{equation}
with $F$ defined in~\eqref{eqn:unified}. Our local search is summarized in Algorithm~\ref{algo:localsearch}. We note that in the calculation of the optimal swap in line~3 of Algorithm~\ref{algo:localsearch}, the value of $\hat{\theta}$ can be obtained using the closed-form solutions that we derive for the CD updates.

\begin{algorithm}[H]
	\begin{algorithmic}[1]
		\Require An approximate solution from CD, $\hat{\bm\Theta}$
		\While {not converged}
		\For {each $i\in\mathcal{S}_{act}$}
		\State $(\hat{j}_1,\hat{j}_2,\hat\theta)\gets\arg\min_{ j_1:\hat\theta_{ij_1}\neq 0,j_2:\hat\theta_{ij_2}=0 ,\theta } F_0(\hat{\bm\Theta},i,j_1,j_2,\theta)$
		\State $\hat{\theta}_{i\hat{j}_1}=\hat{\theta}_{\hat{j}_1i}\gets 0,~~\hat{\theta}_{i\hat{j}_2}=\hat{\theta}_{\hat{j}_2i}\gets \hat\theta$
		\EndFor
		\For {$i=1,2,\ldots, p$}
		\State $\hat\theta_{ii}\gets \arg\min_{\theta_{ii}} F(\hat{\bm\Theta}-\hat \theta_{ii}\bm E_{ii}+\theta_{ii}\bm E_{ii})$
		\EndFor
		\EndWhile
	\end{algorithmic}
	\caption{Local Search for Problem~\eqref{eqn:unified}}
	\label{algo:localsearch}
\end{algorithm}

\section{Proofs on Statistical Properties (Section~\ref{sec:stat})}\label{app:stat-proofs}
Before proceeding with the proof of main results, we present a 
result (see discussion at the beginning of Section~\ref{sec:proposedest}) that we use throughout the proofs. 
\begin{lemma}\label{lem_reg_ext}
	For $j\in[p]$, let $\B{\varepsilon}_j\in\R^{n}$ be such that
	$$\B{\varepsilon}_{j}=\B{x}_j-\sum_{i\neq j}\beta^*_{ij}\B{x}_i.$$
	Then, $\B{\varepsilon}_{j}$ and $\{\B{x}_i\}_{i\neq j}$ are independent for every $j$. Moreover, for every $j$,
	$$\B{\varepsilon}_{j}\sim\mathcal{N}(\B{0},(\sigma_j^*)^2\B{I}_n).$$
\end{lemma}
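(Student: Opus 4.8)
This is essentially the Gaussian-conditioning identity already recorded in \eqref{knormaldists}--\eqref{lem_reg}, now read across the $n$ i.i.d.\ samples. The plan is to reduce to a single draw, invoke the block Gaussian conditional formula, match the resulting parameters to $\{\beta^*_{ij}\}$ and $(\sigma^*_j)^2$ via the $\B{\Sigma}^*$--$\B{\Theta}^*$ relationship, and then lift back to the stacked column vectors.

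First I would fix $j\in[p]$ and look at a single sample $\B{x}=(x_1,\dots,x_p)^\top\sim\mathcal{N}(\B{0},\B{\Sigma}^*)$. Partitioning the coordinates into $\{j\}$ and its complement $-j$, the standard formula for conditionals of a jointly Gaussian vector gives that $x_j\mid x_{-j}$ is normal with mean $\B{\Sigma}^*_{j,-j}(\B{\Sigma}^*_{-j,-j})^{-1}x_{-j}$ and variance $\B{\Sigma}^*_{jj}-\B{\Sigma}^*_{j,-j}(\B{\Sigma}^*_{-j,-j})^{-1}\B{\Sigma}^*_{-j,j}$. Next I would use the block-inverse (Schur complement) identity relating $\B{\Sigma}^*$ and $\B{\Theta}^*=(\B{\Sigma}^*)^{-1}$: the regression coefficient vector equals $(-\theta^*_{ji}/\theta^*_{jj})_{i\neq j}$ and the conditional variance equals $1/\theta^*_{jj}$, which by \eqref{lem_reg} are precisely $\{\beta^*_{ij}\}_{i\neq j}$ and $(\sigma^*_j)^2$ (equivalently, one may simply cite \eqref{knormaldists}). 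Hence, writing $e^{(m)}_j:=x^{(m)}_j-\sum_{i\neq j}\beta^*_{ij}x^{(m)}_i$ for the residual in the $m$-th sample, the conditional law of $e^{(m)}_j$ given $\{x^{(m)}_i\}_{i\neq j}$ is $\mathcal{N}(0,(\sigma^*_j)^2)$; since this law is free of the conditioning values, $e^{(m)}_j$ is independent of $\{x^{(m)}_i\}_{i\neq j}$ and marginally $\mathcal{N}(0,(\sigma^*_j)^2)$.

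Finally I would stack over $m=1,\dots,n$. Because the samples $\B{x}^{(1)},\dots,\B{x}^{(n)}$ are mutually independent, the pairs $\big(\{x^{(m)}_i\}_{i\neq j},\,e^{(m)}_j\big)$ are independent across $m$; combining this with the per-sample independence just shown, the vector $\B{\varepsilon}_j=(e^{(1)}_j,\dots,e^{(n)}_j)^\top$ is independent of $\{\B{x}_i\}_{i\neq j}$, and its coordinates are i.i.d.\ $\mathcal{N}(0,(\sigma^*_j)^2)$, i.e.\ $\B{\varepsilon}_j\sim\mathcal{N}(\B{0},(\sigma^*_j)^2\B{I}_n)$. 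There is no genuine obstacle here: the only steps needing a little care are the algebraic identification of the conditional parameters with entries of the precision matrix (which can instead be imported verbatim from the discussion preceding \eqref{knormaldists}), and the bookkeeping that passes from per-sample statements to the stacked $n$-dimensional vectors via independence of the samples.
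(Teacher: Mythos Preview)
Your proposal is correct and matches the paper's own treatment: the paper does not give a separate proof of this lemma but simply points back to the conditional-distribution identities~\eqref{knormaldists}--\eqref{lem_reg} in Section~\ref{sec:proposedest}, which is exactly what you invoke and then extend to the $n$ stacked samples via independence.
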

\subsection{Useful Lemmas}
\begin{lemma}[Theorem 1.19, \citet{rigollet2015high}]\label{maximal}
	Let $\omega \in\mathbb{R}^p$ be a random vector with ${\omega}_i\stackrel{\text{iid}}{\sim}\mathcal{N}(0,\sigma^2)$,  then 
	\begin{equation}
		\p(\sup_{\B{\theta}\in \mathcal{B}(p)}\B{\theta}^\top\B{\omega}>t)\leq \exp\left(-\frac{t^2}{8\sigma^2}+p\log 5\right),
	\end{equation}
	where $\mathcal{B}(p)$ denotes the unit Euclidean ball of dimension $p$.
\end{lemma}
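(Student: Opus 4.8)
The plan is to reduce the statement to a tail bound on $\norm{\B\omega}_2$ via a covering (epsilon-net) argument, using the elementary fact that $\sup_{\B\theta\in\mathcal{B}(p)}\B\theta^\top\B\omega=\norm{\B\omega}_2$ by Cauchy--Schwarz (the supremum being attained at $\B\omega/\norm{\B\omega}_2$). Concretely I would carry out four steps. First, construct a $1/2$-net $\mathcal{N}\subseteq\mathcal{B}(p)$ of the closed unit ball, i.e.\ a maximal $1/2$-separated subset; maximality makes it a $1/2$-net, and a standard volumetric comparison (disjoint balls of radius $1/4$ around the net points sit inside the ball of radius $5/4$) gives $|\mathcal{N}|\leq (1+2/(1/2))^p=5^p$.

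Second, the approximation step: for any $\B\theta\in\mathcal{B}(p)$ pick $\B\theta'\in\mathcal{N}$ with $\norm{\B\theta-\B\theta'}\leq 1/2$, and write
\[
\B\theta^\top\B\omega=(\B\theta')^\top\B\omega+(\B\theta-\B\theta')^\top\B\omega\leq \max_{\B u\in\mathcal{N}}\B u^\top\B\omega+\tfrac12\sup_{\B v\in\mathcal{B}(p)}\B v^\top\B\omega ,
\]
where the last term bounds $(\B\theta-\B\theta')^\top\B\omega\leq\norm{\B\theta-\B\theta'}\,\sup_{\B v\in\mathcal{B}(p)}\B v^\top\B\omega$. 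Taking the supremum over $\B\theta\in\mathcal{B}(p)$ and rearranging yields $\sup_{\B\theta\in\mathcal{B}(p)}\B\theta^\top\B\omega\leq 2\max_{\B u\in\mathcal{N}}\B u^\top\B\omega$.

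Third, a Gaussian tail bound plus a union bound. For each fixed $\B u\in\mathcal{N}$ we have $\B u^\top\B\omega\sim\CN(0,\sigma^2\norm{\B u}^2)$ with $\norm{\B u}\leq 1$, hence $\p(\B u^\top\B\omega>s)\leq \exp(-s^2/(2\sigma^2))$ for all $s>0$. Union-bounding over $\mathcal{N}$ and applying the approximation inequality with $s=t/2$,
\[
\p\Bigl(\sup_{\B\theta\in\mathcal{B}(p)}\B\theta^\top\B\omega>t\Bigr)\leq \p\Bigl(\max_{\B u\in\mathcal{N}}\B u^\top\B\omega>t/2\Bigr)\leq 5^p\exp\!\Bigl(-\frac{t^2}{8\sigma^2}\Bigr)=\exp\!\Bigl(-\frac{t^2}{8\sigma^2}+p\log 5\Bigr),
\]
which is exactly the claimed inequality.

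All steps are routine; the only places requiring a little care are the covering-number estimate $|\mathcal{N}|\leq 5^p$ and the factor-$2$ approximation inequality, and these are precisely what pin down the constants $8$ and $\log 5$ in the statement (a coarser net would only worsen these constants, not the form of the bound). Since the result is quoted verbatim from \citet{rigollet2015high}, in the paper itself it suffices to cite it; the sketch above is the short self-contained justification.
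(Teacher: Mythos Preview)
Your proof is correct and is precisely the standard $\varepsilon$-net argument that appears in \citet{rigollet2015high}; the paper itself does not reprove the lemma but simply cites it, so your sketch matches the source exactly.
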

\begin{lemma}[Lemma~5, \citet{behdin2021integer}]\label{minnorm2}
	Suppose the rows of the matrix $\B{X}\in\mathbb{R}^{n\times p}$ (with $n\geq p$) are iid draws from a multivariate Gaussian distribution  with zero mean and covariance matrix $\B{G}$ i.e., $\mathcal{N}(\B{0},\B{G})$.
	Moreover, suppose $0<\bar{\sigma}\leq \sqrt{\lambda_{\min}(\B{G})}$ where $\lambda_{\min}$ denotes the smallest eigenvalue. Then,
	$$\bar{\sigma}\left(1-c_0\left(\sqrt{\frac{p}{n}}+\frac{t}{\sqrt{n}}\right)\right)\lesssim \frac{1}{\sqrt{n}}\sigma_{\min}(\B{X})$$
	with probability at least $1-\exp(-Ct^2)$ for some universal constants $C,c_0>0$.
\end{lemma}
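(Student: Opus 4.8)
The plan is to decouple the covariance from the Gaussian randomness and then invoke a standard non-asymptotic estimate for the least singular value of a matrix with i.i.d.\ standard normal entries. Since the rows of $\B{X}$ are i.i.d.\ $\mathcal{N}(\B{0},\B{G})$, we may write $\B{X}=\B{Z}\,\B{G}^{1/2}$, where $\B{Z}\in\R^{n\times p}$ has i.i.d.\ $\mathcal{N}(0,1)$ entries and $\B{G}^{1/2}$ denotes the symmetric positive-definite square root of $\B{G}$. For every unit vector $\bm v\in\R^p$ we then have $\norm{\B{X}\bm v}=\norm{\B{Z}\B{G}^{1/2}\bm v}\geq \sigma_{\min}(\B{Z})\,\norm{\B{G}^{1/2}\bm v}\geq \sigma_{\min}(\B{Z})\sqrt{\lambda_{\min}(\B{G})}$; minimizing over $\bm v$ and using the hypothesis $\bar\sigma\leq\sqrt{\lambda_{\min}(\B{G})}$ yields $\sigma_{\min}(\B{X})\geq \bar\sigma\,\sigma_{\min}(\B{Z})$. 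Hence it suffices to prove a matching lower bound for $\sigma_{\min}(\B{Z})$.

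For the standard Gaussian matrix $\B{Z}$ with $n\geq p$, I would use the classical concentration estimate $\sigma_{\min}(\B{Z})\geq \sqrt{n}-\sqrt{p}-t$ with probability at least $1-2e^{-t^2/2}$, which rests on two well-known ingredients. First, Gordon's Gaussian min--max comparison inequality gives $\E\,\sigma_{\min}(\B{Z})\geq \E\norm{\bm g}-\E\norm{\bm h}\geq\sqrt{n}-\sqrt{p}$ (up to a lower-order correction), where $\bm g\sim\mathcal{N}(\B{0},\B{I}_n)$, $\bm h\sim\mathcal{N}(\B{0},\B{I}_p)$; it is precisely the assumption $n\geq p$ that makes this nonnegative. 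Second, $\B{Z}\mapsto\sigma_{\min}(\B{Z})=\min_{\norm{\bm v}=1}\norm{\B{Z}\bm v}$ is $1$-Lipschitz in the Frobenius norm of $\B{Z}$, being an infimum of $1$-Lipschitz functions, so Gaussian concentration of measure gives $\p\!\left(\sigma_{\min}(\B{Z})\leq\E\,\sigma_{\min}(\B{Z})-t\right)\leq e^{-t^2/2}$. Combining these and dividing by $\sqrt n$ gives $\tfrac1{\sqrt n}\sigma_{\min}(\B{X})\geq \bar\sigma\bigl(1-\sqrt{p/n}-t/\sqrt n\bigr)$ on an event of probability at least $1-2e^{-t^2/2}$, which is the stated bound with $c_0=1$, $C=1/2$ (the universal constant hidden in $\lesssim$ absorbs the factor $2$ in the tail and the lower-order correction in the mean).

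The only substantive step is the concentration bound for $\sigma_{\min}(\B{Z})$; everything else is elementary linear algebra, so I do not expect a genuine obstacle. If one prefers a self-contained argument avoiding Gordon's inequality, the alternative is an $\varepsilon$-net argument on the sphere $\mathcal{B}(p)$: fix a $\tfrac14$-net $\mathcal{N}$ of cardinality at most $9^p$, bound the lower tail of $\norm{\B{Z}\bm v}$ for each fixed $\bm v\in\mathcal{N}$ using a one-sided $\chi^2_n$ bound together with a union bound over $\mathcal{N}$, bound $\sigma_{\max}(\B{Z})$ from above using (a variant of) Lemma~\ref{maximal}, and then transfer from the net to the whole sphere via $\sigma_{\min}(\B{Z})\geq \min_{\bm v\in\mathcal{N}}\norm{\B{Z}\bm v}-\tfrac14\sigma_{\max}(\B{Z})$. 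This reproduces the same $1-\exp(-Ct^2)$ tail after renaming constants, at the price of a slightly larger $c_0$ — which the looseness in the statement (the $\lesssim$ and the unspecified $c_0$) already accommodates. Tracking constants in this net-based route is the only mildly tedious part.
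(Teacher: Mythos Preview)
Your argument is correct and is the standard route to this estimate. The paper itself does not prove this lemma: it is quoted as Lemma~5 of \citet{behdin2021integer} and stated without proof, so there is no in-paper argument to compare against. Your decoupling $\B{X}=\B{Z}\,\B{G}^{1/2}$ followed by the Gordon/Davidson--Szarek bound $\sigma_{\min}(\B{Z})\geq\sqrt{n}-\sqrt{p}-t$ with Gaussian concentration (or the equivalent $\varepsilon$-net derivation you sketch) is precisely how results of this form are proved in the cited literature.
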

\begin{lemma}\label{bernlem}
	Let the rows of the data matrix $\B{X}\in\mathbb{R}^{n\times p}$ be iid draws from $\mathcal{N}(\B{0},\B{G})$. For fixed $j_1,j_2\in[p]$, we have
	\begin{equation}\label{bern2}
		\p\left(\left\vert \frac{1}{n}\sum_{i=1}^n x_{ij_1}x_{ij_2} - G_{j_1j_2}\right\vert>c_{\Psi}(|G_{j_1j_2}|+\sqrt{G_{j_1j_1}G_{j_2j_2}})\sqrt{\frac{\log(1/\delta)}{C_b n}}\right) \leq 2\delta
	\end{equation}
	when $n>\frac{2}{C_b}\log(1/\delta)$, for some universal constants $C_b,c_{\Psi}>0$. Above, $G_{ij}$ is the $(i,j)$-th coordinate of $\B G$.
\end{lemma}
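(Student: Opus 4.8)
The plan is to recognize the left-hand side of~\eqref{bern2} as a normalized sum of i.i.d.\ mean-zero \emph{sub-exponential} random variables and then apply a Bernstein-type tail bound. Since the rows of $\B X$ are i.i.d.\ $\mathcal{N}(\B 0,\B G)$, the scalars $Y_i:=x_{ij_1}x_{ij_2}-G_{j_1j_2}$, $i=1,\dots,n$, are i.i.d.\ with $\E[Y_i]=\E[x_{ij_1}x_{ij_2}]-G_{j_1j_2}=0$. Each $Y_i$ is, up to an additive constant, a product of two jointly centered Gaussian scalars, hence sub-exponential: one can either invoke the Orlicz-norm inequality $\|x_{ij_1}x_{ij_2}\|_{\psi_1}\le\|x_{ij_1}\|_{\psi_2}\|x_{ij_2}\|_{\psi_2}$, or exhibit the structure explicitly via the polarization identity $x_{ij_1}x_{ij_2}=\tfrac14\bigl((x_{ij_1}+x_{ij_2})^2-(x_{ij_1}-x_{ij_2})^2\bigr)$, which (after rescaling each coordinate to unit variance) writes $\tfrac1n\sum_i Y_i$ as a difference of two centered, rescaled $\chi^2_n$-averages amenable to standard $\chi^2$ tail bounds (e.g.\ Laurent--Massart).

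Next I would pin down the sub-exponential scale. Using $\|x_{ij_\ell}\|_{\psi_2}\lesssim\sqrt{G_{j_\ell j_\ell}}$ (the $\psi_2$-norm of a centered Gaussian is a universal constant times its standard deviation), the triangle inequality, and the fact that the $\psi_1$-norm of the constant $G_{j_1j_2}$ is $\lesssim|G_{j_1j_2}|$, we obtain
\[
K\;:=\;\|Y_i\|_{\psi_1}\;\le\;\|x_{ij_1}x_{ij_2}\|_{\psi_1}+|G_{j_1j_2}|\;\le\;c_\Psi\Bigl(\sqrt{G_{j_1j_1}G_{j_2j_2}}+|G_{j_1j_2}|\Bigr)
\]
for a universal constant $c_\Psi>0$ --- precisely the two-term scale appearing in~\eqref{bern2}. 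Bernstein's inequality for i.i.d.\ mean-zero sub-exponential variables then gives, for every $t>0$,
\[
\p\!\left(\left|\frac1n\sum_{i=1}^n Y_i\right|>t\right)\;\le\;2\exp\!\left(-c\,n\min\!\left\{\frac{t^2}{K^2},\,\frac{t}{K}\right\}\right)
\]
for a universal constant $c>0$.

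Finally I would substitute $t=c_\Psi\bigl(\sqrt{G_{j_1j_1}G_{j_2j_2}}+|G_{j_1j_2}|\bigr)\sqrt{\log(1/\delta)/(C_b n)}$, which, after possibly enlarging $c_\Psi$, satisfies $t\ge K\sqrt{\log(1/\delta)/(C_b n)}$. The hypothesis $n>\tfrac{2}{C_b}\log(1/\delta)$ ensures $t/K\le\sqrt{\log(1/\delta)/(C_b n)}\le 1$, so the minimum in the Bernstein bound equals $t^2/K^2$ and $c\,n\,t^2/K^2\ge c\log(1/\delta)/C_b$; choosing the universal constant $C_b\le c$ makes this at least $\log(1/\delta)$, whence the right-hand side is at most $2\exp(-\log(1/\delta))=2\delta$, as claimed. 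I do not expect a conceptual obstacle here; the only real care is (i) verifying that the sub-exponential norm bound genuinely separates into the $\sqrt{G_{j_1j_1}G_{j_2j_2}}$- and $|G_{j_1j_2}|$-terms stated in the lemma, and (ii) tracking the two universal constants $c_\Psi$ (from the Orlicz/polarization step) and $C_b$ (from the Bernstein step) so that the tail probability lands exactly at $2\delta$ under the stated sample-size requirement.
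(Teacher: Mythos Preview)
Your proposal is correct and follows essentially the same route as the paper: bound $\|x_{ij_1}x_{ij_2}-G_{j_1j_2}\|_{\psi_1}$ via $\|x_{ij_1}\|_{\psi_2}\|x_{ij_2}\|_{\psi_2}+\|G_{j_1j_2}\|_{\psi_1}\le c_\Psi(\sqrt{G_{j_1j_1}G_{j_2j_2}}+|G_{j_1j_2}|)$, apply Bernstein's inequality for sub-exponential variables, plug in $t=c_\Psi(\cdot)\sqrt{\log(1/\delta)/(C_b n)}$, and use $n>\tfrac{2}{C_b}\log(1/\delta)$ to land in the sub-Gaussian regime of the $\min$. One small slip: you write $t/K\le\sqrt{\log(1/\delta)/(C_b n)}$ right after asserting $t\ge K\sqrt{\log(1/\delta)/(C_b n)}$; the clean fix (which the paper uses) is to take the Bernstein scale to be the \emph{upper bound} $K':=c_\Psi(\sqrt{G_{j_1j_1}G_{j_2j_2}}+|G_{j_1j_2}|)$ itself, so that $t/K'=\sqrt{\log(1/\delta)/(C_b n)}$ exactly and the comparison is trivial.
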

\begin{proof}[\textbf{Proof of Lemma~\ref{bernlem}}.]
For $s\in\{1,2\}$, we define the $\psi_s$-Orlicz norm~\citep[Ch. 2.5 and 2.7]{vershynin2018high} of a random variable $X$ as
$$\|X\|_{\psi_s}=\inf\{t>0:\E[\exp(|X|^s/t^s)]\leq 2\}.$$
Note that $\E[x_{ij_1}x_{ij_2}]=G_{j_1j_2}$, therefore, by Lemmas 2.7.6 and 2.7.7 of~\citet{vershynin2018high}, the $\psi_1$-Orlicz norm of $x_{ij_1}x_{ij_2}-G_{j_1j_2}$ can be bounded as
$$\|x_{ij_1}x_{ij_2}-G_{j_1j_2}\|_{\psi_1}\leq \|x_{ij_1}\|_{\psi_2}\|x_{ij_2}\|_{\psi_2}+\|G_{j_1j_2}\|_{\psi_1}\leq c_{\Psi}(|G_{j_1j_2}|+\sqrt{G_{j_1j_1}G_{j_2j_2}})$$
for some $c_{\Psi}>0$. Consequently, by Bernstein's inequality \cite[Theorem 2.8.1]{vershynin2018high}
\begin{multline}\label{bernineq}
	\p\left(\left\vert\frac{1}{n}\sum_{i=1}^n x_{ij_1}x_{ij_2} - G_{j_1j_2}\right\vert>t\right)\\ \leq 2\exp\left(-C_bn\left[\frac{t^2}{c_{\Psi}^2(|G_{j_1j_2}|+\sqrt{G_{j_1j_1}G_{j_2j_2}})^2}\land \frac{t}{c_{\Psi}(|G_{j_1j_2}|+\sqrt{G_{j_1j_1}G_{j_2j_2}})} \right]\right)
\end{multline}
for some constant $C_b>0$. Let us take
$$t = c_{\Psi}(|G_{j_1j_2}|+\sqrt{G_{j_1j_1}G_{j_2j_2}})\sqrt{\frac{\log(1/\delta)}{C_b n}}$$
and 
$$n>\frac{2}{C_b}\log(1/\delta).$$
As a result, 
$$\frac{t^2}{c_{\Psi}^2(|G_{j_1j_2}|+\sqrt{G_{j_1j_1}G_{j_2j_2}})^2}=\frac{\log(1/\delta)}{C_b n}\leq \sqrt{\frac{\log(1/\delta)}{C_b n}}= \frac{t}{c_{\Psi}(|G_{j_1j_2}|+\sqrt{G_{j_1j_1}G_{j_2j_2}})}$$
so
$$\left[\frac{t^2}{c_{\Psi}^2(|G_{j_1j_2}|+\sqrt{G_{j_1j_1}G_{j_2j_2}})^2}\land \frac{t}{c_{\Psi}(|G_{j_1j_2}|+\sqrt{G_{j_1j_1}G_{j_2j_2}})} \right]=\frac{t^2}{c_{\Psi}^2(|G_{j_1j_2}|+\sqrt{G_{j_1j_1}G_{j_2j_2}})^2}$$
which completes the proof of this lemma by making use of~\eqref{bernineq}.
\end{proof}

\subsection{Proof of Theorem \ref{lowdimthm}}

\subsubsection{Some helper lemmas} 
We present a few lemmas and proofs which will be used in our proof of 
Theorem \ref{lowdimthm}.
\begin{lemma}\label{thm1techlem1}
	Let 
	\begin{equation}\label{yjdef}
		\begin{aligned}
			\hat{\B{y}}_j & = \B{x}_j-\sum_{i\neq j}\hat{\beta}_{ij}\B{x}_i,\\
			{\B{y}}^*_j & = \B{x}_j-\sum_{i\neq j}{\beta}^*_{ij}\B{x}_i
		\end{aligned}
	\end{equation}
	and denote
	\begin{equation}\label{sjdef}
		\begin{aligned}
			\hat{\CS}_j & =\{i\in[p]:i\neq j,\hat{\beta}_{ij}\neq 0\},~~~~~~
			{\CS}_j^* & =\{i\in[p]:i\neq j,{\beta}_{ij}^*\neq 0\}
		\end{aligned}
	\end{equation}
	for $j\in[p]$. Let the event $E_1$ be defined as
	\begin{equation}
		E_1=\left\{ \sum_{j\in[p]}\frac{1}{20}(\hat{\sigma}_j-\sigma^*_j)^2 + \sum_{j\in[p]}\frac{\|\hat{\B{y}}_j\|_2^2-\|\B{y}_j^*\|_2^2}{2n\hat{\sigma}_j^2}\lesssim \frac{1}{l_{\sigma}^2}\frac{p\log(p/k)}{n}+\lambda \sum_{j=1}^p\left[|S_j^*|-|\hat{S}_j|\right]\right\}.
	\end{equation}
	Under the assumptions of Theorem \ref{lowdimthm}, we have
	$$\p(E_1)\geq 1-p(k/p)^{10}.$$
\end{lemma}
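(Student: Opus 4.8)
The plan is to derive a ``basic inequality'' from optimality of the estimator, reduce the claim to controlling a sum $\sum_j f_j$ of explicit scalar quantities, bound each $f_j$ deterministically using the box constraint $\hat\sigma_j\in[l_\sigma,u_\sigma]$, and finish with a $\chi^2$ concentration bound plus a union bound over $j\in[p]$. \textbf{Step 1 (reduction).} By Assumption~\ref{assum1-1} and the choice $\ell=l_\sigma^2,L=u_\sigma^2$, the triple $(\{\beta^*_{ij}\},\{\sigma^*_j\},\{z^*_{ij}\})$ with $z^*_{ij}=\bm1\{\beta^*_{ij}\neq 0\}$ is feasible for Problem~\eqref{main-sup}: it satisfies $\sqrt\ell\le\sigma^*_j\le\sqrt L$, $z^*_{ij}=z^*_{ji}$ (the support of $\B\Theta^*$ is symmetric), and $(1-z^*_{ij})\beta^*_{ij}=0$. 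Since $\bm1\{\hat\beta_{ij}\neq0\}\le\hat z_{ij}$, we have $\sum_{i\neq j}\hat z_{ij}\ge\sum_j|\hat S_j|$, so comparing the objective of~\eqref{main-sup} at the optimum with the one at this feasible point yields
\[
\sum_{j=1}^p\Bigl[\log\hat\sigma_j+\frac{\|\hat{\B{y}}_j\|_2^2}{2n\hat\sigma_j^2}\Bigr]+\lambda\sum_j|\hat S_j|\;\le\;\sum_{j=1}^p\Bigl[\log\sigma^*_j+\frac{\|\B{y}^*_j\|_2^2}{2n(\sigma^*_j)^2}\Bigr]+\lambda\sum_j|S^*_j|.
\]
Writing $\|\hat{\B{y}}_j\|_2^2/(2n\hat\sigma_j^2)=(\|\hat{\B{y}}_j\|_2^2-\|\B{y}^*_j\|_2^2)/(2n\hat\sigma_j^2)+\|\B{y}^*_j\|_2^2/(2n\hat\sigma_j^2)$, rearranging, and adding $\tfrac1{20}\sum_j(\hat\sigma_j-\sigma^*_j)^2$ to both sides, the claim reduces to showing that, with probability at least $1-p(k/p)^{10}$, $\sum_j f_j\lesssim l_\sigma^{-2}\,p\log(p/k)/n$, where $f_j:=\tfrac1{20}(\hat\sigma_j-\sigma^*_j)^2-\log(\hat\sigma_j/\sigma^*_j)-\tfrac{t_j}{2}\bigl((\sigma^*_j)^2/\hat\sigma_j^2-1\bigr)$ and $t_j:=\|\B{y}^*_j\|_2^2/(n(\sigma^*_j)^2)$; by Lemma~\ref{lem_reg_ext}, $\B{y}^*_j=\B{\varepsilon}_j$, so $n t_j\sim\chi^2_n$.

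\textbf{Step 2 (deterministic bound on $f_j$).} Since $\hat\sigma_j\in[l_\sigma,u_\sigma]$ (box constraint) and $\sigma^*_j\in[l_\sigma,u_\sigma]$ (Assumption~\ref{assum1-1}), set $w_j:=\sigma^*_j/\hat\sigma_j\in[l_\sigma/u_\sigma,\,u_\sigma/l_\sigma]$; a direct substitution gives
\[
f_j=\frac{\hat\sigma_j^2}{20}(w_j-1)^2-\frac12\bigl(w_j^2-1-2\log w_j\bigr)+\frac{1-t_j}{2}(w_j^2-1)\;\le\;-\Bigl(\tfrac12-\tfrac{\hat\sigma_j^2}{20}\Bigr)(w_j-1)^2+\frac{u_\sigma}{l_\sigma}\,|1-t_j|\,|w_j-1|,
\]
where the inequality uses the elementary fact $w^2-1-2\log w\ge (w-1)^2$ (equivalent to $w-1\ge\log w$, valid for $w>0$) and $w_j+1\le 2u_\sigma/l_\sigma$. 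Assumption~\ref{assum1-3} together with $l_\sigma\le u_\sigma$ forces $\tfrac35 u_\sigma^2\ge\tfrac6{25}u_\sigma^4$, i.e.\ $u_\sigma^2\le\tfrac52$, hence $\hat\sigma_j^2\le u_\sigma^2\le\tfrac52$ and $\tfrac12-\tfrac{\hat\sigma_j^2}{20}\ge\tfrac38$ is bounded below by a universal constant. Completing the square ($-c_0x^2+bx\le b^2/(4c_0)$ for $x\ge0$) then yields $f_j\lesssim (u_\sigma^2/l_\sigma^2)(1-t_j)^2$.

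\textbf{Step 3 (concentration) and the main obstacle.} Applying a sub-exponential concentration bound to $\|\B{\varepsilon}_j\|_2^2$ (for instance Lemma~\ref{bernlem} with $j_1=j_2$) with $\delta=(k/p)^{11}$, and union-bounding over $j\in[p]$ (valid since $p/k>5$ and $n\gtrsim\log(p/k)$, the latter implied by $n\gtrsim kp\log p$), we obtain that with probability at least $1-p(k/p)^{10}$, $(1-t_j)^2\lesssim\log(p/k)/n$ for every $j$; hence $\sum_j f_j\lesssim (u_\sigma^2/l_\sigma^2)\,p\log(p/k)/n\lesssim l_\sigma^{-2}\,p\log(p/k)/n$ using $u_\sigma^2\le\tfrac52$, which by Step~1 is exactly the event $E_1$. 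The genuinely delicate point is Step~2: merely noting $f_j\le 0$ when $t_j=1$ and then bounding the noise by $\sum_j|1-t_j|\asymp p/\sqrt n$ is useless, since $p/\sqrt n$ far exceeds the target $p\log(p/k)/n$; one must retain the strictly negative curvature term $-(\tfrac12-\tfrac{\hat\sigma_j^2}{20})(w_j-1)^2$ — strictly negative precisely because Assumption~\ref{assum1-3} caps $u_\sigma^2$ below a universal constant — and use it, via completing the square, to convert the $O(|1-t_j|)$ contribution into the summable $O((1-t_j)^2)$. The remaining ingredients (the reduction in Step~1, the inequality $w-1\ge\log w$, and the $\chi^2$ tail bound) are routine.
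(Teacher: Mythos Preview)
Your argument is correct and reaches the same conclusion as the paper, but the route is genuinely different. The paper defines $f_j(x)=\log x+\|\B y_j^*\|_2^2/(2nx^2)$, Taylor-expands $f_j(\hat\sigma_j)-f_j(\sigma_j^*)$ around $\sigma_j^*$, and shows the second derivative at an intermediate point exceeds $1/10$ by combining Assumption~\ref{assum1-3} with the high-probability event $\|\B y_j^*\|_2^2/n\ge \tfrac56(\sigma_j^*)^2$; the linear Taylor term is then bounded via the same concentration. You instead substitute $w_j=\sigma_j^*/\hat\sigma_j$, use the elementary inequality $w^2-1-2\log w\ge(w-1)^2$, and complete the square. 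The key observation that powers your approach --- that Assumption~\ref{assum1-3} alone (with $l_\sigma\le u_\sigma$) forces $u_\sigma^2\le 5/2$ and hence $\tfrac12-\hat\sigma_j^2/20\ge 3/8$ --- is a nice simplification: it gives the curvature constant deterministically, without first passing through the event $\|\B y_j^*\|_2^2/n\ge \tfrac56(\sigma_j^*)^2$. Both proofs ultimately rely on the same $\chi^2$ concentration (your $(1-t_j)^2\lesssim\log(p/k)/n$ is exactly the paper's bound on $[1/\sigma_j^*-\|\B y_j^*\|_2^2/(n(\sigma_j^*)^3)]^2$ up to scaling), so the probabilistic content is identical; your deterministic reduction is just cleaner and avoids the mean-value point.
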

\begin{proof}[\textbf{Proof of Lemma~\ref{thm1techlem1}}.]
Let the event $\CE_j$ be defined as
$$\CE_j:=\left\{\left\vert ({\sigma_j^*})^2- \frac{\|\B{y}_j^*\|_2^2}{n}\right\vert  \lesssim ({\sigma_j^*})^2\sqrt{\frac{\log(p/k)}{n}}\right\}.$$
Note that $\B{y}_j^*=\B{\varepsilon}_j$ by definition, therefore
$$\left\vert ({\sigma_j^*})^2- \frac{\|\B{y}_j^*\|_2^2}{n}\right\vert =\left\vert ({\sigma_j^*})^2- \frac{\|\B{\varepsilon}_j\|_2^2}{n}\right\vert.$$
Invoke Lemma~\ref{bernlem} for $\B\varepsilon_j\overset{\text{iid}}{\sim}\mathcal{N}(0,(\sigma_j^*)^2)$ with $\delta=(p/k)^{10}$. As $n\gtrsim \log p$, one has
\begin{align}\label{sigmajconv}
	\p\left(\CE_j\right)=\p\left(\left\{\left\vert ({\sigma_j^*})^2- \frac{\|\B{\varepsilon}_j\|_2^2}{n}\right\vert  \lesssim ({\sigma_j^*})^2\sqrt{\frac{\log(p/k)}{n}}\right\}\right)\geq 1-(k/p)^{10}.
\end{align}
As a result, by union bound 
\begin{equation}\label{ejunionbounded}
	\p\left(\bigcap_{j\in[p]} \CE_j\right)\geq 1-p(k/p)^{10}.
\end{equation}
In particular, on  event $\CE_j$ if we take $n\gtrsim 36\log(p/k)$, we achieve
\begin{equation}\label{yjsigma}
	\frac{\|\B{y}_j^*\|_2^2}{n}\geq \frac{5({\sigma_j^*})^2}{6} 
\end{equation}
with high probability. The rest of the proof is on the event $\bigcap_{j\in[p]} \CE_j$.\\
Let
\begin{equation}\label{fj}
	f_j(x) = \log(x)+ \frac{\|\B{y}_j^*\|_2^2}{2n}\frac{1}{x^2}.
\end{equation}
By optimality of $\{\hat{\sigma}_j,\hat{\beta}_{ij}\}$ and feasibility of $\{\sigma_j^*,\beta_{ij}^*\}$ for Problem~\eqref{main-sup},
\begin{align}
	& \sum_{j\in[p]}\log(\hat{\sigma}_j)+\frac{\|\hat{\B{y}}_j\|_2^2}{2n\hat{\sigma}_j^2}+\lambda\sum_{j=1}^p |\hat{S}_j|\leq \sum_{j\in[p]} \log(\sigma^*_j)+\frac{\|\B{y}_j^*\|_2^2}{2n({\sigma_j^*})^2}+\lambda\sum_{j=1}^p |S_j^*| \nonumber \\
	\Rightarrow & \sum_{j\in[p]}\left[\log(\frac{\hat{\sigma}_j}{\sigma^*_j})  +\frac{\|\B{y}_j^*\|_2^2}{2n}\left(\frac{1}{\hat{\sigma}_j^2}-\frac{1}{({\sigma_j^*})^2}\right) + \frac{\|\hat{\B{y}}_j\|_2^2-\|\B{y}_j^*\|_2^2}{2n\hat{\sigma}^2_j}\right]\leq \lambda\sum_{j=1}^p |S_j^*|-\lambda\sum_{j=1}^p |\hat{S}_j| \nonumber\\ 
	\Rightarrow &\sum_{j\in[p]}\left[f_j(\hat{\sigma}_j)-f_j({\sigma_j^*})\right]  \leq \sum_{j\in[p]}\frac{\|\B{y}_j^*\|_2^2-\|\hat{\B{y}}_j\|_2^2}{2n\hat{\sigma}^2_j}+\lambda \sum_{j=1}^p\left[|S_j^*|-|\hat{S}_j|\right]. \label{outlineineq01}
\end{align}
From~\eqref{fj},
\begin{equation}
	\begin{aligned}
		f_j'(x) &= \frac{1}{x}- \frac{\|\B{y}_j^*\|_2^2}{nx^3},\\
		f_j''(x) &=-\frac{1}{x^2}+ \frac{3\|\B{y}_j^*\|_2^2}{nx^4}.
	\end{aligned}
\end{equation}
Therefore, by Taylor's expansion of $f_j$, 
\begin{align}
	f_j(\hat{\sigma}_j)-f_j({\sigma_j^*}) &= \left[\frac{1}{\sigma_j^*}- \frac{\|\B{y}_j^*\|_2^2}{n({\sigma_j^*})^3}\right](\hat{\sigma}_j-\sigma^*_j) + \frac{1}{2}\left[-\frac{1}{x^2}+ \frac{3\|\B{y}_j^*\|_2^2}{nx^4}\right](\hat{\sigma}_j-\sigma^*_j)^2 \nonumber\\
	& \stackrel{(A)}{\geq}- \left[\frac{1}{\sigma_j^*}- \frac{\|\B{y}_j^*\|_2^2}{n({\sigma_j^*})^3}\right]^2 +\frac{1}{2}\left[-\frac{1}{x^2}+ \frac{3\|\B{y}_j^*\|_2^2}{nx^4}-\frac{1}{2}\right](\hat{\sigma}_j-\sigma^*_j)^2\label{outlineineq02}
\end{align}
for some $x$ between $\sigma_j^*$ and $\hat{\sigma}_j$ where $(A)$ is by the inequality $2ab\geq -2a^2-b^2/2$. Consequently, for any $x\in[l_{\sigma},u_{\sigma}]$, 
\begin{align}
	-\frac{1}{x^2}+ \frac{3\|\B{y}_j^*\|_2^2}{nx^4}-\frac{1}{2} & = \frac{6\|\B{y}_j^*\|_2^2/n-2x^2-x^4}{2x^4} \nonumber \\
	& \stackrel{(A)}{\geq }\frac{5({\sigma_j^*})^2-2u_{\sigma}^2-u_{\sigma}^4}{2x^4} \nonumber\\
	& \geq \frac{5l_{\sigma}^2-2u_{\sigma}^2-u_{\sigma}^4}{2x^4}\nonumber \\
	& \geq \frac{5l_{\sigma}^2-2u_{\sigma}^2-u_{\sigma}^4}{2u_{\sigma}^4}>\frac{1}{10} \label{convexlower}
\end{align}
where the last inequality is due to Assumption~\ref{assum1-3} and inequality $(A)$ is due to~\eqref{yjsigma} on event $\CE_j$. 
Thus, 
\begin{align}
	\frac{1}{20}(\hat{\sigma}_j-\sigma^*_j)^2& \stackrel{(a)}{\leq} \frac{1}{2}\left[-\frac{1}{x^2}+ \frac{3\|\B{y}_j^*\|_2^2}{nx^4}-\frac{1}{2}\right](\hat{\sigma}_j-\sigma^*_j)^2\nonumber \\
	& \stackrel{(b)}{\leq} f_j(\hat{\sigma}_j)-f_j({\sigma_j^*}) + \left[\frac{1}{\sigma_j^*}- \frac{\|\B{y}_j^*\|_2^2}{n({\sigma_j^*})^3}\right]^2 \label{mid-upper-crude}
\end{align}
where $(a)$ is by~\eqref{convexlower} and $(b)$ is by~\eqref{outlineineq02}.
By substituting~\eqref{mid-upper-crude} into~\eqref{outlineineq01}, we obtain 
\begin{equation}\label{uppercrude}
	\sum_{j\in[p]}\frac{1}{20}(\hat{\sigma}_j-\sigma^*_j)^2 + \sum_{j\in[p]}\frac{\|\hat{\B{y}}_j\|_2^2-\|\B{y}_j^*\|_2^2}{2n\hat{\sigma}^2_j}\leq \sum_{j\in[p]}\left[\frac{1}{\sigma_j^*}- \frac{\|\B{y}_j^*\|_2^2}{n({\sigma_j^*}^3)}\right]^2+\lambda \sum_{j=1}^p\left[|S_j^*|-|\hat{S}_j|\right].
\end{equation}
For any $j\in[p]$, on event $\CE_j$ which holds with high probability (see~\eqref{sigmajconv}),  
\begin{align}
	\left[\frac{1}{\sigma_j^*}- \frac{\|\B{y}_j^*\|_2^2}{n({\sigma_j^*})^3}\right]^2 &=\frac{1}{({\sigma_j^*})^6}\left[({\sigma_j^*})^2- \frac{\|\B{y}_j^*\|_2^2}{n}\right]^2\nonumber \\
	& \lesssim \frac{1}{({\sigma_j^*})^2}\frac{\log(p/k)}{n}\lesssim  \frac{1}{l_{\sigma}^2}\frac{\log(p/k)}{n}\label{sigmaineq}.
\end{align}
As a result, by using the bound in~\eqref{sigmaineq} into~\eqref{uppercrude} the proof is complete.
\end{proof}

\begin{lemma}\label{new-lemma-lambda}
	Define the event $E^{\lambda}_j$ for $j\in[p]$ as 
	\begin{equation}
		E^{\lambda}_j=\left\{\sup_{\substack{ \B{v}\in\R^{p}\\ \text{Supp}(\B{v})=\hat{S}_j\cup S_j^*}}\left(\B{\varepsilon}_j^{\top}\frac{\B{X}\B{v}}{\|\B{X}\B{v}\|_2}\right)^2-nl_{\sigma}^2\tilde{c}_{\lambda}\lambda |\hat{S}_j|\lesssim {u_{\sigma}^2k\log(2p/k)}\right\}
	\end{equation}
	where $\hat{S}_j,S_j^*$ are defined in~\eqref{sjdef}, $\text{Supp}(\B v)=\{i:v_i\neq 0\}$ denotes the support of a vector and $\tilde{c}_{\lambda}$ is an arbitrary but fixed universal constant. Under the assumptions of Theorem~\ref{lowdimthm}, let $c_{\lambda}\gtrsim 1/\tilde{c}_{\lambda}$ appearing in~\eqref{thm1-lambda} [Theorem~\ref{lowdimthm}] be sufficiently large. Then, 
	$$\p(\cap_{j\in[p]}E^{\lambda}_j)\geq 1-p^2\exp(-10k\log(p/k)).$$
\end{lemma}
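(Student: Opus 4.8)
The plan is to recognise the supremum inside $E^{\lambda}_j$ as the squared norm of an orthogonal projection of the Gaussian noise $\B{\varepsilon}_j$ onto a coordinate subspace, and then control it \emph{uniformly} over all possible realisations of the data–dependent support $\hat S_j$ via a union bound over subsets. Fix $j\in[p]$ and write $\mathcal{S}=\hat S_j\cup S_j^*\subseteq[p]\setminus\{j\}$ (recall $\hat S_j,S_j^*$ from~\eqref{sjdef}). Let $P_{\mathcal{S}}$ be the orthogonal projector onto the column span of $\B{X}$ restricted to columns in $\mathcal{S}$. Since the ratio $\B{\varepsilon}_j^\top\B{X}\B{v}/\norm{\B{X}\B{v}}$ is scale invariant, $\sup_{\mathrm{Supp}(\B{v})=\mathcal{S}}(\B{\varepsilon}_j^\top\B{X}\B{v}/\norm{\B{X}\B{v}})^2=\norm{P_{\mathcal{S}}\B{\varepsilon}_j}^2$. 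By Lemma~\ref{lem_reg_ext}, $\B{\varepsilon}_j\sim\mathcal{N}(\B{0},(\sigma_j^*)^2\B{I}_n)$ is independent of $\{\B{x}_i\}_{i\neq j}$, hence of $P_{\mathcal{S}}$; conditioning on those columns, the coordinates of $P_{\mathcal{S}}\B{\varepsilon}_j$ in an orthonormal basis of its range form a $\mathcal{N}(\B{0},(\sigma_j^*)^2\B{I}_d)$ vector with $d=\mathrm{rank}\le|\mathcal{S}|$.

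Next I would bound a single fixed support and then union. For any fixed $S\subseteq[p]\setminus\{j\}$ with $|S|=s$, applying Lemma~\ref{maximal} in an orthonormal basis of the column span of $\B{X}_{\cdot,S}$ (dimension $\le s$) and using $\sigma_j^*\le u_\sigma$ gives $\p(\norm{P_S\B{\varepsilon}_j}^2>t)\le\exp(-t/(8u_\sigma^2)+s\log 5)$ for every $t>0$ (first conditionally, then unconditionally). Since there are at most $\binom{p}{s}\le (ep/s)^s$ such sets, choosing $t_s=8u_\sigma^2\bigl(s\log(ep/s)+s(1+\log 5)+11k\log(p/k)+2\log p\bigr)$ and taking a union bound over all $S$ of size $s$ yields failure probability at most $\exp(-s-11k\log(p/k)-2\log p)$; summing the geometric series over $s=0,\dots,p-1$ and then over $j\in[p]$ bounds the total failure probability by $p^2\exp(-10k\log(p/k))$, with room to spare.

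On the complementary event one has $\norm{P_{\mathcal{S}}\B{\varepsilon}_j}^2\le t_{s}$ with $s=|\hat S_j\cup S_j^*|$, and it remains to massage $t_s$ into the stated form. Here I use: (i) $s\le|\hat S_j|+|S_j^*|\le|\hat S_j|+k$ by Assumption~\ref{assum1-4}; (ii) monotonicity of $x\mapsto x\log(ep/x)$ on $(0,p)$ together with $p/k>5$ (which makes $\log(ep/k)$ a bounded multiple of $\log(2p/k)$), giving $s\log(ep/s)\lesssim(|\hat S_j|+k)\log(2p/k)$; and (iii) $\log p\le\log(2p)\le k\log(2p/k)$, since $k\mapsto k\log(2p/k)$ is increasing on $[1,p/5]$. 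Hence $t_s\le C_0\,u_\sigma^2(|\hat S_j|+k)\log(2p/k)$ for an absolute constant $C_0$. Finally, because $\lambda=c_\lambda u_\sigma^2\log(2p/k)/(l_\sigma^2 n)$ from~\eqref{thm1-lambda} we have $n l_\sigma^2\tilde c_\lambda\lambda|\hat S_j|=\tilde c_\lambda c_\lambda u_\sigma^2\log(2p/k)|\hat S_j|$, so taking $c_\lambda\ge C_0/\tilde c_\lambda$ (the "$c_\lambda\gtrsim1/\tilde c_\lambda$ sufficiently large" hypothesis) absorbs the $|\hat S_j|$ term and leaves $\norm{P_{\mathcal{S}}\B{\varepsilon}_j}^2-n l_\sigma^2\tilde c_\lambda\lambda|\hat S_j|\lesssim u_\sigma^2 k\log(2p/k)$, which is exactly $E^{\lambda}_j$.

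The main obstacle is the last conversion step: the entropy term $s\log(ep/s)$ generated by the union bound over $\binom{p}{s}$ supports must be charged against $\lambda|\hat S_j|+u_\sigma^2 k\log(2p/k)$, and this is precisely where the sparsity bound $|S_j^*|\le k$, the regime $p/k>5$, and the freedom to pick $c_\lambda$ a large enough multiple of $1/\tilde c_\lambda$ all enter. Everything else — the projection identity, the Gaussian tail bound, and the geometric-series bookkeeping for the probabilities — is routine.
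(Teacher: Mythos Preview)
Your proposal is correct and follows essentially the same route as the paper: rewrite the supremum as $\norm{P_{S}\B\varepsilon_j}^2$, use the independence of $\B\varepsilon_j$ and $\B X_{[p]\setminus\{j\}}$ from Lemma~\ref{lem_reg_ext} to invoke Lemma~\ref{maximal} conditionally, then union--bound over all candidate supports and absorb the resulting entropy with the $\lambda$ term. The only bookkeeping difference is that the paper subtracts $n l_\sigma^2\tilde c_\lambda\lambda|S|$ \emph{inside} the tail probability before the union bound (and unions over candidates $S$ for $\hat S_j$, using dimension $\leq s+k$ for $S\cup S_j^*$), whereas you first obtain a uniform bound with an $s$--dependent threshold $t_s$ and subtract $n l_\sigma^2\tilde c_\lambda\lambda|\hat S_j|$ afterward (unioning directly over candidates for $\hat S_j\cup S_j^*$); these are equivalent rearrangements of the same argument.
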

\begin{proof}[\textbf{Proof of Lemma~\ref{new-lemma-lambda}}.]
Suppose $\B{\Phi}_S\in\R^{n\times |S|}$ is an orthonormal basis for the column span of $\B{X}_S$ for $S\subseteq[p]$. By Lemma~\ref{lem_reg_ext}, if $j\notin S$, then $\B{\varepsilon}_j$ and $\B{X}_{[p]\setminus\{j\}}$ are independent. As a result, we have the conditional distribution  $\B{\Phi}_{S}^\top\B{\varepsilon}_{j}|\B{X}_{[p]\setminus\{j\}}\stackrel{}{\sim}\mathcal{N}(0,({\sigma_j^*})^2\B{I}_{|S|})$. Given this fact, one has for $t>0$ and a fixed $j\in[p]$,
\begin{align}
	&  \p\left(\sup_{\substack{ \B{v}\in\R^{p}\\ \text{Supp}(\B{v})=\hat{S}_j\cup S_j^*}}\left(\B{\varepsilon}_j^{\top}\frac{\B{X}\B{v}}{\|\B{X}\B{v}\|_2}\right)^2  > t+nl_{\sigma}^2\tilde{c}_{\lambda}\lambda |\hat{S}_j|\right) \nonumber\\
	\leq & \p\left(\max_{s\in[p-1]}\max_{\substack{\CS\subseteq[p]\setminus\{j\} \\ |\CS|= s}}\sup_{\B v\in\R^{|S\cup S_j^*|}}\left[\left(\B{\varepsilon}_j^{\top}\frac{\B{X}_{\CS\cup S_j^*}\B{v}}{\|\B{X}_{\CS\cup S_j^*}\B{v}\|_2}\right)^2-nl_{\sigma}^2\tilde{c}_{\lambda}\lambda |S|\right] > t\right) \nonumber\\
	\stackrel{(a)}{=}& \p\left(\left.\max_{s\in[p-1]}\max_{\substack{\CS\subseteq[p]\setminus\{j\} \\ |\CS|= s}}\sup_{\B v\in\R^{|S\cup S_j^*|}}\left[\left(\B{\varepsilon}_j^{\top}\frac{\B{X}_{\CS\cup S_j^*}\B{v}}{\|\B{X}_{\CS\cup S_j^*}\B{v}\|_2}\right)^2-nl_{\sigma}^2\tilde{c}_{\lambda}\lambda |S|\right] > t\right\vert \B{X}_{[p]\setminus\{j\}}\right) \nonumber\\
	\stackrel{(b)}{\leq} &\p\left(\left.\max_{s\in[p-1]}\max_{\substack{\CS\subseteq[p]\setminus\{j\} \\ |\CS|= s}}\sup_{\|\B{\alpha}\|_2=1}\left[\left(\B{\varepsilon}_j^{\top}\B{\Phi}_{\CS\cup S_j^*}\B{\alpha}\right)^2 -nl_{\sigma}^2\tilde{c}_{\lambda}\lambda |S|\right]> t\right\vert \B{X}_{[p]\setminus\{j\}}\right) \nonumber\\
	\stackrel{(c)}{\leq}  & \sum_{s=1}^p \sum_{\substack{\CS\subseteq[p]\setminus\{j\} \\ |\CS|= s}}\p\left(\left.\sup_{\|\B{\alpha}\|_2=1}\left(\B{\varepsilon}_j^{\top}\B{\Phi}_{\CS}\B{\alpha}\right)^2 > t+nl_{\sigma}^2\tilde{c}_{\lambda}\lambda s\right\vert \B{X}_{[p]\setminus\{j\}}\right) \nonumber \\
	\stackrel{(d)}{\leq}& \sum_{s=1}^p \sum_{\substack{\CS\subseteq[p]\setminus\{j\} \\ |\CS|= s}} \exp\left(-\frac{t+n\tilde{c}_{\lambda}\lambda l_{\sigma}^2 s}{8({\sigma_j^*})^2}+(s+k)\log 5\right) \nonumber \\
	\stackrel{(e)}{\leq}& \sum_{s=1}^p \sum_{\substack{\CS\subseteq[p]\setminus\{j\} \\ |\CS|= s}} \exp\left(-\frac{t}{8({\sigma_j^*})^2}-5s\log(2p/k) + k\log 5\right) \nonumber \\
	\stackrel{(f)}{\leq}& \sum_{s=1}^p\left({ep}\right)^{s}\exp\left(-\frac{t}{8({\sigma_j^*})^2}-5s\log (2p/k)+k\log 5\right) \nonumber \\
	\leq &p\exp\left(-\frac{t}{8u_{\sigma}^2}+4k\log(2p/k)\right)\label{thm2-ineq1.555}
\end{align}
where $(a)$ is due to independence of $\B{\varepsilon}_j$ and $\B{X}_{[p]\setminus\{j\}}$ as discussed above, $(b)$ is true as $\B{X}_S\B{v}$ is in the column span of $\B{\Phi}_S$ and $\|\B\alpha\|_2=1$, $(c)$ is by union bound, $(d)$ is due to Lemma~\ref{maximal} and the conditional distribution $\B{\Phi}_{S}^\top\B{\varepsilon}_{j}|\B{X}_{[p]\setminus\{j\}}\stackrel{}{\sim}\mathcal{N}(0,({\sigma_j^*})^2\B{I}_{|S|})$ discussed above, $(e)$ is by choice of $\lambda$ in~\eqref{thm1-lambda} and taking $c_{\lambda}\gtrsim 1/\tilde{c}_{\lambda}$, and $(f)$ is due
to the inequality ${p\choose k}\leq (ep/k)^k$. Suppose we take 
$$t=8cu_{\sigma}^2k\log(2p/k)$$
then from~\eqref{thm2-ineq1.555}, we have:
\begin{equation}
	\p\left(\sup_{\substack{ \B{v}\in\R^{p}\\ \text{Supp}(\B{v})=\hat{S}_j\cup S_j^*}}\left(\B{\varepsilon}_j^{\top}\frac{\B{X}\B{v}}{\|\B{X}\B{v}\|_2}\right)^2 -nl_{\sigma}^2\tilde{c}_{\lambda}\lambda |\hat{S}_j|> 8cu_{\sigma}^2k\log(2p/k)\right) \leq p\exp\left(-(c-4)k\log(2p/k)\right).
\end{equation}
Take $c$ sufficiently large and by union bound over $j\in[p]$, we have 
$$\p(\cap_{j\in[p]}E^{\lambda}_j)\geq 1-p^2\exp(-10k\log(p/k)).$$
\end{proof}

\begin{lemma}\label{thm1techlem2}
	Let $\B{y}_j^*,\hat{\B{y}}_j$ be defined as in \eqref{yjdef} and $\tilde{c}_{\lambda}$ be as in Lemma~\ref{new-lemma-lambda}. Let the event $E_2$ be defined as
	$$E_2=\left\{  \frac{1}{2n u_{\sigma}^2}\sum_{j\in[p]} \left\Vert\sum_{i:i\neq j} (\beta_{ij}^*-\hat{\beta}_{ij})\B{x}_i\right\Vert_2^2 \lesssim \sum_{j\in[p]} \frac{ \|\hat{\B{y}}_j\|_2^2-\|\B{y}_j^*\|_2^2}{n\hat{\sigma}_j^2}+\frac{u_{\sigma}^2kp\log(2p/k)}{l_{\sigma}^2n}+\tilde{c}_{\lambda}\lambda\sum_{j\in[p]}|\hat{S}_j|\right\}.$$
	Under the assumptions of Theorem~\ref{lowdimthm}, we have:
	$$\p(E_2)\geq 1-p^2\exp(-10k\log(p/k)).$$
\end{lemma}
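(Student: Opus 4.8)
The plan is to work entirely on the event $\bigcap_{j\in[p]}E_j^\lambda$ of Lemma~\ref{new-lemma-lambda}; on that event the claimed inequality will follow from a purely deterministic manipulation of a ``basic inequality'' for the $j$-th regression residual, so the probability bound $1-p^2\exp(-10k\log(p/k))$ transfers verbatim.

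First I would introduce $\B v_j\in\R^p$ with $(\B v_j)_i=\hat\beta_{ij}-\beta^*_{ij}$ for $i\neq j$ and $(\B v_j)_j=0$, so that $\mathrm{Supp}(\B v_j)\subseteq \hat S_j\cup S_j^*$ and, by Lemma~\ref{lem_reg_ext}, $\hat{\B y}_j=\B y_j^*-\B X\B v_j=\B\varepsilon_j-\B X\B v_j$. Expanding $\|\hat{\B y}_j\|_2^2$ gives the identity
$$b_j^2=\Delta_j+2b_jw_j,\qquad b_j:=\|\B X\B v_j\|_2,\quad \Delta_j:=\|\hat{\B y}_j\|_2^2-\|\B y_j^*\|_2^2,\quad w_j:=\B\varepsilon_j^\top\B X\B v_j/\|\B X\B v_j\|_2$$
(terms with $\B X\B v_j=\B 0$ vanish and are discarded). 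Since $\mathrm{Supp}(\B v_j)\subseteq\hat S_j\cup S_j^*$, the definition of $E_j^\lambda$ (its supremum dominates $w_j^2$) gives $w_j^2\le nl_\sigma^2\tilde c_\lambda\lambda|\hat S_j|+Cu_\sigma^2k\log(2p/k)$ for a universal constant $C$.

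Next I would isolate $b_j^2$: from $b_j^2=\Delta_j+2b_jw_j\le\Delta_j+\tfrac12b_j^2+2w_j^2$ we get $b_j^2\le2\Delta_j+4w_j^2$, hence $b_j^2/(2u_\sigma^2)\le\Delta_j/u_\sigma^2+2w_j^2/u_\sigma^2$. The one delicate point is replacing $\Delta_j/u_\sigma^2$ by $\Delta_j/\hat\sigma_j^2$: when $\Delta_j\ge0$ this is immediate from $\hat\sigma_j^2\le u_\sigma^2$, but when $\Delta_j<0$ the inequality points the wrong way. There I would use that the identity forces $|\Delta_j|=2b_jw_j-b_j^2=w_j^2-(b_j-w_j)^2\le w_j^2$ whenever $\Delta_j<0$, which together with $l_\sigma^2\le\hat\sigma_j^2\le u_\sigma^2$ (the box constraint in Problem~\eqref{main-sup} with $\ell=l_\sigma^2,L=u_\sigma^2$) gives $\Delta_j/u_\sigma^2\le\Delta_j/\hat\sigma_j^2+w_j^2/l_\sigma^2$ in all cases. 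Combining and using $l_\sigma\le u_\sigma$ (Assumption~\ref{assum1-1}) yields $b_j^2/(2u_\sigma^2)\le\Delta_j/\hat\sigma_j^2+3w_j^2/l_\sigma^2$.

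Finally I would sum over $j\in[p]$, divide by $n$, and insert the bound on $w_j^2$: $\frac1n\sum_j 3w_j^2/l_\sigma^2\le 3\tilde c_\lambda\lambda\sum_j|\hat S_j|+3Cp\,u_\sigma^2k\log(2p/k)/(l_\sigma^2n)$, which is exactly the right-hand side of $E_2$ with the constants $3,3C$ absorbed into $\lesssim$ (the first term $\sum_j\Delta_j/(n\hat\sigma_j^2)$ survives unchanged). Every step being deterministic given $\bigcap_jE_j^\lambda$, the stated probability follows. The main obstacle — indeed the only non-routine step — is the sign bookkeeping around $\Delta_j<0$, where naively passing from $1/u_\sigma^2$ to $1/\hat\sigma_j^2$ fails and one must exploit $|\Delta_j|\le w_j^2$; everything else is Cauchy--Schwarz, Young's inequality, and the variance box constraint.
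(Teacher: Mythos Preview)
Your proposal is correct and follows the same overall strategy as the paper: work on the event $\bigcap_{j}E_j^\lambda$ from Lemma~\ref{new-lemma-lambda}, expand $\|\hat{\B y}_j\|_2^2-\|\B y_j^*\|_2^2$ via $\hat{\B y}_j=\B\varepsilon_j-\B X\B v_j$, apply Young's inequality to the cross term, and bound $w_j^2$ using $E_j^\lambda$.

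The one difference is the order in which you normalize. The paper divides the deterministic inequality $\Delta_j\ge \tfrac12 b_j^2-2w_j^2$ by $n\hat\sigma_j^2$ \emph{first}, obtaining
\[
\frac{b_j^2}{2nu_\sigma^2}\;\le\;\frac{b_j^2}{2n\hat\sigma_j^2}\;\le\;\frac{\Delta_j}{n\hat\sigma_j^2}+\frac{2w_j^2}{n\hat\sigma_j^2}\;\le\;\frac{\Delta_j}{n\hat\sigma_j^2}+\frac{2w_j^2}{nl_\sigma^2},
\]
using only $l_\sigma^2\le\hat\sigma_j^2\le u_\sigma^2$; no sign analysis of $\Delta_j$ is ever needed. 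You instead divide by $u_\sigma^2$ first and then pass to $\hat\sigma_j^2$, which is what creates the ``delicate point'' you flag. Your fix via $|\Delta_j|\le w_j^2$ when $\Delta_j<0$ is valid, but the obstacle is self-inflicted and disappears with the paper's ordering.
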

\begin{proof}[\textbf{Proof of Lemma~\ref{thm1techlem2}}.]
The proof of this lemma is on the event $\cap_j E_j^{\lambda}$ defined in Lemma~\ref{new-lemma-lambda} which happens with probability at least $1-p^2\exp(-10k\log(p/k))$.

Note that (deterministically) one has
\begin{align}
	\|\hat{\B{y}}_j\|_2^2-\|\B{y}_j^*\|_2^2 & = \left\Vert\sum_{i:i\neq j} (\beta_{ij}^*-\hat{\beta}_{ij})\B{x}_i+\B{\varepsilon}_j\right\Vert_2^2 - \|\B{\varepsilon}_j\|_2^2 \nonumber\\
	& = \left\Vert\sum_{i:i\neq j} (\beta_{ij}^*-\hat{\beta}_{ij})\B{x}_i\right\Vert_2^2 + 2\B{\varepsilon}_j^{\top}\sum_{i\neq j} (\beta_{ij}^*-\hat{\beta}_{ij})\B{x}_i \nonumber \\
	& = \left\Vert\sum_{i:i\neq j} (\beta_{ij}^*-\hat{\beta}_{ij})\B{x}_i\right\Vert_2^2 + 2\B{\varepsilon}_j^{\top}\frac{\sum_{i\neq j} (\beta_{ij}^*-\hat{\beta}_{ij})\B{x}_i}{\|\sum_{i:i\neq j} (\beta_{ij}^*-\hat{\beta}_{ij})\B{x}_i\|_2}\|\sum_{i\neq j} (\beta_{ij}^*-\hat{\beta}_{ij})\B{x}_i\|_2 \nonumber \\
	& \stackrel{(A)}{\geq}  \frac{1}{2}\left\Vert\sum_{i:i\neq j} (\beta_{ij}^*-\hat{\beta}_{ij})\B{x}_i\right\Vert_2^2 - 2\left(\B{\varepsilon}_j^{\top}\frac{\sum_{i:i\neq j} (\beta_{ij}^*-\hat{\beta}_{ij})\B{x}_i}{\|\sum_{i:i\neq j} (\beta_{ij}^*-\hat{\beta}_{ij})\B{x}_i\|_2}\right)^2\label{thm2-ineq2}
\end{align}
where $(A)$ is by the inequality $2ab\geq -2a^2-b^2/2$.

Note that
\begin{align}
	&\p\left(\frac{1}{n\hat{\sigma}_j^2}\left(\B{\varepsilon}_j^{\top}\frac{\sum_{i:i\neq j} (\beta_{ij}^*-\hat{\beta}_{ij})\B{x}_i}{\|\sum_{i:i\neq j} (\beta_{ij}^*-\hat{\beta}_{ij})\B{x}_i\|_2}\right)^2-{\tilde{c}_{\lambda}\lambda}|\hat{S}_j| > \frac{t}{n l_{\sigma}^2}\right) \nonumber \\ \stackrel{(a)}{\leq} &   
	\p\left(\frac{1}{nl_{\sigma}^2}\left(\B{\varepsilon}_j^{\top}\frac{\sum_{i:i\neq j} (\beta_{ij}^*-\hat{\beta}_{ij})\B{x}_i}{\|\sum_{i:i\neq j} (\beta_{ij}^*-\hat{\beta}_{ij})\B{x}_i\|_2}\right)^2-{\tilde{c}_{\lambda}\lambda}|\hat{S}_j| > \frac{t}{n l_{\sigma}^2}\right) \nonumber \\ \leq &  \p\left(\sup_{\substack{ \B{v}\in\R^{p}\\ \text{Supp}(\B{v})=\hat{S}_j\cup S_j^*}}\left(\B{\varepsilon}_j^{\top}\frac{\B{X}\B{v}}{\|\B{X}\B{v}\|_2}\right)^2 -nl_{\sigma}^2\tilde{c}_{\lambda}\lambda |\hat{S_j}| > t\right) 
\end{align}
where $(a)$ is by $\hat{\sigma}_j\geq l_{\sigma}$ and the second inequality is true as 
$$\left(\B{\varepsilon}_j^{\top}\frac{\sum_{i:i\neq j} (\beta_{ij}^*-\hat{\beta}_{ij})\B{x}_i}{\|\sum_{i:i\neq j} (\beta_{ij}^*-\hat{\beta}_{ij})\B{x}_i\|_2}\right)^2\leq \sup_{\substack{ \B{v}\in\R^{p}\\ \text{Supp}(\B{v})=\hat{S}_j\cup S_j^*}}\left(\B{\varepsilon}_j^{\top}\frac{\B{X}\B{v}}{\|\B{X}\B{v}\|_2}\right)^2. $$
As a result, by Lemma~\ref{new-lemma-lambda} on event $\cap_j E_j^{\lambda}$,
\begin{align}
	\frac{1}{n\hat{\sigma}_j^2}\left(\B{\varepsilon}_j^{\top}\frac{\sum_{i:i\neq j} (\beta_{ij}^*-\hat{\beta}_{ij})\B{x}_i}{\|\sum_{i:i\neq j} (\beta_{ij}^*-\hat{\beta}_{ij})\B{x}_i\|_2}\right)^2&\lesssim \frac{1}{n\hat{\sigma}_j^2}\left(u_{\sigma}^2k\log(2p/k)+nl_{\sigma}^2\tilde{c}_{\lambda}\lambda|\hat{S}_j|\right) \nonumber \\
	& \lesssim \frac{u_{\sigma}^2k\log(2p/k)}{nl_{\sigma}^2} + \tilde{c}_{\lambda}\lambda |\hat{S}_j|\label{thm2-ineq3}
\end{align}
with high probability. 

By~\eqref{thm2-ineq2} and~\eqref{thm2-ineq3},
\begin{align*}
	\sum_{j\in[p]}\frac{1}{2nu_{\sigma}^2}\left\Vert\sum_{i:i\neq j} (\beta_{ij}^*-\hat{\beta}_{ij})\B{x}_i\right\Vert_2^2 & \leq \sum_{j\in[p]}\frac{1}{2n\hat{\sigma}_j^2}\left\Vert\sum_{i:i\neq j} (\beta_{ij}^*-\hat{\beta}_{ij})\B{x}_i\right\Vert_2^2 \\ 
	&\leq \sum_{j\in [p]}\left[ \frac{ \|\hat{\B{y}}_j\|_2^2-\|\B{y}_j^*\|_2^2}{n\hat{\sigma}_j^2}+\frac{2}{n\hat{\sigma}_j^2}\left(\B{\varepsilon}_j^{\top}\frac{\sum_{i:i\neq j} (\beta_{ij}^*-\hat{\beta}_{ij})\B{x}_i}{\|\sum_{i:i\neq j} (\beta_{ij}^*-\hat{\beta}_{ij})\B{x}_i\|_2}\right)^2\right] \\
	& \lesssim\sum_{j\in [p]} \frac{ \|\hat{\B{y}}_j\|_2^2-\|\B{y}_j^*\|_2^2}{n\hat{\sigma}_j^2} +\frac{u_{\sigma}^2kp\log(2p/k)}{l_{\sigma}^2n}+\tilde{c}_{\lambda}\lambda \sum_{j\in[p]} |\hat{S}_j|
\end{align*}
with high probability.
\end{proof}

\begin{lemma}\label{E3lemma}
	Let the event $E_3$ be defined as
	\begin{equation}
		E_3=\left\{ \frac{1}{n} \sum_{\in[p]} \left\Vert\sum_{i:i\neq j} (\beta_{ij}^*-\hat{\beta}_{ij})\B{x}_i\right\Vert_2^2  \gtrsim \sum_{j\in[p]} \sum_{i:i\neq j} (\beta_{ij}^*-\hat{\beta}_{ij})^2\right\}.
	\end{equation}
	Then, under the assumptions of Theorem~\ref{lowdimthm}, 
	$$\p(E_3)\geq 1-\exp(-10p).$$
\end{lemma}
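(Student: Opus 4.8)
The plan is to reduce the event $E_3$ to a single lower bound on the smallest singular value of the full data matrix $\B{X}$ and then invoke Lemma~\ref{minnorm2}. For each $j\in[p]$, let $\B{v}^{(j)}\in\R^p$ be the vector with $v^{(j)}_i=\beta^*_{ij}-\hat\beta_{ij}$ for $i\neq j$ and $v^{(j)}_j=0$; then $\sum_{i\neq j}(\beta^*_{ij}-\hat\beta_{ij})\B{x}_i=\B{X}\B{v}^{(j)}$ and $\sum_{i\neq j}(\beta^*_{ij}-\hat\beta_{ij})^2=\norm{\B{v}^{(j)}}_2^2$. Since $\norm{\B{X}\B{v}}_2^2\geq\sigma_{\min}(\B{X})^2\norm{\B{v}}_2^2$ holds deterministically for every $\B{v}\in\R^p$, and in particular for the (data-dependent) vectors $\B{v}^{(j)}$, summing over $j$ gives
\[
\frac1n\sum_{j\in[p]}\bignorm{\sum_{i\neq j}(\beta^*_{ij}-\hat\beta_{ij})\B{x}_i}_2^2\ \geq\ \frac{\sigma_{\min}(\B{X})^2}{n}\sum_{j\in[p]}\sum_{i\neq j}(\beta^*_{ij}-\hat\beta_{ij})^2.
\]
Thus it suffices to show $\sigma_{\min}(\B{X})^2/n\gtrsim 1$ with probability at least $1-\exp(-10p)$; note that no union bound over $j$ or over directions is needed, since a single bound on $\sigma_{\min}(\B{X})$ controls all the terms at once.

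For the singular-value bound I would apply Lemma~\ref{minnorm2} with $\B{G}=\B{\Sigma}^*$ (the rows of $\B{X}$ are iid $\mathcal{N}(\B{0},\B{\Sigma}^*)$) and $\bar\sigma=\kappa$, which is legitimate because Assumption~\ref{assum1-5} applied with $S=[p]$ gives $\lambda_{\min}(\B{\Sigma}^*)\geq\kappa^2$; the requirement $n\geq p$ of that lemma holds since $n\gtrsim kp\log p$. Taking $t=c\sqrt p$ for a universal constant $c$ large enough that $Cc^2\geq 10$, Lemma~\ref{minnorm2} yields, with probability at least $1-\exp(-Cc^2 p)\geq 1-\exp(-10p)$,
\[
\frac{1}{\sqrt n}\sigma_{\min}(\B{X})\ \gtrsim\ \kappa\Bigl(1-c_0(1+c)\sqrt{p/n}\Bigr).
\]
Because $n\gtrsim kp\log p$, we have $\sqrt{p/n}\lesssim 1/\sqrt{k\log p}$, which can be made smaller than $1/(2c_0(1+c))$ by choosing the universal constant hidden in $n\gtrsim kp\log p$ large enough; hence the bracket is at least $1/2$ and $\sigma_{\min}(\B{X})/\sqrt n\gtrsim\kappa\gtrsim 1$, using $\kappa^2\gtrsim 1$ from Assumption~\ref{assum1-5}. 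Squaring and combining with the previous display proves $\p(E_3)\geq 1-\exp(-10p)$.

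This is essentially a direct application of Lemma~\ref{minnorm2}; the only points needing care are (i)~verifying its hypotheses — the Gaussian-rows structure of $\B{X}$ and $n\geq p$ — and identifying $\bar\sigma=\kappa$ through Assumption~\ref{assum1-5}; (ii)~choosing $t\asymp\sqrt p$ so that the failure probability is at most $\exp(-10p)$ while the deviation term $c_0(\sqrt{p/n}+t/\sqrt n)$ stays bounded away from $1$, which is where the sample-size condition $n\gtrsim kp\log p$ is used; and (iii)~observing that the conclusion holds simultaneously for all the random error vectors $\B{v}^{(j)}$ because it follows deterministically from one high-probability bound on $\sigma_{\min}(\B{X})$. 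I expect the balancing of $t$ against the sample size in step~(ii) to be the only mildly delicate point.
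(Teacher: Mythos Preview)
Your proof is correct and is in fact cleaner than the paper's. The paper defines the event
\[
A=\bigl\{\sigma_{\min}(\B{X}_S)\gtrsim\sqrt{n}\ \text{for all }S\subseteq[p]\bigr\},
\]
applies Lemma~\ref{minnorm2} to each submatrix $\B{X}_S$ separately, and then takes a union bound over all $2^p$ subsets (which is why the constant $\tilde c$ in $t=\tilde c\sqrt{p}$ must also beat the $\log 2$ coming from $2^p$). Afterwards, on the event $A$, the paper restricts each error vector to its (random) support $\CS_j$ and uses $\sigma_{\min}(\B X_{\CS_j})$ to get the coordinatewise lower bound.

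You observe, correctly, that this subsetting is unnecessary: since $\norm{\B X\B v}_2\ge\sigma_{\min}(\B X)\norm{\B v}_2$ holds for \emph{every} $\B v\in\R^p$, a single application of Lemma~\ref{minnorm2} to the full matrix $\B X$ already controls all the (data-dependent) error vectors $\B v^{(j)}$ simultaneously, with no union bound. The hypotheses you check --- $n\ge p$, $\lambda_{\min}(\B\Sigma^*)\ge\kappa^2$ from Assumption~\ref{assum1-5} with $S=[p]$, and the choice $t=c\sqrt p$ with $Cc^2\ge 10$ --- are exactly right, and the balancing in step~(ii) is handled correctly by the $n\gtrsim kp\log p$ assumption. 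Your route buys a shorter argument and a slightly smaller constant in $t$ (no $2^p$ to absorb); the paper's route yields a formally stronger event $A$ (uniform control over all submatrices), but that extra uniformity is not used anywhere else in the proof of Theorem~\ref{lowdimthm}.
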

\begin{proof}[\textbf{Proof of Lemma~\ref{E3lemma}}.]
Let us define the event $A$ as
\begin{equation}\label{defn-A}
	A=\left\{\sigma_{\min}(\B{X}_S)\gtrsim \sqrt{n}: S\subseteq[p]\right\}
\end{equation}
where $\B{X}_S$ is the submatrix of $\B{X}$ with columns sampled from $S$.
By Lemma~\ref{minnorm2}, take $t=\tilde{c}\sqrt{p}$ for some $\tilde{c}>0$. Then, for a given $S\subseteq[p]$, 
\begin{align}
	\frac{1}{\sqrt{n}}\sigma_{\min}(\B{X}_S)&\gtrsim \sqrt{\lambda_{\min}(\B{\Sigma}^*_{S,S})}\left(1-c_0\left(\sqrt{\frac{|S|}{n}}+\tilde{c}\sqrt{\frac{p}{n}}\right)\right)\nonumber \\
	&\stackrel{(a)}{\gtrsim} \left(1-c_0(1+\tilde{c})\sqrt{\frac{p}{n}}\right)
\end{align}
with probability at least $1-\exp(-C\tilde{c}^2 p)$, where $(a)$ is by Assumption~\ref{assum1-5}. 

Note that:
\begin{align}
	\p(A^c)& \leq \sum_{S\subseteq[p]} \p\left(   \frac{1}{\sqrt{n}}\sigma_{\min}(\B{X}_S)\lesssim 
	\left(1-c_0(1+\tilde{c})\sqrt{\frac{p}{n}}\right)\right)\nonumber \\
	& \leq 2^p \exp(-C\tilde{c}p) =\exp((-C\tilde{c}+\log 2)p).
\end{align}
Take $\tilde{c}$ large enough such that $(-C\tilde{c}+\log 2)\leq -10$ and $n\gtrsim p$ such that 
$$\left(1-c_0(1+\tilde{c})\sqrt{\frac{p}{n}}\right)\geq 0.1$$
so we achieve $\p(A)\geq 1-\exp(-10p)$.

On event $A$ as defined in~\eqref{defn-A}, we have
\begin{align}
	\frac{1}{n} \sum_{\in[p]} \left\Vert\sum_{i:i\neq j} (\beta_{ij}^*-\hat{\beta}_{ij})\B{x}_i\right\Vert_2^2 & = \frac{1}{n} \sum_{j\in[p]} \left\Vert\B{X}_{\CS_j}(\hat{\B{\beta}}_{\CS_j,j}-\B{\beta}^*_{\CS_j,j})\right\Vert_2^2\nonumber\\
	& \geq \frac{1}{n} \sum_{j\in[p]} \sigma^2_{\min}(\B{X}_{\CS_j})\left\Vert \hat{\B{\beta}}_{\CS_j,j}-\B{\beta}^*_{\CS_j,j}\right\Vert_2^2 \nonumber \\
	& \gtrsim \sum_{j\in[p]} \sum_{i:i\neq j} (\beta_{ij}^*-\hat{\beta}_{ij})^2\label{thm1proofineq1}
\end{align}
where $\B{\beta}_{S_j,j}\in\R^{|S_j|}$ is the vector containing the values $\{\beta_{i,j}\}$ for $i\in S_j$. The last inequality above is a result of event $A$.
\end{proof}

\subsubsection{Proof of Theorem \ref{lowdimthm}}
The proof is on the intersection of events $E_1,E_2,E_3$ from Lemmas~\ref{thm1techlem1},~\ref{thm1techlem2} and~\ref{E3lemma} which happens with probability at least
\begin{equation}\label{thm1prob}
	1-\exp(-10p)-p(k/p)^{10}-p^2\exp(-10k\log(p/k)).
\end{equation}
On the intersection of events $E_1,E_2,E_3$, we have:
\begin{align}
	&    \sum_{j\in[p]}(\hat{\sigma}_j-\sigma^*_j)^2 + \frac{1}{u_{\sigma}^2}\sum_{j\in[p]} \sum_{i:i\neq j} (\beta_{ij}^*-\hat{\beta}_{ij})^2 \nonumber \\
	& \stackrel{(a)}{\lesssim} \sum_{j\in[p]}(\hat{\sigma}_j-\sigma^*_j)^2 +    \frac{1}{nu_{\sigma}^2} \sum_{\in[p]} \left\Vert\sum_{i:i\neq j} (\beta_{ij}^*-\hat{\beta}_{ij})\B{x}_i\right\Vert_2^2 \nonumber \\
	& \stackrel{(b)}{\lesssim} \sum_{j\in[p]}(\hat{\sigma}_j-\sigma^*_j)^2+\sum_{j\in[p]} \frac{ \|\hat{\B{y}}_j\|_2^2-\|\B{y}_j^*\|_2^2}{n\hat{\sigma}_j^2}+\frac{u_{\sigma}^2kp\log(2p/k)}{l_{\sigma}^2n}+\tilde{c}_{\lambda}\lambda\sum_{j\in[p]}|\hat{S}_j|\label{final-mid-helper}\\
	& \stackrel{(c)}{\lesssim} \frac{u_{\sigma}^2kp\log(2p/k)}{l_{\sigma}^2n} + \lambda\sum_{j\in[p]}|S_j^*|\nonumber \\
	& \lesssim \frac{u_{\sigma}^2kp\log(2p/k)}{l_{\sigma}^2n}
\end{align}
where $(a)$ uses description of event $E_3$ from Lemma~\ref{E3lemma}, $(b)$ is from event $E_2$ from Lemma~\ref{thm1techlem2} and $(c)$ is from event $E_1$ from Lemma~\ref{thm1techlem1}, and choosing $\tilde{c}_{\lambda}$ such that the coefficients of $\lambda|\hat{S}_j|$ from event $E_1$ and~\eqref{final-mid-helper} cancel each other. The final inequality follows from~\eqref{thm1-lambda} and Assumption~\ref{assum1-4}.

\subsection{Proof of Theorem \ref{lowdimthm2}}
\begin{proof}
Based on the change of variable $\theta_{jj}=1/\sigma_j^2$ and $\beta_{ij}=-
\theta_{ji}/\theta_{jj}$ for $\B{\Theta}^*,\hat{\B{\Theta}}$, one has for $i\neq j\in[p]$ 
\begin{align}
	|\hat{\theta}_{ji}-\theta_{ji}^*| & = \left\vert \frac{\hat{\beta}_{ij}}{\hat{\sigma}_j^2}- \frac{{\beta}^*_{ij}}{{({\sigma}_j^*})^2} \right\vert\nonumber \\
	& \stackrel{(a)}{\leq} \frac{|\hat{\beta}_{ij}({\sigma_j^*})^2-{\beta}^*_{ij}\hat{\sigma}_j^2|}{l_{\sigma}^4} \nonumber \\
	& \leq \frac{|(\hat{\beta}_{ij}-{\beta}^*_{ij})({\sigma_j^*})^2|+|{\beta}^*_{ij}(\hat{\sigma}_j^2-({\sigma_j^*})^2)|}{l_{\sigma}^4}\nonumber \\
	& \leq \frac{|(\hat{\beta}_{ij}-{\beta}^*_{ij})({\sigma_j^*})^2|}{l_{\sigma}^4} + \frac{|{\beta}^*_{ij}||\hat{\sigma}_j-{\sigma_j^*}||\hat{\sigma}_j+{\sigma_j^*}|}{l_{\sigma}^4}\nonumber\\
	& \lesssim \frac{|\hat{\beta}_{ij}-{\beta}^*_{ij}|u_{\sigma}^2}{l_{\sigma}^4} + \frac{|\hat{\sigma}_j-{\sigma_j^*}|u_{\sigma}^2}{l_{\sigma}^4}\label{thm2ineq1}
\end{align}
where $(a)$ is true as $\hat{\sigma}_j,\sigma_j^*\leq l_{\sigma}$ and the last inequality is due to Assumption~\ref{assum1-2}. 

Similarly, we can obtain a bound on the error of the diagonal entry:
\begin{align}
	|\hat{\theta}_{jj}-\theta_{jj}^*| & = \left\vert \frac{1}{\hat{\sigma}_j^2}- \frac{1}{{({\sigma}_j^*})^2} \right\vert\nonumber \\
	& \leq \frac{|({\sigma_j^*})^2-\hat{\sigma}_j^2|}{l_{\sigma}^4} \nonumber \\
	& \lesssim  \frac{|\hat{\sigma}_j-{\sigma_j^*}|u_{\sigma}^2}{l_{\sigma}^4}.\label{thm2ineq2}
\end{align}
As a result, on the event of Theorem~\ref{lowdimthm},
\begin{align}
	\left\Vert\hat{\B{\Theta}}-\B{\Theta}^*\right\Vert_F^2 & = \sum_{i\neq j\in[p]} |\hat{\theta}_{ji}-\theta_{ji}^*|^2 + \sum_{j\in[p]} |\hat{\theta}_{jj}-\theta_{jj}^*|^2\nonumber \\
	& \stackrel{(a)}{\lesssim} \frac{u_{\sigma}^4}{l_{\sigma}^8}\left[\sum_{i\neq j\in[p]} |\hat{\beta}_{ij}-\beta_{ij}^*|^2 + \sum_{j\in[p]} |\hat{\sigma}_{j}-\sigma_{j}^*|^2\right] \nonumber \\
	& \stackrel{(b)}{\lesssim}  \frac{(u_{\sigma}^6+u_{\sigma}^8)kp\log(2p/k)}{l_{\sigma}^{10}n}
\end{align}
with high probability, where $(a)$ is due to~\eqref{thm2ineq1} and~\eqref{thm2ineq2} and $(b)$ is because of Theorem~\ref{lowdimthm}.
\end{proof}

\subsection{Proof of Theorem~\ref{supthm}}\label{app:proof-supthm}

Let us introduce some notation that we will be using in this proof. 
\subsubsection{Notation} For $S\subseteq[p]$, we denote the projection matrix onto the column span of $\B{X}_S$ by $\B{P}_{\B{X}_{S}}$. Note that if $\B{X}_S$ has linearly independent columns, $\B{P}_{\B{X}_{S}}=\B{X}_{S}(\B{X}_{S}^{\top}\B{X}_{S})^{-1}\B{X}_{S}^{\top}$. In our case, as the data is drawn from a multivariate normal distribution with a full-rank covariance matrix, for any $S\subseteq[p]$ with $|S|<n$, $\B{X}_S$ has linearly independent columns with probability one. We define the operator norm of $\B{A}\in\R^{p_1\times p_2}$ as
\begin{equation*}
	\begin{aligned}
		\|\B{A}\|_{\text{op}}&=\max_{\substack {\B{x}\in\R^{p_2}\\\B{x}\neq 0}}\frac{\|\B{Ax}\|_2}{\|\B{x}\|_2}.
	\end{aligned}
\end{equation*}
The solution to the least squares problem with the support restricted to $S$,
\begin{equation}\label{limitedls}
	\min_{\B{\beta}_{S^c}=0}\frac{1}{n}\|\B{y}-\B{X\beta}\|_2^2
\end{equation}
for $\B{y}\in\R^n$ and $\B{X}\in\R^{n\times p}$ is given by $$\B{\beta}_{S}=(\B{X}_{S}^{\top}\B{X}_{S})^{-1}\B{X}_{S}^{\top}\B{y}.$$
Note that as in our case the data is drawn from normal distribution with a full-rank covariance matrix, $(\B{\beta}_S)_i\neq 0$ for $i\in S$ with probability one. Consequently, we denote the optimal objective in~\eqref{limitedls} by 
\begin{equation}\label{optimalobj}
	\CR_{S}(\B{y})= \frac{1}{n}\B{y}^{\top}(\B{I}_n-\B{P}_{\B{X}_{S}})\B{y}.
\end{equation}
For $S_1,S_2\subseteq [p]$, $\B{\Sigma}\in\R^{p\times p}$ positive definite and $S_0=S_2\setminus S_1$, we let
\begin{equation}\label{schur-defined}
	\schur{\B{\Sigma}}{S_1}{S_2} = \B{\Sigma}_{S_0,S_0}- \B{\Sigma}_{S_0,S_1}\B{\Sigma}^{-1}_{S_1,S_1}\B{\Sigma}_{S_1,S_0}.
\end{equation}
Note that $\schur{\B{\Sigma}}{S_1}{S_2}$ is the Schur complement of the matrix
\begin{equation}\label{gs1s2}
	\B{\Sigma}(S_1,S_2)=  \begin{bmatrix}
		\B{\Sigma}_{S_1,S_1} & \B{\Sigma}_{S_1,S_0} \\
		\B{\Sigma}_{S_0,S_1} & \B{\Sigma}_{S_0,S_0}
	\end{bmatrix}.
\end{equation}
Let $S_j^*,\hat{S}_j,t_j,\tilde{t}_j$ for $j\in[p]$ be defined as
\begin{equation}\label{sjdef-sup}
	\begin{aligned}
		\hat{\CS}_j & =\{i\in[p]:i\neq j,\hat{\beta}_{ij}\neq 0\},\\
		{\CS}_j^* & =\{i\in[p]:i\neq j,{\beta}_{ij}^*\neq 0\},\\
		t_j &= |S_j^*\setminus\hat{S}_j|,\\
		\bar{t}_j &= |\hat{S}_j\setminus S_j^*|,\\
		\tilde{t}_j &= \left\vert (S_j^*\setminus\hat{S}_j)\cap \{j+1,\cdots,p\}\right\vert.
	\end{aligned}
\end{equation}
Moreover, for $S\subseteq [p]$, let
$\CS_j^0=\CS_j^*\setminus \CS$ and $\tilde{\CS}^0_j=\CS_j^0\cap \{j+1,\cdots,p\}$.  We let $\B{\beta}^*_{\CS^0_j,j}$ to be the vector $\{\beta^*_{ij}\}_{i\in \CS^0_j}$ and $\hat{\B{\Sigma}}=\B{X}^\top\B{X}/n$.
Let us define for $j\in[p]$,
\begin{equation}\label{hdefine}
	h_j(\sigma,S) = \log(\sigma)+\frac{\mathcal{L}_{S}(\B{x}_j)}{2\sigma^2}.
\end{equation}
\subsubsection{Roadmap of proof} At optimality of Problem~\eqref{main-sup}, the optimal objective is given as 
$$\sum_{j=1}^p\left\{h_j(\hat{\sigma}_j,\hat{S}_j)+\lambda |\hat{S}_j|\right\}.$$
Similarly, if we fix the value of $z_{ij}$ to $z^*_{ij}$, the objective value is 
$$\sum_{j=1}^p\left\{h_j(\tilde{\sigma}_j,{S}_j^*)+\lambda |{S}_j^*|\right\}$$
where $\tilde{\sigma}_j^2$ are optimal variance values from Problem~\eqref{main-sup} on the underlying support. Next, we divide variables into two parts based on the value of $\mathcal{L}_{\hat{S}_j}$: 
\begin{align}
	\mathcal{J}=\left\{j\in[p]: \mathcal{L}_{\hat{S}_j}(\B{x}_j)\geq \ell\right\}.
\end{align}
Consider the function $x \mapsto f(x)$ 
$$f(x)=\log(x) + \frac{a}{2x^2}$$
on $x>0$ for a fixed $a>0$. The function is minimized for $x^2=a$. Therefore, for $j\in\mathcal{J}$ we have $\hat{\sigma}_j^2 = \mathcal{L}_{\hat{S}_j}(\B{x}_j)\geq \ell$. This leads to $h_j(\hat{\sigma}_j,\hat{S}_j)=\log(\mathcal{L}_{\hat{S}_j}(\B{x}_j))/2+1/2$. Moreover, 
$$f'(x)=\frac{1}{x}-\frac{a}{x^3}=\frac{1}{x^3}(x^2-a)\geq 0$$
for $x\geq \sqrt{a}$, showing $f(x)$ is minimized for $x=\sqrt{a}$ for $x\geq \sqrt{a}$. As a result, for $j\in \mathcal{J}^c$, we have $\hat{\sigma}_j^2=\ell$. The optimal objective of Problem~\eqref{main-sup} is given as
\begin{align}
	&\sum_{j=1}^p\left\{h_j(\hat{\sigma}_j,\hat{S}_j)+\lambda |\hat{S}_j|\right\}\nonumber \\ =&\sum_{j\in\mathcal{J}^c}\left\{h_j(\sqrt{\ell},\hat{S}_j)+\lambda|\hat{S}_j|\right\} + \sum_{j\in\mathcal{J}}\left\{h_j(\sqrt{\mathcal{L}_{\hat{S}_j}(\B{x}_j)},{\hat{S}_j})+\lambda|\hat{S}_j|\right\}\nonumber \\
	=& \sum_{j\in\mathcal{J}^c}\left\{h_j(\sqrt{\ell},\hat{S}_j)+\lambda|\hat{S}_j|\right\} + \sum_{j\in\mathcal{J}}\left\{\frac{1}{2}\log(\mathcal{L}_{\hat{S}_j}(\B{x}_j))+\frac{1}{2}+\lambda|\hat{S}_j|\right\}\label{h-twopart}.  
\end{align}
We also will show the optimal cost on the correct support is 
\begin{align}
	\sum_{j=1}^p\left\{h_j(\tilde{\sigma}_j,{S}_j^*)+\lambda |{S}_j^*|\right\}&= \sum_{j=1}^p\left\{\frac{1}{2}\log(\mathcal{L}_{{S}^*_j}(\B{x}_j))+\frac{1}{2}+\lambda|{S}_j^*|\right\}.
\end{align}
Our approach for this proof is to first show that: $\mathcal{J}^c=\emptyset$, and second, for $j\in\mathcal{J}$, the support is estimated correctly by comparing the objective value of optimal and correct support. \\
We note that by Assumption~\ref{assum2degree}, $t_j\leq k$. Let us define the following basic events for $j\in[p]$ and $S\subseteq[p]$:
\begin{equation}\label{bigblockevents}
	\begin{aligned}
		\mathcal{E}_1(j,S) &= \left\{ {(\B{\beta}^*_{\CS^0_j,j})}^{\top} (\schur{\hat{\B{\Sigma}}}{\CS}{\CS_j^*}) \B{\beta}^*_{\CS^0_j,j} \geq   0.2\eta \frac{|\tilde{\CS}^0_j|\log p}{n}\right\} \\
		\mathcal{E}_2(j,S) &= \left\{\frac{1}{n}\left\Vert\B{X}_{S_j^0}\B{\beta}^*_{S_j^0,j}\right\Vert_2^2\leq 4\frac{|S_j^0|}{k}\right\} \\
		\mathcal{E}_3(j,S) &= \left\{ \frac{1}{n}\B{\varepsilon}_j^{\top} (\B{I}_n-\B{P}_{\B{X}_{\CS}})\B{X}_{\CS^0_j}\B{\beta}^*_{\CS^0_j,j}\geq -c_{t_1}\sigma^*_j \sqrt{{(\B{\beta}^*_{\CS^0_j,j})}^{\top} (\schur{\hat{\B{\Sigma}}}{\CS}{\CS_j^*}) \B{\beta}^*_{\CS^0_j,j}}\sqrt{\frac{(|S_j^*\setminus S|+|S\setminus S_j^*|)\log p}{n}}\right\} \\
		\mathcal{E}_4(j,S) &= \left\{\B{\varepsilon}_j^{\top} (\B{P}_{\B{X}_{\CS}}-\B{P}_{\B{X}_{\CS_j^*}}) \B{\varepsilon}_j\leq  c_{t_2}(\sigma_j^*)^2(|S_j^*\setminus S|+|S\setminus S_j^*|)\log p\right\} \\
		\mathcal{E}_5(j,S) &= \left\{ -c_{t_3}(\sigma_j^*)^2k\log p\leq\B{\varepsilon}_j^{\top} \B{P}_{\B{X}_{\CS}} \B{\varepsilon}_j\leq  c_{t_3}(\sigma_j^*)^2k\log p\right\} \\
		\mathcal{E}_6(j) &= \left\{\CR_{\hat{\CS}_j}(\B{x}_j) \geq \CR_{\CS^*_j}(\B{x}_j)+\frac{3}{20}\eta\tilde{t}_j\frac{\log p}{n} -(\sigma_j^*)^2\frac{(t_j+\bar{t}_j)\log p}{n}(4c_{t_1}^2+c_{t_2})-\frac{4t_j}{k}\right\} \\
		\mathcal{E}_7(j) &= \left\{\ell<\frac{2}{3}(\sigma_j^*)^2 \leq \CR_{\CS^*_j}(\B{x}_j)\leq \frac{4}{3}(\sigma_j^*)^2\right\} 
	\end{aligned}
\end{equation}
for some numerical constants $c_{t_1},c_{t_2},c_{t_3}>0$. The following lemmas establish that the events defined above hold with high probability. The proof of some of these results are similar to results shown in~\citet{behdin2021integer}, building and improving upon the results of~\citet{fan2020best}. 

\subsubsection{Useful Lemmas}
\begin{lemma}\label{suptechlem1}
	Under the assumptions of Theorem~\ref{supthm}, we have
	\begin{equation}\label{lem8-1}
		\p\left(\bigcap_{j\in[p]}\bigcap_{\substack{S_j\subseteq[p]\setminus\{j\} \\|S_j|\leq k }}\mathcal{E}_1(j,S_j)\right)\geq 1-p^{-8}.
	\end{equation}
	and
	\begin{equation}\label{lem8-2}
		\p\left(\bigcap_{j\in[p]}\bigcap_{\substack{S_j\subseteq[p]\setminus\{j\}  }}\mathcal{E}_2(j,S_j)\right)\geq 1-p^{-8}
	\end{equation}
	where $\mathcal{E}_1,\mathcal{E}_2$ are defined in~\eqref{bigblockevents}.
\end{lemma}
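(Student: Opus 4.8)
The plan is to deduce both \eqref{lem8-1} and \eqref{lem8-2} from a single high-probability event that controls the extreme eigenvalues of the sample covariance $\hat{\B\Sigma}=\B X^\top\B X/n$ uniformly over all sparse coordinate subsets; everything after that event is deterministic bookkeeping with the model assumptions. Concretely, first I would fix $S'\subseteq[p]$ with $|S'|\le 2k$ and apply Lemma~\ref{minnorm2} to $\B X_{S'}$ (whose rows are i.i.d.\ $\mathcal N(\B 0,\B\Sigma^*_{S',S'})$) with $\B G=\B\Sigma^*_{S',S'}$ and $\bar\sigma=\kappa$ (legitimate since $\lambda_{\min}(\B\Sigma^*_{S',S'})\ge\kappa^2$ by Assumption~\ref{kappaassum}); combined with the analogous standard upper tail for $\sigma_{\max}(\B X_{S'})$, the bound $\lambda_{\max}(\B\Sigma^*_{S',S'})\le 3$ of Assumption~\ref{kappaassum}, and the choice $t=\tilde c\sqrt{k\log p}$ with $n=c_nk\log p$, this yields
\[
\tfrac23\kappa^2\ \le\ \lambda_{\min}(\hat{\B\Sigma}_{S',S'})\ \le\ \lambda_{\max}(\hat{\B\Sigma}_{S',S'})\ \le\ 4
\]
with probability at least $1-\exp(-C\tilde c^2 k\log p)$, provided $c_n$ is a sufficiently large universal constant so that the $O(1/\sqrt{c_n})$ concentration slack is negligible (this also guarantees $n\ge 2k$, so every restricted submatrix is nonsingular). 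Union-bounding over the at most $2p^{2k}$ admissible $S'$ and taking $\tilde c$ large enough that $k(C\tilde c^2-2)\ge 9$, the event $\mathcal G$ that the displayed inequalities hold for all $S'\subseteq[p]$ with $|S'|\le 2k$ occurs with probability at least $1-p^{-8}$.

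Next, on $\mathcal G$ I would read off the two families of events deterministically. For $\mathcal E_1(j,S_j)$ with $|S_j|\le k$: writing $\CS_j^0=\CS_j^*\setminus S_j$, we have $|S_j\cup\CS_j^0|=|S_j\cup\CS_j^*|\le 2k$ by Assumption~\ref{assum2degree}, and since $\schur{\hat{\B\Sigma}}{S_j}{\CS_j^*}$ is the Schur complement of the block $\hat{\B\Sigma}_{S_j,S_j}$ inside $\hat{\B\Sigma}_{(S_j\cup\CS_j^0),(S_j\cup\CS_j^0)}$, the elementary interlacing inequality $\lambda_{\min}(\text{Schur complement})\ge\lambda_{\min}(\text{full matrix})$ (evaluate the full quadratic form at $(-\B A^{-1}\B B u,u)$) gives $\schur{\hat{\B\Sigma}}{S_j}{\CS_j^*}\succeq\tfrac23\kappa^2\B I$; hence
\[
(\B\beta^*_{\CS_j^0,j})^\top\big(\schur{\hat{\B\Sigma}}{S_j}{\CS_j^*}\big)\B\beta^*_{\CS_j^0,j}\ \ge\ \tfrac23\kappa^2\,\|\B\beta^*_{\CS_j^0,j}\|_2^2\ \ge\ \tfrac23\kappa^2\,\|\B\beta^*_{\tilde{\CS}_j^0,j}\|_2^2 ,
\]
and since every $i\in\tilde{\CS}_j^0$ satisfies $i>j$ and $\beta^*_{ij}\ne0$, Assumption~\ref{betaminassum} gives $(\beta^*_{ij})^2\ge\beta_{\min}^2\ge\eta\log p/n$, so the right-hand side is at least $\tfrac23\kappa^2|\tilde{\CS}_j^0|\eta\log p/n>0.2\,\eta|\tilde{\CS}_j^0|\log p/n$ (using $\kappa^2>0.3$); this is exactly $\mathcal E_1(j,S_j)$, proving \eqref{lem8-1}. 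For $\mathcal E_2(j,S_j)$ with arbitrary $S_j\subseteq[p]\setminus\{j\}$: setting $S_j^0=\CS_j^*\setminus S_j$ we have $|S_j^0|\le|\CS_j^*|\le k$ (Assumption~\ref{assum2degree}) and
\[
\tfrac1n\|\B X_{S_j^0}\B\beta^*_{S_j^0,j}\|_2^2=(\B\beta^*_{S_j^0,j})^\top\hat{\B\Sigma}_{S_j^0,S_j^0}\B\beta^*_{S_j^0,j}\le\lambda_{\max}(\hat{\B\Sigma}_{S_j^0,S_j^0})\,\|\B\beta^*_{S_j^0,j}\|_2^2\le 4\cdot\tfrac{|S_j^0|}{k},
\]
using $\lambda_{\max}\le 4$ on $\mathcal G$ and $|\beta^*_{ij}|\le1/\sqrt k$ from Assumption~\ref{beta-bound} (so $\|\B\beta^*_{S_j^0,j}\|_2^2\le|S_j^0|/k$); this is $\mathcal E_2(j,S_j)$, and because it depends on $S_j$ only through $S_j^0\subseteq\CS_j^*$, the event $\mathcal G$ already covers all the relevant submatrices and no further union bound is needed, so \eqref{lem8-2} follows.

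The hard part will be Step~1: obtaining simultaneous control of $\lambda_{\min}$ and $\lambda_{\max}$ of $\hat{\B\Sigma}$ over all $\binom{p}{2k}$ sparse supports, with deviations small enough both to beat the $p^{2k}$ union-bound factor and to convert the population bounds $\kappa^2>0.3$, $\lambda_{\max}\le 3$ into the working thresholds $\tfrac23\kappa^2>0.2$, $\lambda_{\max}\le 4$. This is precisely where the sample-size assumption $n=c_nk\log p$ with $c_n$ a large universal constant is used. The remaining ingredients --- Schur-complement eigenvalue interlacing, applying the $\beta_{\min}$ lower bound only to the ``forward'' indices collected in $\tilde{\CS}_j^0$, and the crude $\|\B\beta^*\|_\infty\le1/\sqrt k$ estimate for $\mathcal E_2$ --- are elementary.
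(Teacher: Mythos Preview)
Your proposal is correct and follows essentially the same route as the paper: establish a high-probability event controlling the extreme eigenvalues of $\hat{\B\Sigma}_{S',S'}$ uniformly over $|S'|\le 2k$ via concentration plus a union bound, then use Schur-complement eigenvalue interlacing together with Assumption~\ref{betaminassum} for $\mathcal E_1$ and the $\lambda_{\max}$ bound with Assumption~\ref{beta-bound} for $\mathcal E_2$. The only cosmetic difference is that the paper invokes an operator-norm concentration bound (giving $\lambda_{\min}\ge\kappa^2-0.1$ via Weyl) whereas you appeal to Lemma~\ref{minnorm2} and an analogous upper-tail bound (giving $\lambda_{\min}\ge\tfrac23\kappa^2$); both yield the needed threshold $>0.2$ under $\kappa^2>0.3$.
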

\begin{proof}[\textbf{Proof of Lemma~\ref{suptechlem1}}.]
Let the events $\mathcal{E}_0(S)$ for $S\subseteq[p]$ with $|S|\leq 2k$ and $\mathcal{E}_0$ be defined as
\begin{equation*}
	\begin{aligned}
		\mathcal{E}_0(S)&=\left\{\left\|\hat{\B{\Sigma}}_{S,S}-\B{\Sigma}^*_{S,S}\right\|_{\text{op}} \lesssim \sqrt{\frac{k\log p}{n}}\right\},\\
		\mathcal{E}_0 &= \bigcap_{\substack{S\subseteq[p] \\  |S|\leq 2k }}\mathcal{E}_0(S).
	\end{aligned}
\end{equation*}
One has (for example, by Theorem~5.7 of~\citet{rigollet2015high} with $\delta = \exp(-11k\log p)$)
$$\p(\mathcal{E}_0(S))\geq 1-\exp(-11k\log p)$$
as $n=c_n k\log p$ is sufficiently large and by Assumption~\ref{kappaassum}, $\|\B{\Sigma}^*_{S,S}\|_{\text{op}}\lesssim 1$. As a result, by union bound
\begin{align*}
	\p(\mathcal{E}_0) &\geq 1-\sum_{\substack{ S\subseteq[ p] \\ |S|\leq 2k}}(1-\p(\mathcal{E}_0(S)))\geq 1-\sum_{t=1}^{2k}{p\choose t}\exp(-11k\log p) \\
	&\geq 1-\sum_{t=1}^{2k} p^{2k} p^{-11k} \geq 1-p\times p^{-9}=1-p^{-8}.
\end{align*}
The rest of the proof is on event $\mathcal{E}_0$. 

\noindent {\it Proof of~\eqref{lem8-1}:}
We first consider the proof of~\eqref{lem8-1}. Consequently, as $|S_j|,|S_j^*|\leq k$,
\begin{align}
	\|\hat{\B{\Sigma}}(\CS,\CS_j^*)-\B{\Sigma}^*(\CS,\CS_j^*)\|_{\text{op}} &  \leq c_b \sqrt{\frac{k\log p}{n}} :=\pi\label{stewineq2}
\end{align}
for some constant $c_b>0$ where $\B{\Sigma}(S_1,S_2)$ is defined in~\eqref{gs1s2}. Let $c_n$ be sufficiently large such that $\pi<0.1$. Therefore, one has 
\begin{align*}
	\lambda_{\min}(\schur{\hat{\B{\Sigma}}}{\CS}{\CS_j^*})\stackrel{(a)}{\geq} \lambda_{\min}({\hat{\B{\Sigma}}}({\CS},{\CS_j^*})) 
	&\stackrel{(b)}{\geq}  \lambda_{\min}({\B{\Sigma}^*}({\CS},{\CS_j^*})) - \|{\hat{\B{\Sigma}}}({\CS},{\CS_j^*}) - {\B{\Sigma}^*}({\CS},{\CS_j^*})\|_{\text{op}}  \geq\kappa^2 - 0.1>0.2
\end{align*}
where $(a)$ is by Corollary 2.3 of \citet{zhang2006schur}, $(b)$ is due to Weyl's inequality and the last inequality is by Assumption~\ref{kappaassum}. Finally,
\begin{align*}
	{(\B{\beta}^*_{\CS^0_j,j})}^{\top} (\schur{\hat{\B{\Sigma}}}{\CS}{\CS_j^*}) \B{\beta}^*_{\CS^0_j,j} \geq \lambda_{\min}(\schur{\hat{\B{\Sigma}}}{\CS}{\CS_j^*})\|\B{\beta}^*_{\CS^0_j,j}\|_2^2 \geq 0.2\|\B{\beta}^*_{\CS^0_j,j}\|_2^2\geq 0.2\eta \frac{|\tilde{\CS}_j^0|\log p}{n}
\end{align*}
where the last inequality is achieved by substituting $\beta_{\min}$ condition from Assumption~\ref{betaminassum}. This completes the proof of~\eqref{lem8-1}.

\noindent {\it Proof of~\eqref{lem8-2}:}
We now proceed to prove~\eqref{lem8-2}.  Note that by Weyl's inequality, 
$$\lambda_{\max}(\hat{\B{\Sigma}}_{S_j^0,S_j^0})\leq \lambda_{\max}(\B{\Sigma}^*_{S_j^0,S_j^0}) + \|\B{\Sigma}^*_{S_j^0,S_j^0}-\hat{\B{\Sigma}}_{S_j^0,S_j^0}\|_{\text{op}}\leq \lambda_{\max}(\B{\Sigma}^*_{S_j^0,S_j^0}) +0.1 \leq 4 $$
where the second inequality is due to event $\mathcal{E}_0$ (note that $|S_j^0|\leq k$)  and the last inequality is due to Assumption~\ref{kappaassum}. Finally, note that
$$\frac{1}{n}\left\Vert\B{X}_{S_j^0}\B{\beta}^*_{S_j^0,j}\right\Vert_2^2=(\B{\beta}^*_{S_j^0,j})^\top \frac{\B{X}_{S_j^0}^\top\B{X}_{S_j^0}}{n}\B{\beta}^*_{S_j^0,j}\leq \lambda_{\max}(\hat{\B{\Sigma}}_{S_j^0,S_j^0})\|\B{\beta}^*_{S_j^0,j}\|_2^2\leq 4\|\B{\beta}^*_{S_j^0,j}\|_2^2 \leq 4\frac{|S_j^0|}{k}$$
where the last inequality is due to Assumption~\ref{beta-bound}.
\end{proof}
\begin{lemma}\label{suptechlem2}
	Under the assumptions of Theorem~\ref{supthm}, we have
	\begin{equation}
		\p\left(\bigcap_{j\in[p]}\bigcap_{S\subseteq[p]\setminus \{j\} }\mathcal{E}_3(j,S)\right)\geq 1-2kp^{-7}
	\end{equation}
	where $\mathcal{E}_3$ is defined in~\eqref{bigblockevents}.
\end{lemma}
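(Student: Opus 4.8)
The plan is to fix $j\in[p]$ and $S\subseteq[p]\setminus\{j\}$, reduce the event $\mathcal{E}_3(j,S)$ to a one-dimensional Gaussian lower-tail estimate after conditioning on $\B X_{[p]\setminus\{j\}}$, and then take a union bound. First, note that if $\CS_j^*\subseteq S$ then $\CS^0_j:=\CS_j^*\setminus S=\emptyset$, so $\B\beta^*_{\CS^0_j,j}$ is the empty vector and both sides of the inequality defining $\mathcal{E}_3(j,S)$ equal $0$; thus $\mathcal{E}_3(j,S)$ holds deterministically, and it suffices to handle $S$ with $t_j:=|\CS_j^*\setminus S|\geq1$. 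For such $S$, condition on $\B X_{[p]\setminus\{j\}}$. By Lemma~\ref{lem_reg_ext}, $\B\varepsilon_j\sim\mathcal{N}(\B0,(\sigma_j^*)^2\B I_n)$ is independent of $\B X_{[p]\setminus\{j\}}$, and since $S,\CS^0_j\subseteq[p]\setminus\{j\}$ the vector $\B a:=(\B I_n-\B P_{\B X_S})\B X_{\CS^0_j}\B\beta^*_{\CS^0_j,j}$ is fixed given the conditioning, so $\tfrac1n\B\varepsilon_j^\top\B a$ is conditionally a centered Gaussian with variance $\tfrac{(\sigma_j^*)^2}{n^2}\norm{\B a}^2$. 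Using that $\B I_n-\B P_{\B X_S}$ is symmetric idempotent together with $\hat{\B\Sigma}_{A,B}=\tfrac1n\B X_A^\top\B X_B$, one obtains $\tfrac1n\norm{\B a}^2=(\B\beta^*_{\CS^0_j,j})^\top(\hat{\B\Sigma}_{\CS^0_j,\CS^0_j}-\hat{\B\Sigma}_{\CS^0_j,S}\hat{\B\Sigma}_{S,S}^{-1}\hat{\B\Sigma}_{S,\CS^0_j})\B\beta^*_{\CS^0_j,j}=(\B\beta^*_{\CS^0_j,j})^\top(\schur{\hat{\B\Sigma}}{S}{S_j^*})\B\beta^*_{\CS^0_j,j}$, since $\CS_j^*\setminus S=\CS^0_j$ is exactly the index set defining this Schur complement. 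Hence the conditional variance equals $\tau^2:=\tfrac{(\sigma_j^*)^2}{n}(\B\beta^*_{\CS^0_j,j})^\top(\schur{\hat{\B\Sigma}}{S}{S_j^*})\B\beta^*_{\CS^0_j,j}$, and the negative threshold in $\mathcal{E}_3(j,S)$ is precisely $-c_{t_1}\tau\sqrt{(t_j+\bar t_j)\log p}$ with $\bar t_j:=|S\setminus\CS_j^*|$.

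Next I would apply the standard Gaussian lower-tail bound $\p(\mathcal{N}(0,\tau^2)<-\tau u)\leq e^{-u^2/2}$ for $u\geq0$ (the degenerate case $\tau=0$ being trivial, as then $\tfrac1n\B\varepsilon_j^\top\B a=0$ a.s.) with $u=c_{t_1}\sqrt{(t_j+\bar t_j)\log p}$, obtaining
$$\p\bigl(\mathcal{E}_3(j,S)^c\mid \B X_{[p]\setminus\{j\}}\bigr)\leq\exp\!\Bigl(-\tfrac{c_{t_1}^2}{2}(t_j+\bar t_j)\log p\Bigr)=p^{-c_{t_1}^2(t_j+\bar t_j)/2}.$$
Since this is a deterministic bound valid for every realization of the conditioning variables, it holds unconditionally. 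Then I take a union bound. For fixed $j$, the number of $S\subseteq[p]\setminus\{j\}$ with $|\CS_j^*\setminus S|=t$ and $|S\setminus\CS_j^*|=s$ is $\binom{|\CS_j^*|}{t}\binom{p-1-|\CS_j^*|}{s}\leq k^t p^s$, using $|\CS_j^*|\leq k$ from Assumption~\ref{assum2degree}. Summing over $t\in\{1,\dots,k\}$ and $s\geq0$,
$$\sum_{t=1}^{k}\sum_{s\geq0}k^t p^{s}\,p^{-c_{t_1}^2(t+s)/2}=\Bigl(\sum_{t=1}^k\bigl(k\,p^{-c_{t_1}^2/2}\bigr)^t\Bigr)\Bigl(\sum_{s\geq0}\bigl(p^{\,1-c_{t_1}^2/2}\bigr)^s\Bigr),$$
and choosing $c_{t_1}$ to be a sufficiently large numerical constant (so that, say, $c_{t_1}^2/2\geq9$) and using $k\leq p$, both geometric series converge and the product is of order $k\,p^{-c_{t_1}^2/2}\lesssim k p^{-8}$ for each $j$. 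A final union bound over $j\in[p]$ gives $\p\bigl(\bigcup_j\bigcup_S\mathcal{E}_3(j,S)^c\bigr)\lesssim kp^{-7}$, i.e.\ the claimed $1-2kp^{-7}$ after adjusting $c_{t_1}$.

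The main obstacle is the first step: recognizing that, after conditioning on $\B X_{[p]\setminus\{j\}}$, the bilinear expression in $\mathcal{E}_3(j,S)$ is a centered Gaussian whose variance is \emph{exactly} the empirical Schur-complement quadratic form $\tfrac{(\sigma_j^*)^2}{n}(\B\beta^*_{\CS^0_j,j})^\top(\schur{\hat{\B\Sigma}}{S}{S_j^*})\B\beta^*_{\CS^0_j,j}$ that scales the threshold. This identification is what makes the resulting tail probability free of the ambient dimension $p$ and dependent only on the symmetric-difference size $|S\,\triangle\,\CS_j^*|=t_j+\bar t_j$, which is precisely what lets the union bound over all of $S\subseteq[p]\setminus\{j\}$ (a set of exponential size) go through. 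Care is also needed to verify the conditional tail estimate is uniform in the conditioning so it transfers to an unconditional bound, and that the combinatorial count of candidate supports is dominated by the Gaussian decay once $c_{t_1}$ is taken large; the routine parts are the projection/Schur-complement algebra and the geometric-series summation.
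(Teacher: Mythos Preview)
Your argument is correct and follows essentially the same route as the paper: condition on $\B X_{[p]\setminus\{j\}}$, identify $\tfrac{1}{n}\|(\B I_n-\B P_{\B X_S})\B X_{\CS_j^0}\B\beta^*_{\CS_j^0,j}\|^2$ with the Schur-complement quadratic form, apply a one-dimensional Gaussian tail bound, then union bound over $j$ and over pairs $(t,\bar t)$. The only difference is that the paper cites an external lemma yielding the slightly looser tail $\exp(-x^2/(8(\sigma_j^*)^2)+t\log 5)$, whereas you use the sharper single-direction bound $\exp(-u^2/2)$; this just changes the numerical constant $c_{t_1}$ and does not alter the structure of the proof.
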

\begin{proof}[\textbf{Proof of Lemma~\ref{suptechlem2}}.]
The proof follows a similar path to the proof of Lemma~13 of~\citet{behdin2021integer}. Fix $j\in[p], S\subseteq[p]\setminus\{j\}$, and let $t=|S_j^*\setminus S|,\bar{t}=|S\setminus S_j^*|,S_j^0=S_j^*\setminus S$.  Note that if $t=0$, the lemma is trivial. Therefore, without loss of generality we assume $t\geq 1$. Let
\begin{equation*}
	\B{\gamma}^{(j,\CS)}= (\B{I}_n-\B{P}_{\B{X}_{\CS}})\B{X}_{\CS^0_j}\B{\beta}^*_{\CS^0_j,j}.
\end{equation*}
Following the same calculations in Lemma~13 of~\citet{behdin2021integer}, one has
\begin{align}
	\p\left(\frac{\B{\varepsilon}_{j}^\top \B{\gamma}^{(j,\CS)}}{\|\B{\gamma}^{(j,\CS)}\|_2}< -x\right) \leq  \exp\left(-\frac{x^2}{8({\sigma_j^*})^2}+t\log 5\right).
\end{align}
Take 
$$x^2 = {8\xi^2 ({\sigma_j^*})^2 (t+\bar{t})\log p }$$
for some universal constant $\xi>0$ that is sufficiently large, and noting
$$\|\B{\gamma}^{(j,\CS)}\|_2=\sqrt{{(\B{\beta}^*_{\CS^0_j,j})}^\top (\schur{\B{\hat{\Sigma}}}{\CS}{\CS_j^*}) \B{\beta}^*_{\CS^0_j,j}},$$
we achieve
\begin{equation*}
	\p\left(\frac{\B{\varepsilon}_{j}^\top \B{\gamma}^{(j,\CS)}}{n}<- \sqrt{8}\xi\sigma_j^*\sqrt{{(\B{\beta}^*_{\CS^0_j,j})}^\top (\schur{\B{\hat{\Sigma}}}{\CS}{\CS_j^*}) \B{\beta}^*_{\CS^0_j,j}}\sqrt{\frac{(t+\bar{t})\log p}{n}}\right) \leq \exp(-10(t+\bar{t})\log p).
\end{equation*}
Finally, we complete the proof by using union bound over all possible choices of $j,t,S$. As a result, the probability of the desired event in the lemma being violated is bounded as
\begin{align*}
	\sum_{j=1}^p \sum_{t=1}^k \sum_{\bar{t}=0}^{p-k}\sum_{\substack{ S\subseteq[p]\setminus \{j\} \\ |S_j^*\setminus S|=t \\|S\setminus S_j^*|=\bar{t}}}\exp(-10(t+\bar{t})\log p)&= \sum_{j=1}^p \sum_{t=1}^k \sum_{\bar{t}=0}^{p-k}{k\choose t}{p-k \choose \bar{t}}\exp(-10(t+\bar{t})\log p) \\
	& \leq p \sum_{t=1}^k\sum_{\bar{t}=0}^p p^t p^{\bar{t}}\exp(-10(t+\bar{t})\log p) \\
	& \leq p \sum_{t=1}^k\sum_{\bar{t}=0}^p \exp(-9(t+\bar{t})\log p)\\
	& \leq p \sum_{t=1}^k\sum_{\bar{t}=0}^p \exp(-9\log p)\\
	& \leq kp^2 \frac{p+1}{p} p^{-9} = 2kp^{-7}.
\end{align*}
\end{proof}
\begin{lemma}\label{suptechlem3}
	Under the assumptions of Theorem~\ref{supthm} we have
	\begin{equation}
		\p\left(\bigcap_{j\in[p]}\bigcap_{\substack{ S\subseteq[p]\setminus \{j\}}}\mathcal{E}_4(j,S)\right)\geq 1-8kp^{-7}
	\end{equation}
	where $\mathcal{E}_4$ is defined in~\eqref{bigblockevents}.
\end{lemma}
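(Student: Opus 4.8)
The plan is to control the quadratic form $\B{\varepsilon}_j^\top(\B{P}_{\B{X}_S}-\B{P}_{\B{X}_{S_j^*}})\B{\varepsilon}_j$ uniformly over $j\in[p]$ and all $S\subseteq[p]\setminus\{j\}$ by conditioning on the design and invoking a Gaussian concentration argument, exactly paralleling the structure used in Lemma~\ref{suptechlem2}. First I would fix $j$ and $S$, set $t=|S_j^*\setminus S|$, $\bar t=|S\setminus S_j^*|$, and observe that by Lemma~\ref{lem_reg_ext}, conditionally on the relevant columns of $\B{X}$ we have $\B{\varepsilon}_j\sim\mathcal{N}(\B0,(\sigma_j^*)^2\B I_n)$ and is independent of $\B{X}_{[p]\setminus\{j\}}$. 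The matrix $\B D:=\B{P}_{\B{X}_S}-\B{P}_{\B{X}_{S_j^*}}$ is a symmetric matrix (a difference of projections) whose eigenvalues lie in $[-1,1]$, and crucially its rank is at most $|S\cup S_j^*|-|S\cap S_j^*|= t+\bar t$ (since both projections agree on the column span of $\B{X}_{S\cap S_j^*}$, so $\B D$ annihilates that common subspace; more carefully $\mathrm{rank}(\B D)\le \mathrm{rank}(\B{P}_{\B{X}_S}-\B{P}_{\B{X}_{S\cap S_j^*}})+\mathrm{rank}(\B{P}_{\B{X}_{S_j^*}}-\B{P}_{\B{X}_{S\cap S_j^*}})\le t+\bar t$). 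Hence $\B{\varepsilon}_j^\top\B D\B{\varepsilon}_j=(\sigma_j^*)^2\sum_{i}\lambda_i g_i^2$ with $g_i$ i.i.d.\ standard normal, $|\lambda_i|\le1$, and at most $t+\bar t$ nonzero $\lambda_i$'s.

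The key step is then a one-sided tail bound for such a quadratic form: by the Hanson–Wright inequality (or directly the $\chi^2$-type bound in \citet{rigollet2015high}), for $m:=t+\bar t$ nonzero eigenvalues bounded by $1$ in absolute value,
\[
\p\Bigl(\B{\varepsilon}_j^\top\B D\B{\varepsilon}_j > (\sigma_j^*)^2\bigl(2\sqrt{m\,u}+2u\bigr)\Bigr)\le e^{-u}
\]
for any $u>0$. Taking $u=c_{t_2}' m\log p$ for a sufficiently large constant absorbs the $\sqrt{mu}$ and $u$ terms into $c_{t_2}(\sigma_j^*)^2 m\log p$, giving $\p(\mathcal E_4(j,S)^c\mid \B{X}_{[p]\setminus\{j\}})\le \exp(-10(t+\bar t)\log p)$; since this conditional bound is uniform in the design, it holds unconditionally. (One should note $m\ge1$ to avoid the degenerate case; when $S=S_j^*$ the event is trivially true.) Then I would take a union bound over $j\in[p]$, over $t\in\{0,\dots,k\}$ (using $|S_j^*|\le k$ from Assumption~\ref{assum2degree}), over $\bar t\in\{0,\dots,p-k\}$, and over the $\binom{k}{t}\binom{p-k}{\bar t}\le p^t p^{\bar t}$ choices of $S$ with those intersection sizes, exactly mirroring the final display in the proof of Lemma~\ref{suptechlem2}:
\[
\sum_{j=1}^p\sum_{t=0}^k\sum_{\bar t=0}^{p-k} p^{t}p^{\bar t}\exp(-10(t+\bar t)\log p)\le p\sum_{t=0}^k\sum_{\bar t=0}^{p}\exp(-9(t+\bar t)\log p)\le 8kp^{-7},
\]
where the extra constant factor (compared with the $2kp^{-7}$ in Lemma~\ref{suptechlem2}) comes from the fact that here both the positive and negative eigenvalue contributions must be handled, effectively doubling the union-bound budget, plus including the $t=0$ index; the constant $8$ is generous.

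The main obstacle I anticipate is not the concentration step itself but getting the rank/eigenvalue bookkeeping for $\B D=\B{P}_{\B{X}_S}-\B{P}_{\B{X}_{S_j^*}}$ exactly right — in particular justifying that $\mathrm{rank}(\B D)\le t+\bar t$ and that its eigenvalues lie in $[-1,1]$, so that the effective degrees of freedom in the quadratic form is $t+\bar t$ rather than something like $|S|+|S_j^*|$. This is a standard fact about differences of orthogonal projections (the nonzero eigenvalues come in $\pm$ pairs associated with the principal angles between the two column spaces, with at most $\min(t,\bar t)$ such pairs, plus isolated $\pm1$ eigenvalues), but stating it cleanly so the $\chi^2$ tail bound applies with the right parameter requires care; a safe fallback is to simply use $\mathrm{rank}(\B D)\le \mathrm{rank}(\B{P}_{\B{X}_S})+\mathrm{rank}(\B{P}_{\B{X}_{S_j^*}})$ restricted to the symmetric difference, or to write $\B D = \B D_+ - \B D_-$ as a difference of two PSD matrices of ranks at most $\bar t$ and $t$ respectively and bound each quadratic form separately, which is the route I would take to keep the argument self-contained and robust. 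Everything else (the conditional Gaussianity from Lemma~\ref{lem_reg_ext}, the union-bound arithmetic) is routine and follows the template already established in Lemmas~\ref{suptechlem1}–\ref{suptechlem2}.
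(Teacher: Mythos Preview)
Your proposal is correct and essentially matches the paper's proof: the paper carries out precisely your ``safe fallback,'' writing $\B{P}_{\B{X}_S}-\B{P}_{\B{X}_{S_j^*}}=\B{P}_{\mathcal U}-\B{P}_{\mathcal V}$ where $\mathcal U,\mathcal V$ are the orthogonal complements of $\mathrm{colspan}(\B{X}_{S\cap S_j^*})$ inside $\mathrm{colspan}(\B{X}_S)$ and $\mathrm{colspan}(\B{X}_{S_j^*})$ (so $\dim\mathcal U=\bar t$, $\dim\mathcal V=t$), bounds each quadratic form separately by a $\chi^2$-type tail, and then applies the identical union-bound arithmetic from Lemma~\ref{suptechlem2}; the factor $4$ from the two two-sided tails is exactly what produces the $8kp^{-7}$ instead of $2kp^{-7}$.
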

\begin{proof}[\textbf{Proof of Lemma~\ref{suptechlem3}}.]
The proof of this lemma follows a similar path to the proof of Lemma~15 of~\citet{behdin2021integer}. Fix $j\in[p], S\subseteq[p]\setminus\{j\}$, and let 
$$t=|S_j^*\setminus S|,\bar{t}=|S\setminus S_j^*|,S_j^0=S_j^*\setminus S.$$ 
Let $\mathcal{W}$ be the column span of $\B{X}_{\CS\cap\CS_j^*}$. Moreover, let $\mathcal{U},\mathcal{V}$ be orthogonal complement of $\mathcal{W}$ as subspaces of column spans of $\B{X}_{\CS}$ and $\B{X}_{\CS_j^*}$, respectively. Let $\B{P}_{\mathcal{U}},\B{P}_{\mathcal{V}},\B{P}_{\mathcal{W}}$ be projection matrices onto $\mathcal{U},\mathcal{V},\mathcal{W}$, respectively. With this notation in place, one has
\begin{align*}
	\B{\varepsilon}_{j}^\top (\B{P}_{\B{X}_{\CS}}-\B{P}_{\B{X}_{\CS_j^*}}) \B{\varepsilon}_{j}  =  \B{\varepsilon}_{j}^\top ( \B{P}_{\mathcal{U}}-\B{P}_{\mathcal{V}} ) \B{\varepsilon}_{j}.
\end{align*}
Note that $\text{dim}(\mathcal{U})=\bar{t},\text{dim}(\mathcal{V})= t$. As a result, by calculations similar to one in Lemma~15 of~\citet{behdin2021integer}, we obtain
\begin{multline*}
	\p \bigg(    \B{\varepsilon}_{j}^\top\B{P}_{\mathcal{U}}\B{\varepsilon}_{j} \leq \bar{t}({\sigma_j^*})^2 + ({\sigma_j^*})^2x,~~~\B{\varepsilon}_{j}^\top\B{P}_{\mathcal{V}}\B{\varepsilon}_{j} \geq - ({\sigma_j^*})^2x \bigg)  \\ \geq 1-2\exp(-c \min(x,x^2/t))-2\exp(-c \min(x,x^2/\bar{t})).~~~~~~~~~~~~
\end{multline*}
Without loss of generality, we assume $\bar{t}\geq 1$ (otherwise, the lemma is trivial). Taking 
$$x=\xi(t+\bar{t})\log p$$
for some sufficiently large universal constant $\xi$, we obtain
\begin{align*}
	\p \left(    \B{\varepsilon}_{j}^\top\B{P}_{\mathcal{U}}\B{\varepsilon}_{j}-\B{\varepsilon}_{j}^\top\B{P}_{\mathcal{V}}\B{\varepsilon}_{j} \lesssim (\sigma_j^*)^2(t+\bar{t})\log p \right) \geq 1-4\exp(-10(t+\bar{t})\log p).
\end{align*}
The proof is completed by union bound similar to Lemma~\ref{suptechlem2}.
\end{proof}

\begin{lemma}\cite[Lemma 14]{behdin2021integer}\label{suptechlem3.5}
	Under the assumptions of Theorem~\ref{supthm}, One has
	\begin{equation}
		\p\left(\bigcap_{j\in[p]}\bigcap_{\substack{ S\subseteq[p]\setminus \{j\}\\ |S|\leq k }}\mathcal{E}_5(j,S)\right)\geq 1-2kp^{-7}
	\end{equation}
	where $\mathcal{E}_5$ is defined in~\eqref{bigblockevents}.
\end{lemma}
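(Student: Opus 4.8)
The plan is to reduce the quadratic form $\B{\varepsilon}_j^\top\B{P}_{\B{X}_S}\B{\varepsilon}_j$ to a scaled chi-squared variable by conditioning on the covariates, and then to conclude by a union bound; this is essentially Lemma~14 of~\citet{behdin2021integer}. First I would fix $j\in[p]$ and a set $S\subseteq[p]\setminus\{j\}$ with $|S|=t\leq k$. If $t=0$ then $\B{P}_{\B{X}_S}=\B{0}$ and $\mathcal{E}_5(j,S)$ holds trivially, so assume $t\geq1$. By Lemma~\ref{lem_reg_ext}, $\B{\varepsilon}_j\sim\mathcal{N}(\B{0},(\sigma_j^*)^2\B{I}_n)$ and $\B{\varepsilon}_j$ is independent of $\{\B{x}_i\}_{i\neq j}$, hence of $\B{X}_S$. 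Conditioning on $\B{X}_S$ (whose columns are linearly independent with probability one), $\B{P}_{\B{X}_S}$ is a deterministic orthogonal projection of rank $t$, so $\B{\varepsilon}_j^\top\B{P}_{\B{X}_S}\B{\varepsilon}_j/(\sigma_j^*)^2$ has a $\chi^2_t$ distribution regardless of the value of $\B{X}_S$. Moreover $\B{\varepsilon}_j^\top\B{P}_{\B{X}_S}\B{\varepsilon}_j\geq0$ always, so the lower bound in $\mathcal{E}_5(j,S)$ is automatic for any $c_{t_3}>0$ and only the upper tail must be controlled.

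Next I would apply a standard chi-squared tail bound (Laurent--Massart): for $W\sim\chi^2_t$ and $x>0$, $\p(W\geq t+2\sqrt{tx}+2x)\leq e^{-x}$. Taking $x=\xi t\log p$ for a sufficiently large universal constant $\xi$ and using $t\leq k$ gives $t+2\sqrt{tx}+2x\lesssim k\log p$, with the implied constant depending only on $\xi$ (since $\log p$ is bounded below). Hence there is a universal $c_{t_3}>0$ with $\p(\B{\varepsilon}_j^\top\B{P}_{\B{X}_S}\B{\varepsilon}_j>c_{t_3}(\sigma_j^*)^2k\log p)\leq\exp(-9t\log p)$, i.e. $\p(\mathcal{E}_5(j,S)^c)\leq\exp(-9t\log p)$.

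Finally I would take a union bound over $j\in[p]$ and over $S\subseteq[p]\setminus\{j\}$ with $1\leq|S|\leq k$, grouping sets by cardinality $t$ and using $\binom{p-1}{t}\leq p^t$:
$$\p\Bigl(\bigcup_{j\in[p]}\bigcup_{\substack{S\subseteq[p]\setminus\{j\}\\|S|\leq k}}\mathcal{E}_5(j,S)^c\Bigr)\ \leq\ \sum_{j=1}^p\sum_{t=1}^k\binom{p-1}{t}\exp(-9t\log p)\ \leq\ p\sum_{t=1}^kp^{-8t}\ \leq\ kp^{-7}\ \leq\ 2kp^{-7},$$
which yields the claim. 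There is no genuine obstacle; the only points requiring care are the conditioning step that makes $\B{\varepsilon}_j^\top\B{P}_{\B{X}_S}\B{\varepsilon}_j$ an exact $\chi^2_t$ variable independent of the realization of $\B{X}_S$, and the bookkeeping in the union bound — tracking the combinatorial factor $\binom{p-1}{t}$ against the exponential tail so that the total stays below the target $2kp^{-7}$, which fixes how large $\xi$ (and thus $c_{t_3}$) must be taken.
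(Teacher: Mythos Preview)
Your proposal is correct and follows the natural approach: condition on $\B{X}_S$ so that $\B{\varepsilon}_j^\top\B{P}_{\B{X}_S}\B{\varepsilon}_j/(\sigma_j^*)^2$ is exactly $\chi^2_t$, apply a chi-squared tail bound, and finish with a union bound over $j$ and $S$. The paper does not give its own proof here---it simply cites Lemma~14 of \citet{behdin2021integer}---and your argument is precisely the standard one that reference contains.
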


\begin{lemma}\label{e5bound}
	Under the assumptions of Theorem~\ref{supthm}, 
	\begin{equation}
		\p\left(\bigcap_{j\in [p]}\mathcal{E}_6(j)\right)\geq 1-12kp^{-7}
	\end{equation}
	where $\mathcal{E}_6$ is defined in~\eqref{bigblockevents}.
\end{lemma}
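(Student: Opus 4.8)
\textbf{Proof sketch for Lemma~\ref{e5bound}.}

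The plan is to derive the event $\mathcal{E}_6(j)$ by comparing the restricted least-squares objective $\CR_{\hat{\CS}_j}(\B{x}_j)$ against $\CR_{\CS^*_j}(\B{x}_j)$ and controlling the cross terms using the events $\mathcal{E}_1$ through $\mathcal{E}_5$ from \eqref{bigblockevents}. First I would fix $j\in[p]$ and expand $\B{x}_j = \sum_{i\neq j}\beta^*_{ij}\B{x}_i + \B{\varepsilon}_j = \B{X}_{\CS_j^*}\B{\beta}^*_{\CS_j^*,j} + \B{\varepsilon}_j$ using Lemma~\ref{lem_reg_ext}. Writing $\CS = \hat{\CS}_j$ and $\CS_j^0 = \CS_j^*\setminus\CS$, I would use the identity $\CR_{\CS}(\B{x}_j) = \frac1n\B{x}_j^\top(\B{I}_n-\B{P}_{\B{X}_\CS})\B{x}_j$ and the fact that $(\B{I}_n-\B{P}_{\B{X}_\CS})\B{X}_i = \B{0}$ for $i\in\CS$, so that $(\B{I}_n-\B{P}_{\B{X}_\CS})\B{X}_{\CS_j^*}\B{\beta}^*_{\CS_j^*,j} = (\B{I}_n-\B{P}_{\B{X}_\CS})\B{X}_{\CS_j^0}\B{\beta}^*_{\CS_j^0,j}$. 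Expanding the quadratic gives three terms: the ``signal'' term $\frac1n(\B{\beta}^*_{\CS_j^0,j})^\top(\schur{\hat{\B{\Sigma}}}{\CS}{\CS_j^*})\B{\beta}^*_{\CS_j^0,j}$, a cross term $\frac2n\B{\varepsilon}_j^\top(\B{I}_n-\B{P}_{\B{X}_\CS})\B{X}_{\CS_j^0}\B{\beta}^*_{\CS_j^0,j}$, and a noise term $\frac1n\B{\varepsilon}_j^\top(\B{I}_n-\B{P}_{\B{X}_\CS})\B{\varepsilon}_j$.

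Next I would lower-bound each piece. On $\mathcal{E}_1(j,\hat{\CS}_j)$, the signal term is at least $0.2\eta|\tilde{\CS}_j^0|\log p/n \geq 0.2\eta\tilde t_j\log p/n$ (since $|\tilde{\CS}_j^0| = \tilde t_j$ with $S=\hat{\CS}_j$). The cross term is controlled on $\mathcal{E}_3(j,\hat{\CS}_j)$: it is at least $-2c_{t_1}\sigma^*_j\sqrt{\text{signal term}}\sqrt{(t_j+\bar t_j)\log p/n}$, and then applying $2ab \geq -\frac14 a^2 - 4b^2$ with $a = \sqrt{\text{signal}}$, $b = 2c_{t_1}\sigma^*_j\sqrt{(t_j+\bar t_j)\log p/n}$ absorbs a quarter of the signal term and leaves $-16c_{t_1}^2(\sigma_j^*)^2(t_j+\bar t_j)\log p/n$; I will need to match the constant $4c_{t_1}^2$ in the statement of $\mathcal{E}_6$ by tracking the exact form of the AM-GM split (possibly the statement uses a slightly different splitting, so I would keep the $\eta\tilde t_j$ coefficient at $\frac34$ of $\frac15\eta$ to match $\frac3{20}\eta$). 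For the noise term, write $\B{\varepsilon}_j^\top(\B{I}_n-\B{P}_{\B{X}_\CS})\B{\varepsilon}_j = \B{\varepsilon}_j^\top(\B{I}_n-\B{P}_{\B{X}_{\CS_j^*}})\B{\varepsilon}_j + \B{\varepsilon}_j^\top(\B{P}_{\B{X}_{\CS_j^*}}-\B{P}_{\B{X}_\CS})\B{\varepsilon}_j = n\CR_{\CS_j^*}(\B{x}_j) - \B{\varepsilon}_j^\top(\B{P}_{\B{X}_\CS}-\B{P}_{\B{X}_{\CS_j^*}})\B{\varepsilon}_j$, and bound the second piece from above by $c_{t_2}(\sigma_j^*)^2(t_j+\bar t_j)\log p$ on $\mathcal{E}_4(j,\hat{\CS}_j)$. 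There is one subtlety: I also lose the $-\frac{4t_j}{k}$ term, which comes from bounding $\frac1n\|\B{X}_{\CS_j^0}\B{\beta}^*_{\CS_j^0,j}\|_2^2$ — but wait, that term already appears inside the Schur complement expansion; I'd need to check whether the $-4t_j/k$ in $\mathcal{E}_6$ is instead a slack absorbed when lower-bounding the signal term differently (e.g.\ when $\lambda_{\min}$ of the Schur complement is bounded below by a constant rather than $\kappa^2$, the residual between $\|\B{\beta}^*_{\CS_j^0,j}\|_2^2$ and the full quadratic form may be bounded via $\mathcal{E}_2$). Resolving exactly where the $-4t_j/k$ enters is the main bookkeeping hurdle.

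Finally, I would union-bound. Each of $\mathcal{E}_1,\dots,\mathcal{E}_5$, intersected over all $j$ and all relevant $S$, holds with probability at least $1-O(kp^{-7})$ by Lemmas~\ref{suptechlem1}--\ref{suptechlem3.5} (noting $\mathcal{E}_1$ gives $1-p^{-8}$, the others give $O(kp^{-7})$), and in particular we only need them at $S=\hat{\CS}_j$, which is a (random) member of the collections already covered by the uniform bounds. Adding the failure probabilities $p^{-8} + p^{-8} + 2kp^{-7} + 8kp^{-7} + 2kp^{-7} \leq 12kp^{-7}$ (absorbing the $p^{-8}$ terms into the $kp^{-7}$ budget) yields the claimed bound $\p(\bigcap_j\mathcal{E}_6(j))\geq 1-12kp^{-7}$.

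The main obstacle will be the precise constant-tracking in the second step: making sure the AM-GM splittings of the cross term and the signal term line up exactly with the coefficients $\frac3{20}\eta$, $4c_{t_1}^2$, $c_{t_2}$, and especially pinning down the origin of the $-\frac{4t_j}{k}$ slack (I suspect it comes from replacing the Schur-complement quadratic form $(\B{\beta}^*_{\CS_j^0,j})^\top(\schur{\hat{\B{\Sigma}}}{\CS}{\CS_j^*})\B{\beta}^*_{\CS_j^0,j}$ by $\lambda_{\min}\|\B{\beta}^*_{\CS_j^0,j}\|_2^2$ and then needing $\mathcal{E}_2$ to bound a leftover term involving $\|\B{X}_{\CS_j^0}\B{\beta}^*_{\CS_j^0,j}\|_2^2/n \leq 4|\CS_j^0|/k = 4t_j/k$). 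Everything else is routine projection-matrix algebra plus the pre-established concentration events.
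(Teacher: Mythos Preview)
Your overall structure---expand $\CR_{\hat S_j}(\B x_j)$ via projection-matrix algebra, isolate a Schur-complement signal term, a cross term controlled by $\mathcal{E}_3$, and a noise-difference term controlled by $\mathcal{E}_4$, then union-bound---matches the paper. The AM--GM split $2ab\geq -a^2/4-4b^2$ applied to the cross term is exactly what the paper does, and it is what produces the coefficient $\tfrac{3}{4}$ on the signal (hence $\tfrac{3}{20}\eta$) and $4c_{t_1}^2$ on the deviation side.

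There is, however, a real gap in your invocation of $\mathcal{E}_1$. You write ``On $\mathcal{E}_1(j,\hat{\CS}_j)$ the signal term is at least $0.2\eta\tilde t_j\log p/n$,'' but Lemma~\ref{suptechlem1} only establishes $\mathcal{E}_1(j,S)$ uniformly over $|S|\leq k$, and nothing in the argument guarantees $|\hat S_j|\leq k$. The paper handles this by passing from $\hat S_j$ to $\tilde S_j:=\hat S_j\cap S_j^*$, which automatically has $|\tilde S_j|\leq |S_j^*|\leq k$; note that $S_j^*\setminus\tilde S_j=S_j^*\setminus\hat S_j=S_j^0$, so the ``missed'' index set is unchanged. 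One then writes
\[
(\B\beta^*_{S_j^0,j})^\top(\schur{\hat{\B\Sigma}}{\hat S_j}{S_j^*})\B\beta^*_{S_j^0,j}
=(\B\beta^*_{S_j^0,j})^\top(\schur{\hat{\B\Sigma}}{\tilde S_j}{S_j^*})\B\beta^*_{S_j^0,j}
+\tfrac1n(\B X_{S_j^0}\B\beta^*_{S_j^0,j})^\top(\B P_{\B X_{\tilde S_j}}-\B P_{\B X_{\hat S_j}})(\B X_{S_j^0}\B\beta^*_{S_j^0,j}),
\]
lower-bounds the first term via $\mathcal{E}_1(j,\tilde S_j)$, and lower-bounds the second by $-\tfrac1n\|\B X_{S_j^0}\B\beta^*_{S_j^0,j}\|_2^2\geq -4t_j/k$ via $\mathcal{E}_2$. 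That is precisely the origin of the $-4t_j/k$ slack you were unsure about: it is the cost of replacing $\hat S_j$ by $\tilde S_j$ so that $\mathcal{E}_1$ applies, not a residual from a $\lambda_{\min}$ bound. Your guess was in the right neighborhood but missed the mechanism.

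Two minor points: $\mathcal{E}_5$ is not used in this lemma, and your failure-probability tally $p^{-8}+p^{-8}+2kp^{-7}+8kp^{-7}+2kp^{-7}$ already exceeds $12kp^{-7}$; dropping $\mathcal{E}_5$ and using $2p^{-8}\leq 2kp^{-7}$ gives $2kp^{-7}+2kp^{-7}+8kp^{-7}=12kp^{-7}$ on the nose.
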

\begin{proof}[\textbf{Proof of Lemma~\ref{e5bound}}.]
In this proof, we assume without loss of generality that $|\hat{S}_j|\leq n$ as otherwise, it is possible to remove some redundant indices in $\hat{S}_j$ without increasing $\mathcal{L}_{\hat{S}_j}(\B{x}_j)$, as this quantity is zero in both cases. The proof of this lemma is on events $\mathcal{E}_1,\mathcal{E}_2,\mathcal{E}_3$ and $\mathcal{E}_4$ over all values of $j,S$, as in Lemmas~\ref{suptechlem1},~\ref{suptechlem2} and~\ref{suptechlem3}. The intersection of these events happen with probability at least
$$1-12kp^{-7}.$$
Recalling the definition of $\CR_{S}(\cdot)$ in~\eqref{optimalobj}, one has (see calculations leading to~(89) of~\citet{behdin2021integer} and~(6.1) of~\citet{fan2020best}),
\begin{equation}\label{objexpanded}
	\begin{aligned}
		n\CR_{\CS_j}(\B{x}_j) &=n{(\B{\beta}^*_{\CS^0_j,j})}^\top (\schur{\hat{\B{\Sigma}}}{\CS_j}{\CS_j^*}) \B{\beta}^*_{\CS^0_j,j} \\
		&+ 2\B{\varepsilon}_{j}^\top (\B{I}_n-\B{P}_{\B{X}_{\CS_j}})\B{X}_{\CS^0_j}\B{\beta}^*_{\CS^0_j,j} + \B{\varepsilon}_{j}^\top(\B{I}_n-\B{P}_{\B{X}_{\CS_j}})\B{\varepsilon}_{j}
	\end{aligned}
\end{equation}
where $S^0_j=S_j^*\setminus S_j$. As a result, one has
\begin{align}
	&~  n\left[\CR_{\hat{\CS}_j}(\B{x}_j) - \CR_{\CS^*_j}(\B{x}_j) \right] \nonumber\\ \stackrel{(a)}{=} &~  n{(\B{\beta}^*_{\CS^0_j,j})}^\top (\schur{\hat{\B{\Sigma}}}{\CS_j}{\CS_j^*}) \B{\beta}^*_{\CS^0_j,j} + 2\B{\varepsilon}_j^\top (\B{I}_n-\B{P}_{\B{X}_{\CS_j}})\B{X}_{\CS^0_j}\B{\beta}^*_{\CS^0_j,j} + \B{\varepsilon}_j^\top(\B{P}_{\B{X}_{\CS^*_j}}-\B{P}_{\B{X}_{\CS_j}})\B{\varepsilon}_j\nonumber \\
	\stackrel{(b)}{\geq} &~  n{(\B{\beta}^*_{\CS^0_j,j})}^\top (\schur{\hat{\B{\Sigma}}}{\CS_j}{\CS_j^*}) \B{\beta}^*_{\CS^0_j,j} -2c_{t_1}(\sigma_j^*) \sqrt{n{(\B{\beta}^*_{\CS^0_j,j})}^{\top} (\schur{\hat{\B{\Sigma}}}{\CS}{\CS_j^*}) \B{\beta}^*_{\CS^0_j,j}}\sqrt{{(t_j+\bar{t}_j)\log p}} \nonumber \\
	& -c_{t_2}(\sigma_j^*)^2(t_j+\bar{t}_j)\log p\nonumber \\
	\stackrel{(c)}{\geq} &~  \frac{3}{4}n{(\B{\beta}^*_{\CS^0_j,j})}^\top (\schur{\hat{\B{\Sigma}}}{\CS_j}{\CS_j^*}) \B{\beta}^*_{\CS^0_j,j} -4c^2_{t_1}(\sigma_j^*)^2 {{(t_j+\bar{t}_j)\log p}} -c_{t_2}(\sigma_j^*)^2(t_j+\bar{t}_j)\log p\label{sup-lem-helper-1}
\end{align}
where $(a)$ is due to~\eqref{objexpanded}, $(b)$ is due to events $\mathcal{E}_3,\mathcal{E}_4$ and $(c)$ is by inequality $2ab\geq -a^2/4-4b^2$. Next, let $\tilde{S}_j=S_j\cap S_j^*$. Note that $S_j^0=S_j^*\setminus \tilde{S}_j$. Write 
\begin{align}
	&{(\B{\beta}^*_{\CS^0_j,j})}^\top (\schur{\hat{\B{\Sigma}}}{\CS_j}{\CS_j^*}) \B{\beta}^*_{\CS^0_j,j} \nonumber \\
	=& {(\B{\beta}^*_{\CS^0_j,j})}^\top (\schur{\hat{\B{\Sigma}}}{\CS_j}{\CS_j^*}) \B{\beta}^*_{\CS^0_j,j} - {(\B{\beta}^*_{\CS^0_j,j})}^\top (\schur{\hat{\B{\Sigma}}}{\tilde{\CS}_j}{\CS_j^*}) \B{\beta}^*_{\CS^0_j,j}+{(\B{\beta}^*_{\CS^0_j,j})}^\top (\schur{\hat{\B{\Sigma}}}{\tilde{\CS}_j}{\CS_j^*}) \B{\beta}^*_{\CS^0_j,j} \nonumber\\
	\stackrel{(a)}{=}& {(\B{\beta}^*_{\CS^0_j,j})}^\top\left(\hat{\B{\Sigma}}_{S_0^j,\tilde{S}_j}\hat{\B{\Sigma}}_{\tilde{S}_j,\tilde{S}_j}^{-1}\hat{\B{\Sigma}}_{\tilde{S}_j,S_j^0}-\hat{\B{\Sigma}}_{S_0^j,{S}_j}\hat{\B{\Sigma}}_{{S}_j,{S}_j}^{-1}\hat{\B{\Sigma}}_{{S}_j,S_j^0}\right) {(\B{\beta}^*_{\CS^0_j,j})}+{(\B{\beta}^*_{\CS^0_j,j})}^\top (\schur{\hat{\B{\Sigma}}}{\tilde{\CS}_j}{\CS_j^*}) \B{\beta}^*_{\CS^0_j,j}\nonumber\\
	\stackrel{(b)}{=}& \frac{1}{n}{(\B{X}_{S_j^0}\B{\beta}^*_{\CS^0_j,j})}^\top\left(\B{X}_{\tilde{S}_j}(\B{X}_{\tilde{S}_j}^\top \B{X}_{\tilde{S}_j})^{-1}\B{X}_{\tilde{S}_j}^\top-\B{X}_{{S}_j}(\B{X}_{{S}_j}^\top \B{X}_{S_j})^{-1}\B{X}_{{S}_j}^\top\right) {(\B{X}_{S_j^0}\B{\beta}^*_{\CS^0_j,j})}+{(\B{\beta}^*_{\CS^0_j,j})}^\top (\schur{\hat{\B{\Sigma}}}{\tilde{\CS}_j}{\CS_j^*}) \B{\beta}^*_{\CS^0_j,j}\nonumber\\
	\stackrel{(c)}{=}&\frac{1}{n}{(\B{X}_{S_j^0}\B{\beta}^*_{\CS^0_j,j})}^\top\left(\B{P}_{\B{X}_{\tilde{S}_j}}-\B{P}_{\B{X}_{{S}_j}}\right) {(\B{X}_{S_j^0}\B{\beta}^*_{\CS^0_j,j})}+{(\B{\beta}^*_{\CS^0_j,j})}^\top (\schur{\hat{\B{\Sigma}}}{\tilde{\CS}_j}{\CS_j^*}) \B{\beta}^*_{\CS^0_j,j}\nonumber \\
	\stackrel{(d)}{\geq }&-\frac{1}{n}{(\B{X}_{S_j^0}\B{\beta}^*_{\CS^0_j,j})}^\top\B{P}_{\B{X}_{{S}_j}} {(\B{X}_{S_j^0}\B{\beta}^*_{\CS^0_j,j})}+{(\B{\beta}^*_{\CS^0_j,j})}^\top (\schur{\hat{\B{\Sigma}}}{\tilde{\CS}_j}{\CS_j^*}) \B{\beta}^*_{\CS^0_j,j} \nonumber\\
	\stackrel{(e)}{\geq }&-\frac{1}{n}\left\Vert\B{X}_{S_j^0}\B{\beta}^*_{\CS^0_j,j}\right\Vert_2^2+{(\B{\beta}^*_{\CS^0_j,j})}^\top (\schur{\hat{\B{\Sigma}}}{\tilde{\CS}_j}{\CS_j^*}) \B{\beta}^*_{\CS^0_j,j}  \nonumber\\
	\stackrel{(f)}{\geq}& 0.2\eta\frac{|\tilde{S}_j^0|\log p}{n}-4\frac{|S_j^0|}{k}\label{sup-lem-helper-1.5}
\end{align}
where $(a)$ is achieved by substituting the Schur complement definition~\eqref{schur-defined}, $(b)$ is achieved by substituting $\hat{\B{\Sigma}}=\B X^\top\B X/n$, $(c)$ is by definition of projection matrices, $(d)$ is true as a projection matrix is positive semidefinite, $(e)$ is true as the largest eigenvalue of a projection matrix is bounded above by 1, and $(f)$ is due to events $\mathcal{E}_1,\mathcal{E}_2$ as $|\tilde{S}_j|\leq k$. Substituting~\eqref{sup-lem-helper-1.5} into the right hand side of inequality~\eqref{sup-lem-helper-1} completes the proof.

\end{proof}

\begin{lemma}\label{e6bounded}
	Under the assumptions of Theorem~\ref{supthm},
	\begin{equation}
		\p\left(\bigcap_{j\in[p]} \mathcal{E}_7(j)\right)\geq 1-2kp^{-7}-p(k/p)^{10}
	\end{equation}
	where $\mathcal{E}_7$ is defined in~\eqref{bigblockevents}.
\end{lemma}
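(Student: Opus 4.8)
The plan is to reduce the three inequalities defining $\mathcal{E}_7(j)$ in~\eqref{bigblockevents} to concentration facts that are already in hand. By Lemma~\ref{lem_reg_ext} we have $\B{x}_j=\B{X}_{S_j^*}\B{\beta}^*_{S_j^*,j}+\B{\varepsilon}_j$ with $\B{\varepsilon}_j\sim\mathcal{N}(\B{0},(\sigma_j^*)^2\B{I}_n)$; since $(\B{I}_n-\B{P}_{\B{X}_{S_j^*}})\B{X}_{S_j^*}=\B{0}$ and $\B{I}_n-\B{P}_{\B{X}_{S_j^*}}$ is an orthogonal projection, the definition~\eqref{optimalobj} of $\CR_{S_j^*}$ gives the exact identity
\[
\CR_{S_j^*}(\B{x}_j)=\tfrac1n\B{\varepsilon}_j^\top(\B{I}_n-\B{P}_{\B{X}_{S_j^*}})\B{\varepsilon}_j=\tfrac1n\|\B{\varepsilon}_j\|_2^2-\tfrac1n\B{\varepsilon}_j^\top\B{P}_{\B{X}_{S_j^*}}\B{\varepsilon}_j .
\]
Thus it suffices to control $\tfrac1n\|\B{\varepsilon}_j\|_2^2$ from both sides and $\tfrac1n\B{\varepsilon}_j^\top\B{P}_{\B{X}_{S_j^*}}\B{\varepsilon}_j$ from above.

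For the first term, the Gaussian concentration argument used for the events $\CE_j$ in the proof of Lemma~\ref{thm1techlem1} (equivalently, Lemma~\ref{bernlem} applied to the one-dimensional ``covariance'' $(\sigma_j^*)^2$) yields, on an event $\mathcal{A}_j$ with $\p(\mathcal{A}_j)\geq 1-(k/p)^{10}$, the bound $\bigl|\tfrac1n\|\B{\varepsilon}_j\|_2^2-(\sigma_j^*)^2\bigr|\lesssim(\sigma_j^*)^2\sqrt{\log(p/k)/n}$; since $n=c_nk\log p\gtrsim\log(p/k)$, taking $c_n$ large forces the right-hand side below $\tfrac16(\sigma_j^*)^2$. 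For the second term, Assumption~\ref{assum2degree} gives $|S_j^*|\leq k$, so the event $\mathcal{E}_5(j,S_j^*)$ of Lemma~\ref{suptechlem3.5} applies and, using $\B{P}_{\B{X}_{S_j^*}}\succeq\B{0}$, gives $0\leq\tfrac1n\B{\varepsilon}_j^\top\B{P}_{\B{X}_{S_j^*}}\B{\varepsilon}_j\leq c_{t_3}(\sigma_j^*)^2k\log p/n=c_{t_3}(\sigma_j^*)^2/c_n\leq\tfrac16(\sigma_j^*)^2$ once $c_n\geq 6c_{t_3}$. Lemma~\ref{suptechlem3.5} supplies $\bigcap_j\mathcal{E}_5(j,S_j^*)$ with probability at least $1-2kp^{-7}$, and a union bound over $j\in[p]$ gives $\bigcap_j\mathcal{A}_j$ with probability at least $1-p(k/p)^{10}$.

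On the intersection of these two events, which by a union bound holds with probability at least $1-2kp^{-7}-p(k/p)^{10}$, the displayed identity yields for every $j$ that $\CR_{S_j^*}(\B{x}_j)\geq(\sigma_j^*)^2(\tfrac56-\tfrac16)=\tfrac23(\sigma_j^*)^2$ and $\CR_{S_j^*}(\B{x}_j)\leq\tfrac1n\|\B{\varepsilon}_j\|_2^2\leq\tfrac76(\sigma_j^*)^2\leq\tfrac43(\sigma_j^*)^2$. The remaining inequality $\ell<\tfrac23(\sigma_j^*)^2$ is deterministic: by the choice $\ell=l_\sigma^2/3$ in Theorem~\ref{supthm} and Assumption~\ref{sigmaboundedassum} ($\sigma_j^*\geq l_\sigma>0$), $\tfrac23(\sigma_j^*)^2\geq\tfrac23 l_\sigma^2>\tfrac13 l_\sigma^2=\ell$. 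Hence $\bigcap_{j\in[p]}\mathcal{E}_7(j)$ holds on the stated event. The only point requiring care — rather than a genuine obstacle — is the choice of the absolute constant $c_n$: it must be large enough that the fluctuation $\sqrt{\log(p/k)/n}$ and the term $c_{t_3}/c_n$ simultaneously fall below $\tfrac16$, so that $\tfrac1n\|\B{\varepsilon}_j\|_2^2$ and the projection correction together keep $\CR_{S_j^*}(\B{x}_j)$ inside the window $[\tfrac23,\tfrac43](\sigma_j^*)^2$; since $c_{t_3}$ is the absolute constant from Lemma~\ref{suptechlem3.5}, this is compatible with the ``sufficiently large $c_n$'' hypothesis of Theorem~\ref{supthm}.
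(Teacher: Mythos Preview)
Your proof is correct and follows essentially the same route as the paper: both arguments write $\CR_{S_j^*}(\B{x}_j)=\tfrac1n\|\B\varepsilon_j\|_2^2-\tfrac1n\B\varepsilon_j^\top\B{P}_{\B{X}_{S_j^*}}\B\varepsilon_j$, control the first term via the Bernstein-type bound of Lemma~\ref{bernlem} (the events $\CE_j$ of~\eqref{sigmajconv}/\eqref{ejunionbounded}) and the second via $\mathcal{E}_5(j,S_j^*)$ from Lemma~\ref{suptechlem3.5}, then combine by a union bound. You additionally spell out the deterministic inequality $\ell=l_\sigma^2/3<\tfrac23(\sigma_j^*)^2$, which the paper leaves implicit.
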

\begin{proof}[\textbf{Proof of Lemma~\ref{e6bounded}}.]
The proof of this lemma is on the event considered in~\eqref{ejunionbounded} and the intersection of events $\mathcal{E}_{5}(j,S)$ for all $j,S$ as in Lemma~\ref{suptechlem3.5}. Note that by union bound, this happens with probability greater than 
$$1-2kp^{-7}-p(k/p)^{10}.$$
Based on the event considered in~\eqref{ejunionbounded} (and arguments leading to~\eqref{yjsigma}), 
\begin{equation}\label{epsilonbounded}
	5(\sigma_j^*)^2n/6\leq \|\B{\varepsilon}_j\|_2^2\leq 7(\sigma_j^*)^2n/6.  
\end{equation}
In addition, from~\eqref{objexpanded},
\begin{equation}
	n\CR_{S_j^*}(\B{x}_j)=\B{\varepsilon}_{j}^\top(\B{I}_n-\B{P}_{\B{X}_{\CS_j^*}})\B{\varepsilon}_{j}=\|\B{\varepsilon}_j\|_2^2 - \B{\varepsilon}_j^{\top} \B{P}_{\B{X}_{\CS_j^*}} \B{\varepsilon}_j.
\end{equation}
As a result, from~\eqref{epsilonbounded} we have
\begin{equation}\label{lsjhelper}\frac{5(\sigma_j^*)^2}{6}- \frac{1}{n}\B{\varepsilon}_j^{\top} \B{P}_{\B{X}_{\CS_j^*}} \B{\varepsilon}_j\leq \CR_{S_j^*}(\B{x}_j) \leq \frac{7(\sigma_j^*)^2}{6}-\frac{1}{n} \B{\varepsilon}_j^{\top} \B{P}_{\B{X}_{\CS_j^*}} \B{\varepsilon}_j.\end{equation}
Moreover, by taking $n=c_n k\log p$ to be sufficiently large and by event $\mathcal{E}_4$, 
$$-(\sigma_j^*)^2/6\leq \frac{1}{n} \B{\varepsilon}_j^{\top} \B{P}_{\B{X}_{\CS_j^*}} \B{\varepsilon}_j\leq (\sigma_j^*)^2/6$$
which together with~\eqref{lsjhelper} completes the proof.
\end{proof}

\begin{lemma}\label{lem14}
	Under the assumptions of Theorem~\ref{supthm}, one has
	\begin{equation}\label{eq-lem14}
		\p\left(\bigcap_{j\in\mathcal{J}}\left\{\frac{-99}{100}\leq \frac{\CR_{\hat{S}_j}(\B{x}_j)-\CR_{S_j^*}(\B{x}_j)}{\CR_{S_j^*}(\B{x}_j)}\leq 100\right\}\right)\geq 1-2p(k/p)^{10}-13kp^{-7}.
	\end{equation}
\end{lemma}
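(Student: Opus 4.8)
The plan is to condition throughout on the event $\bigcap_{j\in[p]}\mathcal{E}_7(j)$ of Lemma~\ref{e6bounded}, so that $\ell<\tfrac23(\sigma_j^*)^2\le\CR_{S_j^*}(\B{x}_j)\le\tfrac43(\sigma_j^*)^2$ holds for every $j$. Since this event has probability at least $1-2kp^{-7}-p(k/p)^{10}$, which is at least the claimed $1-2p(k/p)^{10}-13kp^{-7}$, no further concentration input is needed and both halves of~\eqref{eq-lem14} reduce to deterministic arguments. I will use two facts recorded in the discussion preceding~\eqref{h-twopart}: for $j\in\mathcal{J}$ one has $\hat\sigma_j^2=\CR_{\hat S_j}(\B{x}_j)\ge\ell=l_{\sigma}^2/3$, hence $h_j(\hat\sigma_j,\hat S_j)=\tfrac12\log\CR_{\hat S_j}(\B{x}_j)+\tfrac12$; and the map $a\mapsto\min_{\sigma\ge\sqrt\ell}\{\log\sigma+a/(2\sigma^2)\}$ is nondecreasing on $a\ge0$.

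The lower bound in~\eqref{eq-lem14} is immediate: for $j\in\mathcal{J}$ we have $\CR_{\hat S_j}(\B{x}_j)\ge l_{\sigma}^2/3$, while $\mathcal{E}_7(j)$ together with $u_{\sigma}\le 5l_{\sigma}$ (Assumption~\ref{sigmaboundedassum}) gives $\CR_{S_j^*}(\B{x}_j)\le\tfrac43 u_{\sigma}^2\le\tfrac{100}{3}l_{\sigma}^2$, whence $\CR_{\hat S_j}(\B{x}_j)/\CR_{S_j^*}(\B{x}_j)-1\ge\tfrac{1}{100}-1=-\tfrac{99}{100}$. For the upper bound I would run an exchange argument against the global optimum of~\eqref{main-sup}. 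Fix $j\in\mathcal{J}$ and build a competitor from the optimal $(\hat{\B\beta},\hat{\B\sigma},\hat{\B z})$ by setting $z_{ij}=z_{ji}=1$ for every $i\in S_j^*\setminus\hat S_j$ and re-optimizing $\B\beta$ and $\B\sigma$, leaving every other binary untouched; this stays feasible, since it respects the symmetry constraint $z_{ij}=z_{ji}$ and, as $L=\infty$, imposes no active upper bound on the variances. After the flip, coordinate $j$'s regression support becomes $\hat S_j\cup S_j^*\supseteq S_j^*$, so its least-squares residual, and therefore its $h_j$-term, can only decrease; each affected coordinate $i\in S_j^*\setminus\hat S_j$ acquires the predictor $\B{x}_j$, so by the monotonicity above its $h_i$-term cannot increase; and $\lambda\sum_{i\ne j}z_{ij}$ grows by at most $2\lambda|S_j^*|\le 2\lambda k$.

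Optimality of $(\hat{\B\beta},\hat{\B\sigma},\hat{\B z})$ then forces $h_j(\hat\sigma_j,\hat S_j)\le\min_{\sigma\ge\sqrt\ell}\{\log\sigma+\CR_{\hat S_j\cup S_j^*}(\B{x}_j)/(2\sigma^2)\}+2\lambda k$; evaluating the right-hand minimum at the admissible value $\sigma^2=\CR_{S_j^*}(\B{x}_j)$ (admissible because $\CR_{S_j^*}(\B{x}_j)>\ell$ on $\mathcal{E}_7(j)$) and using $\CR_{\hat S_j\cup S_j^*}(\B{x}_j)\le\CR_{S_j^*}(\B{x}_j)$ gives $h_j(\hat\sigma_j,\hat S_j)\le\tfrac12\log\CR_{S_j^*}(\B{x}_j)+\tfrac12+2\lambda k$. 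Since $h_j(\hat\sigma_j,\hat S_j)=\tfrac12\log\CR_{\hat S_j}(\B{x}_j)+\tfrac12$ and $\lambda k=c_\lambda k\log p/n=c_\lambda/c_n$ under $\lambda=c_\lambda\log p/n$ and $n=c_n k\log p$, rearranging yields
\[
\frac{\CR_{\hat S_j}(\B{x}_j)}{\CR_{S_j^*}(\B{x}_j)}\;\le\;e^{4\lambda k}=e^{4c_\lambda/c_n}\;\le\;2
\]
once $c_n$ is taken large enough relative to $c_\lambda$, so the ratio in~\eqref{eq-lem14} is at most $1\le 100$, and a union bound over $j\in\mathcal{J}$ finishes the proof. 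The step I expect to demand the most care is the exchange itself: confirming that flipping the coupled pair $z_{ij}=z_{ji}$ preserves feasibility, that the collateral changes on the remaining coordinates are genuinely non-positive rather than merely controlled, and that the $\ell_0$-penalty increment $2\lambda k=2c_\lambda/c_n$ is harmless precisely because the scaling $n=c_n k\log p$ turns it into an arbitrarily small constant; everything else is routine algebra or already packaged in $\mathcal{E}_7$.
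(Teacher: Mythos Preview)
Your lower bound argument is the paper's exactly. For the upper bound the paper takes a much blunter route: it simply observes $\CR_{\hat S_j}(\B{x}_j)\le \tfrac1n\|\B{x}_j\|_2^2$, introduces one additional concentration event $A_j=\{\tfrac1n\|\B{x}_j\|_2^2\le\tfrac76(\B\Sigma^*)_{jj}\}$ via Lemma~\ref{bernlem}, and then invokes the second half of Assumption~\ref{assum2-max}, $(\B\Sigma^*)_{jj}/(\sigma_j^*)^2\le 400/7$, together with $\mathcal{E}_7$ to get the ratio $\le\tfrac74\cdot\tfrac{400}{7}=100$. Your exchange argument is correct as stated---flipping $z_{ij}=z_{ji}$ to $1$ for $i\in S_j^*\setminus\hat S_j$ preserves feasibility, the $h_i$-terms are genuinely nonincreasing because adding a predictor can only shrink the residual and $a\mapsto\min_{\sigma\ge\sqrt\ell}\{\log\sigma+a/(2\sigma^2)\}$ is monotone, and the penalty increment $2\lambda k=2c_\lambda/c_n$ is a constant that can be made as small as desired---so you actually obtain the sharper bound $\CR_{\hat S_j}/\CR_{S_j^*}\le e^{4c_\lambda/c_n}$.

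What each approach buys: the paper's is self-contained and does not couple $c_n$ to $c_\lambda$, at the cost of an extra concentration event and the explicit use of the ratio condition in Assumption~\ref{assum2-max}; yours avoids that assumption entirely for this lemma and needs only $\mathcal{E}_7$, giving a stronger conclusion and a cleaner probability bound, but it introduces the mild requirement that $c_n$ be chosen large relative to $c_\lambda$. Since in the proof of Theorem~\ref{supthm} one fixes $c_\lambda$ just above a specific constant $c_{f_4}$ and is otherwise free to enlarge $c_n$, this extra requirement is harmless. (Your closing ``union bound over $j\in\mathcal{J}$'' is unnecessary: once on $\bigcap_j\mathcal{E}_7(j)$ the argument is deterministic for every $j$.)
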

\begin{proof}[\textbf{Proof of Lemma~\ref{lem14}}.]
Note that $\B{x}_j\sim\CN(\B{0},(\B{\Sigma}^*)_{jj}\B{I}_n)$. Let events $A_j$ for $j\in[p]$ be defined as
\begin{equation}
	A_j=\left\{\frac{1}{n}\|\B{x}_j\|_2^2\leq \frac{7}{6}(\B{\Sigma}^*)_{jj}\right\}.
\end{equation}
By Lemma~\ref{bernlem} and an argument similar to the one leading to~\eqref{ejunionbounded}, by taking $n\gtrsim \log p$, we have
$$\p(A_j)\geq 1-(k/p)^{10}$$
which leads to 
$$\p\left(\bigcap_{j\in[p]} A_j\right)\geq 1-p(k/p)^{10}$$
by union bound. The proof of this lemma is on events $\bigcap_{j\in[p]} A_j$, and $\mathcal{E}_7$ over all choices of $j$, as considered in Lemma~\ref{e6bounded}. By union bound, the intersection of these events occur with probability at least 
$$1-2p(k/p)^{10}-2kp^{-7}.$$
First, for $j\in\mathcal{J}$ we have
\begin{align}
	\frac{\CR_{\hat{S}_j}(\B{x}_j)-\CR_{S_j^*}(\B{x}_j)}{\CR_{S_j^*}(\B{x}_j)} & \stackrel{(a)}{\geq} \frac{\ell}{\CR_{S_j^*}(\B{x}_j)}-1\nonumber\\
	& \stackrel{(b)}{\geq} \frac{3\ell}{4(\sigma_j^*)^2}-1 \nonumber\\
	& \stackrel{(c)}{\geq} \frac{l_{\sigma}^2}{4u_{\sigma}^2}-1 \geq-\frac{99}{100}
\end{align}
where $(a)$ is true as for $j\in\mathcal{J}$, $\CR_{\hat{S}_j}(\B{x}_j)\geq \ell$, $(b)$ is due to event $\mathcal{E}_7$, $(c)$ and the last inequality are due to Assumption~\ref{sigmaboundedassum}. The proof of lower bound in~\eqref{eq-lem14} is completed.

Next, note that
\begin{align}
	\frac{\CR_{\hat{S}_j}(\B{x}_j)-\CR_{S_j^*}(\B{x}_j)}{\CR_{S_j^*}(\B{x}_j)} & \leq \frac{\CR_{\hat{S}_j}(\B{x}_j)}{\CR_{S_j^*}(\B{x}_j)} \nonumber \\
	& \stackrel{(a)}{\leq} \frac{\frac{1}{n}\|\B{x}_j\|_2^2}{\CR_{S_j^*}(\B{x}_j)} \nonumber \\
	& \stackrel{(b)}{\leq} \frac{3}{2}\frac{\frac{1}{n}\|\B{x}_j\|_2^2}{(\sigma_j^*)^2} \nonumber \\
	& \stackrel{(c)}{\leq} \frac{7}{4} \frac{(\B{\Sigma}^*)_{jj}}{(\sigma_j^*)^2}\leq 100
\end{align}
where $(a)$ is due to definition of $\CR_{\hat{S}_j}$, $(b)$ is by event $\mathcal{E}_7$, $(c)$ is a result of event $A_j$ and the last inequality is a result of Assumption~\ref{assum2-max}.
\end{proof}

\begin{lemma}\label{ejclem}
	Let $h_j$ be defined as in~\eqref{hdefine}. Let the event $\mathcal{E}_{\mathcal{J}^c}$ be defined as
	\begin{multline}
		\mathcal{E}_{\mathcal{J}^c}= \bigg\{\sum_{j\in\mathcal{J}^c} \left[h_j(\hat{\sigma}_j,\hat{S}_j)+\lambda|\hat{S}_j|-h_j(\tilde{\sigma}_j,{S}^*)-\lambda|S_j^*|\right]\geq \\
		\frac{c_1}{l_{\sigma}^2}\eta\frac{\log p}{n} \sum_{j\in\mathcal{J}^c}\tilde{t}_j  + \left(c_{\lambda}-c_2\right)\frac{\log p}{n}\sum_{j\in\mathcal{J}^c}\bar{t}_j + \left(-c_{\lambda}-c_3-\frac{c_nc_4}{l_{\sigma}^2}\right)\frac{\log p}{n} \sum_{j\in\mathcal{J}^c} t_j
		\bigg\}
	\end{multline}
	for some universal constants $c_1,\cdots,c_4>0$. Then, under the assumptions of Theorem~\ref{supthm} 
	$$\p(\mathcal{E}_{\mathcal{J}^c})\geq 1-14kp^{-7}-p(k/p)^{10}.$$
\end{lemma}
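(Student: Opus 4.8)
The plan is to run the whole argument deterministically on the event $\bigcap_{j\in[p]}\bigl(\mathcal{E}_6(j)\cap\mathcal{E}_7(j)\bigr)$ from \eqref{bigblockevents}, which by Lemma~\ref{e5bound}, Lemma~\ref{e6bounded} and a union bound has probability at least $1-14kp^{-7}-p(k/p)^{10}$ --- exactly the bound claimed. Fix $j\in\mathcal{J}^c$. The first step is to put $h_j(\hat\sigma_j,\hat S_j)$ and $h_j(\tilde\sigma_j,S_j^*)$ in closed form. Since $j\in\mathcal{J}^c$ means $\CR_{\hat S_j}(\B{x}_j)<\ell$ and $\sigma\mapsto\log\sigma+\CR_{\hat S_j}(\B{x}_j)/(2\sigma^2)$ is nondecreasing on $[\sqrt{\CR_{\hat S_j}(\B{x}_j)},\infty)$, the $\sigma_j^2\ge\ell$--constrained minimizer is $\hat\sigma_j^2=\ell$, so $h_j(\hat\sigma_j,\hat S_j)=\tfrac12\log\ell+\CR_{\hat S_j}(\B{x}_j)/(2\ell)$; on $\mathcal{E}_7(j)$ we have $\CR_{S_j^*}(\B{x}_j)\ge\tfrac23(\sigma_j^*)^2>\ell$, the constraint is inactive on the true support, $\tilde\sigma_j^2=\CR_{S_j^*}(\B{x}_j)$, and $h_j(\tilde\sigma_j,S_j^*)=\tfrac12\log\CR_{S_j^*}(\B{x}_j)+\tfrac12$. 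Using $|\hat S_j|-|S_j^*|=\bar t_j-t_j$ and $\lambda=c_\lambda(\log p)/n$, this yields the exact identity
\begin{multline*}
\Delta_j:=h_j(\hat\sigma_j,\hat S_j)+\lambda|\hat S_j|-h_j(\tilde\sigma_j,S_j^*)-\lambda|S_j^*| \\
= g\bigl(\CR_{S_j^*}(\B{x}_j)\bigr)+\frac{\CR_{\hat S_j}(\B{x}_j)-\CR_{S_j^*}(\B{x}_j)}{2\ell}+c_\lambda\frac{\log p}{n}(\bar t_j-t_j),
\end{multline*}
where $g(a):=\tfrac12\log(\ell/a)+a/(2\ell)-\tfrac12$.

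The key observation --- the step I would want firmly in place before anything else --- is that $g\ge 0$ on $(0,\infty)$: writing $t=a/\ell$, one has $g(a)=\tfrac12(t-1-\log t)\ge 0$ because $\log t\le t-1$. This is what makes the logarithmic and quadratic-over-linear parts of $h_j$ harmless; bounding those two terms separately would instead leave an $O(1)$ negative slack per $j\in\mathcal{J}^c$ that does not fit the advertised $\tfrac{\log p}{n}\times(\text{counts})$ form. Dropping $g\ge 0$, I would then insert the $\mathcal{E}_6(j)$ lower bound on $\CR_{\hat S_j}(\B{x}_j)-\CR_{S_j^*}(\B{x}_j)$, divide by $2\ell$, and substitute $\ell=l_\sigma^2/3$, $1/k=c_n(\log p)/n$ (from $n=c_nk\log p$), and $(\sigma_j^*)^2\le u_\sigma^2\le 25\,l_\sigma^2$ (Assumption~\ref{sigmaboundedassum}). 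This turns the $\tfrac3{20}\eta\tilde t_j\tfrac{\log p}{n}$ term of $\mathcal{E}_6(j)$ into $\tfrac{9\eta}{40\,l_\sigma^2}\tilde t_j\tfrac{\log p}{n}$, the $(\sigma_j^*)^2(4c_{t_1}^2+c_{t_2})(t_j+\bar t_j)\tfrac{\log p}{n}$ term into a universal constant times $(t_j+\bar t_j)\tfrac{\log p}{n}$, and the $\tfrac{4t_j}{k}$ term into $\tfrac{6c_n}{l_\sigma^2}t_j\tfrac{\log p}{n}$. Together with the exact $c_\lambda\tfrac{\log p}{n}(\bar t_j-t_j)$ this gives the per-$j$ bound with $c_1=9/40$ and $c_2,c_3,c_4$ universal (built from $75(4c_{t_1}^2+c_{t_2})/2$ and $6$; recall $c_{t_1},c_{t_2}$ are universal).

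Summing the per-$j$ bound over $j\in\mathcal{J}^c$ gives the claimed inequality on the good event, proving the lemma. I do not anticipate a genuine obstacle once the identity for $\Delta_j$ and the inequality $g\ge 0$ are in hand; the only care needed is the accounting of constants, namely to keep $\ell^{-1}=3\,l_\sigma^{-2}$ and $k^{-1}=c_n(\log p)/n$ as the sole carriers of the $l_\sigma^{-2}$ and $c_n$ dependence, so that the coefficient of $t_j$ comes out exactly as $-c_\lambda-c_3-c_nc_4/l_\sigma^2$ and that of $\bar t_j$ as $c_\lambda-c_2$. A secondary point is simply recording that the constrained variance collapses to $\ell$ on $\mathcal{J}^c$ and to $\CR_{S_j^*}(\B{x}_j)$ on the true support, which is the only place $\mathcal{E}_7$ is used.
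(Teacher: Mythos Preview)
Your proposal is correct and follows essentially the same route as the paper: work on $\bigcap_j(\mathcal{E}_6(j)\cap\mathcal{E}_7(j))$, use $\hat\sigma_j^2=\ell$ on $\mathcal{J}^c$ and $\tilde\sigma_j^2=\CR_{S_j^*}(\B{x}_j)$ on $\mathcal{E}_7$, drop a nonnegative ``$\sigma$-optimality'' term, plug in $\mathcal{E}_6$, and substitute $\ell=l_\sigma^2/3$, $1/k=c_n(\log p)/n$, $u_\sigma\le 5l_\sigma$. Your inequality $g(a)=\tfrac12(t-1-\log t)\ge 0$ is exactly the paper's step $(a)$, $h_j(\hat\sigma_j,S_j^*)-h_j(\tilde\sigma_j,S_j^*)\ge 0$, written out explicitly; the resulting constants $c_1=9/40$, $c_4=6$, $c_2=c_3=75(4c_{t_1}^2+c_{t_2})/2$ match.
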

\begin{proof}[\textbf{Proof of Lemma~\ref{ejclem}}.]
The proof of this lemma in on the intersection of events $\mathcal{E}_6$ and $\mathcal{E}_7$ as in Lemmas~\ref{e5bound} and~\ref{e6bounded}. Note that this happens with probability at least
$$1-14kp^{-7}-p(k/p)^{10}.$$
One has
\begin{align}
	& h_j(\hat{\sigma}_j,\tilde{S}_j) - h_{j}(\sigma_j^*,S_j^*) \nonumber \\
	& = \log(\hat{\sigma}_j)+\frac{\mathcal{L}_{\hat{S}_j}(\B{x}_j)}{2\hat{\sigma}_j^2}-\log(\tilde{\sigma}_j)-\frac{\mathcal{L}_{S_j^*}(\B{x}_j)}{2\tilde{\sigma}_j^2} \nonumber\\
	& = \left[\log(\hat{\sigma}_j)-\log(\tilde{\sigma}_j) + \mathcal{L}_{S_j^*}(\B{x}_j)(\frac{1}{2\hat{\sigma}_j}-\frac{1}{2\tilde{\sigma}_j})\right] + \frac{\mathcal{L}_{\hat{S}_j}(\B{x}_j)-\mathcal{L}_{S_j^*}(\B{x}_j)}{2\hat{\sigma}_j}\nonumber \\
	& \stackrel{(a)}{\geq} \frac{\mathcal{L}_{\hat{S}_j}(\B{x}_j)-\mathcal{L}_{S_j^*}(\B{x}_j)}{2\hat{\sigma}_j}\nonumber \\
	& \stackrel{(b)}{=} \frac{\mathcal{L}_{\hat{S}_j}(\B{x}_j)-\mathcal{L}_{S_j^*}(\B{x}_j)}{2\ell}\nonumber \\
	& \stackrel{(c)}{\geq} \frac{    \frac{3}{20}\eta\tilde{t}_j\frac{\log p}{n} -(\sigma_j^*)^2\frac{(t_j+\bar{t}_j)\log p}{n}(4c_{t_1}^2+c_{t_2})-\frac{4t_j}{k}}{2\ell} \nonumber\\
	& \stackrel{(d)}{\geq} \frac{9}{40l_{\sigma}^2}\eta\tilde{t}_j\frac{\log p}{n} -\frac{6t_j}{l_{\sigma}^2 k} - \frac{3u_{\sigma}^2}{2l_{\sigma}^2}(4c_{t_1}^2+c_{t_2})\frac{\log p}{n}t_j - \frac{3u_{\sigma}^2}{2l_{\sigma}^2}(4c_{t_1}^2+c_{t_2})\frac{\log p}{n}\bar{t}_j \nonumber\\
	& \stackrel{(e)}{\geq} \frac{9}{40l_{\sigma}^2}\eta\tilde{t}_j\frac{\log p}{n} -\frac{6c_n}{l_{\sigma}^2 }\frac{\log p}{n}t_j - \frac{75}{2}(4c_{t_1}^2+c_{t_2})\frac{\log p}{n}t_j - \frac{75}{2}(4c_{t_1}^2+c_{t_2})\frac{\log p}{n}\bar{t}_j\label{ejchelper}
\end{align}
where $(a)$ is true as on event $\mathcal{E}_7$, we have $\mathcal{L}_{S_j^*}(\B{x}_j)\geq 2(\sigma_j^*)^2/3>\ell$ so $\tilde{\sigma}_j=\sqrt{\mathcal{L}_{S_j^*}(\B{x}_j)}$ and $h_j(\tilde{\sigma}_j,S_j^*)\leq h_j(\hat{\sigma}_j,S_j^*)$, $(b)$ is true as $\hat{\sigma}_j\geq \ell$, $(c)$ is by event $\mathcal{E}_6$, $(d)$ is by substituting $\ell=l_{\sigma}^2/3$ and $\sigma_j^*\leq u_\sigma$, and $(e)$ is by Assumption~\ref{sigmaboundedassum}, $u_\sigma/l_\sigma\leq 5$ and also $n=c_nk\log p$. By summing~\eqref{ejchelper} over $j\in\mathcal{J}^c$, we achieve  
\begin{multline*}
	\sum_{j\in\mathcal{J}^c} \left[h_j(\hat{\sigma}_j,\hat{S}_j)+\lambda|\hat{S}_j|-h_j(\tilde{\sigma}_j,{S}^*)-\lambda|S_j^*|\right]\geq \\ \frac{c_1}{l_{\sigma}^2}\eta\frac{\log p}{n} \sum_{j\in\mathcal{J}^c}\tilde{t}_j  + \left(c_{\lambda}-c_2\right)\frac{\log p}{n}\sum_{j\in\mathcal{J}^c}\bar{t}_j + \left(-c_{\lambda}-c_3-\frac{c_nc_4}{l_{\sigma}^2}\right)\frac{\log p}{n} \sum_{j\in\mathcal{J}^c} t_j
\end{multline*}
$c_1=9/40$, $c_2=c_3=75(4c_{t_1}^2+c_{t_2})/2$ and $c_4=6$.
\end{proof}
\begin{lemma}\label{loglem}
	Let $a>0$. Then, 
	$$\log(1+x)\geq \frac{x}{1+a}$$
	for $x\in[0,a]$. Similarly, if $a\in(-1,0)$, 
	$$\log(1+x)\geq \frac{x}{1+a}$$
	for $x\in[a,0]$.
\end{lemma}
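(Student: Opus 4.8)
The plan is to prove both inequalities by a single monotonicity argument applied to the auxiliary function
\[
g(x) := \log(1+x) - \frac{x}{1+a},
\]
defined on the relevant interval (which is $[0,a]$ in the first case and $[a,0]$ in the second). In both cases $g$ is differentiable, with $g(0)=0$ and
\[
g'(x) = \frac{1}{1+x} - \frac{1}{1+a}.
\]

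For the first statement, I would observe that when $a>0$ and $x\in[0,a]$ we have $0<1+x\le 1+a$, hence $\frac{1}{1+x}\ge \frac{1}{1+a}$ and therefore $g'(x)\ge 0$ on $[0,a]$. Thus $g$ is nondecreasing there, so $g(x)\ge g(0)=0$ for every $x\in[0,a]$, which is exactly $\log(1+x)\ge \frac{x}{1+a}$.

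For the second statement, when $a\in(-1,0)$ and $x\in[a,0]$ we have $0<1+a\le 1+x$, so now $g'(x)\le 0$ on $[a,0]$, i.e.\ $g$ is nonincreasing on this interval. Since $x\le 0$, nonincreasingness gives $g(x)\ge g(0)=0$, again yielding the claimed bound. Here one only needs $1+a>0$, which is guaranteed by $a>-1$, so all the quantities appearing above are well defined.

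There is no genuine obstacle here: the only thing requiring care is tracking the sign of $g'$ and hence the direction of monotonicity correctly in the two regimes, and checking that $1+x$ and $1+a$ remain positive throughout (which follows from $x\ge a>-1$ in the second case and is automatic in the first). One could alternatively derive the result from concavity of $t\mapsto\log(1+t)$ combined with the elementary inequality $(1+a)\log(1+a)\ge a$, but the direct derivative computation above is shorter and self-contained.
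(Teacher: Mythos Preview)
Your proof is correct and essentially the same as the paper's: the paper writes $\log(1+x)=\int_0^x\frac{dt}{1+t}$ and bounds the integrand by $\frac{1}{1+a}$, which is exactly the pointwise inequality $g'(t)\ge 0$ (resp.\ $\le 0$) that you integrate via monotonicity of $g$. The only difference is packaging---integral comparison versus auxiliary function---and both rest on the same observation $\frac{1}{1+x}\gtrless\frac{1}{1+a}$.
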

\begin{proof}[\textbf{Proof of Lemma~\ref{loglem}}.]
Suppose $a>0$ and $x\in[0,a]$. Note that 
$$\log(1+x) = \int_{0}^x \frac{dt}{1+t}\geq \int_0^x\frac{dt}{1+a}=\frac{x}{1+a}.$$
The proof of other part is similar.
\end{proof}

\begin{lemma}\label{ejlem}
	Let the event $\mathcal{E}_\mathcal{J}$ be defined as
	\begin{multline}
		\mathcal{E}_{\mathcal{J}}= \bigg\{\sum_{j\in\mathcal{J}} \left[h_j(\hat{\sigma}_j,\hat{S}_j)+\lambda|\hat{S}_j|-h_j(\tilde{\sigma}_j,{S}^*)-\lambda|S_j^*|\right]\geq \\
		\frac{c_5\eta}{u_{\sigma}^2}\frac{\log p}{n}\sum_{j\in\mathcal{J}} \tilde{t}_j + (-c_6 -\frac{c_7c_n}{l_{\sigma}^2}-c_\lambda)\frac{\log p}{n}\sum_{j\in\mathcal{J}} {t}_j + (c_{\lambda}-c_8)\frac{\log p}{n}\sum_{j\in\mathcal{J}} \bar{t}_j
		\bigg\}
	\end{multline}
	for some universal constants $c_5,\cdots,c_8>0$. Then, under the assumptions of Theorem~\ref{supthm} 
	$$\p(\mathcal{E}_{\mathcal{J}})\geq 1-27kp^{-7}-3p(k/p)^{10}.$$
\end{lemma}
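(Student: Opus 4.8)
The plan is to repeat the comparison-of-objectives argument of Lemma~\ref{ejclem}, now on the index set $\mathcal{J}$ on which the restricted least-squares residual is large enough that the lower variance bound is inactive. First I would work on the intersection of $\bigcap_{j\in[p]}\mathcal{E}_6(j)$, $\bigcap_{j\in[p]}\mathcal{E}_7(j)$ and the event of Lemma~\ref{lem14}; by the union bound with Lemmas~\ref{e5bound},~\ref{e6bounded} and~\ref{lem14} this has probability at least $1-12kp^{-7}-\bigl(2kp^{-7}+p(k/p)^{10}\bigr)-\bigl(13kp^{-7}+2p(k/p)^{10}\bigr)=1-27kp^{-7}-3p(k/p)^{10}$, which is the claimed bound. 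Fixing $j\in\mathcal{J}$, the definition of $\mathcal{J}$ forces $\hat\sigma_j^2=\mathcal{L}_{\hat S_j}(\B x_j)\geq\ell$, while $\mathcal{E}_7(j)$ together with Assumption~\ref{sigmaboundedassum} gives $\mathcal{L}_{S_j^*}(\B x_j)\geq\tfrac23(\sigma_j^*)^2>\ell$, so $\tilde\sigma_j^2=\mathcal{L}_{S_j^*}(\B x_j)$ (in particular $\ell=l_\sigma^2/3<\tfrac23(\sigma_j^*)^2$ with the value of $\ell$ from Theorem~\ref{supthm}). Hence $h_j(\hat\sigma_j,\hat S_j)-h_j(\tilde\sigma_j,S_j^*)=\tfrac12\log\bigl(\mathcal{L}_{\hat S_j}(\B x_j)/\mathcal{L}_{S_j^*}(\B x_j)\bigr)$, and using $|\hat S_j|-|S_j^*|=\bar t_j-t_j$ and $\lambda=c_\lambda\log p/n$, the $j$-th summand in $\mathcal{E}_{\mathcal{J}}$ equals $\tfrac12\log\bigl(\mathcal{L}_{\hat S_j}(\B x_j)/\mathcal{L}_{S_j^*}(\B x_j)\bigr)+c_\lambda\tfrac{\log p}{n}(\bar t_j-t_j)$.

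The heart of the argument is a per-$j$ lower bound on $\tfrac12\log\bigl(\mathcal{L}_{\hat S_j}/\mathcal{L}_{S_j^*}\bigr)$. Writing $x=\bigl(\mathcal{L}_{\hat S_j}(\B x_j)-\mathcal{L}_{S_j^*}(\B x_j)\bigr)/\mathcal{L}_{S_j^*}(\B x_j)$, Lemma~\ref{lem14} gives $x\in[-99/100,100]$, so Lemma~\ref{loglem} yields $\tfrac12\log(1+x)\geq x/202$ when $x\geq0$ and $\tfrac12\log(1+x)\geq 50x$ when $x<0$. In either regime I would insert the $\mathcal{E}_6(j)$ bound $\mathcal{L}_{\hat S_j}(\B x_j)-\mathcal{L}_{S_j^*}(\B x_j)\geq\tfrac{3}{20}\eta\tilde t_j\tfrac{\log p}{n}-(\sigma_j^*)^2(4c_{t_1}^2+c_{t_2})\tfrac{(t_j+\bar t_j)\log p}{n}-\tfrac{4t_j}{k}$, then divide each summand by $\mathcal{L}_{S_j^*}(\B x_j)$ using $\tfrac23 l_\sigma^2\leq\mathcal{L}_{S_j^*}(\B x_j)\leq\tfrac43 u_\sigma^2$ (from $\mathcal{E}_7(j)$ and Assumption~\ref{sigmaboundedassum}) --- the upper bound on the denominator for the positive $\tilde t_j$ term and the lower bound for the two negative terms --- and finally absorb $l_\sigma,u_\sigma$ into universal constants via $(\sigma_j^*)^2\leq u_\sigma^2\leq 25l_\sigma^2$ and $1/k=c_n\log p/n$. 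This gives $x\geq P-N$ with $P=\tfrac{c\eta}{u_\sigma^2}\tilde t_j\tfrac{\log p}{n}\geq0$ and $N=\bigl(c'(t_j+\bar t_j)+\tfrac{c''c_n}{l_\sigma^2}t_j\bigr)\tfrac{\log p}{n}\geq0$. Combining $x\geq P-N$ with the two regimes of Lemma~\ref{loglem} and keeping a common pair of multipliers valid for both (the weaker factor $1/202$ on $P$, the larger factor $50$ on $N$) produces $\tfrac12\log\bigl(\mathcal{L}_{\hat S_j}/\mathcal{L}_{S_j^*}\bigr)\geq\tfrac{c_5\eta}{u_\sigma^2}\tilde t_j\tfrac{\log p}{n}-c_6(t_j+\bar t_j)\tfrac{\log p}{n}-\tfrac{c_7c_n}{l_\sigma^2}t_j\tfrac{\log p}{n}$.

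To finish, I would add back $c_\lambda\tfrac{\log p}{n}(\bar t_j-t_j)$, regroup the $\bar t_j$-coefficient as $c_\lambda-c_8$ with $c_8:=c_6$ and the $t_j$-coefficient as $-c_6-c_\lambda-\tfrac{c_7c_n}{l_\sigma^2}$, and sum over $j\in\mathcal{J}$, which reproduces exactly the inequality defining $\mathcal{E}_{\mathcal{J}}$. I expect the main obstacle to be precisely this sign bookkeeping: $\mathcal{E}_6$ only lower-bounds $\mathcal{L}_{\hat S_j}-\mathcal{L}_{S_j^*}$, whose sign is not known a priori, so one cannot divide by $\mathcal{L}_{S_j^*}$ with a single fixed rounding of the denominator, and one must handle the $x\geq0$ and $x<0$ branches of Lemma~\ref{loglem} with different constants while still extracting a common lower bound that carries a strictly positive coefficient on $\tilde t_j$ --- that positivity is what ultimately drives the perfect support-recovery conclusion.
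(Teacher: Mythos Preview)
Your proposal is correct and follows essentially the same approach as the paper: work on the intersection of $\bigcap_j\mathcal{E}_6(j)$, $\bigcap_j\mathcal{E}_7(j)$ and the event of Lemma~\ref{lem14}, reduce the $j$-summand to $\tfrac12\log\bigl(\mathcal{L}_{\hat S_j}/\mathcal{L}_{S_j^*}\bigr)+c_\lambda\tfrac{\log p}{n}(\bar t_j-t_j)$, use Lemma~\ref{lem14} together with Lemma~\ref{loglem} to linearize the logarithm with two different slopes according to the sign of $\mathcal{L}_{\hat S_j}-\mathcal{L}_{S_j^*}$, insert the $\mathcal{E}_6(j)$ lower bound on that difference, replace the denominator $\mathcal{L}_{S_j^*}$ via $\mathcal{E}_7(j)$, and finally take the worst of the two slopes on the positive $\tilde t_j$-term and on the negative $(t_j,\bar t_j)$-terms. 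The paper organizes this identically, writing the split as $\mathcal{J}=\mathcal{J}_+\cup\mathcal{J}_-$ with slopes $c^{(1)}=1/101$ and $c^{(2)}=100$, applying $\mathcal{E}_7$ to pass from $\mathcal{L}_{S_j^*}$ to $(\sigma_j^*)^2$ before invoking $\mathcal{E}_6$, and then using $c^{(2)}>c^{(1)}$ to merge the two sums; the resulting constants are $c_5=9c^{(1)}/80$, $c_6=c_8=\tfrac{3}{2}c^{(2)}(4c_{t_1}^2+c_{t_2})$, $c_7=6c^{(2)}$.
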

\begin{proof}[\textbf{Proof of Lemma~\ref{ejlem}}.]
The proof of this lemma in on the intersection of events $\mathcal{E}_6$, $\mathcal{E}_7$ as in Lemmas~\ref{e5bound} and~\ref{e6bounded} and the event in Lemma~\ref{lem14}. Note that this happens with probability at least
$$1-27kp^{-7}-3p(k/p)^{10}.$$
Let $\mathcal{J}_+,\mathcal{J}_-\subseteq[p]$ be defined as
\begin{equation}
	\begin{aligned}
		\mathcal{J}_+&=\{j\in\mathcal{J}: \CR_{\hat{S}_j}(\B{x}_j)-\CR_{S_j^*}(\B{x}_j)\geq 0\} \\
		\mathcal{J}_-&=\{j\in\mathcal{J}: \CR_{\hat{S}_j}(\B{x}_j)-\CR_{S_j^*}(\B{x}_j)< 0\}.
	\end{aligned}
\end{equation}
Based on Lemma~\ref{lem14}, for $j\in\mathcal{J}_+$, we have 
$$0\leq\frac{\CR_{\hat{S}_j}(\B{x}_j)-\CR_{S_j^*}(\B{x}_j)}{\CR_{S_j^*}(\B{x}_j)}\leq 100.$$
Consequently, by Lemma~\ref{loglem}, for $j\in\mathcal{J}_+$ we have
\begin{equation}\label{c1ineq}
	\log\left(1+\frac{\CR_{\hat{S}_j}(\B{x}_j)-\CR_{S_j^*}(\B{x}_j)}{\CR_{S_j^*}(\B{x}_j)}\right)\geq c^{(1)}\frac{\CR_{\hat{S}_j}(\B{x}_j)-\CR_{S_j^*}(\B{x}_j)}{\CR_{S_j^*}(\B{x}_j)} 
\end{equation}
where $c^{(1)}=1/101$. Similarly, for $j\in\mathcal{J}_-$ we have
\begin{equation}\label{c2ineq}
	\log\left(1+\frac{\CR_{\hat{S}_j}(\B{x}_j)-\CR_{S_j^*}(\B{x}_j)}{\CR_{S_j^*}(\B{x}_j)}\right)\geq c^{(2)}\frac{\CR_{\hat{S}_j}(\B{x}_j)-\CR_{S_j^*}(\B{x}_j)}{\CR_{S_j^*}(\B{x}_j)} 
\end{equation}
where $c^{(2)}=100$. By discussion leading to~\eqref{h-twopart}, we have that for $j\in\mathcal{J}$,
$$h_j(\hat{\sigma}_j,\hat{S}_j)=\frac{\log (\CR_{\hat{S}_j}(\B{x}_j))}{2} + \frac{1}{2}, h_j(\tilde{\sigma}_j,{S}^*_j)=\frac{\log(\CR_{{S}^*_j}(\B{x}_j))}{2} + \frac{1}{2}.$$
Therefore, one has
\begin{align}
	&\sum_{j\in\mathcal{J}}\left\{h_j(\hat{\sigma}_j,\hat{S}_j)-h_j(\tilde{\sigma}_j,{S}^*_j)+\lambda |\hat{S}_j|-\lambda |S_j^*|\right\} \nonumber\\
	=& \sum_{j\in\mathcal{J}}  \left\{\frac{1}{2}\log(\CR_{\hat{S}_j}(\B{x}_j))-\frac{1}{2}\log(\CR_{{S}^*_j}(\B{x}_j))+\lambda |\hat{S}_j|-\lambda |S_j^*|\right\}\nonumber \\
	=&   \sum_{j\in\mathcal{J}} \frac{1}{2}\log\left(1+\frac{\CR_{\hat{S}_j}(\B{x}_j)-\CR_{{S}^*_j}(\B{x}_j)}{\CR_{{S}^*_j}(\B{x}_j)}\right) + \lambda \sum_{j\in\mathcal{J}}(\bar{t}_j-t_j) \nonumber\\
	\stackrel{(a)}{=}& \sum_{j\in\mathcal{J}_+} \frac{1}{2}\log\left(1+\frac{\CR_{\hat{S}_j}(\B{x}_j)-\CR_{{S}^*_j}(\B{x}_j)}{\CR_{{S}^*_j}(\B{x}_j)}\right)+\sum_{j\in\mathcal{J}_-} \frac{1}{2}\log\left(1+\frac{\CR_{\hat{S}_j}(\B{x}_j)-\CR_{{S}^*_j}(\B{x}_j)}{\CR_{{S}^*_j}(\B{x}_j)}\right)+ \lambda \sum_{j\in\mathcal{J}}(\bar{t}_j-t_j)\nonumber \\
	\stackrel{(b)}{\geq} &  c^{(1)}\sum_{j\in\mathcal{J}_+} \frac{\CR_{\hat{S}_j}(\B{x}_j)-\CR_{{S}^*_j}(\B{x}_j)}{\CR_{{S}^*_j}(\B{x}_j)}+c^{(2)}\sum_{j\in\mathcal{J}_-} \frac{\CR_{\hat{S}_j}(\B{x}_j)-\CR_{{S}^*_j}(\B{x}_j)}{\CR_{{S}^*_j}(\B{x}_j)} + \lambda \sum_{j\in\mathcal{J}}(\bar{t}_j-t_j)\nonumber \\
	\stackrel{(c)}{\geq }& \frac{3c^{(1)}}{4}\sum_{j\in\mathcal{J}_+} \frac{\CR_{\hat{S}_j}(\B{x}_j)-\CR_{{S}^*_j}(\B{x}_j)}{(\sigma_j^*)^2}+\frac{3c^{(2)}}{2}\sum_{j\in\mathcal{J}_-} \frac{\CR_{\hat{S}_j}(\B{x}_j)-\CR_{{S}^*_j}(\B{x}_j)}{(\sigma_j^*)^2}+ \lambda \sum_{j\in\mathcal{J}}(\bar{t}_j-t_j) \label{proofthm3ineq1}
\end{align}
where $(a)$ is by the fact that $\mathcal{J}_+,\mathcal{J}_-$ is a partition of $\mathcal{J}$, $(b)$ is due to~\eqref{c1ineq} and~\eqref{c2ineq}, and $(c)$ is due to event $\mathcal{E}_7$. From~\eqref{proofthm3ineq1} and event $\mathcal{E}_6$,
\begin{align}
	&\sum_{j\in\mathcal{J}}\left\{h_j(\hat{\sigma}_j,\hat{S}_j)-h_j(\tilde{\sigma}_j,{S}^*_j)+\lambda |\hat{S}_j|-\lambda |S_j^*|\right\} \nonumber\\
	\geq & \frac{3c^{(1)}}{4}\sum_{j\in\mathcal{J}_+} \frac{\frac{3}{20}\eta \frac{\tilde{t}_j\log p}{n}-(\sigma_j^*)^2\frac{(t_j+\bar{t}_j)\log p}{n}(4c_{t_1}^2+c_{t_2})-\frac{4t_j}{k}}{(\sigma_j^*)^2}\nonumber\\&+\frac{3c^{(2)}}{2}\sum_{j\in\mathcal{J}_-} \frac{\frac{3}{20}\eta \frac{\tilde{t}_j\log p}{n}-(\sigma_j^*)^2\frac{(t_j+\bar{t}_j)\log p}{n}(4c_{t_1}^2+c_{t_2})-\frac{4t_j}{k}}{(\sigma_j^*)^2} + c_{\lambda}\frac{\log p}{n} \sum_{j\in\mathcal{J}}(\bar{t}_j-t_j)\nonumber \\
	\stackrel{(a)}{\geq} & \sum_{j\in\mathcal{J}}\left[\frac{9c^{(1)}\eta}{80}\frac{\log p }{n}\tilde{t}_j-\frac{3c^{(2)}(4c_{t_1}^2+c_{t_2})}{2}\frac{\log p }{n}t_j-\frac{{6c^{(2)}}}{l_{\sigma}^2}c_n\frac{\log p }{n}t_j-c_\lambda t_j-\frac{3c^{(2)}(4c_{t_1}^2+c_{t_2})}{2}\frac{\log p }{n}\bar{t}_j+c_\lambda\bar{t}_j\right] \nonumber\\
	\stackrel{}{\geq}& \frac{c_5\eta}{u_{\sigma}^2}\frac{\log p}{n}\sum_{j\in\mathcal{J}} \tilde{t}_j + (-c_6 -\frac{c_7c_n}{l_{\sigma}^2}-c_\lambda)\frac{\log p}{n}\sum_{j\in\mathcal{J}} {t}_j + (c_{\lambda}-c_8)\frac{\log p}{n}\sum_{j\in\mathcal{J}} \bar{t}_j \label{thm3proofineq2}
\end{align} 
where $(a)$ is due to the fact $c^{(2)}>c^{(1)}$, and $c_5=9c^{(1)}/80$, $c_6=c_8=3c^{(2)}(4c_{t_1}^2+c_{t_2})/2$ and $c_7=6c^{(2)}$.
\end{proof}

\subsubsection{Proof of Theorem \ref{supthm}}
The proof of this theorem is on the intersection of events $\mathcal{E}_\mathcal{J}$ and $\mathcal{E}_{\mathcal{J}^c}$ as in Lemmas~\ref{ejlem} and~\ref{ejclem}. Note that this happens with probability at least 
\begin{equation}\label{supthmprob}
	1-4p(k/p)^{10}-41kp^{-7}.
\end{equation}
By optimality of $\hat{z}$ and feasibility of $z^*$ for~\eqref{main-sup}, we have
\begin{align}
	0 \geq &\sum_{j=1}^p \left\{h_j(\hat{\sigma}_j,\hat{S}_j)-h_j(\tilde{\sigma}_j,{S}^*_j)+\lambda |\hat{S}_j|-\lambda |S_j^*|\right\} \nonumber\\
	= &  \sum_{j\in\mathcal{J}} \left\{h_j(\hat{\sigma}_j,\hat{S}_j)-h_j(\tilde{\sigma}_j,{S}^*_j)+\lambda |\hat{S}_j|-\lambda |S_j^*|\right\}+\sum_{j\in\mathcal{J}^c} \left\{h_j(\hat{\sigma}_j,\hat{S}_j)-h_j(\tilde{\sigma}_j,{S}^*_j)+\lambda |\hat{S}_j|-\lambda |S_j^*|\right\}\nonumber \\
	\stackrel{(a)}{\geq} &  \frac{c_5\eta}{u_{\sigma}^2}\frac{\log p}{n}\sum_{j\in\mathcal{J}} \tilde{t}_j + (-c_6 -\frac{c_7c_n}{l_{\sigma}^2}-c_\lambda)\frac{\log p}{n}\sum_{j\in\mathcal{J}} {t}_j + (c_{\lambda}-c_8)\frac{\log p}{n}\sum_{j\in\mathcal{J}} \bar{t}_j\nonumber \\
	& +\frac{c_1}{l_{\sigma}^2}\eta\frac{\log p}{n} \sum_{j\in\mathcal{J}^c}\tilde{t}_j  + \left(c_{\lambda}-c_2\right)\frac{\log p}{n}\sum_{j\in\mathcal{J}^c}\bar{t}_j + \left(-c_{\lambda}-c_3-\frac{c_nc_4}{l_{\sigma}^2}\right)\frac{\log p}{n} \sum_{j\in\mathcal{J}^c} t_j \nonumber \\
	\geq &  \frac{c_{f_1}\eta}{u_{\sigma}^2}\frac{\log p}{n}\sum_{j=1}^p \tilde{t}_j + \left(-c_{\lambda}-c_{f_2}-\frac{c_nc_{f_3}}{l_{\sigma}^2}\right)\frac{\log p}{n} \sum_{j=1}^p t_j +  (c_{\lambda}-c_{f_4})\frac{\log p}{n}\sum_{j=1}^p \bar{t}_j \nonumber \\
	\stackrel{(b)}{=}  &\left[\frac{c_{f_1}\eta}{u_{\sigma}^2}-2c_{\lambda}-2c_{f_2}-\frac{2c_nc_{f_3}}{l_{\sigma}^2}\right]\frac{\log p}{n}\sum_{j=1}^p \tilde{t}_j  +  (c_{\lambda}-c_{f_4})\frac{\log p}{n}\sum_{j=1}^p \bar{t}_j\label{supthmineqfinal}
\end{align}
where 
$$\text{$c_{f_1}=c_1\land c_5$, $c_{f_2}=c_3\lor c_6$, $c_{f_3}=c_4\lor c_7$ and $c_{f_4}=c_2\lor c_8$},$$ $(a)$ is due to  Lemmas~\ref{ejlem} and~\ref{ejclem} and $(b)$ is true as if $\hat{z}_{ij}\neq z^*_{ij}$, then $\hat{z}_{ji}\neq z^*_{ji}$ so $\sum_{j=1}^p t_j=2\sum_{j=1}^p \tilde{t}_j$. Take $c_\lambda>c_{f_4}$ and $\eta\gtrsim (2c_{\lambda}+2c_{f_2}+\frac{2c_nc_{f_3}}{l_{\sigma}^2})u_{\sigma}^2$.

Therefore, from~\eqref{supthmineqfinal} we have
$$0\geq \underbrace{\left[\frac{c_{f_1}\eta}{u_{\sigma}^2}-2c_{\lambda}-2c_{f_2}-\frac{2c_nc_{f_3}}{l_{\sigma}^2}\right]\frac{\log p}{n}}_{>0}\sum_{j=1}^p \tilde{t}_j  +  \underbrace{(c_{\lambda}-c_{f_4})\frac{\log p}{n}}_{>0}\sum_{j=1}^p \bar{t}_j$$
which implies $\sum_{j=1}^p \tilde{t}_j=\sum_{j=1}^p \bar{t}_j=0$ or equivalently $\hat{z}_{ij}=z^*_{ij}$.

\section{Additional Experiments from Section~\ref{sec:expts}}\label{app:numerical}

\subsection{Experimental setup details}\label{app:num-details}

The regularization coefficients for all methods are chosen from a grid ranging from $\sqrt{\log p/n}/100$ and $100\sqrt{\log p/n}$, using a validation set on the pseudo-likelihood loss:
\begin{equation}\label{appendix-eq-val-loss}
    \sum_{i=1}^p\bigl(-\log(\theta_{ii})+\frac1{\theta_{ii}}\norm{\btX_{\text{val}}\bm\theta_{i}}^2\bigr)
\end{equation}
where $\B{X}_{\text{val}}$ denotes the validation dataset. 
Next, we discuss the termination criteria used for our solvers. For any two values $c_1, c_2\in\R$, we define the relative gap between $c_1,c_2$ as
${|c_1-c_2|}/{\max(|c_1|,|c_2|)}.$
For the CD-based approximate solver, including when used as a standalone solver or when used in the BnB search to obtain upper bounds, we use the following termination criteria: We terminate our algorithm when the relative objective value gap between two successive iterations is less than $10^{-4}$.  We also use a primal/dual relative gap tolerance of $10^{-4}$ when applying the CD method to node relaxations.

As for the Big-M value, in our synthetic experiments we set $M=2\max_{i,j\in[p]}|\theta^*_{ij}|$ where $\B\Theta^*$ is the underlying precision matrix. In the real data example, we set $M=2$ as for this value of $M$, we have $M>\max_{i,j\in[p]}|\hat{\theta}_{ij}|$ where $\hat{\B\Theta}$ is the estimated precision matrix from our approximate solver (initial incumbent). Our experiments in this section show that as long as $\lambda_0,\lambda_2>0$ are sufficiently large in Problem~\eqref{eqn:mio}, choosing a larger value of $M$ does not lead to a drastic increase in the runtime. On the other hand, we observe that choosing $M$ to be too small can hurt statistical performance. Therefore, we recommend choosing a sufficiently large value of Big-M so that it does not affect the optimal solution to Problem~\eqref{eqn:mio}. Since our approximate solvers usually return high-quality solutions that are identical or close to the optimal ones, we recommend using a solution from the approximate solver to obtain an estimate of how large $M$ needs to be.

In the rest of this appendix, we present additional numerical experiments:
\begin{enumerate}
    \item Explore how upper/lower bounds of BnB procedure evolve over the depth of BnB tree.
    \item Number of nodes explored by the BnB tree. 
    \item Quality of the root relaxation.
    \item Ablation studies on the impact of Big-M parameter in formulation~\eqref{eqn:mio}. 
    \item The choice of $\ell_2$ tuning parameter and how it affects the statistical and computational performance of our estimator.
\end{enumerate}

\subsection{A deeper investigation of the BnB method and ablation studies}
In this section, we explore several performance characteristics of our BnB method.

\subsubsection{Evolution of lower and upper bounds}
Here, we study the same setup from Section~\ref{num-mosek} with $k=10$. We set $n=1000$ and study two cases with $p=50$ and $p=100$. We run the BnB solver and record the lower bound and the incumbent upper bound for each depth of the BnB search tree. We plot the results for these two cases in Figure~\ref{fig:zerogap}.

\begin{figure}[ht]
     \centering
     \begin{tabular}{cc}
     $p=50$ & $p=100$\\
     \includegraphics[width=0.4\linewidth,trim =.0cm 0cm .0cm 0cm, clip = true]{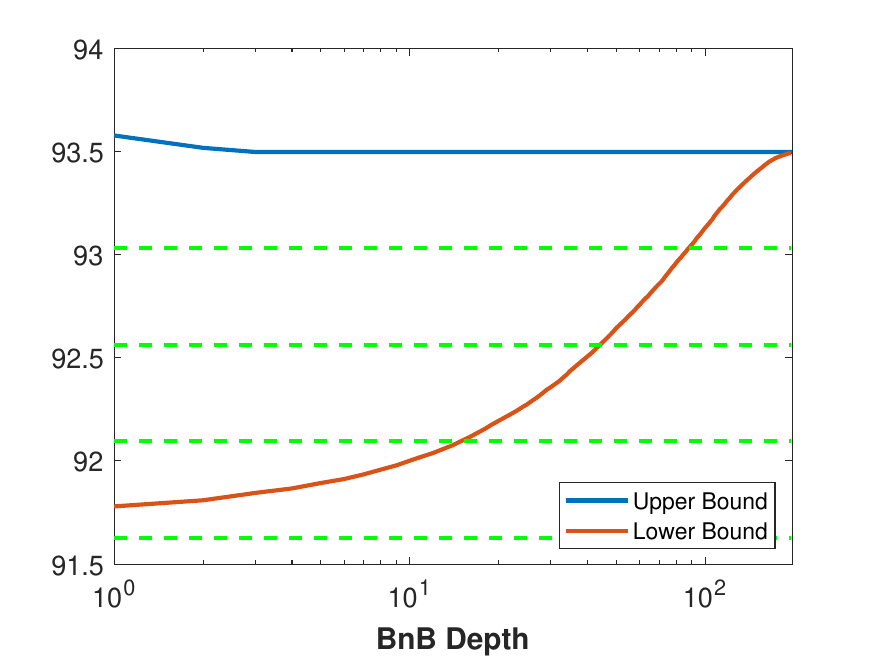}& 
     \includegraphics[width=0.4\linewidth,trim =.0cm 0cm .0cm 0cm, clip = true]{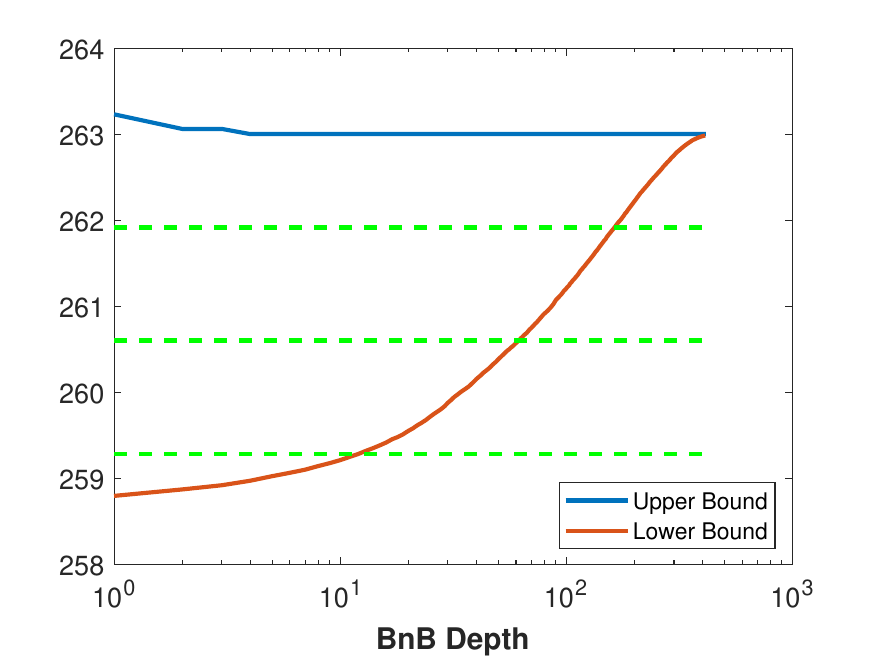}
\end{tabular}
        \caption{The progress of upper and lower bounds over the course of the BnB solver. Each green dashed line represents a $0.5\%$ gap from the initial incumbent upper bound.}
        \label{fig:zerogap}
\end{figure}

We make some observations from this figure:

\customparagraph{Upper bounds} We see that our BnB solver obtains a high-quality incumbent early on. However, even though the root incumbent is close to the optimal objective, BnB makes small improvements to the incumbent when exploring depths less than 5 or so.

\customparagraph{Lower bounds} In cases demonstrated here, at root, the lower bound has an optimality gap of $1.5-2\%$. As we explore a deeper BnB tree, the lower bound gradually increases and reaches $<0.01\%$ optimality gap for sufficiently large depths.

\begin{table}[t!]
\begin{minipage}{0.5\textwidth}
\footnotesize
\centering\footnotesize
\caption{Number of nodes explored for experiments in Section~\ref{num-mosek}.}
\label{table:nodes}
    \begin{tabular}{ ccc }
 $p$ & $n$  & Nodes\\
 \hline\hline
 \multirow{2}{*}{$100$} & 500  & 1427 \\
& 1000  &  1093 \\
 \hline
 \multirow{2}{*}{$250$} & 500   &  6561  \\
& 1000 &  5797 \\
 \hline
  \multirow{2}{*}{$500$} & 500    & 8095  \\
& 1000 &  7933 \\
 \hline
  \multirow{2}{*}{$1000$} & 500   &  15239 \\
& 1000 & 14817  \\
 \hline
   \multirow{2}{*}{$2500$} & 500    & 3835 \\
& 1000 & 3471 \\
 \hline
  \multirow{2}{*}{$5000$} & 500   & 127  \\
& 1000 &  127 \\
 \hline

\end{tabular}
\end{minipage}
\begin{minipage}{0.5\textwidth}
\footnotesize
\centering\footnotesize
\caption{Performance of root relaxation solver in Appendix~\ref{app:root}.}
\label{table:root}
    \begin{tabular}{ cc|cc }
 $p$ & $n$  & Time & Gap\\
 \hline\hline
  \multirow{2}{*}{$500$} & 500   & $1.2\pm0.1$ & $1.9\%$   \\
& 1000 &  $1.3\pm0.2$ & $1.8\%$  \\
 \hline
  \multirow{2}{*}{$1000$} & 500   & $7.9\pm 0.8$  &$2.5\%$  \\
& 1000 &   $6.1\pm 0.4$  &  $2.4\%$ \\
 \hline
   \multirow{2}{*}{$2500$} & 500   & $99.2\pm12.6$ &  3.8\%  \\
& 1000 &  $81.7\pm15.4$ &  3.6\% \\
 \hline
  \multirow{2}{*}{$5000$} & 500   & $384\pm 94$ & $5.2\%$  \\
& 1000 &  $412\pm 86$ & $4.9\%$ \\
 \hline

\end{tabular}
\end{minipage}
\end{table}

\subsubsection{Number of nodes explored}
We report the number of nodes explored by~\ourmethod~for one replication of the experiments in Section~\ref{num-mosek} in Table~\ref{table:nodes}.

\subsubsection{Performance of root relaxation solver}\label{app:root} We also study the performance of our root node solver, in terms of scalability and the quality of dual bounds. In particular, for the experiments in Section~\ref{num-mosek} and Table~\ref{table:times-exact}, we report the runtime to solve the root relaxation, as well as the root MIP gap in Table~\ref{table:root}. As we see, even for $p=1000$ we can obtain dual bounds to our problem in seconds, and for $p=5000$, we can obtain dual bounds in minutes. Even though the root relaxation is convex, it is a challenging optimization problem with around $p^2/2$-many variables. Our results hence show the effectiveness of our CD-based algorithms for solving the root and node convex relaxations. We also see that if the MIP gap at root is not too large, simply solving the root relaxation can deliver optimality certificates for our estimator quickly.

\subsection{Investigating the effect of $M$ and $\lambda_2$ on the runtime}\label{app:bigm}
Next, we study how changing the Big-M value in~\eqref{eqn:mio} impacts the runtime of \ourmethod. In particular, we use the same setup as in Section~\ref{num-mosek} with $p=250$. We take $M=a\max_{i,j\in[p]}|\theta^*_{ij}|$ and vary the value of $a>0$. (For each $M$, we choose the values of $\lambda_0,\lambda_2$ based on the validation loss, as discussed in Appendix~\ref{app:num-details}.) We report the runtime of \ourmethod~to get $1\%$ MIP gap in Table~\ref{table:times-M} for different values of $a$ and $n$.

\begin{table}[t!]
\footnotesize

\centering\footnotesize
\caption{ Effect of changing $M$ in~\eqref{eqn:mio} on the runtime of \ourmethod. We see that the runtime is not too sensitive to the value of Big-M. The details are discussed in Appendix~\ref{app:bigm}.}
\label{table:times-M}
    \begin{tabular}{ c|cccc }
$n$  &  $a=1$ & $a=2 $& $a=5$ & $a=100$\\
 \hline\hline
$500$ & $77.8\pm5.3$ & $89.4\pm6.2$ & $97.9\pm 13.1$   & $105.7\pm 5.1$  \\  
$1000$ & $81.0\pm6.2$ & $93.2\pm7.6$ & $102.5\pm 8.4$& $108.4\pm 3.9$\\
\hline

\end{tabular}
\end{table}
Although a smaller value of $M$ results in faster algorithms in Table~\ref{table:times-M}, we do not observe a significant difference in runtime for different values of $M$. Therefore, as long as $M$ is chosen sufficiently large (so that the optimal solution of~\eqref{eqn:mio} is not affected), our method does not seem too sensitive to $M$.

In our next set of experiments, we investigate the effect of the $\ell_2$ regularization term on the runtime of \ourmethod. To this end, we consider the same setup as in Section~\ref{num-mosek} with $n=1000, p=100$. We also choose our default value of Big-M (as in Appendix~\ref{app:num-details}). Next, we fix a value of $\lambda_2$, and use our approximate solver to calculate a path of solutions for different values of $\lambda_0$. We let $\lambda_0^*(\lambda_2)$ be the value of $\lambda_0$  that minimizes the validation loss (as discussed in Appendix~\ref{app:num-details}). We report the runtime of our BnB solver for different pairs of $(\lambda_0^*(\lambda_2),\lambda_2)$ in Table~\ref{table:times-lambda2}. We see that in general, including the $\ell_2$ regularization term is helpful to reduce \ourmethod's runtime.

\begin{table}[t!]
\footnotesize

\centering\footnotesize
\caption{ Effect of changing $\lambda_2$ in~\eqref{eqn:mio} on the runtime of \ourmethod. We report the runtime to reach $1\%$ MIP gap. If $1\%$ gap is not reached after one hour, we report the final MIP gap in the parenthesis (results averaged across replications).  We see that increasing $\lambda_2$ leads to reduced runtimes. The details are discussed in Appendix~\ref{app:bigm}.}
\label{table:times-lambda2}
    \begin{tabular}{ c|cccccc }
  &  $\lambda_2=0.01$ & $\lambda_2=0.05 $ & $\lambda_2=0.1$ & $\lambda_2=0.2$ & $\lambda_2=0.5$ & $\lambda_2=1$\\
 \hline\hline
Runtime & $(2.3\%)$  & $(1.2\%)$   & $1508\pm227$ &  $46.6\pm 7.3$ &  $9.6\pm0.5$ & $0.2\pm0.0$

\end{tabular}
\end{table}

\subsection{Statistical effect of $\ell_2$ regularization and Big-M}\label{app:l2reg}
We study the effect of $\ell_2$ regularization in Problem~\eqref{eqn:mio} in terms of statistical properties. In particular, we see that in our numerical experiments, when we select the hyper-parameter $\lambda_2$ in Problem~\eqref{eqn:mio} based on validation tuning, we end up with 
$\lambda_2>0$ (strictly away from zero).  
To this end, we consider the same setup as the uniform sparsity of Section~\ref{mediumscale} with $k=10$. Let $\hat{\B\Theta}(\lambda_0,\lambda_2)$ be a solution to~\eqref{eqn:mio} for regularization coefficients $\lambda_0,\lambda_2$, available from our approximate solver. We define $\text{CV}(\lambda_0,\lambda_2)$ as the pseudo-likelihood based validation loss of $\hat{\B\Theta}(\lambda_0,\lambda_2)$ computed on a held-out validation set (this is a proxy for the test error of the estimator). That is,
$$\text{CV}(\lambda_0,\lambda_2)=\sum_{i=1}^p\bigl(-\log(\hat\theta_{ii}(\lambda_0,\lambda_2))+\frac1{\hat\theta_{ii}(\lambda_0,\lambda_2)}\norm{\btX_{\text{val}}\hat{\bm\theta_{i}}(\lambda_0,\lambda_2)}^2\bigr)$$
where $\B X_{\text{val}}$ is the validation data. We define $\lambda_0^*,\lambda_2^*$ as the regularization parameters that minimize the validation loss
$$(\lambda_0^*,\lambda_2^*) \in \arg\min_{\lambda_0,\lambda_2}\text{CV}(\lambda_0,\lambda_2).$$
We study the behavior of $\text{CV}(\lambda_0^*,\lambda_2)$ for different values of $\lambda_2$. We plot this quantity for two values of $n \in \{50, 100\}$ in Figure~\ref{fig:lambda}. As we see, the validation loss is minimized for nonzero values of $\lambda_2$. Interestingly, when $n$ is smaller, a larger value of $\lambda_2$ seems to be helpful suggesting the necessity of higher shrinkage.

\begin{figure}[ht]
     \centering
     \begin{tabular}{cc}
\small $n=50$ & \small $n=100$  \\
     \includegraphics[width=0.3\linewidth,trim =.8cm 0cm .8cm 0cm, clip = true]{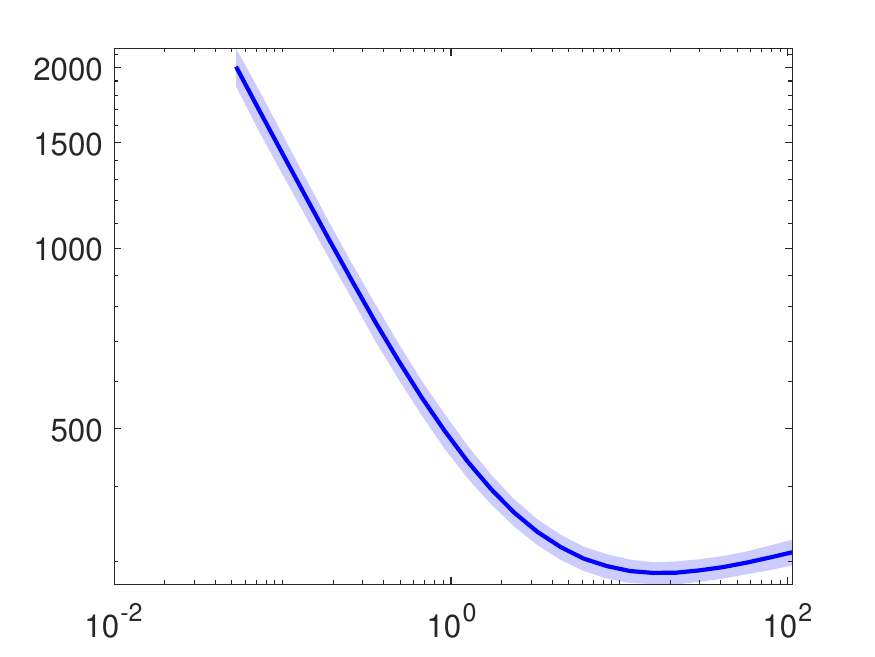}& 
     \includegraphics[width=0.3\linewidth,trim =.8cm 0cm .8cm 0cm, clip = true]{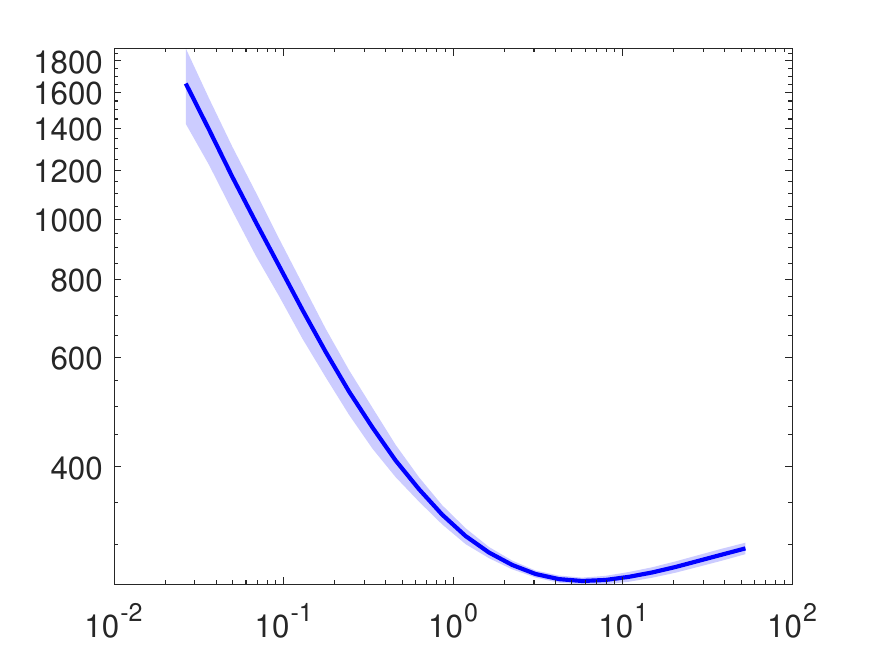} \\
     $\lambda_2$ & $\lambda_2$
\end{tabular}
        \caption{The validation loss for different values of $\lambda_2$, when $\lambda_0$ is fixed to its optimal value ($\text{CV}(\lambda_0^*,\lambda_2)$). The details are discussed in Appendix~\ref{app:l2reg}.}
        \label{fig:lambda}
\end{figure}

Next, we study the statistical effect of varying the Big-M value in Problem~\eqref{eqn:mio}. We follow an experimental setup similar to the one discussed above with $n=100$. In particular, we fix a value of the Big-M parameter $M$ in Problem~\eqref{eqn:mio}, and use our approximate solver to compute a path of solutions over a $4\times 4$ grid of $(\lambda_0,\lambda_2)$ values (we refer to Appendix~\ref{app:num-details} for more details). We let $\hat{\B\Theta}(M)$ be a solution from the computed path that has the lowest validation loss (across the tuning parameters $(\lambda_0,\lambda_2)$). We define $\widehat{\text{CV}}(M)$ to be the validation loss corresponding to $\hat{\B\Theta}(M)$: 
$$\widehat{\text{CV}}(M)=\sum_{i=1}^p\bigl(-\log(\hat\theta_{ii}(M))+\frac1{\hat\theta_{ii}(M)}\norm{\btX_{\text{val}}\hat{\bm\theta_{i}}(M)}^2\bigr).$$
We plot $\widehat{\text{CV}}(M)$ for different values of $M$ and $n=1000$ in Figure~\ref{fig:m}. We see that choosing $M$ to be too small is detrimental to the validation loss, while choosing $M$ to be too large does not improve the validation loss (over a moderate value of $M$). 

\begin{figure}[ht]

     \centering
     \begin{tabular}{cc}
     &  \\
  \rotatebox{90}{~~~~~~~~~~~~~~~~~~~~$\widehat{\text{CV}}(M)$} &   \includegraphics[width=0.5\linewidth,trim =.1cm 0cm .1cm 0cm, clip = true]{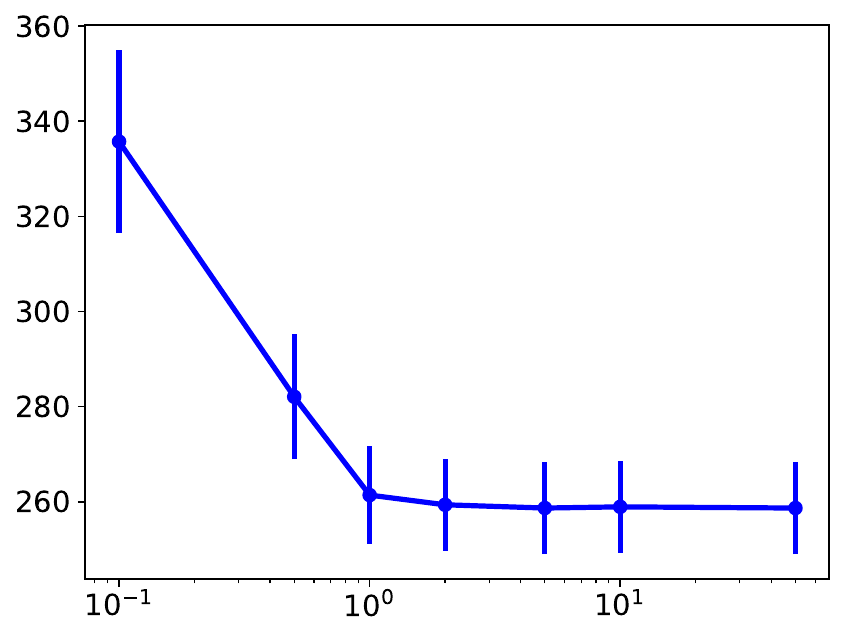} \\
   &  $M/\max_{i,j}|\theta^*_{ij}|$
\end{tabular}
        \caption{The validation loss of $\hat{\B\Theta}(M)$, denoted as $\widehat{\text{CV}}(M)$,
        for different values of $M$. The details are discussed in Appendix~\ref{app:l2reg}.}
        \label{fig:m}
\end{figure}

\subsection{Additional experiments on portfolio optimization}\label{appendix:portfolio}
We perform additional numerical experiments on the example from Section~\ref{financesec}. We consider the top-100 stocks from Section~\ref{financesec}. We split the 
dataset into two parts (based on even and odd time indices) called $D_1$ and $D_2$ (1250 samples per split). We use the first half of the data $D_1$ to compute solutions to the sparse GGM problem for a range of hyper-parameters. 
We consider 100 distinct hyper-parameters (corresponding to a $10\times 10$ grid over $\lambda_0,\lambda_2$) and obtain the estimated precision matrices $\{\hat{\B\Theta}_1,\cdots,\hat{\B\Theta}_{100}\}$ using our estimator \ourmethodnobnb.  
For each estimated precision matrix $\hat{\B\Theta}_i$, we solve the portfolio optimization problem~\eqref{portfolio} using $\B\Sigma_X = \hat{\B\Theta}_i^{-1}$ and the second half of the data (that is, $D_2$) in the minimum return constraint in~\eqref{portfolio}. Similarly, for \texttt{GLASSO}, \texttt{CONCORD} and \texttt{CLIME} we consider 100 distinct values for their corresponding hyper-parameters and obtain 100 estimated precision matrices which we then use to solve problem~\eqref{portfolio}.
We consider various values of $\bar{r}$ ranging from zero to 140 for each solution, and calculate the risk and return of each solution based on the data $D_2$. 

We conduct two sets of experiments. 
First, we obtain the return and risk for each solution (for every hyperparameter and method). For every method, we then compute the Pareto frontier of return/risk over different values of $\bar{r}$. We also include the results from a baseline method that estimates $\B{\Sigma}_X$ in~\eqref{portfolio} with the sample covariance matrix. The results are shown in Figure~\ref{fig:app-portfolio} [Left Panel]. We see that overall \ourmethodnobnb~and \texttt{GLASSO} appear to have the best performance---both improving upon the baseline.

Next we consider precision matrix estimators with a budget on their sparsity levels as such estimators might be more desirable in practice (due to their sparsity properties). In particular, among all solutions along the path of hyper-parameters, we only keep the ones that have at most $k$ nonzeros $\|\hat{\B\Theta}\|_0\leq k$. We then calculate the Pareto frontier as mentioned above. We show results for $k=2000$ in Figure~\ref{fig:app-portfolio} [Middle Panel] and for $k=1000$ in Figure~\ref{fig:app-portfolio} [Right Panel]. We observe that \ourmethodnobnb~seems to have the best performance under the minimum sparsity constraint for a wide range of return values. Interestingly, enforcing a sparsity level of $k=2000$ has a minimal effect on \ourmethodnobnb's Pareto frontier. This further demonstrates the potential usefulness of our $\ell_0$-based approach as a method for obtaining sparse precision matrices.

\begin{figure}[ht]
     \centering
     \begin{tabular}{cccc}
 & \small ~~~No Minimum Sparsity & \small   ~~$\|\hat{\B\Theta}\|_0\leq 2000$  & \small ~~$\|\hat{\B\Theta}\|_0\leq 1000$ \\
    \rotatebox{90}{~~~~~~~~~~~~~Risk} & \includegraphics[width=0.28\linewidth,trim =.1cm 0cm .1cm 0cm, clip = true] {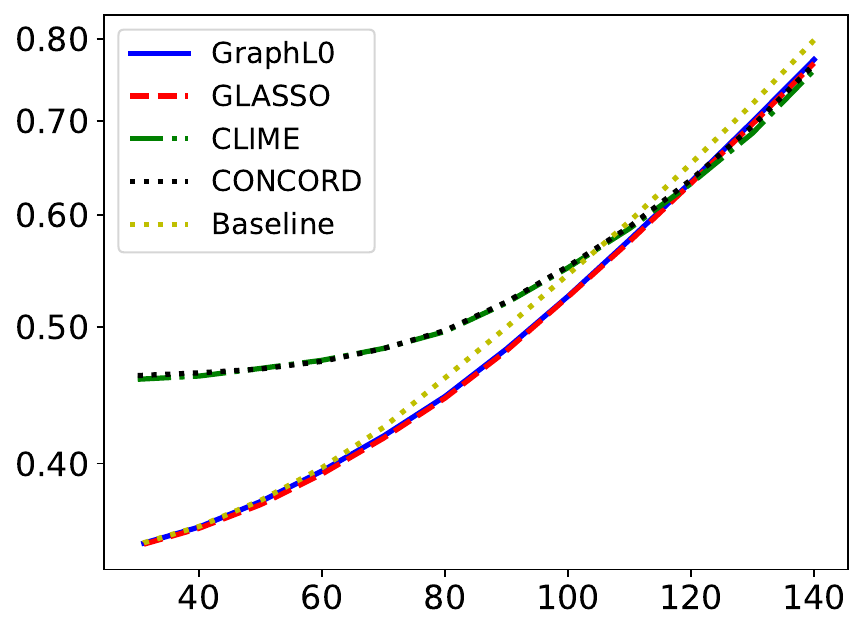}& \includegraphics[width=0.28\linewidth,trim =.1cm 0cm .1cm 0cm, clip = true] {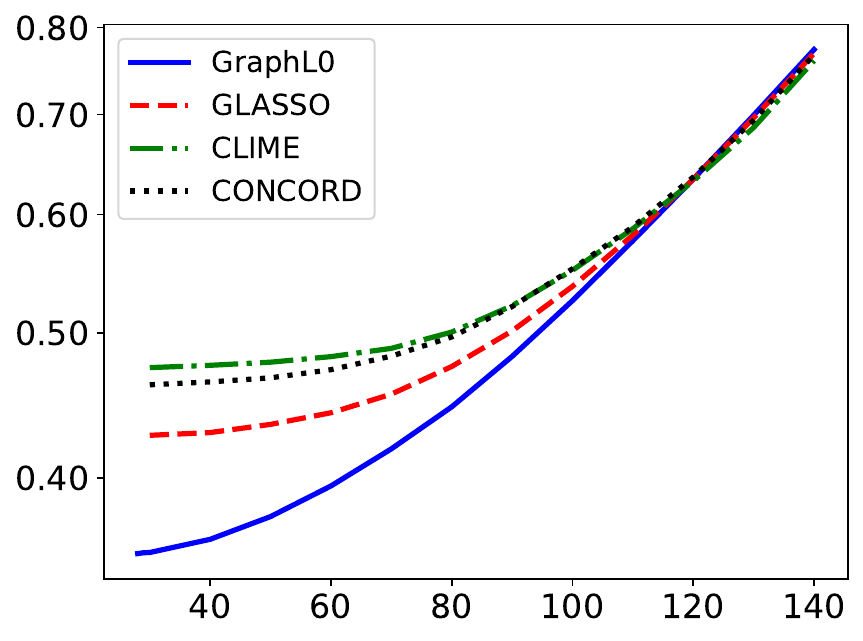} & \includegraphics[width=0.28\linewidth,trim =.1cm 0cm .1cm 0cm, clip = true] {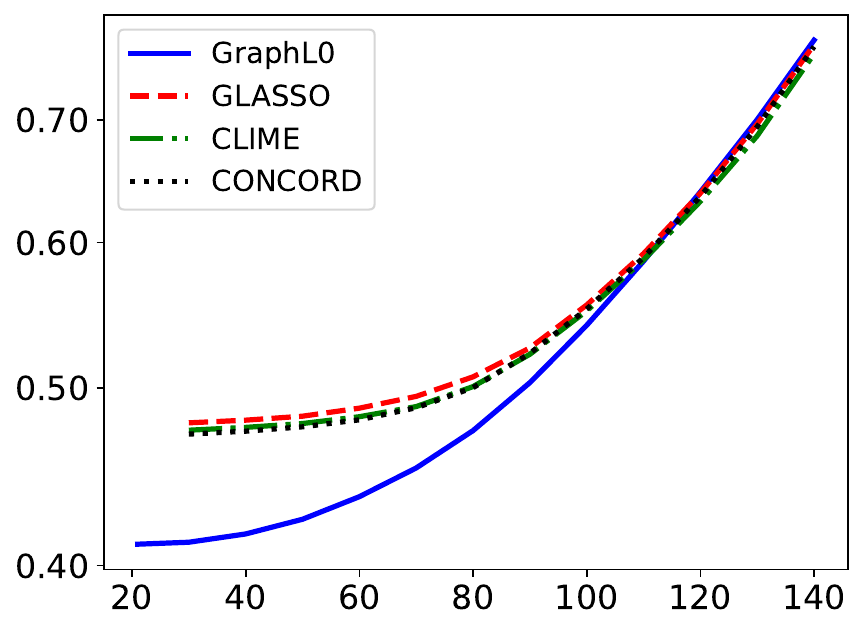} \\
    & ~Return & ~~~Return & ~~~Return

\end{tabular}
        \caption{ Comparison of different methods for the portfolio optimization downstream application, when we require a minimum return. The details are discussed in Appendix~\ref{appendix:portfolio}. [Left Panel]: We do not enforce any minimum sparsity requirement for different methods. [Middle Panel]: We only consider precision matrix estimators with at most 2000 nonzeros. [Right Panel]: We only consider precision matrix estimators with at most 1000 nonzeros.}
        \label{fig:app-portfolio}
\end{figure}

\section{Discussion on Theory}\label{sec:theory-discussion}
In this section, we investigate our assumptions from Section~\ref{sec:stat} in more details. In particular, we present an example which satisfies Assumptions~\ref{assum1-1} to~\ref{assum1-5}, and Assumptions~\ref{sigmaboundedassum} to~\ref{kappaassum}.

For some fixed $r\geq 1$, let $s_1,\cdots, s_r\in(0,1/2]$. Moreover, suppose $\B{u}_1\cdots,\B{u}_r\in\R^p$ are given such that:
\begin{enumerate}
    \item They have unit norm, for all $i\in[r]$ we have $\|\B u_i\|_2=1$.
    \item They are sparse, for $i\in[r]$ we have that $|\text{Supp}(\B u_i)| = k$ where $\text{Supp}(\B u)=\{i\in[p]:u_i\neq 0\}$.
        \item They have non-overlapping supports, for all $i\neq j\in[r]$, we have that $\text{Supp}(\B u_i)\cap \text{Supp}(\B u_j)=\emptyset$. 
    \item There exist $0<c_1<c_2$  such that for any $i\in[r]$, and any $j\in\text{Supp}(\B u_i)$, we have that $c_1/\sqrt{k}\leq|u_{ij}|\leq c_2/\sqrt{k}$.
\end{enumerate}
We define the underlying precision matrix as
\begin{equation}\label{appendix-spiked-theta}
    \B\Theta^* = \B I_p + \sum_{i=1}^r s_i \B u_i\B u_i^\top.
\end{equation}
By using Sherman-Woodbury matrix inversion formula, we can see that the underlying covariance matrix is given as
\begin{equation}
    \B\Sigma^* = (\B\Theta^*)^{-1} =  \B I_p - \sum_{i=1}^r \frac{s_i \B u_i\B u_i^\top}{1+s_i}.
\end{equation}
We note that $\B\Theta^*$ defined here encompasses a wide range of statistical models. As an example, if $\text{Supp}(\B u_1)=\{1,\cdots, k\}, \text{Supp}(\B u_2)=\{k+1,\cdots, 2k\},\cdots$, then $\B\Theta^*$ has a block diagonal (banded) sparsity structure, and if the supports of $\B u_i$ are chosen uniformly at random, then the sparsity pattern of $\B\Theta^*$ will be random and uniform as well. Moreover, the nonzero coordinates of $\B u_i$ determine what values the nonzero coordinates of $\B\Theta^*$ takes. For example, if all nonzero coordinates of $\B u_i$'s are equal to $1/\sqrt{k}$, the (off-diagonal) nonzeros of $\B\Theta^*$ are equal to $1/k$. 

For $\B\Theta^*$ given in~\eqref{appendix-spiked-theta}, we have that for $a\neq b\in[p]$: 
\begin{equation}\label{app-d-sigma-beta}
    (\sigma_a^*)^2 = \frac{1}{\theta^*_{aa}} =\frac{1}{1+\sum_{i=1}^r s_i u_{ia}^2},~~~ \beta_{ab}^*=-\frac{\theta^*_{ba}}{\theta^*_{bb}}= -\frac{\sum_{i=1}^r s_iu_{ia}u_{ib}}{1+\sum_{i=1}^r s_i u_{ib}^2}.
\end{equation}

In what follows we verify that the assumptions in our technical results hold.

\customparagraph{Verifying~\ref{assum1-1}} From~\eqref{app-d-sigma-beta}, for $a\in[p]$ we have that
$$\sqrt{\frac{2}{3}}\leq \sigma_a^*\leq 1$$
as $s_i\in(0,1/2]$ and $u_{ia}$ can be nonzero for at most one value of $i\in[r]$. Therefore, we can take $l_\sigma=\sqrt{2/3}$ and $u_\sigma=1$, satisfying~\ref{assum1-1}.

\customparagraph{Verifying~\ref{assum1-2}} From~\eqref{app-d-sigma-beta}, we have that for $a,b\in[p]$, 
$|\beta^*_{ab}|\leq 1$
as $s_i\in(0,1/2]$.

\customparagraph{Verifying~\ref{assum1-3}}
We have
$$l_\sigma^2 = \frac{2}{3}> \frac{6}{25}+\frac{2}{5}=\frac{6}{25}u_\sigma^4+\frac{2}{5}u_\sigma^2 $$
which verifies~\ref{assum1-3}.

\customparagraph{Verifying~\ref{assum1-4}}
If $a\in[p]$ is such that $a\notin \text{Supp}(\B u_i)$ for all $i\in[r]$, then we have $u_{ia}=0$ for all $i\in[r]$ and therefore, $\{b\in[p]:b\neq a, \beta_{ba}^*\neq 0\}=\emptyset$ from~\eqref{app-d-sigma-beta}. If $a\in\text{Supp}(\B u_i)$ for some $i\in[r]$, then we have that 
$$|\{b\in[p]:b\neq a, \beta_{ba}^*\neq 0\}|\leq |\text{Supp}(\B u_i)|=k$$
as $a\notin \text{Supp}(\B u_j)$ if $j\neq i$.

\customparagraph{Verifying~\ref{assum1-5}} Note that as $\B u_i$'s have non-overlapping supports, they are orthonormal. Therefore, the eigenvalues of $\B\Sigma^*$ are given as $1-s_i/(1+s_i)$ for $i\in[r]$ and 1, which can be lower bounded by $1/2$. Therefore, we can take $\kappa^2=1/2$ in~\ref{assum1-5}.

\customparagraph{Verifying~\ref{sigmaboundedassum}} As we discussed for~\ref{assum1-1}, we can take $l_\sigma=\sqrt{2/3}>1/5 = u_\sigma/5$.

\customparagraph{Verifying~\ref{beta-bound}} Suppose $a,b\in\text{Supp}(\B u_i)$ for some $i\in[r]$. Then, from~\eqref{app-d-sigma-beta} and the fact that $\B u_i$'s have non-overlapping supports,
$$|\beta^*_{ab}| =\frac{ s_i|u_{ia}||u_{ib}|}{1+ s_i u_{ib}^2}\leq |u_{ia}||u_{ib}| \leq \frac{c_2^2}{k} \leq \frac{1}{\sqrt{k}}$$
as long as $c_2^2\leq \sqrt{k}$.

\customparagraph{Verifying~\ref{assum2-max}} For $a\in[p]$, we have either that $(\B\Sigma^*)_{aa}=1$ or $(\B\Sigma^*)_{aa}=1-s_iu_{ia}^2/(1+s_i)$ for some $i\in[r]$. Then, $|(\B\Sigma^*)_{aa}|\leq 1$. For $a\neq b\in[p]$, we either have $(\B\Sigma^*)_{ab}=0$ or $|(\B\Sigma^*)_{ab}|=s_i|u_{ia}u_{ib}|/(1+s_i)\leq 1$ for some $i\in[r]$. Therefore, for all $a,b\in[p]$, $|(\B\Sigma^*)_{ab}|\leq 1$. Moreover, from the discussion for~\ref{assum1-1},
$$\frac{(\B{\Sigma}^*)_{aa}}{(\sigma_a^*)^2}\leq \frac{1}{2/3}<\frac{400}{7}.$$

\customparagraph{Verifying~\ref{betaminassum}} From~\eqref{app-d-sigma-beta}, if $\beta_{ab}^*\neq 0$ for some $a\neq b\in[p]$, we have that for some $i\in[r]$
$$|\beta_{ab}^*|=\frac{ s_i|u_{ia}u_{ib}|}{1+ s_i u_{ib}^2}\geq \frac{c_1^2 s_i}{2k}\geq \beta_{\min}=\sqrt{\frac{\eta\log p}{n}}$$
if $n$ is sufficiently large.

\customparagraph{Verifying~\ref{assum2degree}} This is the same as~\ref{assum1-4}.

\customparagraph{Verifying~\ref{kappaassum}} As we discussed for~\ref{assum1-5}, the eigenvalues of $\B\Sigma^*$ are between 1 and 1/2, verifying this assumption.

Our discussion above shows that the precision matrix model discussed in this section satisfies all our theoretical assumptions, and therefore, our statistical guarantees from Section~\ref{sec:stat} hold for this model. As we discussed, by choosing the sparsity pattern of $\B u_i$'s, and the values of their nonzero coordinates, we can simulate a wide range of statistical setups using the model we introduced here. This further shows the usefulness of our theoretical guarantees in practice.

\end{document}